\documentclass[12pt]{article}
\usepackage[margin=1in]{geometry}
\usepackage{times}

\usepackage{amsmath}
\usepackage{amssymb}
\usepackage{latexsym}
\usepackage{verbatim}
\usepackage[final]{graphicx}
\usepackage{bbm}

\usepackage{amsthm}

\newtheorem{theorem}{Theorem}
\newtheorem{proposition}{Proposition}
\newtheorem{lemma}{Lemma}
\newtheorem{corollary}{Corollary}
\newtheorem{remark}{Remark}

\newtheorem{conjecture}{Conjecture}
\newtheorem{definition}{Definition}

{\begin{list}               
    {$\bullet$ \hfill}{
        \setlength{\leftmargin}{\parindent}
        \setlength{\parsep}{0.04\baselineskip}
        \setlength{\itemsep}{0.5\parsep}
        \setlength{\labelwidth}{\leftmargin}
        \setlength{\labelsep}{0em}}
    }
{\end{list}}

\providecommand{\cref}[1]{Chapter~\ref{chap:#1}}
\providecommand{\sref}[1]{Section~\ref{sec:#1}}

\providecommand{\thref}[1]{Theorem~\ref{thm:#1}}
\providecommand{\lref}[1]{Lemma~\ref{lem:#1}}
\providecommand{\corref}[1]{Corollary~\ref{cor:#1}}

\providecommand{\R}{\ensuremath{\mathbb{R}}}

\providecommand{\norm}[1]{\lVert#1\rVert}

\providecommand{\inprod}[1]{\langle#1\rangle}
\providecommand{\set}[1]{\left\{#1\right\}}

\providecommand{\bydef}{\coloneqq}

\newcommand{\E}{\mathbb{E}}
\renewcommand{\P}{\mathbb{P}}

\newcommand{\Eb}[2]{\E_{#1}\left[#2\right]}

\newcommand{\Op}{\mathcal{O}_\prec}

\DeclareMathOperator{\tr}{tr}
\DeclareMathOperator{\diag}{diag}

\DeclareMathOperator*{\argmin}{arg\,min}

\providecommand{\eqd}{\overset{(d)}{=}}

\usepackage{commath}
\usepackage{mathtools}
\usepackage{bm}
\usepackage{enumitem}
\mathtoolsset{showonlyrefs}
\usepackage[dvipsnames]{xcolor}
\usepackage{subcaption}
\usepackage{placeins}

\newcommand{\Wh}{\widehat W}
\newcommand{\muh}{\widehat \mu}

\newcommand{\WLOO}[1][h]{\widehat W^{\setminus #1}}
\newcommand{\muLOO}[1][h]{\widehat \mu^{\setminus #1}}

\newcommand{\Wfresh}{\widehat{W}_\text{fresh}}

\DeclareMathOperator{\var}{Var}

\DeclareMathOperator{\vecop}{vec}

\DeclareMathOperator{\TAM}{TAM}

\DeclareMathOperator{\Prox}{Prox}

\usepackage{hyperref}
\hypersetup{
    colorlinks=true,
    linkcolor=MidnightBlue,
    citecolor=MidnightBlue,
    urlcolor=MidnightBlue,
    pdftitle={Sharp Capacity Thresholds in Linear Associative Memory: From Top-1 Retrieval to Tail-Average Learning},
    pdfauthor={Nicholas Barnfield, Juno Kim, Eshaan Nichani, Jason D. Lee, and Yue M. Lu}
}

\usepackage{natbib}

\title{\Large Sharp Capacity Thresholds in Linear Associative Memory:\\
From Top-1 Retrieval to Tail-Average Learning}
\author{
Nicholas Barnfield$^{1}$ \quad
Juno Kim$^{2}$ \quad
Eshaan Nichani$^{3}$\\[0.25em]
Jason D. Lee$^{2}$ \quad
Yue M. Lu$^{1}$\thanks{Correspondence: \texttt{yuelu@seas.harvard.edu}.}\\[0.5em]
{\small $^{1}$Harvard University \quad
$^{2}$University of California, Berkeley \quad
$^{3}$Princeton University}
}
\date{August 19, 2026}

\ifdefined\usebigfont

\usepackage{times}
\usepackage[fontsize=13pt]{scrextend}
\makeatletter
\@ifpackageloaded{geometry}{\AtBeginDocument{\newgeometry{letterpaper,left=1.56in,right=1.56in,top=1.71in,bottom=1.77in}}}{\usepackage[letterpaper,left=1.56in,right=1.56in,top=1.71in,bottom=1.77in]{geometry}}
\AtBeginDocument{\newgeometry{letterpaper,left=1.56in,right=1.56in,top=1.71in,bottom=1.77in}}
\fi
\begin{document}

\maketitle

\begin{abstract}
How many key-value associations can a $d\times d$ linear
memory store? The answer depends not only on the $d^2$ degrees of freedom in
the memory matrix, but also on the retrieval criterion. Under isotropic
Gaussian embeddings, we prove a sharp threshold for top-1 retrieval, where
every signal must beat its largest distractor: the critical value of
$d^2/(n\log n)$ is $2$. Above the threshold, we explicitly construct a
linear memory that
retrieves all $n$ associations with high probability; below it, no
data-dependent linear memory can do so. The $\log n$ factor is therefore
the unavoidable extreme-value cost of winner-take-all decoding.

Without the logarithmic factor---that is, when
$n/d^2\to\alpha\in(0,\infty)$---simultaneous top-1 retrieval is
impossible. The matched target can nevertheless remain near the top of the
ranking. We capture this weaker retrieval goal with the Tail-Average Margin
(TAM), which, for list size $k$, compares each signal with the average
of its $k$ strongest competitors; a positive TAM margin certifies that the
target belongs to the top-$k$ candidate list. When $k/n\to r\in(0,1)$, we
learn the memory by empirical risk minimization with a smoothed TAM objective
and derive an exact high-dimensional characterization through a two-parameter
scalar variational problem. The result gives limiting laws for signal and
competitor scores, margins, and percentile ranks. Sending the ridge parameter
to zero after the high-dimensional limit yields a closed-form critical load
$\alpha_c(r)$ separating vanishing from positive average loss.

The analysis in this work also develops a coupled leave-one-out method for
matrix-valued empirical risk problems in which each sample enters many
dependent comparisons, a tool that may be useful beyond associative
memory.
\end{abstract}

{\setlength{\parskip}{0pt plus 1pt}
\tableofcontents
}

\section{Introduction}
\label{sec:intro}

Associative memory refers broadly to the ability of a system to store and retrieve relationships between objects. Given a collection of input-output pairs, one would like the system to recover the correct target when presented with a corresponding cue. This basic mechanism appears in many forms, ranging from the first neural network models for pattern learning \citep{Kohonen1972CorrelationMM,amari1972learning,anderson1972,hopfield1982neural,hopfield1984neurons}, to modern key-value retrieval systems \citep{vaswani2017attention,ramsauer2021hopfieldnetworksneed,gershman2025keyvaluememorybrain}. Associative memory has also recently attracted attention as a model of factual recall in large language models \citep{cabannes2024scaling,cabannes2024gd,jiang2024llmsdreamelephantswhen,nichani2024understanding} as trained transformer layers have been observed to function as key-value memories \citep{geva2021transformerfeedforwardlayerskeyvalue,meng2023locatingeditingfactualassociations,bietti2023birth}.

A central theoretical question in associative memory is: given $n$
associations, how large must the memory model be in order to retrieve them
under a prescribed criterion?
In this paper, we study this question in the simple
but already nontrivial setting of \emph{linear associative memory}.
Given pairs of key vectors (embeddings)~$v_i$ and associated targets
(unembeddings)~$u_i$, we consider a matrix memory model
$W \in \mathbb{R}^{d \times d}$ and define the score matrix
$S=(s_{j,i})_{1\le i,j\le n}$ as
\begin{equation}
s_{j,i} = u_j^\top W v_i, \qquad \{(u_i,v_i)\}_{i=1}^n \subset \mathbb{R}^d \times \mathbb{R}^d, \qquad W \in \mathbb{R}^{d \times d}.
\end{equation}
This score matrix encodes the predictions for the target when decoding to a fixed vocabulary of size $n$.
Rather than requiring the output $Wv_i$ of the memory model on an input
$v_i$ to equal its associated target $u_i$, as in classical models of
associative memory~\citep{hopfield1982neural}, we compare the score of
the matched target, $s_{i,i}$, with its competing scores
$(s_{j,i})_{j \ne i}$. The capacity of a linear associative memory
depends sharply on how retrieval is defined. We first study the
strongest natural criterion, which requires the matched target to
outrank every competitor.

This criterion is known as \emph{top-1 retrieval}, or exact retrieval
\citep{Kohonen1972CorrelationMM,cabannes2024scaling,cabannes2024gd,nichani2024understanding}.
Under this criterion, each key must retrieve its matched target with
the largest score:
\begin{equation}\label{eq:top1}
s_{i,i} > s_{j,i}, \qquad \forall j \neq i, \quad \forall i \in [n].
\end{equation}
This is the natural criterion for exact winner-take-all
decoding. It is also the hard-attention (low-temperature) limit of softmax attention mechanisms in transformer-based architectures \citep{vaswani2017attention}.

To derive a sharp high-dimensional characterization of top-1 capacity,
we study the isotropic Gaussian model
\begin{equation}\label{eq:gaussian_model}
u_i, v_i \stackrel{\mathrm{i.i.d.}}{\sim} \mathcal{N}(0,I_d/d),
\qquad i\in[n],
\end{equation}
where all vectors in the two collections are mutually independent.
This model isolates the effect of high-dimensional
interference without imposing additional structure on the stored
associations.
As one of the main results of this work, Theorem~\ref{thm:PR-main}
shows that, under~\eqref{eq:gaussian_model}, the asymptotic ratio $d^2/(n\log n)$ has
the sharp critical value~$2$. Specifically, for every fixed
$\epsilon>0$,
\begin{equation}\label{eq:logd}
\begin{aligned}
d^2\ge(2+\epsilon)n\log n
&\quad\Longrightarrow\quad
\text{top-1 retrieval is achievable with high probability},\\
d^2\le(2-\epsilon)n\log n
&\quad\Longrightarrow\quad
\text{top-1 retrieval is impossible with high probability}.
\end{aligned}
\end{equation}
Here, achievability means that some (data-dependent) linear memory~$W\in\R^{d\times d}$
satisfies~\eqref{eq:top1}, whereas impossibility means \emph{no} linear
memory can satisfy~\eqref{eq:top1}. The
critical value~$2$ was conjectured in an earlier version of this paper
and independently in the concurrent work of
\citet{giorlandino2026factual}.

The logarithmic factor in~\eqref{eq:logd} comes from an extreme value.
To see this heuristically, fix a query~$i$ and suppose that its competing
scores $\{s_{j,i}\}_{j\neq i}$ behave as independent centered Gaussians.
Under the Gaussian model~\eqref{eq:gaussian_model}, the matched score has
natural scale one, while an individual competing score has standard
deviation $\sigma\approx\sqrt{n}/d$. The largest of the $n-1$
competitors is therefore about
$\sigma\sqrt{2\log n}\approx\sqrt{2n\log n}/d$. Equating this extreme
competitor scale with the order-one matched score gives
$\sqrt{2n\log n}/d\approx1$, or $d^2\approx2n\log n$.
Theorem~\ref{thm:PR-main} makes this prediction rigorous.

The logarithmic penalty is specific to the requirement that every
target beat its largest competitor. Removing this factor brings us to
the critical load $n/d^2\to\alpha\in(0,\infty)$, where a linear memory
has only a constant number of parameters per association and top-1
retrieval is impossible. Even so, the matched target need not disappear
from the ranking: it may still outrank all but a fraction of
its competitors. This residual ranking information is useful
when a memory access is an intermediate computation whose output is
refined by later layers, heads, or rerankers. We thus ask:

\begin{center}
\emph{At critical load, how close to the top can a linear memory still
place every matched target?}
\end{center}

We call this coarser notion of retrieval \emph{listwise retrieval}.
Our second main result
introduces a convex certificate for listwise retrieval and gives a
sharp high-dimensional theory for learning a memory under this
criterion.

For a fixed query~$i$, order the $n-1$ competing scores as
$s_{(1),i}\ge s_{(2),i}\ge\cdots\ge s_{(n-1),i}$, and, for a tail
fraction $r\in(0,1]$, let $k=\lceil r(n-1)\rceil$. Direct top-$k$
retrieval asks that $s_{i,i}>s_{(k),i}$, so that at most $k-1$
competitors outrank the matched target. Except when $k=1$, this rank
constraint is nonconvex. Yet the average of the $k$ largest competing
scores is a convex upper bound on $s_{(k),i}$
\citep{rockafellar2000,rockafellar2002cvar}. This motivates us to propose the
\emph{Tail-Average Margin} (TAM) criterion:
\begin{equation}\label{eq:intro_tam}
    s_{i,i}
    >
    \TAM_r((s_{j,i})_{j\neq i})
    :=
    \frac{1}{k}\sum_{\ell=1}^k s_{(\ell),i},
    \qquad \forall i\in[n].
\end{equation}
Because $\TAM_r((s_{j,i})_{j\neq i})\ge s_{(k),i}$,
\eqref{eq:intro_tam} certifies that the matched target has rank at most
$k$, and hence belongs to the top-$k$ candidate list. The tail fraction
$r$ controls the resolution of retrieval. At the finest endpoint,
$r=1/(n-1)$, TAM is the largest competing score and the criterion
recovers the top-1 condition~\eqref{eq:top1}; at the coarsest endpoint,
$r=1$, it is the mean competing score. For
fixed $r\in(0,1)$, TAM therefore controls the collective strength of the
top $r$-fraction of competitors rather than a single extreme score.

This fractional-rank perspective is in the same spirit as list decoding in
coding theory, where the decoder is allowed to retain a controlled list
of plausible messages rather than make a unique decision
\citep{elias1957listdecoding,guruswami2007algorithmic}. It is also
aligned with listwise ranking losses in information retrieval, which
optimize criteria defined on ranked candidate lists rather than on a
single pairwise comparison \citep{cao2007learningtorank}. For fixed
$r$, the certified list contains a fixed fraction of the vocabulary;
the guarantee is therefore one of percentile rank rather than a
sublinear shortlist.

\subsection{Summary of main results}

\paragraph{Sharp top-1 capacity.}
Our first main result is Theorem~\ref{thm:PR-main} in
Section~\ref{sec:top1}, which identifies the exact capacity of linear
associative memory under top-1 retrieval as $d^2/(n\log n)=2$. This critical threshold applies to the
entire class of data-dependent linear memories, rather than to a
particular construction or learning rule.

The achievability proof begins with a classical benchmark model,
called the correlation matrix memory (CMM), which stores the
associations as a superposition of outer products
\citep{Kohonen1972CorrelationMM,elhage2022toymodelssuperposition}.
Previous work of \citet{nichani2024understanding} showed that CMM
succeeds when $d^2\ge Cn(\log n)^4$ for a sufficiently large constant
$C$. We sharpen this benchmark by proving the sufficient condition
$\liminf d^2/(n\log n)>8$
(Proposition~\ref{prop:sup-achievability}). We then introduce an
adaptive sparse repair, which uses CMM on a large coordinate block to store most associations, then corrects wrong (or small-margin) associations using a small reserve block of size~$o(d)$. This reserve-and-repair principle is related in
spirit to multiscale constructions for random perceptrons
\citep{kim1998covering,abbe2022binary,ding2025capacity}.
Our construction succeeds throughout the regime
$\liminf d^2/(n\log n)>2$ (Proposition~\ref{prop:optimal}); see
Sections~\ref{sec:superposition}--\ref{sec:top1-achievability} for details.
Conversely, a dual
certificate rules out every linear memory with high probability when
$\limsup d^2/(n\log n)<2$
(Theorem~\ref{thm:PR-main}\ref{item:impossibility}).
Moreover, Section~\ref{sec:top1-finite-size} presents
Conjecture~\ref{conj:top1-finite-size}, suggested by a simpler surrogate
model: a $-3n\log\log n$ second-order correction that better captures
finite-size behavior.

\paragraph{Tail-Average Margin.}
Our second main result gives an exact high-dimensional theory for a
convex learning rule based on the listwise criterion~\eqref{eq:intro_tam}.
With logistic loss $\ell(t)=\log(1+e^{-t})$, ridge parameter
$\lambda\ge0$, we consider the estimator
\begin{equation}\label{eq:intro_tam_erm}
    \widehat W
    \in
    \argmin_{W\in\mathbb{R}^{d\times d}}
    \left\{
        \frac1n\sum_{i=1}^n
        \ell\left(
            s_{i,i}(W)
            -
            \TAM_{r,\beta}\bigl((s_{j,i}(W))_{j\ne i}\bigr)
        \right)
        +
        \frac{\lambda}{2d^2}\norm{W}_{\mathsf F}^2
    \right\}.
\end{equation}
Here $\TAM_{r,\beta}$ is an exponentially smoothed version of the TAM
functional $\TAM_r$ in~\eqref{eq:intro_tam}, with $\beta>0$ the
smoothing parameter. For every fixed
score vector $x$, $\TAM_{r,\beta}(x)\to\TAM_r(x)$ as
$\beta\to\infty$. Its precise definition is given in
Section~\ref{sec:tam-functional}, specifically in~\eqref{eq:sTAM}, and
the corresponding convex learning formulation is developed in
Section~\ref{sec:tam-learning}; see~\eqref{eq:tam-learning}.
The main goal of the TAM theory is to understand the
asymptotic behavior of this ERM in the regime
\begin{equation}
    n,d\to\infty,
    \qquad
    \frac{n}{d^2}\to\alpha\in(0,\infty),
    \qquad
    r\in(0,1)\ \text{fixed}.
\end{equation}
Our main asymptotic conclusions are as follows:
\begin{enumerate}[leftmargin=*, itemsep=2pt]
    \item \emph{Scalar reduction.} The high-dimensional ERM admits a scalar variational
    description involving only two parameters, stated in
    Theorem~\ref{thm:tam_scalar} in Section~\ref{sec:tam_selfconsistent};
    \item \emph{Score and margin laws.} The scalar description gives limiting predictions for the
    fitting loss, the laws of the true and competing scores, and the
    induced margin and percentile profiles, developed in
    \sref{tam_theory} and \sref{tam_phase_transitions};
    \item \emph{Ridgeless phase transition.} In the ridgeless limit, the scalar theory yields a sharp phase
    transition at a critical load
    \begin{equation}\label{eq:fix_tail_threshold}
    \alpha_c(r)
    =
    \frac{1}{
        (1+\kappa_r^2)\Phi(\kappa_r)+\kappa_r\varphi(\kappa_r)
    },
    \qquad
    \kappa_r
    =
    \frac{\varphi(\Phi^{-1}(1-r))}{r},
\end{equation}
where $\varphi$ and $\Phi$ denote the standard normal density and
distribution function. When $\alpha<\alpha_c(r)$, the limiting TAM
fitting loss vanishes in the ridgeless limit; when
$\alpha>\alpha_c(r)$, it remains positive. Thus $\alpha_c(r)$ is the
sharp threshold for asymptotically perfect TAM recovery---vanishing
average TAM loss and hence successful TAM retrieval for all but a
vanishing fraction of keys.
\end{enumerate}

\paragraph{A coupled leave-one-out method.}
Our third contribution is methodological: we develop a new coupled
leave-one-out method for matrix-valued empirical-risk problems with
shared samples. This method forms the technical core of our TAM theory.
Classical
leave-one-out analyses of high-dimensional $M$-estimators remove one
independent loss term, leaving an optimizer that is nearly independent
of the held-out feature
\citep{elkaroui2013robust,lei2018asymptotics}. That basic decoupling
fails here: a stored pair supplies its own signal score, while its
target also appears as a competitor for every other query. Removing
one pair therefore perturbs order $n$ loss terms and their
data-dependent tail thresholds.

We develop a coupled leave-one-out expansion that tracks this global
response. Its key conclusion is that, after all induced changes are
accounted for, the leading perturbation of the optimizer is the inverse
leave-one-out Hessian applied to the single rank-one feature
$u_hv_h^\top$; the remaining terms are lower order. Iterating this
expansion and combining it with new spectral and self-averaging
estimates yields the Gaussian competitor law and the scalar
description in Theorem~\ref{thm:tam_scalar}. This method may be of
independent interest for other matrix-valued or listwise ERMs in which
each observation participates in many dependent comparisons rather
than one separable loss term. Section~\ref{sec:tam_loo} develops the
coupled reduction, and the supporting estimates are proved in
Appendices~\ref{app:loo_stability_proofs}--\ref{app:bilinear_coupling}.

\subsection{Related work}
\label{sec:related_work}

\paragraph{Neural associative memory.} Associative memory dates back to
early neural models that store patterns through correlations in the
weights \citep{amari1972learning,anderson1972,nakano1972associatron,Kohonen1972CorrelationMM,hopfield1982neural,hopfield1984neurons}.
The Hopfield network \citep{hopfield1982neural} is the symmetric case,
where the goal is to memorize and retrieve the patterns themselves.
We focus instead on associative recall: a key should retrieve its
paired target, and the two need not coincide \citep{amari1972learning}.
In particular,
\citet{Kohonen1972CorrelationMM} define correlation matrix memory as
a sum of rank-one key-target outer products, the construction analyzed
in Section~\ref{sec:superposition}. Bidirectional associative memory
is another paired-pattern model based on outer-product updates
\citep{kosko1988}.

Modern versions of neural networks with built-in associative memory capabilities include the neural Turing machine \citep{graves2014neuralturingmachines}, memory networks \citep{weston2015memorynetworks}, dense associative memory \citep{krotov2016denseassociativememorypattern,Lucibello2024}, MCHN \citep{ramsauer2021hopfieldnetworksneed}, MCSDM \citep{Kanerva1988SparseDistributedMemory,bricken2022attentionapproximatessparsedistributed}, and universal Hopfield networks \citep{millidge2022universalhopfieldnetworksgeneral}. Binary variants of matrix memory have also been studied in the literature \citep{Willshaw1969NonHolographicAM,gardner1987,wreo2211}.

\paragraph{Transformers and factual recall.} Transformer models
trained on factual recall tasks have been empirically observed to
implement associative memory through their weight matrices. Feedforward as well as attention layers have been shown to behave like
key-value memories \citep{geva2021transformerfeedforwardlayerskeyvalue,bietti2023birth},
and the linear memory perspective has been used to study factual editing,
induction head tasks, and latent space retrieval
\citep{meng2023locatingeditingfactualassociations,bietti2023birth,jiang2024llmsdreamelephantswhen}.
On the theory side, \citet{nichani2024understanding} study a
sequence-based factual recall task where one token functions as the
key, and show that a one-layer transformer solves the task with capacity scaling linearly with parameter count, up to logarithmic factors. This storage scaling has also been empirically observed in large language models \citep{allenzhu2024physicslanguagemodels33}. Related theoretical analyses include optimization dynamics for
a single-layer transformer on a token recall task \citep{vural2026learningrecalltransformersorthogonal},
and a measure-theoretic formulation of transformers with implicit associative
memory mechanisms
\citep{kawata2026transformersmeasuretheoreticassociativememory}.

\paragraph{Storage capacity.}
\citet{nichani2024understanding} study the capacity of linear and MLP
associative memories with random spherical embeddings, showing in
particular that correlation matrix memory succeeds when
$d^2\gg n(\log n)^4$. \citet{cabannes2024scaling} propose the
exact retrieval criterion and study correlation matrix memory under a
weighted data distribution. From an optimization perspective, \citet{cabannes2024gd} study gradient descent under a multiclass logistic formulation, and \citet{kim2026sharp} compare the capacity of memory matrices obtained by different optimizers under Gaussian embeddings. Compared with these works, our top-1 result
identifies the exact statistical threshold $d^2=2n\log n$ for arbitrary linear memories. We
also introduce the TAM criterion and, to our knowledge, give the first
capacity theory for linear associative memory under a listwise retrieval rule.

We also briefly mention that storage capacity has been studied in detail for the Hopfield network and its variants. The capacity $d\asymp n$ of Hopfield networks is classical, see e.g., \citet{mceliece1987,Folli2017MaximumStorageCapacity} and the works cited within, while modern extensions such as dense associative memory have been shown to achieve exponential capacity \citep{demircigil2017model, ramsauer2021hopfieldnetworksneed,Lucibello2024}. The scaling in these works differ from our results since we decode to a fixed vocabulary of size $n$, while the classical Hopfield formulation decodes to the hypercube.

\paragraph{Concurrent work.}
The sharp top-1 threshold was conjectured in an earlier version of this
paper and independently in the concurrent work of
\citet{giorlandino2026factual}. The latter studies the capacity of linear
associative memory under the top-1 retrieval criterion through a \textit{decoupled}
model. In this simplified formulation,
the output embeddings $\{u_j\}_{j \le n}$ are \textit{resampled} across
constraints, thus removing the statistical dependence structure among
the retrieval inequalities in \eqref{eq:top1}. Based on numerical
evidence comparing training and test errors, and score distributions, they conjecture that
the original and decoupled problems have the same critical capacity.
For the decoupled model, they carry out a Gardner-type statistical
physics calculation and predict the critical threshold
$\frac{n\log n}{d^2}=\frac12$ for top-1 retrieval, which we rigorously prove in \sref{top1}. They further extend both the
decoupling conjecture and the capacity calculation to rank-constrained
memories, corresponding to two-layer linear MLPs.

\paragraph{Classification and leave-one-out methods.}
The TAM learning problem is also connected to high-dimensional
classification through its logistic loss. The classical work of
\citet{cover1965geometrical} computed the separability capacity of
random patterns, and the perceptron capacity was later studied through
statistical physics methods, including margin-constrained variants
\citep{gardner1988space}. In ordinary unregularized logistic
regression, the existence of the maximum likelihood estimator is
equivalent to nonseparability of the labeled data; equivalently, the
failure of existence is a linear separability, or capacity, event.
\citet{sur2019modern} established a high-dimensional phase transition
for this event under Gaussian covariates. A broader line of work in
statistical physics
studies optimal errors and phase transitions in high-dimensional
generalized linear models via the approximate message passing algorithm
\citep{barbier2019optimal,aubin2020generalization,mignacco2020role}.
From a technical point of view, our TAM analysis uses a leave-one-out approach, which has played an important role in precise asymptotics for high-dimensional $M$-estimators
\citep{elkaroui2013robust,elkaroui2018impact,lei2018asymptotics}.
At a conceptual level, this is also the cavity method from spin glass theory \citep{mezard1987spin}.

The present problem differs from standard high-dimensional regression
and classification in several ways. The unknown is a matrix rather
than a vector; the effective covariates are rank-one tensors
$u_jv_i^\top$ with strong dependencies; each sample
contributes both a signal score and a full column of competitor
scores; and the TAM threshold is endogenous, selected by the learned
scores themselves. These couplings prevent existing leave-one-out
theories from applying as black boxes; the resulting cavity
calculation is matrix-valued, listwise, and self-consistent in a way
that does not arise in standard high-dimensional regression.

\subsection*{Notation}

We write
\begin{equation}\label{eq:UV}
  U = [u_1 \ \cdots \ u_n] \in \R^{d \times n}
  \qquad \text{and} \qquad
  V = [v_1 \ \cdots \ v_n] \in \R^{d \times n}
\end{equation}
for the target and key matrices, respectively. Under the Gaussian
model~\eqref{eq:gaussian_model}, $\sqrt d\,U$ and $\sqrt d\,V$ are
independent standard Gaussian matrices in $\R^{d\times n}$.
$\vecop(\cdot)$ denotes column-wise vectorization.

We write $X\eqd Y$ when two random
variables, vectors, or matrices have the same distribution. For matrices,
$\inprod{A,B}$ is understood as the matrix inner product.

We use $X_n=O_{\P}(Y_n)$ and $X_n=o_{\P}(Y_n)$ in their standard
meanings: $X_n/Y_n$ is tight and converges to zero in probability,
respectively. These notations record behavior at the displayed scale. In situations where we need tail control, especially uniformly over
polynomial-size index sets, we use stochastic
domination~\citep{erdHos2017dynamical}. Let
$\{X_\alpha^{(n)}\}_{\alpha\in\mathcal I_n}$ be nonnegative random
variables, where $\mathcal I_n$ may depend on $n$ and has polynomial
cardinality, and let $Y_n\ge0$ be deterministic. We write
\begin{equation}\label{eq:Oprec}
    X_\alpha^{(n)} = \Op(Y_n)
    \qquad \text{uniformly in } \alpha\in\mathcal I_n
\end{equation}
if, for every $\varepsilon>0$ and $D>0$,
\begin{equation}
    \P\left(
        \max_{\alpha\in\mathcal I_n} X_\alpha^{(n)}
        > n^\varepsilon Y_n
    \right)
    \le n^{-D}
\end{equation}
for all sufficiently large $n$. When $\mathcal I_n$ is a singleton, we
suppress the uniformity qualifier.
Thus, \eqref{eq:Oprec} permits an $n^\varepsilon$ slack in
exchange for $n^{-D}$ tail control, uniformly over the indicated
polynomial-size index set.

\section{Top-1 retrieval: the sharp threshold}
\label{sec:top1}

Throughout this section, we work under the Gaussian model
\eqref{eq:gaussian_model} and the top-1 retrieval criterion
\eqref{eq:top1}. For a realization $(U,V)$ and a matrix
$W\in\R^{d\times d}$, define the perfect-retrieval condition
\begin{equation}\label{eq:PR-condition}
    \mathsf{PR}(W;U,V)
    \quad\Longleftrightarrow\quad
    u_i^\top Wv_i
    >
    \max_{j\ne i}u_j^\top Wv_i
    \quad\text{for every }i\in[n].
\end{equation}
We study the random feasibility event
\begin{equation}\label{eq:PR-exists}
    \mathsf{PR}_n
    \bydef
    \left\{
        (U,V):
        \text{there exists }W\in\R^{d\times d}
        \text{ such that }\mathsf{PR}(W;U,V)\text{ occurs}
    \right\}.
\end{equation}
The matrix $W$ in this definition may depend arbitrarily on the
realized data $(U,V)$. Accordingly, $\mathsf{PR}_n$ asks whether a
random instance is retrievable by any linear memory, without imposing
a particular learning rule.

\begin{theorem}[Sharp top-1 threshold]
\label{thm:PR-main}
Along any sequence $n,d\to\infty$:
\begin{enumerate}[label=\textup{(\roman*)}]
    \item\label{item:achievability}
    If $\liminf \frac{d^2}{n\log n}>2$, then
    $\P(\mathsf{PR}_n)\to1$.

    \item\label{item:impossibility}
    If $\limsup \frac{d^2}{n\log n}<2$, then
    $\P(\mathsf{PR}_n)\to0$.
\end{enumerate}
\end{theorem}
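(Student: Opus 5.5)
The proof splits into the two parts of Theorem~\ref{thm:PR-main}. I would handle achievability by an explicit construction and impossibility by a dual (Farkas-type) certificate that has to be ruled out simultaneously for all $W$.

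\textbf{Achievability.} The plan follows the reserve-and-repair route announced in the introduction.

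First, I split coordinates into a bulk block of size $d_1=d-m$ and a reserve block of size $m=o(d)$, with $m\ge d^{1-\delta}$ say. On the bulk block I use the CMM $W_0=\sum_i P_1u_iv_i^\top P_1$, suitably normalized. For a query $i$ the signal score is $\|P_1u_i\|^2\|P_1v_i\|^2\approx1$. Conditionally on $v_i$, each competitor $u_j^\top W_0v_i$ is approximately Gaussian with variance $\approx n/d^2$, with weak dependence across $j$. A union bound over $j$ via Gaussian tails (after concentrating the conditional variance) shows that a given query fails or has margin below $\eta$ with probability about $n^{-(1-\eta)^2d^2/(2n\log n)}$. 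When $d^2\ge(2+\epsilon)n\log n$ this is $n^{-1-c}$ for fixed small $\eta$, so the expected number of bad queries is $n^{-c}\to0$. The remaining gap between this heuristic and the rigorous statement is the dependence across queries and the non-Gaussianity of the competitors. That is exactly why Proposition~\ref{prop:sup-achievability} only gives the constant $8$.

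So, for the sharp constant, I would not insist on zero failures. Instead I would show that the number of bad queries $B$ satisfies $|B|\le n^{1-c'}$ with high probability, using first-moment bounds on the indicator of a margin below $\eta$. This is robust because each failure probability is at most $n^{-c''}$ even with crude variance bounds.

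Second, the repair step. On the reserve block I add $W_1=\sum_{i\in B}\tau\,P_2u_iv_i^\top P_2$, or a least-squares variant $P_2U_BA V_B^\top P_2$. The aim is to lift the bad signals by a constant while perturbing every other score by $o(1)$. Because $|B|\ll m^2/\log n$, the CMM analysis at the scale of the reserve block, which is overparameterized by a polynomial factor, gives uniform control of all $n^2$ cross scores. One point needs care: $W_1$ must not destroy the good queries. Their scores change by $\tau u_j^\top P_2W_1P_2v_i$. These terms are small relative to the margin $\eta$ because $\|P_2u\|^2\approx m/d\to0$, once $\tau$ is tuned to about $d^2/m^2$ with $|B|$ small enough. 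I expect the main obstacle to be this tuning, together with the conditioning on $B$, which depends on the bulk data. The reserve block is independent of the bulk coordinates, and that is what keeps this step tractable.

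\textbf{Impossibility.} By the Gordan/Farkas alternative, no $W$ exists if and only if there are nonnegative weights $\lambda_{ji}$ ($j\ne i$), not all zero, with
\[\sum_i\sum_{j\ne i}\lambda_{ji}(u_i-u_j)v_i^\top=0.\]
Exhibiting such a certificate directly is hard, so I would argue probabilistically in the style of a second-moment or entropy count.

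Suppose $W$ satisfies $\mathsf{PR}$; normalize so that $\sum_i s_{i,i}=n$. Write $\sigma_i^2=\|W^\top\cdot\|$-type conditional variances, namely $\sigma_i^2=\sum_j (u_j^\top Wv_i)^2/n$. This quantity is tied to $\|W\|_{\mathsf F}^2$. Since $W$ has only $d^2$ degrees of freedom, a net or counting argument over $W$ on the unit Frobenius sphere shows two things. For each fixed $W$, the probability that all $n$ queries have a maximal competitor below the signal is about $\exp(-n\cdot n^{-d^2/(2n\log n)\cdot(1+o(1))}\cdot\ldots)$. More usefully, the expected number of queries where the maximum of roughly $n$ nearly independent competitor Gaussians of variance $\ge\sigma^2$ lies below $1$ is $n\cdot\exp(-n^{1-1/(2\sigma^2\log n)})$, which is doubly exponentially small when $d^2<(2-\epsilon)n\log n$.

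The key steps are as follows. (a) Show that the average signal obeys $\frac1n\sum_i u_i^\top Wv_i\lesssim\|W\|_{\mathsf F}\sqrt{1/n}\cdot(d/\sqrt n)$ uniformly in $W$, by a spectral norm bound on $\sum_iu_iv_i^\top$. (b) Show that the competitor energy $\sum_{i,j}(u_j^\top Wv_i)^2\approx n^2\|W\|_{\mathsf F}^2/d^2$ uniformly in $W$. (c) Take a union bound over an $\epsilon$-net of size $e^{O(d^2\log d)}$, which is beaten by the failure probability $\exp(-n^{1-(1-\epsilon')})=\exp(-n^{\epsilon'})$. This last comparison is the crux. It requires $d^2\log d\ll n^{\epsilon'}$, which holds since $d^2\asymp n\log n$ gives $d^2\log d\approx n\log^2n$.

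That inequality fails as written, so I expect this union bound to be the real obstacle. I would fix it by a more refined count: restrict to the $\ge n/2$ queries where $\sigma_i$ is near the typical value, and use the fact that the event ``the maximum lies below the threshold'' has probability $\exp(-n^{\epsilon'})$ per query, independently across queries given $W$. This yields $\exp(-n^{1+\epsilon'})$ overall, which does beat the net.
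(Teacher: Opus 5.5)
\textbf{Achievability.} Your achievability half is the paper's argument in all essentials: CMM on a bulk coordinate block, a first-moment bound on the deficient set, and rank-one repairs on an independent reserve block of size $o(d)$. The paper uses adaptive weights $2\gamma-m_i$ and $d_1=\lfloor d(\log n)^{-1/8}\rfloor$, but a uniform lift also works. Three slips need fixing.
\begin{enumerate}
\item The per-query failure probability is not $n^{-1-c}$. The union over $n-1$ competitors costs a factor $n$. Also, the signal alone falls below $1-\eta$ with probability about $e^{-\eta^2/(2\alpha_n)}=n^{-O(\eta^2)}$. So CMM leaves polynomially many deficient queries. This signal fluctuation, not dependence or non-Gaussianity, is why CMM stalls at $8$.
\item Chebyshev only gives $\Pr(i\in\mathcal B)=O(1/\log n)$. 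That forces a reserve of size $d(\log n)^{-\theta}$, as in the paper. Getting $|\mathcal B|\le n^{1-c'}$ with $m=d^{1-\delta}$ requires exponential tails for $s_{i,i}$.
\item You need a priori bounds such as $\min_i s_{i,i}\ge-1$ and $\max_{j\ne i}s_{j,i}\le 2$. Otherwise a bounded lift is not guaranteed to repair every deficient query.
\end{enumerate}

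\textbf{Impossibility: the per-$W$ estimate.} This half has a genuine gap. A union bound over a net of $W$'s needs, for every net point, a bound on $\Pr(\text{normalized, relaxed }\mathsf{PR}(W))$ that beats the net size $e^{\Theta(d^2\log n)}$.

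Your per-query picture (signal $\approx 1$, competitors i.i.d.\ $\mathcal N(0,\sigma^2)$ with $\sigma^2\ge\alpha_n$) describes a data-dependent memory. For a fixed $W$, $u_i^\top Wv_i$ is exchangeable with its competitors. Without the normalization, exchangeability of $u_1,\dots,u_n$ gives exactly $\Pr(\mathsf{PR}(W))\le 1/n!=e^{-(1+o(1))n\log n}$. That loses to any reasonable net near the threshold.

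Your rescue, $\exp(-n^{1+\epsilon'})$, multiplies per-query bounds ``independently across queries given $W$'', and that step is false. The i.i.d.\ competitor law for query $i$ uses the randomness of $U$ with $v_i$ fixed, but all columns $U^\top Wv_i$ share $U$. Given $V$, only the rows are independent: each $u_j$ must fall in a polytope with facet normals $\{Wv_i\}_{i\ne j}$ and offsets $s_{i,i}$. Write $\sigma_i=\norm{Wv_i}/\sqrt d$. The polytope's Gaussian measure is of order $\exp\bigl(-\sum_i[1-\Phi(s_{i,i}/\sigma_i)]\bigr)$ only when these $n\gg d$ directions are well spread. For low-rank or spiky $W$, which the net must contain, positive correlations make it much larger.

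Moreover, (a) and (b) only control $\sum_i s_{i,i}$ and $\sum_i\sigma_i^2$. Turning ``every query beats its extreme competitor'' into a contradiction at constant $2$ needs $\sum_i\sigma_i\ge(1-o(1))\sqrt{n\sum_i\sigma_i^2}$. That fails when the $\sigma_i$ are heterogeneous, and discarding half the queries loses the constant. As written, (a) also carries a spurious factor $d/\sqrt n\asymp\sqrt{\log n}$, and the operator-norm route loses a factor $2$. What you need is $\inprod{W,UV^\top}\le(1+o(1))\sqrt n\,\norm{W}_{\mathsf F}$.

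\textbf{The missing idea.} Avoid quantifying over $W$ altogether and build the Farkas certificate explicitly from the data. The paper proceeds as follows.
\begin{enumerate}
\item It restricts to comparisons $i>r\ge j$ with $r=\lfloor n(\log n)^{-2}\rfloor$.
\item It sets $x_i=MH^{-1}v_i$, so that $\sum_{i>r}x_iv_i^\top=M=\sum_{i>r}u_iv_i^\top$.
\item It takes $\lambda_{ij}=\frac1r\bigl[\frac1{\log n}+\frac{b}{p_i}\mathbf 1\{\sqrt d\,u_j^\top x_i/\norm{x_i}\ge z_i\}\bigr]$. Here $b=1-1/\log n$, $p_i=1-\Phi(z_i)$, and $z_i$ solves $b\varphi(z_i)/p_i=\sqrt d\,\norm{x_i}$. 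Then $\mathcal A(\lambda)$ is an average of $r$ conditionally i.i.d.\ centered matrices.
\item The hypothesis $d^2<(2-\epsilon)n\log n$ enters exactly as $z_i^2/(2\log n)<\theta<1$, i.e.\ $1/p_i\le n^\theta$. This makes $\norm{\mathcal A(\lambda)}_{\mathsf F}=o(n/(d\log n))$.
\item A uniform $\ell_1$ lower bound shows that $\{\mathcal A(\delta):|\delta_{ij}|\le1\}$ contains a Frobenius ball of radius $c_0rn/d$. So $\mathcal A(\lambda)$ can be cancelled exactly while every weight stays at least $1/(2r\log n)$.
\end{enumerate}
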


The sharp threshold at $d^2=2n\log n$ was conjectured in an earlier
version of this paper and independently in the concurrent work of
\citet{giorlandino2026factual}.
We establish the achievability and impossibility directions in
\sref{top1-achievability} and \sref{top1-impossibility}, respectively.
Before turning to these proofs, \sref{superposition} introduces the
correlation matrix memory (CMM), a classical outer-product construction.
Its analysis exposes
the extreme-value origin of the logarithmic factor and motivates the
sparse repair used in the achievability proof. We note that an order-wise
$\Omega(n\log n)$ lower bound is also suggested in \citet[Theorem~7]{nichani2024understanding}
via an information-theoretic argument, but it does not identify the sharp
constant and applies only to finite-bit learners. Finally,
\sref{top1-finite-size} studies finite-size scaling and proposes a
second-order conjecture.

\subsection{Correlation matrix memory and the logarithmic factor}
\label{sec:superposition}

The simplest linear memory stores each association by its outer
product and superposes the results:
\begin{equation}\label{eq:cmm}
    W_{\mathsf{CMM}}
    \bydef
    UV^\top
    =\sum_{k=1}^n u_kv_k^\top.
\end{equation}
This is the classical \emph{correlation matrix memory} (CMM)
\citep{Kohonen1972CorrelationMM} and is the direct analogue of the
classical self-organizing network \citep{amari1972learning} for
asymmetric key and target pairs. When the key $v_i$ is presented, the
$k$th summand contributes
$u_k\langle v_k,v_i\rangle$: the desired vector $u_i$ appears with
coefficient $\|v_i\|^2$, while the remaining terms create cross-talk.
The rule is transparent and admits a rank-one update whenever an
association is added or removed. It also illustrates
\emph{superposition}: the stored vectors need not be mutually
orthogonal for retrieval to succeed
\citep{elhage2022toymodelssuperposition}.

To understand the performance of this particular, data-dependent
memory, write
\[
    \alpha_n\bydef\frac{n}{d^2}
\]
and denote the CMM scores by
$s_{j,i}=u_j^\top W_{\mathsf{CMM}}v_i$. Expanding \eqref{eq:cmm} gives
\begin{align}
    s_{i,i}
    &={\|u_i\|^2\|v_i\|^2}
      +\sum_{k\ne i}
      \langle u_i,u_k\rangle\langle v_k,v_i\rangle,
      \label{eq:cmm-signal-decomp}\\
    s_{j,i}
    &=\langle u_j,u_i\rangle\|v_i\|^2
      +\|u_j\|^2\langle v_j,v_i\rangle
      +\sum_{k\notin\{i,j\}}
      \langle u_j,u_k\rangle\langle v_k,v_i\rangle,
      \qquad j\ne i.
      \label{eq:cmm-competitor-decomp}
\end{align}
Under the Gaussian model,
$\|u_i\|^2=1+O_{\P}(d^{-1/2})$ and
$\|v_i\|^2=1+O_{\P}(d^{-1/2})$, so the first term in
\eqref{eq:cmm-signal-decomp} is $1+O_{\P}(d^{-1/2})$. Conditional on
$(u_i,v_i)$, the summands in the cross-talk term of
\eqref{eq:cmm-signal-decomp} are independent and centered, each with
variance $\|u_i\|^2\|v_i\|^2/d^2$. Their sum therefore has variance
approximately $n/d^2=\alpha_n$ and is approximately Gaussian by the
central limit theorem.

Turning to a competitor score, the same conditional central limit
argument applies to the cross-talk sum over $k\notin\{i,j\}$ in
\eqref{eq:cmm-competitor-decomp}. The remaining $k=i$ and $k=j$
contributions,
$\langle u_j,u_i\rangle\|v_i\|^2$ and
$\|u_j\|^2\langle v_j,v_i\rangle$, respectively, are only
$O_{\P}(d^{-1/2})$. At the scale $d^2\asymp n\log n$, they and the
norm fluctuations above are therefore
$O_{\P}(d^{-1/2})=o_{\P}(\sqrt{\alpha_n})$.

Combining these estimates suggests the following leading-order
Gaussian picture for the scores associated with a fixed query:
\begin{equation}\label{eq:cmm-score-heuristic}
    s_{i,i}\approx 1+\sqrt{\alpha_n}\,Z_i,
    \qquad
    s_{j,i}\approx\sqrt{\alpha_n}\,Z_{j,i},
    \quad j\ne i.
\end{equation}
Here the variables on the right are heuristically treated as
independent standard Gaussians. The exact one-column representation
in
\ifdefined\sectiontwoworking
Appendix~A.1
\else
Lemma~\ref{lem:single-coord-law}
\fi
provides a finite-$(n,d)$ description underlying this picture.
For a fixed ordered pair $i\ne j$,
\eqref{eq:cmm-score-heuristic} predicts
\[
    s_{i,i}-s_{j,i}
    \approx
    1+\sqrt{2\alpha_n}\,G_{j,i},
    \qquad G_{j,i}\sim\mathcal N(0,1),
\]
and hence, by standard Gaussian tail bounds,
\[
    \P(s_{i,i}\le s_{j,i})
    \approx
    \exp\!\left(-\frac{1}{4\alpha_n}\right).
\]
Perfect retrieval requires all $n(n-1)$ such margins to be positive.
A union bound therefore predicts
\begin{equation}\label{eq:cmm-eight-heuristic}
    \P\big(\text{$W_{\mathsf{CMM}}$ fails perfect retrieval}\big)
    \lesssim
    n^2\exp\!\left(-\frac{1}{4\alpha_n}\right)
    =\exp\!\left(2\log n-\frac{d^2}{4n}\right),
\end{equation}
which vanishes whenever
$\liminf d^2/(n\log n)>8$. This also reveals the origin of the
logarithmic factor: cross-talk has size $\sqrt n/d$, but the worst of
quadratically many Gaussian-like margins incurs an additional
$\sqrt{\log n}$ extreme-value factor.

Previously, \citet[Theorem~1]{nichani2024understanding} proved that
CMM succeeds with high probability when
$d^2\ge Cn(\log n)^4$ for a sufficiently large constant $C$. The
calculation above sharpens this sufficient condition to the
$n\log n$ scale and predicts the leading constant $8$. The following
proposition makes this improvement rigorous.

\begin{proposition}[CMM succeeds above constant $8$]
\label{prop:sup-achievability}
Along any sequence $n,d\to\infty$,
\[
    \liminf\frac{d^2}{n\log n}>8
    \quad\Longrightarrow\quad
    \P\big(\mathsf{PR}(W_{\mathsf{CMM}};U,V)\big)\to1.
\]
\end{proposition}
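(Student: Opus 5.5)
We make the union-bound heuristic \eqref{eq:cmm-eight-heuristic} rigorous by a conditioning argument. The plan has three steps: condition on one query to expose a Gaussian cross-talk term, control the conditional variance with high probability, and then union-bound over all ordered pairs.

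\textbf{Step 1: conditioning.} Fix an ordered pair $i\ne j$ and write the margin as
\[
m_{j,i}=s_{i,i}-s_{j,i}=(u_i-u_j)^\top W_{\mathsf{CMM}}v_i .
\]
Condition on $\mathcal F_{i,j}=\sigma(u_i,u_j,v_i,v_j)$. The contribution of $k\in\{i,j\}$ is
\[
A_{j,i}=\langle u_i-u_j,u_i\rangle\|v_i\|^2+\langle u_i-u_j,u_j\rangle\langle v_j,v_i\rangle .
\]
The remaining contribution is
\[
B_{j,i}=\sum_{k\notin\{i,j\}}\langle u_i-u_j,u_k\rangle\langle v_k,v_i\rangle .
\]
Given $\mathcal F_{i,j}$, the pairs $(u_k,v_k)$ with $k\notin\{i,j\}$ are i.i.d.\ and independent of the conditioning. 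Each summand of $B_{j,i}$ is then a product of two independent centered Gaussians with variances $a=\|u_i-u_j\|^2/d$ and $b=\|v_i\|^2/d$.

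\textbf{Step 2: tail of $B_{j,i}$.} I would avoid a CLT and instead use an exact Chernoff bound. Condition additionally on $\{v_k\}_{k\notin\{i,j\}}$. Then $B_{j,i}$ is exactly Gaussian with variance $a\sum_k\langle v_k,v_i\rangle^2 = a\,v_i^\top(\textstyle\sum_k v_kv_k^\top)v_i$. This gives
\[
\P\bigl(B_{j,i}\le -t\mid \cdot\bigr)\le \exp\!\bigl(-t^2/(2a\,Q)\bigr),\qquad Q=\sum_{k\notin\{i,j\}}\langle v_k,v_i\rangle^2 .
\]
Conditionally on $v_i$, the quantity $Q$ is $\|v_i\|^2/d$ times a $\chi^2_{n-2}$ variable. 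By Laurent--Massart, $Q\le (1+\delta)\,n\|v_i\|^2/d^2$ with probability at least $1-e^{-c\delta^2 n}$, which is far below $n^{-2}$ in failure probability.

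Next define a good event $\mathcal G$ on which the following hold simultaneously for all $i$ and $j$:
\[
\|u_i-u_j\|^2\le 2(1+\delta),\qquad \|v_i\|^2\le 1+\delta,\qquad \langle u_i,u_j\rangle,\ \langle v_i,v_j\rangle=O(\sqrt{\log n/d}),\qquad \|u_i\|^2\ge 1-\delta .
\]
Standard $\chi^2$ and Gaussian concentration with a union bound over $n^2$ pairs give $\P(\mathcal G^c)\to0$. This uses $\log n=o(d)$, which holds since $d^2\asymp n\log n$. On $\mathcal G$ we have $A_{j,i}\ge 1-2\delta$ for large $n$, and $aQ\le 2(1+\delta)^3 n/d^2$.

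\textbf{Step 3: union bound.} Combining the previous steps, for each pair
\[
\P\bigl(m_{j,i}\le 0,\ \mathcal G\bigr)\le \exp\!\Bigl(-\frac{(1-2\delta)^2 d^2}{4(1+\delta)^3 n}\Bigr)+e^{-c\delta^2 n}.
\]
Summing over the $n(n-1)$ ordered pairs gives the total bound
\[
\exp\!\Bigl(2\log n-\tfrac{(1-2\delta)^2}{4(1+\delta)^3}\tfrac{d^2}{n}\Bigr)+n^2e^{-c\delta^2 n}+\P(\mathcal G^c).
\]
If $\liminf d^2/(n\log n)=8+\eta$, we choose $\delta$ small enough that $(8+\eta)(1-2\delta)^2/(4(1+\delta)^3)>2$. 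Every term then tends to $0$, which proves the proposition.

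\textbf{Main obstacle.} The key point is that the variance of $B_{j,i}$ involves the random quantity $Q$, and a Gaussian comparison is valid only after conditioning on the $v_k$'s. It is therefore essential that the deviations of $Q$, and of the norms entering $a$, have failure probabilities that are super-polynomially small. The $\chi^2$ bounds provide exactly this, since they decay like $e^{-cn}$ and $e^{-cd}$ respectively.
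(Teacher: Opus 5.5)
Your proof is correct, and it differs from the paper's in the key tail estimate.

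The paper conditions only on $(u_i,u_j,v_i)$ and keeps the $k=j$ term $\langle u_i-u_j,u_j\rangle\langle v_j,v_i\rangle$ random. It then computes the exact conditional moment generating function of the bilinear chaos, $\exp(\lambda^2 b_{j,i}^2\|v_i\|^2/(2d))\,(1-\lambda^2\|x\|^2\|v_i\|^2/d^2)^{-(n-2)/2}$. A sum of products of Gaussians is only sub-exponential, so the optimal Chernoff parameter $\lambda\approx d^2/(2n)$ is admissible only when $d/n\to0$. The paper therefore passes to subsequences and treats $d/n\ge c_0$ separately with $\lambda=\kappa d$, obtaining $e^{-c_1 d}$.

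You instead also condition on $\{v_k\}_{k\notin\{i,j\}}$. This makes the cross-talk exactly Gaussian with variance $aQ$ and moves all the non-Gaussianity into the upper deviation of $Q=(\|v_i\|^2/d)\chi^2_{n-2}$. That deviation costs only $e^{-c\delta^2 n}$, which is summable against the $n(n-1)$ pairs. Your route handles every regime at once, including $d\gtrsim n$, with no product-Gaussian MGF and no subsequence argument. The price is a fixed slack $(1-2\delta)^2/(1+\delta)^3$ in the exponent instead of the paper's $1-o(1)$. This is harmless because the hypothesis is the strict inequality $\liminf d^2/(n\log n)>8$.

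A few small repairs are needed:
\begin{enumerate}
\item The chi-square step should read $Q\le(1+\delta)\,n\|v_i\|^2/d$, not $/d^2$. This is what your bound $aQ\le 2(1+\delta)^3 n/d^2$ actually uses.
\item $\mathcal G$ must include the lower bound $\|v_i\|^2\ge 1-\delta$, since otherwise $A_{j,i}\ge 1-2\delta$ does not follow. The upper bound on $\|u_j\|^2$ that you also need does follow from your listed conditions.
\item Before conditioning on $\mathcal F_{i,j}$, replace $\mathcal G$ by its restriction to quantities involving only $(u_i,u_j,v_i,v_j)$, as the paper does with $\mathcal H_{j,i}$.
\item The hypothesis gives only $d^2\ge(8+\eta)n\log n$ eventually, not $d^2\asymp n\log n$. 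Still, $\log n=o(d)$ holds, and your argument never uses an upper bound on $d$; that is exactly why you need no second regime.
\end{enumerate}
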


\begin{proof}
\ifdefined\sectiontwoworking
See the appendix.
\else
See Appendix~\ref{app:perfect_retrieval}.
\fi
\end{proof}

This sufficient constant $8$ is nevertheless four times the optimal
constant $2$ in Theorem~\ref{thm:PR-main}  (and one can show that this is unimprovable for CMM). Where does this gap come
from? The calculation
leading to \eqref{eq:cmm-eight-heuristic}
examines one pairwise margin at a time and then protects all
$n(n-1)$ margins by a union bound. This does not require the margins
to be independent, but it discards their strong dependence. For a
fixed query $v_i$, the signal score $s_{i,i}$ is realized only once
and is shared by every comparison, whereas, at leading order, the
competitor scores $\{s_{j,i}:j\ne i\}$ behave as independent
Gaussians. The exact distributional representation behind this claim
is given in
\ifdefined\sectiontwoworking
Appendix~A.1.
\else
Appendix~\ref{app:lemma_1_proof}.
\fi
The natural query-wise object is therefore
\[
    s_{i,i}-\max_{j\ne i}s_{j,i}
    \approx
    1+\sqrt{\alpha_n}Z_i-\sqrt{2\alpha_n\log n}.
\]
Here $\sqrt{2\alpha_n\log n}$ comes from the standard estimate for
the maximum of $n$ independent Gaussian variables with variance
$\alpha_n$.
The constant $8$ arises in the all-pairs argument because it
repeatedly pays for fluctuations of the same signal score as though
each comparison were a separate bad event; the query-wise picture
pays for the maximum of the competitors only once.

If $d^2/(n\log n)>2$ by a fixed amount, the display above predicts a
positive margin for any \textit{typical} query. Perfect retrieval, however,
requires \textit{all} $n$ queries to succeed, and a typical-query calculation
cannot rule out a small set with unusually low margins. Thus CMM
correctly handles the bulk at constant $2$, but by itself does not
provide the required all-query guarantee. This naturally suggests a
sparse correction: retain the CMM solution for most queries and
modify it only to repair the exceptional ones. Section~\ref{sec:top1-achievability}
makes this idea precise.

\subsection{Achievability by adaptive sparse repair}
\label{sec:top1-achievability}

We reserve a vanishing fraction of the coordinates for repair and use
the rest to form a base CMM. Independence of the reserved block then
allows us to repair only the deficient queries. Specifically, we
construct a block-diagonal memory
\[
    W^\star\bydef\diag(W^0,W^1),
\]
Here $W^0\in\R^{d_0\times d_0}$ is the base memory and
$W^1\in\R^{d_1\times d_1}$ is the repair; the block dimensions
$d_0,d_1$ are specified below. Split every vector into independent coordinate blocks
$u_i=(u_i^0,u_i^1)$ and $v_i=(v_i^0,v_i^1)$ of dimensions
\[
    d_1=\left\lfloor d(\log n)^{-1/8}\right\rfloor,
    \qquad d_0=d-d_1.
\]
The large block stores all associations using the rescaled CMM
\begin{align*}
    W^0\bydef \frac{d^2}{d_0^2}
      \sum_{k=1}^n u_k^0(v_k^0)^\top,\qquad
    s_{j,i}
    \bydef (u_j^0)^\top W^0v_i^0.
\end{align*}
Its retrieval margin for query $i$ is
\[
    m_i
    \bydef s_{i,i}-\max_{j\ne i}s_{j,i}.
\]
For a tolerance $\gamma>0$ (to be chosen later), call a query
\emph{deficient} when its
margin is at most $\gamma$, and collect these queries in
\[
    \mathcal B
    \bydef\{i:m_i\le\gamma\}.
\]
For each $i\in\mathcal B$, the coefficient $2\gamma-m_i$ is the
amount by which we aim to raise its margin. We therefore use the
reserved block to add the targeted rank-one corrections
\[
    W^1
    \bydef
    \frac{d^2}{d_1^2}
    \sum_{i\in\mathcal B}(2\gamma-m_i)u_i^1(v_i^1)^\top.
\]
Both $\mathcal B$ and the correction coefficients depend only on the
large block and are thus independent of the coordinates used for
repair.

\begin{proposition}[Sparse repair succeeds above constant $2$]
\label{prop:optimal}
Suppose $\liminf_{n\to\infty}d^2/(n\log n)>2$.
Choose constants $c$ and $\gamma\in(0,1/2)$ such that
\[
    2<c<\liminf_{n\to\infty}\frac{d^2}{n\log n},
    \qquad
    c(1-2\gamma)^2>2,
\]
and apply the construction above with this value of $\gamma$. Then
\[
    \P\big(\mathsf{PR}(W^\star;U,V)\big)\to1.
\]
Consequently, Theorem~\ref{thm:PR-main}\ref{item:achievability}
holds.
\end{proposition}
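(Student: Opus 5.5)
We analyze the total score $S_{j,i}\bydef u_j^\top W^\star v_i=s_{j,i}+t_{j,i}$, with repair score $t_{j,i}\bydef (u_j^1)^\top W^1 v_i^1$, and verify $S_{i,i}>S_{j,i}$ for all $i$ and $j\ne i$. The argument has three steps, carried out in order: the base block, the set $\mathcal B$, and the repair block.

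\textbf{Step 1 (base block).} Conditional on the large block, the rescaled CMM is the CMM of Proposition~\ref{prop:sup-achievability} in dimension $d_0$, with overall scaling $d^2/d_0^2\to1$. Using the one-column representation of Lemma~\ref{lem:single-coord-law}, for fixed $i$ we expect
\[
s_{i,i}\approx 1+\sqrt{\alpha_n}Z_i,\qquad \{s_{j,i}\}_{j\ne i}\approx\sqrt{\alpha_n}\,\{Z_{j,i}\}\ \text{i.i.d.},
\]
with $\alpha_n\le 1/(c\log n)$ eventually. The error terms in this representation are norm fluctuations and $O_{\P}(d^{-1/2})$ pieces. We must control them with stretched-exponential tails, so that they can be absorbed into a vanishing margin $\delta$ uniformly over all $n$ queries.

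\textbf{Step 2 (the deficient set).} We bound $\E|\mathcal B|=n\,\P(m_1\le\gamma)$. The event $m_1\le\gamma$ requires either $\sqrt{\alpha_n}Z_1\le -\eta$ or $\max_j s_{j,1}\ge 1-\gamma-\eta$. The first has probability $\exp(-\eta^2 c\log n/2)$, a small power of $n$. The second, by a union bound over $j$, has probability at most
\[
n\exp\bigl(-(1-\gamma-\eta)^2c\log n/2\bigr)=n^{1-c(1-\gamma-\eta)^2/2}.
\]
Since $c(1-2\gamma)^2>2$, we may take $\eta<\gamma$, which makes this exponent at most $-\epsilon<0$ for some fixed $\epsilon>0$. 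Hence $|\mathcal B|\le n^{1-\epsilon'}$ with high probability.

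We also need a crude global bound, $\min_i m_i\ge -C$ and $\max_{i,j}|s_{j,i}|\le C$ with high probability. This follows since all scores are $O(\sqrt{\alpha_n\log n})+1=O(1)$ uniformly by Gaussian tails. It gives coefficients $a_i\bydef 2\gamma-m_i\in(\gamma,2\gamma+C]$ for $i\in\mathcal B$.

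\textbf{Step 3 (repair block).} Conditional on the large block, $W^1$ is a CMM in dimension $d_1$ with only $|\mathcal B|\le n^{1-\epsilon'}$ weighted terms, independent of the small-block Gaussians. Write
\[
t_{j,i}=\frac{d^2}{d_1^2}\sum_{k\in\mathcal B}a_k\langle u_j^1,u_k^1\rangle\langle v_k^1,v_i^1\rangle .
\]
The diagonal term contributes $a_i\|u_i^1\|^2\|v_i^1\|^2\,d^2/d_1^2$ when $i\in\mathcal B$, and each $\|\cdot\|^2\approx d_1/d$. With the scaling $d^2/d_1^2$ this is $\approx a_i$ for $i\in\mathcal B$, and $0$ otherwise.

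The cross-talk has conditional variance of order $|\mathcal B|C^2d^2/d_1^2\cdot d_1^2/d^4\cdot d^2/d_1^2=|\mathcal B|C^2/d_1^2$, using $\|u^1\|^2\sim d_1/d$ per factor. Now $d_1^2=d^2(\log n)^{-1/4}\gtrsim n(\log n)^{3/4}$, so the variance is at most $n^{-\epsilon'}$. Hence, by Hanson--Wright/Gaussian chaos tail bounds plus a union bound over $n^2$ pairs, all off-diagonal cross-talk is uniformly $o(1)$, say $\le\delta$. The cross-terms with $k\in\{i,j\}$ are of size $\langle u_j^1,u_i^1\rangle d/d_1\sim d_1^{-1/2}\sqrt{\log n}\to0$ and are handled the same way.

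\textbf{Conclusion.} For $i\notin\mathcal B$ we get $S_{i,i}-S_{j,i}\ge m_i-2\delta>\gamma-2\delta>0$. For $i\in\mathcal B$ we get $S_{i,i}-S_{j,i}\ge m_i+a_i(1-\delta)-2\delta\approx 2\gamma>0$.

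\textbf{Main obstacle.} The main difficulty is Step 2 done honestly. One needs a bound on $\P(m_1\le\gamma)$ at the level $n^{-\epsilon}$ from the exact one-column law, including the dependence between $s_{1,1}$ and the competitors and the non-Gaussian chaos corrections. Step 3, by contrast, is routine because it only needs the polynomial gap $|\mathcal B|\ll d_1^2$.
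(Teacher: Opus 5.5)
Your proposal is correct and follows the same architecture as the paper's proof. Both split $\{m_i\le\gamma\}$ into a low-signal event plus a union bound over high competitors, use crude uniform bounds to keep the weights $2\gamma-m_i$ bounded, and control the repair block by concentration conditional on the large block. Where you differ is in how $|\mathcal B|$ is controlled.

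\emph{Bounding $|\mathcal B|$.} You use a sub-Gaussian lower tail $\P(s_{i,i}\le 1-\eta)\le n^{-c\eta^2/2+o(1)}$ and obtain $|\mathcal B|\le n^{1-\epsilon'}$. The paper only applies Chebyshev with $\var(s_{i,i})=O(1/\log n)$, which gives $|\mathcal B|\le n/\sqrt{\log n}$. Its repair error $\|w\|_2\sqrt{\log n}/d_1$ then decays only like $(\log n)^{-1/8}$, and the choice of $d_1$ is squeezed into the window $d(\log n)^{-1/4}\ll d_1\ll d$. Your polynomial saving removes this tension and makes Step~3 comfortable.

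\emph{The step you flag as the main obstacle.} It is easier than you fear. Because you union-bound, you need no joint control of $s_{1,1}$ and its competitors, only marginal moderate-deviation tails. The paper gets these from an exact conditional moment-generating function rather than from the one-column law:
\begin{itemize}
\item Given $(u_j^0,v_i^0)$, the score $s_{j,i}$ is a sum of two Gaussians and $n-2$ independent Gaussian products, and its log-MGF is explicit.
\item Chernoff at $\lambda\asymp cz\log n$ has $\lambda/d\to0$, so the non-Gaussian corrections are $o(\log n)$. This gives $\P(s_{j,i}\ge z)\le n^{-cz^2/2+o(1)}+n^{-C}$.
\item The same computation for $q_i=s_{i,i}-(d/d_0)^2\|u_i^0\|^2\|v_i^0\|^2$ gives $\P(|q_i|\ge z)\le 2n^{-cz^2/2+o(1)}+n^{-C}$.
\end{itemize}
Combined with chi-square concentration of the norm term, the last bound is precisely the signal tail your Step~2 requires. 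The paper proves it but uses it only at $z=1$ for the crude bound. The one-column law would also work, but it requires tracking the random radius $\sigma$, which the MGF route avoids.
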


\begin{proof}
Fix $i\ne j$ and set $\alpha = d_0/d\in (0,1)$. Conditioning on $x=u_j^0$ and $y=v_i^0$, write
\begin{align*}
s_{j,i} = \frac{1}{\alpha^2} \left[\|x\|^2 (v_j^0)^\top y + (x^\top u_i^0)\|y\|^2 + \sum_{k\ne i,j} (x^\top u_k^0) (v_k^0)^\top y\right]
\end{align*}
where all summands are conditionally independent. Then the log moment generating function of $s_{j,i}$ is
\begin{align*}
\log\E[e^{\lambda s_{j,i}} \mid x,y] = \frac{\lambda^2}{2\alpha^4 d} \|x\|^2\|y\|^2(\|x\|^2 + \|y\|^2) - \frac{n-2}{2} \log\left(1-\frac{\lambda^2}{\alpha^4 d^2}\|x\|^2\|y\|^2\right).
\end{align*}
Taking $\lambda =\alpha^2 cz\log n$ and noting that $\|x\|^2,\|y\|^2 = \alpha(1+o(1))$ with probability at least $1-n^{-C}$ for any constant $C>0$, the first term is $o(1)$ and the second is
\begin{align*}
\frac{n-2}{2} \frac{\lambda^2}{\alpha^4 d^2}\|x\|^2\|y\|^2 (1+o(1)) = \frac{\alpha^2 c^2z^2 n(\log n)^2}{2d^2} (1+o(1)) < \frac12 \alpha^2 cz^2\log n (1+o(1)).
\end{align*}
By the Chernoff bound, we thus have
\begin{align*}
\log\Pr(s_{j,i}\ge z\mid x,y) &\le -\lambda z + \log\E[e^{\lambda s_{j,i}} \mid x,y] \\
&\le -\frac12 \alpha^2 cz^2\log n + o(\log n),
\end{align*}
and so
\begin{align}\label{eq:mgf-1}
\Pr(s_{j,i}\ge z) &\le n^{- \frac12\alpha^2 cz^2 + o(1)} + n^{-C}.
\end{align}
Next, a straightforward computation gives
\begin{align*}
\E[s_{i,i}] = 1, \qquad \var(s_{i,i}) = \frac{4}{\alpha d} + \frac{n+3}{\alpha^2 d^2} = O\left(\frac{1}{\log n}\right).
\end{align*}
Writing
\begin{align*}
q_i := s_{i,i} - \alpha^{-2} \|u_i^0\|^2 \|v_i^0\|^2,
\end{align*}
a similar computation as above gives
\begin{align}\label{eq:mgf-2}
\Pr(|q_i| \ge z) \le 2n^{-\frac12 \alpha^2 cz^2 + o(1)} + n^{-C}.
\end{align}
Plugging $z=2$ in~\eqref{eq:mgf-1} and $z=1$ in~\eqref{eq:mgf-2} and choosing $C>2$, by union bounding, it follows that
\begin{align*}
\min_i s_{i,i} \ge -1, \quad \max_{j\ne i} s_{j,i} \le 2
\end{align*}
with probability tending to~$1$.

Now recall the definition of deficient queries $\mathcal B=\{i:m_i\le\gamma\}$. If $i\in \mathcal B$ and $s_{i,i}>1-\gamma$, then $\max_{j\ne i}s_{j,i} \ge 1-2\gamma$. Therefore
\begin{align*}
\Pr(i\in\mathcal B) &\le \Pr(s_{i,i} \le 1-\gamma) + \sum_{j\ne i} \Pr(s_{j,i} \ge 1-2\gamma) \\
&\le \frac{\var(s_{i,i})}{\gamma^2} + n^{1-\frac12 c(1-2\gamma)^2 + o(1)} = O\left(\frac{1}{\log n}\right).
\end{align*}
Here, we have used $c(1-2\gamma)^2>2$ and Chebyshev’s inequality. It follows from Markov's inequality that
\begin{align*}
\Pr\left(|\mathcal B| > \frac{n}{\sqrt{\log n}}\right) = o(1).
\end{align*}
Write the repair block as
\begin{align*}
W^1 = (1-\alpha)^{-2} \sum_{i=1}^n w_i u_i^1(v_i^1)^\top, \qquad w_i = (2\gamma-m_i) 1\{i\in \mathcal B\}, \quad t_{j,i} := (u_j^1)^\top W^1 v_i^1.
\end{align*}
Conditional on the~$d_0$ block and the above events, the weights~$w_i$ are fixed and satisfy
\begin{align*}
0\le w_i\le 4, \qquad \|w\|_2 \le 4\sqrt{|\mathcal B|} \le 4n^{1/2}(\log n)^{-1/4}.
\end{align*}
Then by chi-square concentration and Bernstein's inequality for subexponential random variables, it holds with probability $1-o(1)$ that
\begin{align*}
\max_i |t_{i,i} - w_i| \vee \max_{j\ne i} |t_{j,i}| &= O\left(\|w\|_\infty \sqrt{\frac{\log n}{d_1}} + \frac{\|w\|_\infty\log n}{d_1} + \frac{\|w\|_2 \sqrt{\log n}}{d_1}\right) \\
&= O\left(n^{-1/4}(\log n)^{5/16} + n^{-1/2} (\log n)^{5/8} + (\log n)^{-1/8}\right).
\end{align*}
Choose~$n$ sufficiently large such that the right-hand side is less than $\gamma/2$. Then for every~$i$ and~$j\ne i$,
\begin{align*}
u_i^\top W^\star v_i - u_j^\top W^\star v_i &=  s_{i,i}-s_{j,i} + t_{i,i}-t_{j,i} \\
&\ge m_i + w_i - \gamma.
\end{align*}
If $i\notin \mathcal B$, then $m_i>\gamma$ and $w_i=0$; if $i\in \mathcal B$, then $m_i+w_i=2\gamma$. Thus all margins become positive under~$W^\star$ and hence $\mathsf{PR}(W^\star;U,V)$ is satisfied with probability tending to one, as was to be shown.
\end{proof}

\begin{remark}[Relation to perceptron constructions]
The sparse-repair argument in our proof is related in spirit to
multiscale majority constructions for random perceptrons, where
successive fresh coordinate blocks are used to improve the constraints
with the weakest current margins
\citep{kim1998covering,abbe2022binary}. In the related construction of
\citet{ding2025capacity}, the deficits from a main construction are also
filled using a small coordinate block. In contrast, our repair is
one-shot: a single independent block targets the vanishing set of
CMM-deficient queries through weighted rank-one corrections.
\end{remark}

\subsection{Impossibility via a dual certificate}
\label{sec:top1-impossibility}

We now prove the impossibility direction of
Theorem~\ref{thm:PR-main}. The argument has a useful geometric
interpretation.
For each ordered comparison $i\ne j$, define the matrix
\[
    A_{ij}\bydef (u_i-u_j)v_i^\top.
\]
A memory $W$ retrieves every association if and only if
$\inprod{W,A_{ij}}>0$ for every $i\ne j$. Thus to show impossibility, it is
enough to find nonnegative coefficients $\lambda=(\lambda_{ij})$, not all zero, such that
\[
    \sum_{i\ne j}\lambda_{ij}A_{ij}=0.
\]
Indeed, taking the Frobenius inner product with such a $W$ would make the
left-hand side both zero and strictly positive, a contradiction. Such a nontrivial nonnegative linear
dependence among these comparison matrices is our \emph{dual certificate}.
The same principle underlies
classical results on random half-spaces and linear separability
\citep{wendel1962problem,cover1965geometrical}.

The proof constructs the certificate in two stages. First, we choose
positive coefficients for which the above weighted sum is already close to zero; the threshold $2$ ensures
that there are enough comparisons for this approximate cancellation.
Second, we show that small adjustments to these coefficients can
produce any sufficiently small matrix perturbation, so that the sum can be made to be exactly zero while keeping all coefficients positive.

\begin{proof}[Proof of \thref{PR-main}\ref{item:impossibility}]
Since the threshold clearly must be monotonic in~$n$, we may assume without loss of generality that
\begin{align*}
\frac{1}{2}
\le
\frac{d^2}{n\log n}
\le
c
<
2.
\end{align*}
We will use the index cutoff $r := \lfloor n(\log n)^{-2}\rfloor$ to construct the dual certificate. For an array $\lambda=(\lambda_{ij})_{i>r,\,j\le r}$, define the linear map
\begin{align*}
\mathcal A(\lambda)
:=
\sum_{i=r+1}^n\sum_{j=1}^r
\lambda_{ij}(u_i-u_j)v_i^\top.
\end{align*}
If some $W\in\R^{d\times d}$ retrieves every association, it would follow that for every nonzero array $\lambda\ge0$,
\begin{align*}
\inprod{W,\mathcal A(\lambda)}_{\mathsf F}
=
\sum_{i=r+1}^n\sum_{j=1}^r
\lambda_{ij}(u_i-u_j)^\top Wv_i
>
0.
\end{align*}
It therefore suffices to construct an array~$\lambda$ with strictly positive entries such that $\mathcal A(\lambda)=0$. We first construct a positive array for which $\mathcal A(\lambda)$ is small.

\begin{lemma}\label{lem:array}
With probability tending to~$1$, there exists an array~$\lambda$ such that
\begin{align*}
\min_{i>r, j\le r}\lambda_{ij}
\ge
\frac{1}{r\log n},
\qquad \|\mathcal A(\lambda)\|_{\mathsf F}
=
o\left(\frac{n}{d\log n}\right).
\end{align*}
\end{lemma}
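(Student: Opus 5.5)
The plan is to build $\lambda$ from exponentially tilted (softmax) weights over the $r$ competitors, steered by a correlation-matrix memory built on the queries $i>r$, so that for each $i>r$ the weighted competitor average $\sum_{j\le r}\lambda_{ij}u_j$ reproduces $\mu_i u_i$ with $\mu_i:=\sum_{j\le r}\lambda_{ij}$. Writing
\[
\mathcal A(\lambda)=\sum_{i>r}\Bigl(\mu_i u_i-\sum_{j\le r}\lambda_{ij}u_j\Bigr)v_i^\top ,
\]
the task is to make each bracket nearly zero, with errors that do not add up coherently across $i$.

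Concretely, set $M:=\kappa\sum_{k>r}u_kv_k^\top$ and $y_i:=Mv_i$, and take
\[
\lambda_{ij}=\frac{\mu}{Z_i}\,e^{\theta\, u_j^\top y_i},\qquad Z_i=\sum_{j\le r}e^{\theta u_j^\top y_i},
\]
with constants $\kappa,\theta,\mu$ to be tuned; $\mu$ is of order $1/\log n$ up to constants.

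\textbf{Step 1 (bias term).} Since the competitors $u_j$, $j\le r$, are independent of $y_i$ (as $i>r$ and $j\le r$ are disjoint), the tilted mean $Z_i^{-1}\sum_j e^{\theta u_j^\top y_i}u_j$ concentrates on a vector parallel to $y_i$. Its length is governed by the Gibbs measure of $r$ Gaussians at inverse temperature $\theta\|y_i\|/\sqrt d$. By the CMM decomposition \eqref{eq:cmm-signal-decomp}, $y_i=\kappa u_i+\text{cross-talk}$, with cross-talk of size $\sqrt{n}/d$ per unit norm. Matching the coefficient of $u_i$ to $1$ fixes $\theta$ given $\kappa$. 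Feasibility, meaning the tilted mean can be that long while the Gibbs measure is still spread over many competitors, holds exactly when the tilt stays below the extreme-value (freezing) point $\sqrt{2\log r}\approx\sqrt{2\log n}$. Because $\log r=\log n(1-o(1))$, this is where $d^2<(2-\epsilon')n\log n$ enters, mirroring the heuristic $\sqrt{2\alpha_n\log n}>1$.

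\textbf{Step 2 (fluctuation terms).} Two fluctuations must be controlled in Frobenius norm after summing against $v_i^\top$ over $i>r$.
\begin{enumerate}[label=(\alph*)]
\item The orthogonal component $\sum_j p_{ij}u_j^{\perp}$. Its norm is about $\sqrt{\sum_j p_{ij}^2}$, which is small because the effective number of competitors is $r^{1-\theta'^2/2}\to\infty$ below freezing.
\item The cross-talk part of $y_i$.
\end{enumerate}
For both I would use that the vectors $v_i$ are nearly orthogonal and independent of the $u$-side. The sum $\sum_i e_i v_i^\top$ then has squared Frobenius norm close to $\sum_i\|e_i\|^2\|v_i\|^2$, plus cross terms that I control by a second-moment computation. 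The target is
\[
\sum_i\|e_i\|^2=o\!\left(\frac{n^2}{d^2(\log n)^2}\right),
\]
that is, $\|e_i\|=o\bigl(\sqrt n/(d\log n)\bigr)$ per query on average.

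\textbf{Step 3 (lower bound on entries).} The scores $u_j^\top y_i$ are uniformly $O(\sqrt{\log n})$ with high probability, by the Chernoff bound in the proof of Proposition~\ref{prop:optimal}. Hence $e^{\theta u_j^\top y_i}/Z_i\ge r^{-1}e^{-O(\log n)}$, which is too weak in general. I would therefore mix in a uniform component $\mu'/r$ with $\mu'\asymp 1/\log n$. This gives the required bound $\lambda_{ij}\ge 1/(r\log n)$. It costs an extra term $\mu'\sum_i\bigl(u_i-\bar u\bigr)v_i^\top$, where $\bar u=r^{-1}\sum_{j\le r}u_j$, and I absorb this term into the tilted part by re-solving for $\theta$ with target coefficient $\mu-\mu'$ on $u_i$.

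\textbf{Main obstacle.} I expect the main obstacle to be Step 2. The bracket errors must cancel to precision $o(n/(d\log n))$, which is smaller than the naive $\sqrt n/\log n$ size of the uniform-mixing term. This forces exact (self-consistent) calibration of the $u_i$-coefficient rather than leading-order matching. If the explicit choice falls short, my fallback is to define $\lambda$ as the softmax weights at the minimizer of the strongly convex surrogate
\[
\sum_{i>r}\log\sum_{j\le r}e^{(u_j-u_i)^\top Wv_i}+\frac{\eta}{2}\|W\|_{\mathsf F}^2 .
\]
Stationarity of this minimizer gives $\mathcal A(\lambda)=\eta W$ exactly. The bound on $\|\mathcal A(\lambda)\|_{\mathsf F}$ then reduces to a norm bound on $W$ via the explicit construction above used as a comparison point.
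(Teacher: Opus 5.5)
Steps~1--2 cannot work as designed, and no calibration of $\theta,\kappa$ fixes this: per-query cancellation is impossible. The lemma forces $\lambda_{ij}\ge 1/(r\log n)$, hence $\mu_i=\sum_{j\le r}\lambda_{ij}\ge 1/\log n$. Since $u_i$ is independent of $u_1,\dots,u_r$, with high probability the component of $\sum_{j\le r}\lambda_{ij}u_j$ along $u_i/\|u_i\|$ is at most $\mu_i\max_{j\le r}|u_j^\top u_i|/\|u_i\|=\mu_i\,O(\sqrt{\log n/d})$. So every bracket has $\|e_i\|\ge(1-o(1))/\log n$, and $\sum_{i>r}\|e_i\|^2\gtrsim n/\log^2 n$. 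This exceeds your Step~2 target $o\bigl(n^2/(d^2\log^2 n)\bigr)$ by the factor $d^2/n\asymp\log n$. The cancellation must therefore come from the cross terms $\langle e_i,e_k\rangle\langle v_i,v_k\rangle$, not from small individual brackets.

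The same point breaks the threshold derivation in Step~1. In $y_i=\kappa\sum_{k>r}u_kv_k^\top v_i$ the cross-talk is not a perturbation of $\kappa u_i$: as a vector it has norm $\approx\kappa\sqrt{n/d}\gg\kappa$. The tilted mean is $\approx(\beta_i/\sqrt d)\,y_i/\|y_i\|$ with $\beta_i=\theta\|y_i\|/\sqrt d$, so its $u_i$-coefficient is only $\approx\beta_i/\sqrt n$. Matching that coefficient to $1$ would need $\beta_i\approx\sqrt n$, far beyond freezing. In fact the cross-talk is the dominant term. Write $U_1=[u_{r+1}\cdots u_n]$ and $V_1=[v_{r+1}\cdots v_n]$; then $\sum_{i>r}y_iv_i^\top=\kappa\,U_1V_1^\top(V_1V_1^\top)$ with $V_1V_1^\top\approx(n/d)I$.

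The missing idea, which is the paper's starting point, is to aim at the least-squares targets $x_i:=U_1V_1^\top(V_1V_1^\top)^{-1}v_i$ instead of $u_i$. These satisfy $\sum_{i>r}x_iv_i^\top=\sum_{i>r}u_iv_i^\top$ exactly, even though each $u_i-x_i$ has norm $\approx1$, and they have $\|x_i\|^2\approx d/n$. Since $y_i\approx\kappa(n/d)x_i$, your direction was right. With the correct target, the required tilt is $\sqrt d\|x_i\|\approx d/\sqrt n$, and ``below $\sqrt{2\log n}$'' becomes exactly $d^2<2n\log n$.

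The paper then takes truncation weights $c_i(u)=\frac1{\log n}+\frac{b}{p_i}\mathbf 1\{\sqrt d\,u^\top x_i/\|x_i\|\ge z_i\}$. These give $\E c_i(u)=1$ and $\E[u\,c_i(u)]=x_i$ exactly. Then $\mathcal A(\lambda)=\frac1r\sum_{j\le r}\Xi(u_j)$ is an average of $r$ conditionally i.i.d.\ centered matrices, so a second-moment bound suffices. That bound is governed by $\E c_i^2\approx1/p_i\approx n^{d^2/(2n\log n)+o(1)}\le n^\theta$ with $\theta<1$.

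The choice of weights matters here. An unnormalized exponential tilt meeting the same two constraints has $\E c_i^2=e^{d\|x_i\|^2}\approx n^{d^2/(n\log n)}$, so this argument would only reach $d^2<(1-\epsilon)n\log n$. Normalizing by $Z_i$ caps the weights, but it destroys the exact centering and the independence across $j$. That is precisely the calibration you flag as the main obstacle and leave open.

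The fallback does not get around this. By convex duality, bounding $\eta\|W_\eta\|_{\mathsf F}$ through a comparison point already requires an array with small $\|\mathcal A\|_{\mathsf F}$. Moreover, softmax weights at the minimizer need not satisfy the entrywise lower bound.
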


\begin{proof}
Write
\begin{align*}
U_1 := [u_{r+1}\;\cdots\;u_n], \qquad V_1 := [v_{r+1}\;\cdots\;v_n], \qquad M:= U_1V_1^\top, \qquad H:= V_1V_1^\top.
\end{align*}
Since $n-r>d$ for sufficiently large $n$,~$H$ is invertible almost surely. For each $i>r$, define
\begin{align*}
x_i := MH^{-1}v_i, \qquad \Pi := V_1^\top H^{-1}V_1.
\end{align*}
We have the identity
\begin{align*}
\sum_{i=r+1}^n x_iv_i^\top
=
MH^{-1}\sum_{i=r+1}^n v_iv_i^\top
=
M.
\end{align*}
Also,~$\Pi$ is a uniformly distributed rank-$d$ orthogonal projection in $\R^{n-r}$, and
\begin{align*}
x_i = U_1\Pi e_{i-r}.
\end{align*}
Set
\begin{align*}
\varepsilon_n
:=
\sqrt{\frac{\log n}{d}}
+
\sqrt{\frac{d}{n}}.
\end{align*}
Conditionally on $V_1$, the matrix $U_1$ is Gaussian and independent of $\Pi$.  Standard concentration bounds give, with probability tending to one,
\begin{align*}
\max_{i>r}
\left|
\frac{n-r}{d}\norm{x_i}^2-1
\right|
&\le
C\varepsilon_n,\qquad
\max_{\substack{i,k>r\\i\ne k}}
\frac{|x_i^\top x_k|}{\norm{x_i}\norm{x_k}}
\le
C\varepsilon_n, \qquad \max_{\substack{i,k>r\\i\ne k}}
|v_i^\top v_k|
\le
C\varepsilon_n,
\end{align*}
and moreover
\begin{align*}
\max_{i>r} \norm{u_i} \le 2,\qquad \max_{i>r} \norm{v_i} \le 2,\qquad
\norm{M}_{\mathsf F} \le C\sqrt{n}, \qquad \norm{\sum_{i=r+1}^n v_i} \le C\sqrt{n}.
\end{align*}
Let $\mathcal E$ denote the intersection of these events, so that $\P(\mathcal E)\to1$. We condition on this event.

Let~$\phi,\Phi$ denote the standard Gaussian density and cumulative density function, and define the (inverse) Mills ratio
\begin{align*}
R(z)
:=
\frac{\phi(z)}{1 - \Phi(z)}.
\end{align*}
For each $i>r$, let $z_i>0$ be the unique solution of
\begin{align*}
bR(z_i) = \sqrt d\,\norm{x_i},
\qquad b := 1-\frac{1}{\log n},
\end{align*}
and set $p_i := 1-\Phi(z_i)$. Since $R(z) = z+O(z^{-1})$, it holds
uniformly over $i>r$ that
\begin{align*}
\frac{z_i^2}{2\log n}= \frac{d\norm{x_i}^2}{2b^2\log n}+o(1) =
\frac{1}{2(1-r/n)b^2}
\frac{d^2}{n\log n} +o(1) \le \frac{c}{2} + o(1).
\end{align*}
In particular, there are fixed $\theta_0,\theta<1$ such that for all sufficiently large $n$, for every $i>r$,
\begin{align*}
\frac{z_i^2}{2\log n}
<\theta_0<\theta.
\end{align*}
By Mills' bound, for sufficiently large~$n$,
\begin{align*}
\frac{1}{p_i} \le C(1+z_i)e^{z_i^2/2} \le C\big(1+\sqrt{2\log n}\big) n^{\theta_0} \le n^\theta.
\end{align*}
Now define the array
\begin{align*}
\lambda_{ij}
:=
\frac{1}{r}
\Bigg[
\frac{1}{\log n}
+
\frac{b}{p_i}
1\left\{
\frac{\sqrt d\,u_j^\top x_i}{\norm{x_i}}
\ge
z_i
\right\}
\Bigg], \qquad i>r, \; j\le r.
\end{align*}
We proceed to bound $\mathcal A(\lambda)$. Condition on $U_1,v_1,\ldots,v_n$ for the
remainder of the proof. Let $u\sim\mathcal N(0,I_d/d)$ and define
\begin{align*}
\Xi(u)
:=
\sum_{i=r+1}^n c_i(u)(u_i-u)v_i^\top,\qquad c_i(u)
:=
\frac{1}{\log n}
+
\frac{b}{p_i}
1\left\{
\frac{\sqrt d\,u^\top x_i}{\|x_i\|}
\ge z_i
\right\}.
\end{align*}
By construction,
\begin{align*}
\mathcal A(\lambda)
=
\frac{1}{r}\sum_{j=1}^r\Xi(u_j).
\end{align*}
We first verify that $\Xi(u)$ is centered. Indeed,
\begin{align*}
\E[c_i(u)]
=
\frac{1}{\log n}+\frac{b}{p_i} \Pr(\mathcal N(0,1) \ge z_i) =1
\end{align*}
and
\begin{align*}
\E[uc_i(u)] = \frac{b}{p_i} \E\left[u 1\left\{
\frac{\sqrt d\,u^\top x_i}{\|x_i\|}
\ge z_i\right\} \right] = \frac{b}{p_i} \frac{\phi(z_i)x_i}{\sqrt d\,\|x_i\|} = x_i,
\end{align*}
hence $\E[\Xi(u)]=0$.

Next, we bound the second moment. By direct calculation,
\begin{align*}
\E[c_i(u)^2] = 1+b^2(p_i^{-1}-1).
\end{align*}
For $i\ne k$, it holds that
\begin{align*}
\Bigg(\frac{\sqrt d\,u^\top x_i}{\|x_i\|}, \frac{\sqrt d\,u^\top x_k}{\|x_k\|}\Bigg)^\top \sim 
\mathcal N\Bigg(
0,
\begin{pmatrix}
1 & \rho_{ik}\\
\rho_{ik} & 1
\end{pmatrix}
\Bigg), \qquad \rho_{ik}:=\frac{x_i^\top x_k}{\|x_i\|\|x_k\|}.
\end{align*}
By Plackett's identity \citep{plackett1954reduction},
\begin{align*}
&\left|
\P\left(
\frac{\sqrt d\,u^\top x_i}{\|x_i\|}\ge z_i,\,
\frac{\sqrt d\,u^\top x_k}{\|x_k\|}\ge z_k
\right)
-p_ip_k
\right| \le
C|\rho_{ik}|\phi(z_i)\phi(z_k),
\end{align*}
where we used $|\rho_{ik}|(z_i^2+z_k^2)\le C\varepsilon_n\log n=o(1)$. Hence
\begin{align*}
\left|\E[c_i(u)c_k(u)]-1\right|
&\le
C|\rho_{ik}|
\frac{\phi(z_i)\phi(z_k)}{p_ip_k}\\
&=
C|\rho_{ik}|R(z_i)R(z_k)\\
&\le
C\varepsilon_n\log n.
\end{align*}
Expanding yields
\begin{align*}
\|\Xi(u)\|_{\mathsf F}^2
\le
2\left\|
\sum_{i=r+1}^n c_i(u)u_iv_i^\top
\right\|_{\mathsf F}^2
+
2\|u\|^2
\left\|
\sum_{i=r+1}^n c_i(u)v_i
\right\|^2,
\end{align*}
where
\begin{align*}
\E\left[
\left\|
\sum_{i=r+1}^n c_i(u)u_iv_i^\top
\right\|_{\mathsf F}^2
\right]
&=
\|M\|_{\mathsf F}^2
+
b^2\sum_{i=r+1}^n
(p_i^{-1}-1)\|u_i\|^2\|v_i\|^2\\
&\qquad+
\sum_{\substack{i,k>r\\i\ne k}}
\bigl(\E[c_i(u)c_k(u)]-1\bigr)
(u_i^\top u_k)(v_i^\top v_k),
\\
\E\left[
\left\|
\sum_{i=r+1}^n c_i(u)v_i
\right\|^2
\right]
&=
\left\|
\sum_{i=r+1}^n v_i
\right\|^2
+
b^2\sum_{i=r+1}^n
(p_i^{-1}-1)\|v_i\|^2\\
&\qquad+
\sum_{\substack{i,k>r\\i\ne k}}
\bigl(\E[c_i(u)c_k(u)]-1\bigr)
v_i^\top v_k.
\end{align*}
On the event~$\mathcal E$ and also $\{\|u\|\le2\}$, using the above bounds and $\varepsilon_n^2\log n \le C(\log n)^2/d$, we obtain
\begin{align*}
\E\left[\|\Xi(u)\|_{\mathsf F}^2 1\{\|u\|\le 2\}
\right] \le C\left(n^{1+\theta} + n+ \frac{n^2(\log n)^2}{d}
\right).
\end{align*}
Moreover if $\|u\|>2$,
\begin{align*}
\left\| \sum_{i=r+1}^n c_i(u)v_i \right\| \le Cn^{1+\theta}, \qquad \E\left[
\|u\|^2 1\{\|u\|>2\}
\right] \le e^{-cd},
\end{align*}
so this contribution is negligible. Hence the same bound holds in expectation. It follows that
\begin{align*}
\E[\|\mathcal A(\lambda)\|_{\mathsf F}^2]
&=
\frac{1}{r}
\E[\|\Xi(u)\|_{\mathsf F}^2] \\
&\le
C\left(
n^\theta(\log n)^2+
(\log n)^2+\frac{n(\log n)^4}{d}
\right) \\
&= o\bigg(\left(\frac{n}{d\log n}\right)^2\bigg).
\end{align*}
By Markov's inequality conditioned on~$\mathcal E$, we conclude with probability tending to~$1$ that
\begin{align*}
\|\mathcal A(\lambda)\|_{\mathsf F}
= o\left(\frac{n}{d\log n}\right).
\end{align*}
The lemma is proved.
\end{proof}

We next show the following uniform lower bound.

\begin{lemma}\label{lem:certificate}
There is a constant $c_0>0$ such that, with probability tending to~$1$,
\begin{align*}
\inf_{\norm{X}_{\mathsf F}=1}
\sum_{i=r+1}^n\sum_{j=1}^r
|(u_i-u_j)^\top Xv_i|
\ge
c_0\frac{rn}{d}.
\end{align*}
Moreover on this event,
\begin{align*}
c_0\frac{rn}{d}
\{
X\in\R^{d\times d}:
\norm{X}_{\mathsf F}\le1\} \subseteq \{\mathcal A(\delta):
|\delta_{ij}|\le 1\}.
\end{align*}
\end{lemma}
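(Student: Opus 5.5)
The second assertion follows from the first by convex duality, so the real work is the uniform lower bound.

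\textbf{Duality step.} Set $K:=\{\mathcal A(\delta):|\delta_{ij}|\le1\}$. This is a compact, convex, symmetric subset of $\R^{d\times d}$. Its support function is
\[
h_K(X)=\sup_{|\delta_{ij}|\le1}\inprod{X,\mathcal A(\delta)}_{\mathsf F}=\sum_{i>r}\sum_{j\le r}|(u_i-u_j)^\top Xv_i|.
\]
Suppose some $Y$ with $\norm{Y}_{\mathsf F}\le c_0rn/d$ lies outside $K$. By the separating hyperplane theorem there is a unit $X$ with $\inprod{X,Y}>h_K(X)$. The left side is at most $c_0rn/d$, so $h_K(X)<c_0rn/d$, contradicting the first assertion. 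Everything therefore reduces to the first display.

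\textbf{Reduction for the lower bound.} Write $a_i:=u_i^\top Xv_i$ and $w_i:=Xv_i$. Condition on $V_1$ and on $(u_i)_{i>r}$; then $u_1,\dots,u_r$ are still i.i.d.\ $\mathcal N(0,I_d/d)$ and independent of everything else. For a fixed $X$, the sum equals $S(X)=\sum_{j\le r}f_X(u_j)$ with $f_X(u):=\sum_{i>r}|u^\top w_i-a_i|$. Since $u^\top w_i\sim\mathcal N(0,\norm{w_i}^2/d)$ and $\E|g-a|\ge\E|g|$ for centered Gaussian $g$,
\[
\E[S(X)\mid\cdot]\ \ge\ r\sqrt{2/\pi}\,d^{-1/2}\sum_{i>r}\norm{Xv_i}.
\]
The proof then has three steps.
\begin{enumerate}[label=(\alph*)]
\item \emph{Deterministic input.} Show that, with high probability, $\sum_{i>r}\norm{Xv_i}\ge c\,n\,d^{-1/2}\norm{X}_{\mathsf F}$ simultaneously for all $X$.
\item \emph{Concentration at fixed $X$.} The map $f_X$ is Lipschitz in $u$ with constant $\norm{XV_1}_{\mathrm{op}}\sqrt{n}\le C n/\sqrt d$ on the event $\norm{V_1}_{\mathrm{op}}\le C\sqrt{n/d}$. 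Gaussian concentration for the sum of $r$ independent terms then gives deviations of order $\sqrt r\,(n/\sqrt d)\,d^{-1/2}$ at subgaussian rate. Consequently $\P(S(X)<\tfrac12\E S(X))\le e^{-c r}$. Since $r\approx n/(\log n)^2\gg d^2\log d$ in our regime, this beats the net cardinality below.
\item \emph{Net argument.} Take a $1/d^2$-net of the Frobenius unit sphere, of cardinality $e^{Cd^2\log d}$. Then control $|S(X)-S(X')|\le 4rn\max\norm{u}\max\norm{v}\,\norm{X-X'}_{\mathsf F}$. This requires $\max_i\norm{u_i},\max_i\norm{v_i}\le2$, which holds with high probability.
\end{enumerate}
Combining (a)--(c) yields $S(X)\ge c_0rn/d$ uniformly over the unit sphere.

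\textbf{The main obstacle: step (a).} A naive net fails here, because there are only $n-r\approx d^2/\log n$ vectors $v_i$ while the parameter space has dimension $d^2$. I would instead use Mendelson's small-ball method for the class $\mathcal F=\{v\mapsto\norm{Xv}:\norm{X}_{\mathsf F}=1\}$. The plan has three parts.
\begin{itemize}
\item \emph{Small-ball input.} Gaussian hypercontractivity gives $\E\norm{Xv}^2=1/d$ and $\E\norm{Xv}^4\le C/d^2$. Paley--Zygmund then yields $\P(\norm{Xv}\ge c/\sqrt d)\ge c'$.
\item \emph{Complexity term.} By the contraction principle,
\[
\E\sup_X\Bigl|\sum_i\varepsilon_i\norm{Xv_i}\Bigr|\lesssim\E\sup_X\Bigl\|X\sum_i\varepsilon_i g_iv_i\Bigr\|\le\E\Bigl\|\sum_i\varepsilon_ig_iv_i\Bigr\|\approx\sqrt{n},
\]
where $g_i$ are auxiliary standard Gaussians introduced to linearize the norm. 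This is $o(n/\sqrt d)$.
\item \emph{Conclusion.} The small-ball theorem then gives, uniformly in $X$, at least $c''n$ indices with $\norm{Xv_i}\ge c/\sqrt d$. Hence $\sum_i\norm{Xv_i}\ge c\,n/\sqrt d$.
\end{itemize}
If the linearization in the complexity bound proves delicate, my fallback is to bound $\E\sup_X$ directly through $\norm{\cdot}_{\mathrm{op}}$ of Gaussian matrices.
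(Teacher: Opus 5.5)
Your duality step is correct and coincides with the paper's. The problems are in the uniform lower bound.

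\textbf{First gap: the counting that closes (b)--(c) is wrong.} In this proof $\tfrac12\le d^2/(n\log n)\le c<2$, so $n\asymp d^2/\log n$ and $r=\lfloor n(\log n)^{-2}\rfloor\asymp d^2/(\log n)^3$. This is \emph{smaller} than $d^2\log d$ by a factor of order $(\log n)^4$, not larger. A fixed-$X$ tail of $e^{-cr}$ therefore cannot pay for a $1/d^2$-net of the Frobenius sphere, whose log-cardinality is of order $d^2\log d$. Nor does Gaussian concentration give a better uniform exponent. The Lipschitz constant $\sqrt n\,\norm{V_1}_{\mathrm{op}}\norm{X}_{\mathrm{op}}$ that you use is attained up to constants for rank-one $X$, where $\norm X_{\mathrm{op}}=\norm X_{\mathsf F}$. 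Any repair along your lines would have to exploit that low-rank matrices form a much smaller set, which a single net of the $d^2$-dimensional sphere does not do.

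\textbf{Second gap: the complexity estimate in (a) is not justified.} The contraction principle handles $\varphi_i(t_i)$ with scalar $t_i$. By contrast, $\norm{Xv_i}=\sup_{\norm y\le1}y^\top Xv_i$, and the maximizing $y$ depends on $i$. Replacing this by the single functional $\norm{X\sum_i\varepsilon_ig_iv_i}$ with scalar $g_i$ only computes the complexity of the linear class $v\mapsto x^\top v$, i.e.\ of rank-one $X$. The legitimate vector-contraction bound needs independent $\gamma_i\sim\mathcal N(0,I_d)$ and gives $\E\norm{\sum_i\gamma_iv_i^\top}_{\mathsf F}\approx\sqrt{nd}$. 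This is larger than $n/\sqrt d$ by the factor $d/\sqrt n\asymp\sqrt{\log n}$, whereas the small-ball method at level $\tau\asymp d^{-1/2}$ requires $o(n/\sqrt d)$. The naive operator-norm bound $\sum_i\norm{Xv_i}\le\sqrt n\norm{V_1}_{\mathrm{op}}$ is of order $n/\sqrt d$, exactly the scale that must be beaten, so your fallback does not help either.

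\textbf{The missing idea.} The paper never takes a supremum over the matrix class at all. Your conditioning freezes the offsets $a_i=u_i^\top Xv_i$, which still depend on $X$. Instead, treat the offset as a free parameter and prove
$\frac1r\sum_{j\le r}|\sqrt d\,u_j^\top x-s|\ge c_0\sqrt{\norm x^2+s^2}$
uniformly over $(x,s)\in\R^{d+1}$. Symmetrization and contraction give uniform error $O_{\P}(\sqrt{d/r})=o_{\P}(1)$ for this. Substituting $x=Xv_i$ and $s=\sqrt d\,u_i^\top Xv_i$ yields $\sum_{j\le r}|(u_i-u_j)^\top Xv_i|\ge c_0r\norm{Xv_i}/\sqrt d$ for all $i$ and all $X$ simultaneously.

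For (a), write $X^\top X=\sum_k\mu_ke_ke_k^\top$ with $\sum_k\mu_k=\norm X_{\mathsf F}^2=1$. Concavity of the square root gives $\norm{Xv_i}\ge\sum_k\mu_k|e_k^\top v_i|$, which reduces (a) to the $d$-dimensional bound $\sum_{i>r}|x^\top v_i|\ge c_0n\norm x/\sqrt d$. Both uniform statements live in dimension about $d$, far below the available sample sizes $r$ and $n-r$, and the rest of the argument is deterministic.
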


\begin{proof}
We first prove a uniform lower bound using the first $r$ vectors $u_1,\ldots,u_r$. By symmetrization and the contraction inequality,
\begin{align*}
\sup_{\norm{x}^2+s^2=1}
\Bigg|\frac{1}{r}\sum_{j=1}^r
|\sqrt d\,u_j^\top x-s|
-
\Eb{g\sim\mathcal N(0,I_d)}{|
g^\top x-s|}
\Bigg|
=
O_{\P}\left(\sqrt{d/r}\right)
=
o_{\P}(1).
\end{align*}
Also,
\begin{align*}
\inf_{\norm{x}^2+s^2=1}
\Eb{g\sim\mathcal N(0,I_d)}{|g^\top x-s|}
=
\inf_{a^2+s^2=1}
\Eb{g\sim\mathcal N(0,1)}{|ag-s|} > 0,
\end{align*}
since the expectation is positive and continuous on the unit circle $a^2+s^2=1$. Therefore, with probability tending to~$1$, simultaneously for every $x\in\R^d$ and $s\in\R$,
\begin{align*}
\frac{1}{r}
\sum_{j=1}^r |\sqrt d\,u_j^\top x-s|
\ge
c_0\sqrt{\norm{x}^2+s^2} \ge c_0\|x\|.
\end{align*}
Taking $s=\sqrt d\,u_i^\top x$ thus gives
\begin{align*}
\sum_{j=1}^r |(u_i-u_j)^\top x| \ge
c_0\frac{r}{\sqrt d}\norm{x}, \qquad \forall x\in\R^d.
\end{align*}
A similar argument for the vectors $v_{r+1},\ldots,v_n$ also gives with probability tending to~$1$,
\begin{align*}
\sum_{i=r+1}^n
|x^\top v_i|
\ge
c_0\frac{n}{\sqrt d}\norm{x}
\qquad
\forall x\in\R^d.
\end{align*}
Now for any matrix $X$ with $\norm{X}_{\mathsf F}=1$, write the spectral decomposition as
\begin{align*}
X^\top X
=
\sum_{k=1}^d
\mu_ke_ke_k^\top,
\qquad
\sum_{k=1}^d\mu_k=1.
\end{align*}
For every $i>r$, by Cauchy--Schwarz,
\begin{align*}
\norm{Xv_i}
&=
\left(
\sum_{k=1}^d
\mu_k(e_k^\top v_i)^2
\right)^{1/2} \ge
\sum_{k=1}^d
\mu_k|e_k^\top v_i|.
\end{align*}
Summing over~$i$, from the preceding bounds, we have that
\begin{align*}
\sum_{i=r+1}^n\sum_{j=1}^r
|(u_i-u_j)^\top Xv_i |
&\ge
c_0\frac{r}{\sqrt d}
\sum_{i=r+1}^n\norm{Xv_i} \\
&\ge c_0\frac{r}{\sqrt d} \sum_{k=1}^d \mu_k \sum_{i=r+1}^n |e_k^\top v_i| \ge c_0\frac{rn}{d}.
\end{align*}
This proves the first claim. For the second claim, define the set
\begin{align*}
\mathcal K := \{\mathcal A(\delta):
|\delta_{ij}|\le 1\}.
\end{align*}
$\mathcal K$ is convex and compact, and it holds that
\begin{align*}
\sup_{Y\in\mathcal K}
\inprod{X,Y}_{\mathsf F}
&=
\sup_{|\delta_{ij}|\le 1}
\sum_{i=r+1}^n\sum_{j=1}^r
\delta_{ij}
(u_i-u_j)^\top Xv_i\\
&= \sum_{i=r+1}^n\sum_{j=1}^r
|(u_i-u_j)^\top Xv_i| \ge
c_0\frac{rn}{d}\norm{X}_{\mathsf F}.
\end{align*}
If there exists a matrix $Z\notin\mathcal K$ such that $\|Z\|_{\mathsf F} \le c_0rn/d$, by the separating hyperplane theorem, there must exist~$X$ such that
\begin{align*}
\inprod{X,Z}_{\mathsf F} > \sup_{Y\in\mathcal K}
\inprod{X,Y}_{\mathsf F} \ge \norm{X}_{\mathsf F} \norm{Z}_{\mathsf F},
\end{align*}
a contradiction.
\end{proof}

Now from Lemma~\ref{lem:array}, with probability tending to~$1$, there exists an array~$\lambda$ satisfying
\begin{align*}
\lambda_{ij} \ge \frac{1}{r\log n}, \qquad \|\mathcal A(\lambda)\|_{\mathsf F} = o\left(\frac{n}{d\log n}\right),
\end{align*}
while Lemma~\ref{lem:certificate} implies that, again with probability tending to~$1$,
\begin{align*}
\frac{c_0n}{2d\log n}
\{X:\norm{X}_{\mathsf F}\le1\} \subseteq \left\{
\mathcal A(\delta):
|\delta_{ij}|\le \frac{1}{2r\log n}
\right\}.
\end{align*}
It follows that there exists an array~$\delta$ satisfying
\begin{align*}
|\delta_{ij}| &\le \frac{1}{2r\log n},\qquad \mathcal A(\delta) = -\mathcal A(\lambda).
\end{align*}
Then $\mathcal A(\lambda+\delta) = 0$ by linearity and
\begin{align*}
\lambda_{ij}+\delta_{ij} \ge \frac{1}{r\log n} -\frac{1}{2r\log n} =\frac{1}{2r\log n} > 0.
\end{align*}
This completes the proof.
\end{proof}

\subsection{Finite-size scaling and a second-order conjecture}
\label{sec:top1-finite-size}

We conclude this section with a numerical study of the top-1 transition. Our goal is to compare its
location at moderate dimension with the leading-order threshold in
Theorem~\ref{thm:PR-main}. In our experiments, we learn the memory matrix using ridge-regularized multiclass softmax
cross-entropy,
\begin{equation}\label{eq:ce}
    \widehat W_{\mathsf{CE}}
    \bydef
    \argmin_{W \in \R^{d \times d}}
    \left\{
        \sum_{i=1}^n
        \left[-s_{i,i}+\log\sum_{j=1}^n e^{s_{j,i}}\right]
        +\frac{\lambda n}{2d^2}\norm{W}_{\mathsf F}^2
    \right\},
    \qquad \lambda>0.
\end{equation}
This objective provides a constructive search for a retrieving
memory. Indeed, if $W$ has minimum retrieval margin
$\gamma=\min_{i}\min_{j\ne i}(s_{i,i}-s_{j,i})>0$, then the
unregularized cross-entropy at $tW$ is at most
$n(n-1)e^{-t\gamma}$ and hence tends to zero as $t\to\infty$.
Consequently, as the effective ridge coefficient tends to zero, the
minimizer in \eqref{eq:ce} retrieves whenever a strict retrieving
memory exists: one can let $t\to\infty$ slowly enough that both the
cross-entropy and ridge penalty vanish, whereas any nonretrieving
matrix incurs cross-entropy at least $\log 2$.

On the feasible side, the norm needed to attain a fixed small loss
grows as the best retrieval margin shrinks. We therefore use
the observed peak of $\|\widehat W_{\mathsf{CE}}\|_{\mathsf F}/d$
under weak ridge regularization as an empirical proxy for the
transition. For each dimension, we sweep the load $c=n\log n/d^2$,
using the effective ridge coefficient $\lambda n/d^2=10^{-8}$. The
results are shown in Figure~\ref{fig:small_tail_conjecture_nlogn}.

\FloatBarrier

\begin{figure}[!t]
    \centering
    \includegraphics[width=\textwidth]{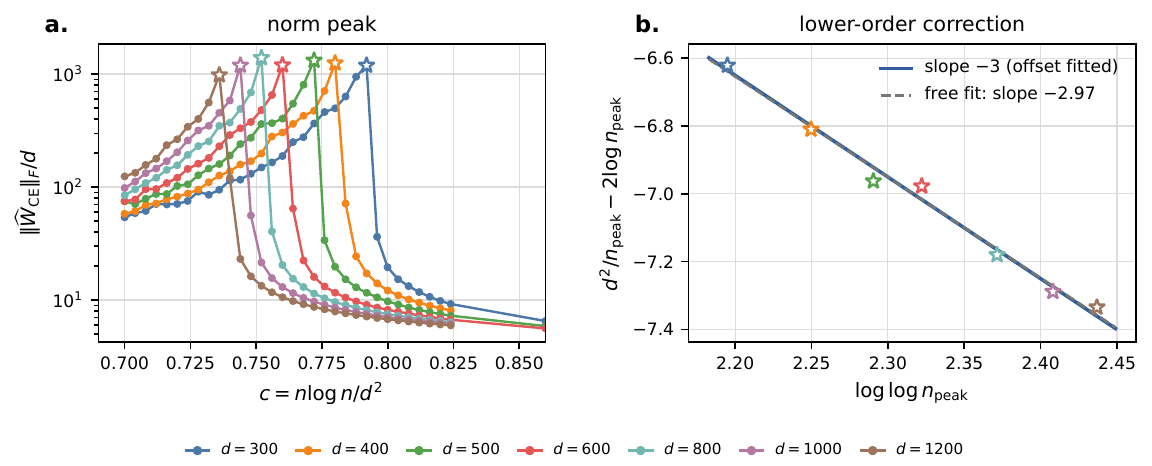}
    \caption{Finite-size transition diagnostic. \textbf{a.}
    Norm scale
    $\|\widehat W_{\mathsf{CE}}\|_{\mathsf F}/d$ along the
    cross-entropy sweep~\eqref{eq:ce}; stars mark the largest value on
    the sampled grid for each dimension. \textbf{b.} The same seven
    grid-selected peaks in the linearized coordinates used below;
    the solid and dashed lines are the fixed-slope and unconstrained
    fits, respectively.}
    \label{fig:small_tail_conjecture_nlogn}
\end{figure}

\FloatBarrier

Because $c$ is the reciprocal of the ratio $d^2/(n\log n)$ used in
Theorem~\ref{thm:PR-main}, the critical value is $1/2$ in the present
normalization rather than $2$. Panel~\textbf{a} shows a pronounced peak at every
dimension. Denoting its location by $\widehat c(d)$, we find that
$\widehat c(d)$ lies above $1/2$ and drifts only slowly toward it
as $d$ increases. Thus while Theorem~\ref{thm:PR-main} captures the leading location of the transition, the simulations exhibit a substantial
finite-size correction. This motivates the following conjecture for the second-order correction to the threshold.

\begin{conjecture}[Second-order correction to the top-1 threshold]
\label{conj:top1-finite-size} Let $d_{\mathrm c}(n)$ be the smallest integer $d$ for
which $\P(\mathsf{PR}_n)\ge 1/2$. Then
\begin{align}\label{eq:lower-order}
d_{\mathrm c}^2(n)
    =
    n\bigl(2\log n-3\log\log n+o(\log\log n)\bigr).
\end{align}
\end{conjecture}

Conjecture~\ref{conj:top1-finite-size} is suggested by a simpler model.
In the original problem, the $n^2$ score directions
$\operatorname{vec}(u_jv_i^\top)=v_i\otimes u_j$ have a correlated
Kronecker structure. Suppose instead that we replace them by $n^2$
independent isotropic Gaussian vectors in $\R^{d^2}$, while leaving the
top-1 condition~\eqref{eq:top1} unchanged. The score map then has
independent Gaussian rows, which allows us to apply the standard conic
kinematic formula~\citep{amelunxen2014living}. The resulting
statistical-dimension calculation gives~\eqref{eq:lower-order}.
We omit this calculation, since it does not apply directly to the
original model. Proving the conjecture for linear associative memory is left to future work.

The conjecture also appears to be supported by the simulations. Conjecture~\ref{conj:top1-finite-size} predicts the threshold to satisfy
\[
    \frac{d_{\mathrm c}^2(n)}{n}-2\log n
    =-3\log\log n+o(\log\log n).
\]
Panel~\textbf{b} plots the numerical peak locations according to the above formula. The fitted slope $-2.97$ gives an excellent fit with the predicted slope $-3$. In particular,~\eqref{eq:lower-order} predicts the critical load~$c$ to converge to $1/2$ at a rate of $O(\frac{\log\log n}{\log n})$, explaining the slow convergence.

\section{Listwise retrieval and the Tail-Average Margin}
\label{sec:tam}

Top-1 retrieval has the finest possible resolution: each query must rank
its matched target first. This is natural for final decoding. At an
intermediate stage, however, it may be enough to keep the target among
the strongest candidates and let later computation refine the choice.
This motivates the coarser, listwise notion studied here.

A direct listwise requirement would ask the signal score to beat a high
competitor quantile, thereby ensuring that the correct target belongs
to a controlled candidate set. This rank condition is natural, but it
is non-convex and combinatorial. We therefore use a convex certificate:
we ask the signal score to beat the \emph{average} of the strongest
competitors. We call the resulting functional the \emph{Tail-Average
Margin} (TAM). TAM is a listwise retrieval criterion in its own right:
it controls the mass of strong distractors rather than only the single
largest one, while retaining enough convex structure for optimization
and asymptotic analysis.

In this section we define the TAM functional, record its basic
properties, and formulate the corresponding learning problem and
performance criteria for linear associative memory. In the next
section, we derive an asymptotic theory of TAM learning in the
quadratic regime $d^2\asymp n$, with a fixed tail fraction
$r\in(0,1)$.

\subsection{The Tail-Average Margin functional}
\label{sec:tam-functional}

Let $x = (x_1, \ldots, x_m) \in \R^m$ and fix a tail fraction
$r \in (0, 1]$. Set $k \bydef \lceil r m \rceil$.

\begin{definition}[Tail-Average Margin]\label{def:tam}
The \emph{Tail-Average Margin} (TAM) at level $r$ is
\begin{equation}\label{eq:tam-def}
    \TAM_r(x) \bydef \frac{1}{k} \sum_{j=1}^{k} x_{(j)},
\end{equation}
where $x_{(1)} \ge x_{(2)} \ge \cdots \ge x_{(m)}$ are the order
statistics of $x$.
\end{definition}

We note two boundary cases:
$\TAM_{1/m}(x) = \max_j x_j$ (take $k = 1$), and
$\TAM_{1}(x) = \tfrac{1}{m} \sum_j x_j$ (take $k = m$). For fixed
$r \in (0, 1)$, TAM then interpolates strictly between the maximum and
the mean by averaging over the top $r$-fraction of entries.

The TAM criterion is a conservative certificate for top-$k$ list
retrieval. Indeed, since
\begin{equation}
    \TAM_r(x) = \frac{1}{k}\sum_{j=1}^k x_{(j)} \ge x_{(k)},
\end{equation}
the inequality $s_{i,i} > \TAM_r((s_{j,i})_{j\ne i})$ implies
$s_{i,i} > s_{(k),i}$ for the ordered competitor scores in column
$i$. Hence at most $k-1$ competitors can outrank the signal. In this
sense, exact top-1 retrieval implies TAM retrieval, and TAM retrieval
implies inclusion of the correct item in the top-$k$ candidate list.

The TAM functional is essentially a reparametrization of the empirical
\emph{Conditional Value-at-Risk} (CVaR) studied extensively in the
quantitative risk literature; see e.g., \citet{rockafellar2000}.
A classical observation is that TAM admits a
convex variational representation,
\begin{equation}\label{eq:tam-variational}
    \TAM_r(x) \;=\; \min_{\mu \in \R}
    \Bigg\{ \mu + \frac{1}{k} \sum_{j=1}^{m} (x_j - \mu)_+ \Bigg\},
\end{equation}
where $(x)_+ = \max\set{x, 0}$ is the hinge function.
The minimizer $\mu^\star = x_{(k)}$ of \eqref{eq:tam-variational} is the empirical
$(1-r)$-quantile of $x$, also known as the \emph{Value-at-Risk}. The
optimal value of \eqref{eq:tam-variational} is then the mean of the
exceedances of $x$ above this quantile, which recovers the tail-average
interpretation of \eqref{eq:tam-def}. As an immediate consequence of
\eqref{eq:tam-variational}, we see that $x \mapsto \TAM_r(x)$ is convex on~$\R^m$, as it is the pointwise minimum over~$\mu$ of functions jointly convex in~$(x,\mu)$.

For optimization it is convenient to replace the piecewise-linear
hinge $(\cdot)_+$ in \eqref{eq:tam-variational} with a differentiable
surrogate. For a smoothing parameter $\beta > 0$, let
\begin{equation}
    \phi_\beta(t) \;\bydef\; \frac{1}{\beta} \log(1 + e^{\beta t})
\end{equation}
denote the softplus function, and define the \emph{smoothed} TAM
\begin{equation}\label{eq:sTAM}
    \TAM_{r, \beta}(x)
    \;\bydef\;
    \min_{\mu \in \R}
    \Bigg\{
        \mu + \frac{1}{k} \sum_{j=1}^{m} \phi_\beta(x_j - \mu)
    \Bigg\}.
\end{equation}
Since $\phi_\beta$ is convex, $\TAM_{r,\beta}$ is convex in $x$;
moreover $\phi_\beta(t) \to (t)_+$ pointwise as $\beta \to \infty$,
and hence $\TAM_{r,\beta}(x) \to \TAM_r(x)$ in the same limit. The
exact TAM is recovered in the limiting notation $\TAM_{r, \infty} =
\TAM_r$.

Throughout the remainder of the paper we specialize TAM to the
linear associative memory setting: the collection
$x_1, \ldots, x_m$ is taken to be the competing scores
$(s_{j,i})_{j \ne i}$ associated with a query $v_i$, so that
$m = n - 1$ and $\TAM_r$ quantifies the strength of the top $r$-fraction
of competitors. The next subsection turns this into a convex learning
objective.

\subsection{Learning under the TAM criterion}
\label{sec:tam-learning}

With the TAM functional in hand, we now formulate retrieval and
learning for the linear associative memory as a convex optimization
problem.

Given stored pairs $\{(u_i, v_i)\}_{i=1}^n\subset\R^d\times\R^d$ and a memory or weight matrix
$W \in \R^{d \times d}$, recall that the score matrix
is $s_{j,i} = u_j^\top W v_i$. Fix a tail fraction $r \in (0, 1)$ and set
$k = \lceil r(n-1) \rceil$. The \emph{TAM retrieval criterion} asks
that each signal score exceed the TAM-aggregate of its competitors:
\begin{equation}\label{eq:tam-criterion}
    s_{i,i} \;>\; \TAM_r ((s_{j,i})_{j \ne i} ),
    \qquad \forall\, i \in [n].
\end{equation}
For $r = 1/(n-1)$, \eqref{eq:tam-criterion} reduces to the exact top-1
criterion~\eqref{eq:top1} studied in \sref{top1}; for fixed
$r \in (0, 1)$, it imposes an upper-tail margin requirement that
controls the mass of strong competitors rather than a single
extreme-value outlier.

Rather than demanding strict satisfaction of \eqref{eq:tam-criterion},
we minimize a convex surrogate that penalizes violations softly. Let
\begin{equation}
    \ell(t) \;\bydef\; \log (1 + e^{-t})
\end{equation}
denote the logistic loss, and let $\beta > 0$ be the smoothing
parameter of $\TAM_{r,\beta}$. We
consider the regularized objective
\begin{equation}\label{eq:tam-learning}
    \min_{W \in \R^{d \times d}}
    \Bigg\{
        \sum_{i=1}^n
        \ell \bigl(
            s_{i,i} - \TAM_{r, \beta} ((s_{j,i})_{j \ne i} )
        \bigr)
        + \frac{\lambda n}{2 d^2} \norm{W}_\mathsf{F}^2
    \Bigg\},
\end{equation}
where $\lambda \ge 0$ is a ridge penalty. Since each score
$s_{j,i}$ is linear in $W$ and $\TAM_{r, \beta}$ is convex, the
margin $s_{i,i} - \TAM_{r, \beta}((s_{j,i})_{j \ne i})$ is concave in $W$; composing
with the decreasing convex~$\ell$ yields a convex function of~$W$, so \eqref{eq:tam-learning} is a convex program (strongly
convex whenever $\lambda > 0$).

The variational representation~\eqref{eq:sTAM} of $\TAM_{r, \beta}$
suggests lifting \eqref{eq:tam-learning} to a joint optimization over
$W$ and the auxiliary quantile variables
$\mu = (\mu_1, \ldots, \mu_n) \in \R^n$. Substituting \eqref{eq:sTAM}
into \eqref{eq:tam-learning} yields
\begin{equation}\label{eq:tam-learning-joint}
    \min_{W, \mu}
    \Bigg\{
        \sum_{i=1}^n
        \ell \biggl(
            s_{i,i} - \mu_i
            - \frac{1}{k} \sum_{j \ne i}
                \phi_\beta(s_{j,i} - \mu_i)
        \biggr)
        + \frac{\lambda n}{2 d^2} \norm{W}_\mathsf{F}^2
    \Bigg\},
\end{equation}
Minimizing \eqref{eq:tam-learning-joint} over $\mu$ at fixed $W$
recovers \eqref{eq:tam-learning}. This is the workhorse formulation
for the leave-one-out analysis of \sref{tam_theory}.

It is useful to compare \eqref{eq:tam-learning} with the
ridge-regularized multiclass softmax cross-entropy~\eqref{eq:ce}, the
canonical convex surrogate for the top-1 criterion~\eqref{eq:top1}
introduced in \sref{top1-finite-size}.
Denote the log-sum-exp aggregate of the competitor scores by
$\mathrm{LSE} ((s_{j,i})_{j \ne i})
\bydef \log \sum_{j \ne i} e^{s_{j,i}}$.
A short calculation rewrites each summand in \eqref{eq:ce} as
\begin{equation}\label{eq:ce-structural}
    -\,s_{i,i} + \log \sum_{j=1}^n e^{s_{j,i}}
    \;=\;
    \ell\Bigl(
        s_{i,i} - \mathrm{LSE}\bigl((s_{j,i})_{j \ne i}\bigr)
    \Bigr),
\end{equation}
with the same logistic loss $\ell$ used in \eqref{eq:tam-learning}. In
this form, \eqref{eq:ce} has precisely the same structure as the TAM
objective \eqref{eq:tam-learning}: a logistic loss applied to the
margin between the signal score $s_{i,i}$ and a convex aggregate of
the competitor scores $(s_{j,i})_{j \ne i}$. They differ only in the
choice of aggregate. The log-sum-exp is a smoothed maximum, with
$\mathrm{LSE}((s_{j,i})_{j \ne i}) = \max_{j \ne i} s_{j,i}
+ O(\log n)$, and therefore inherits the $\sqrt{2 \log n}$
extreme-value statistics of the top-1 criterion. Replacing it with
the tail average $\TAM_{r, \beta}$, which remains on the scale of a
typical competitor at fixed $r$, enables retrieval at the
quadratic model-size scale $d^2 \asymp n$ instead of $d^2 \asymp n\log n$.

\subsection{Performance metrics}
\label{sec:tam-metrics}

With the TAM learning problem in place, we consider three
asymptotic quantities that together characterize retrieval
performance: the empirical average fitting loss, the per-sample TAM
margin, and the distribution of signal percentiles. Throughout this
subsection, $(\Wh, \muh)$ denotes the unique minimizer of
\eqref{eq:tam-learning-joint}, with corresponding scores
$\widehat s_{j,i} = u_j^\top \Wh v_i$.

\paragraph{Empirical average fitting loss.}
Let
\begin{equation}\label{eq:Ln-def}
    L_n(W, \mu) \;\bydef\; \frac{1}{n} \sum_{i=1}^n
    \ell\left(
        s_{i,i} - \mu_i
        - \frac{1}{k} \sum_{j \ne i}
            \phi_\beta(s_{j,i} - \mu_i)
    \right)
\end{equation}
denote the normalized data-fitting part of
\eqref{eq:tam-learning-joint}, and write
$\widehat L_n \bydef L_n(\Wh, \muh)$ for its value at the
optimizer. We will say that the TAM problem is \emph{asymptotically
fit at load}~$\alpha$ if $\widehat L_n \to 0$ in probability as
$n, d \to \infty$ with $n/d^2 \to \alpha$ and $\lambda\to0$.
Since $\ell$ is nonnegative and vanishes
only in the large-margin limit, asymptotic fit is equivalent to the
TAM retrieval criterion~\eqref{eq:tam-criterion} holding with a
positive margin for all but a vanishing fraction of keys.

\paragraph{TAM margin distribution.}
A finer summary than $\widehat L_n $ is the empirical distribution of
the per-sample TAM margins
\[
    \widehat m_i \;\bydef\; \widehat s_{i,i} - \muh_i
    - \frac{1}{k} \sum_{j \ne i}
        \phi_\beta (\widehat s_{j,i} - \muh_i ),
    \qquad i \in [n],
\]
which are precisely the arguments of the logistic loss appearing in
$\widehat L_n$. A positively-supported limiting distribution for $\widehat m_i$
indicates asymptotic fit; its shape shows how comfortably the
signal beats the tail average.

\paragraph{Signal percentile distribution.}
As a complementary, rank-based summary, we also ask, for each sample,
what fraction of competitors have score below the signal.
Formally, define the \emph{signal percentile}
\[
    \Omega_i \;\bydef\; \frac{1}{n-1} \sum_{j \ne i}
    \mathbf{1}\{\widehat s_{j,i} \le \widehat s_{i,i}\},
    \qquad i \in [n],
\]
so that $\Omega_i$ close to $1$ signals that the $i$th signal
dominates nearly all of its competitors and $\Omega_i = 0$ that
every competitor beats it. The random probability measure
\begin{equation}\label{eq:nu-def}
    \nu_n \;\bydef\; \frac{1}{n} \sum_{i=1}^n \delta_{\Omega_i}
\end{equation}
on $[0,1]$ is our object of study: when does $\nu_n$ converge in
probability (say, in Wasserstein-$1$) to a deterministic limit?

\paragraph{CMM benchmark: from top-1 to percentiles.}
Section~\ref{sec:superposition} used a Gaussian approximation to the
scores of the non-optimized CMM $W_{\mathsf{CMM}}$ to explain its
top-1 behavior. Here we return to the same picture, now in the
proportional regime $n/d^2\to\alpha\in(0,\infty)$, to predict the full
distribution of signal percentiles. In particular,
\eqref{eq:cmm-score-heuristic} suggests, for a fixed query,
\[
    s_{i,i}\approx 1+\sqrt{\alpha}\,Z_i,
    \qquad
    s_{j,i}\approx \sqrt{\alpha}\,Z_{j,i},\quad j\ne i,
\]
where $Z_i$ and $(Z_{j,i})_{j\ne i}$ are treated as independent
standard Gaussians. As discussed in Section~\ref{sec:superposition},
Lemma~\ref{lem:single-coord-law} gives the exact one-column
representation underlying this approximation: the competitor vector
is a random radius times $g/\|g\|$, and
$\sqrt n\,g/\|g\|$ is approximately Gaussian coordinatewise, while
the radius concentrates at $\sqrt\alpha$.

Conditioning on $Z_i$ and applying a law-of-large-numbers heuristic to
the competitors gives
\[
    \Omega_i
    \approx
    \frac{1}{n-1}\sum_{j\ne i}
    \mathbf 1\!\left\{Z_{j,i}\le Z_i+\alpha^{-1/2}\right\}
    \approx
    \Phi\!\left(Z_i+\alpha^{-1/2}\right).
\]
Consequently, the predicted CDF of the CMM percentile is
\begin{equation}\label{eq:percentile-copula}
    F_\alpha(\omega) \;\bydef\;
    \Phi\!\left(\Phi^{-1}(\omega) - \alpha^{-1/2}\right),
    \qquad \omega \in (0, 1),
\end{equation}
with density $f_\alpha(\omega)=\exp(\Phi^{-1}(\omega)/\sqrt{\alpha}
-1/(2\alpha))$. The parameter $\alpha^{-1/2}$ plays the role of a
signal-to-noise ratio: the law concentrates at $1$ as $\alpha\to0$
and tends to $\mathrm{Unif}(0,1)$ as $\alpha\to\infty$.

The next section develops a leave-one-out theory for these three
observables under the TAM optimizer in the quadratic regime
$n/d^2\to\alpha$ with fixed $r\in(0,1)$.

\section{Asymptotic theory of TAM}
\label{sec:tam_theory}

We now study the TAM optimizer in the quadratic regime $n,d\to\infty$
with $n/d^2\to\alpha$ and $k/n\to r\in(0,1)$. Throughout this section
the smoothing parameter $\beta<\infty$ and ridge parameter $\lambda>0$
are fixed. The goal is to predict the empirical laws of the true scores
$s_{i,i}(\Wh)$ and competitor scores $s_{j,i}(\Wh)$, and hence the
limiting TAM loss, margin distribution, and signal-percentile profile
introduced in \sref{tam-metrics}.

The main technique is a leave-one-out (LOO) analysis. We compare the TAM
optimizer with the optimizer obtained after removing a held-out pair
$(u_h,v_h)$, and use this comparison to separate the fresh randomness
of the held-out sample from the remaining problem. The first three
inputs below are LOO stability estimates, proved in
Appendix~\ref{app:loo_stability_proofs}. The remaining three are
spectral and self-averaging estimates: the spectral estimate is proved in
Appendix~\ref{app:frobenius_bounds}, while the two self-averaging estimates are proved in Appendix~\ref{app:spectral_self_averaging}. Together, the six estimates yield the two-parameter scalar variational description stated in
Theorem~\ref{thm:tam_scalar}.

The section is organized as follows. We first compute the gradient and
Hessian of the smoothed TAM loss. We then formulate the LOO rank-one
reduction, state the scalar variational prediction, and finally derive
the finite-dimensional equations that lead to this prediction. The
ridgeless limit $\lambda\downarrow0$ and the associated phase transition
are deferred to \sref{tam_phase_transitions}.

\subsection{Gradient and Hessian of the TAM loss}
\label{sec:tam_gradient}

Throughout this section the threshold variables are unregularized, and we work with the joint TAM objective of \eqref{eq:tam-learning-joint}, written with the reparameterized ridge:
\begin{equation}\label{eq:loss_stam_joint}
    \Psi(W, \mu) = \sum_{i=1}^n
\ell\Big(
s_{i,i}(W) - \mu_i - \frac{1}{k}\sum_{j\ne i}\phi_\beta\!\bigl(s_{j,i}(W)-\mu_i\bigr)
\Big)
+ \frac{\lambda n}{2 d^2}\,\norm{W}_\mathsf{F}^2,
\end{equation}
where $s(W)=U^\top W V\in\R^{n\times n}$, with entries
$s_{j,i}(W)=u_j^\top Wv_i$, and $U=[u_1\;\cdots\;u_n]$,
$V=[v_1\;\cdots\;v_n]$ are the data matrices from the Gaussian
model~\eqref{eq:gaussian_model}. We use lowercase $s_{j,i}$ for
finite-dimensional scores, reserving uppercase $S$ for scalar limiting
random variables. When the underlying matrix is clear from context, we
write $s_{j,i}$ instead of $s_{j,i}(W)$; in particular, at the full
optimizer this means $s_{j,i}=s_{j,i}(\Wh)$. Since $\Psi$ depends on
$W$ only through $s(W)$, the chain rule reduces the computation of
$\nabla_W\Psi$ to an $n\times n$ matrix of score derivatives,
\begin{equation}\label{eq:grad_W_chain}
    \nabla_W \Psi \;=\; U\,G\,V^\top \;+\; \frac{\lambda n}{d^2}\,W,
    \qquad
    G_{j,i} := \frac{\partial \Psi}{\partial s_{j,i}}.
\end{equation}
The remainder of this subsection evaluates the entries of $G$ and rewrites $\nabla_W\Psi$ in a form convenient for the leave-one-out analysis in \sref{tam_loo}.

The $i$-th summand of $\Psi$ equals $\ell(m_i)$, where the \emph{TAM aggregate} and \emph{margin} associated with sample $i$ are
\begin{equation}\label{eq:c_i_m_i}
    c_i := \mu_i + \frac{1}{k}\sum_{j\ne i}\phi_\beta\!\big(s_{j,i}-\mu_i\big),
    \qquad
    m_i := s_{i,i} - c_i.
\end{equation}
The margin $m_i$ is the only part of the $i$-th summand that depends on column $i$ of the score matrix, and the $i$-th summand is the only summand that does. Differentiating $\ell(m_i)$ in $s_{j,i}$ therefore yields
\begin{equation}\label{eq:dPsi_ds_raw}
    G_{j,i} \;=\; \ell'(m_i)\,\frac{\partial m_i}{\partial s_{j,i}}
    \;=\;\begin{cases}
        \ell'(m_i), & j = i, \\[2pt]
        -\tfrac{1}{k}\,\ell'(m_i)\,\phi_\beta'(s_{j,i}-\mu_i), & j \ne i.
    \end{cases}
\end{equation}
Both $\ell'$ and $\phi_\beta'$ are conveniently expressed through the sigmoid $\sigma(t) := (1+e^{-t})^{-1}$:
\begin{equation}\label{eq:sigma_derivs}
    \ell'(t) \;=\; \sigma(t)-1 \;=\; -\,\sigma(-t),
    \qquad
    \phi_\beta'(t) \;=\; \sigma(\beta t).
\end{equation}
Substituting into \eqref{eq:dPsi_ds_raw} gives the two scalar weights
\begin{equation}\label{eq:a_def}
    a_i \;:=\; \sigma(-m_i) \;=\; -\,\ell'(m_i) \;\in\; (0,1),
\end{equation}
and, for $j\ne i$,
\begin{equation}\label{eq:q_def}
    q_{j,i} \;:=\; \sigma\!\big(\beta\,(s_{j,i}-\mu_i)\big) \;=\; \phi_\beta'(s_{j,i}-\mu_i) \;\in\; (0,1), \qquad j\ne i.
\end{equation}
In these variables \eqref{eq:dPsi_ds_raw} becomes $G_{j,i} = -a_i$ for $j=i$ and $G_{j,i} = \tfrac{1}{k}q_{j,i}a_i$ otherwise. Equivalently $G = Q\Lambda_a$, where $\Lambda_a := \diag(a_1,\dots,a_n)$ and
\begin{equation}\label{eq:Q_def}
    Q_{j,i} \;:=\;
    \begin{cases}
      -1, & j = i,\\[2pt]
      \dfrac{1}{k}\, q_{j,i}, & j \ne i.
    \end{cases}
\end{equation}

Substituting $G = Q\Lambda_a$ into \eqref{eq:grad_W_chain} and performing the analogous calculation for $\mu$ --- where $\partial m_i/\partial \mu_i = -1 + \tfrac{1}{k}\sum_{j\ne i} q_{j,i} = -(Q^\top\mathbf{1}_n)_i$ --- yields
\begin{equation}\label{eq:grad_W}
    \nabla_W \Psi(W,\mu) = U\,Q\Lambda_a\,V^\top \;+\; \frac{\lambda n}{d^2}\,W,
\end{equation}
\begin{equation}\label{eq:grad_mu}
    \nabla_\mu \Psi(W,\mu) = -\Lambda_a\, Q^\top \mathbf{1}_n,
\end{equation}
where $\mathbf{1}_n \in \R^n$ is the all-ones vector. Expanding \eqref{eq:grad_W} by rows of $Q$ separates the diagonal and off-diagonal contributions,
\begin{equation}\label{eq:grad_W_expanded}
    \nabla_W \Psi \;=\; -\sum_{i=1}^n a_i\, u_i v_i^\top
    \;+\; \frac{1}{k}\sum_{i=1}^n a_i \sum_{j\ne i} q_{j,i}\, u_j v_i^\top
    \;+\; \frac{\lambda n}{d^2}\,W.
\end{equation}

For the Taylor expansion in \sref{tam_loo} we will also need the second derivative of the outer loss at the margin, which is governed by a companion diagonal:
\begin{equation}\label{eq:b_def}
    b_i \;:=\; \ell''(m_i) \;=\; a_i(1-a_i),
    \qquad \Lambda_b := \diag(b_1,\dots,b_n).
\end{equation}

\begin{definition}[Full Hessian]\label{def:full_hessian}
Let $H:\R^{d\times d}\to\R^{d\times d}$ denote the Hessian operator of $\Psi$ in the $W$ variable, evaluated at a fixed point $(W,\mu)$. Applied to a perturbation $\Delta$, it is
\begin{equation}\label{eq:H_full_def}
    H[\Delta]
    \;:=\; \frac{\lambda n}{d^2}\,\Delta
    \;+\; U\,M\!\bigl(U^\top\Delta V\bigr)\,V^\top.
\end{equation}
Here $M:\R^{n\times n}\to\R^{n\times n}$ acts columnwise,
\begin{equation}\label{eq:M_full_operator}
    M(X)_{:,i} \;=\; M_i X_{:,i},
\end{equation}
where
\begin{equation}\label{eq:M_full_i_def}
\begin{aligned}
    M_i
    \;&:=\; b_i Q_{:,i}Q_{:,i}^{\top}
    \;+\; a_i\diag(\widetilde R_{:,i}),\\
    \widetilde R_{j,i}
    \;&:=\;
    \begin{cases}
    \dfrac{\beta}{k}\,q_{j,i}(1-q_{j,i}), & j\ne i,\\
    0, & j=i.
    \end{cases}
\end{aligned}
\end{equation}
\end{definition}

The operator $H$ is self-adjoint and strictly positive-definite, with smallest eigenvalue at least $\lambda n/d^2$.
Under vectorization, writing $\mathcal D:=\diag(M_1,\ldots,M_n)\in\R^{n^2\times n^2}$, the same operator has matrix representation
\begin{equation}\label{eq:H_full_kron}
    H
    \;=\; \frac{\lambda n}{d^2}\,I_{d^2}
    \;+\; (V\otimes U)\,\mathcal D\,(V\otimes U)^\top.
\end{equation}

\subsection{Leave-one-out reduction}
\label{sec:tam_loo}

We relate the joint minimizer $(\Wh, \muh)$ of \eqref{eq:tam-learning-joint} to a leave-one-out reference point by a first-order expansion. Fix a held-out index $h \in [n]$, and let $(\WLOO, \muLOO)$ denote the minimizer of the TAM objective with sample $h$ removed from both the outer sum and from all inner summations over interferers:
\begin{equation}\label{eq:Psi_LOO}
    \Psi^{\setminus h}(W, \mu) \;:=\; \sum_{i \ne h}
    \ell\Big(
        s_{i,i}(W) - \mu_i
        - \frac{1}{k} \!\!\sum_{\substack{j \ne i \\ j \ne h}}\!\! \phi_\beta\bigl(s_{j,i}(W) - \mu_i\bigr)
    \Big)
    + \frac{\lambda n}{2 d^2}\,\norm{W}_\mathsf{F}^2.
\end{equation}
Write $\Delta W := \Wh - \WLOO$, $\Delta \mu := \muh - \muLOO$, and $\Delta S := U^\top (\Delta W) V \in \R^{n \times n}$ for the corresponding perturbations. A hat decoration (e.g., $\hat a_i, \hat q_{j,i}, \hat b_i, \hat m_i$) denotes a quantity from \sref{tam_gradient} evaluated at $(\Wh, \muh)$; a superscript $\setminus h$ denotes the same quantity evaluated at $(\WLOO, \muLOO)$. We adopt the conventions $a_h^{\setminus h} := 0$ and $q_{j, h}^{\setminus h} := q_{h, i}^{\setminus h} := 0$, so that $Q^{\setminus h}, \Lambda_{a^{\setminus h}}$ extend naturally to $n \times n$ arrays.

\subsubsection{LOO Hessian and rank-one reduction}

The LOO Hessian has the same form as the full Hessian in Definition~\ref{def:full_hessian}, with the $h$-th sample removed and all coefficients evaluated at $(\WLOO,\muLOO)$. Explicitly, $H^{\setminus h}: \R^{d\times d} \to \R^{d\times d}$ is defined by
\begin{equation}\label{eq:H_LOO_def}
    H^{\setminus h}[\Delta] \;:=\; \frac{\lambda n}{d^2}\,\Delta
    \;+\; U\,M^{\setminus h}\!\bigl(U^\top \Delta\, V\bigr)\,V^\top,
\end{equation}
where $M^{\setminus h}: \R^{n\times n}\to \R^{n\times n}$ acts columnwise as
\begin{equation}\label{eq:M_operator}
    M^{\setminus h}(X)_{:, i} \;=\; M_i^{\setminus h}\,X_{:, i}\quad (i \ne h),
    \qquad M^{\setminus h}(X)_{:, h} \;=\; 0.
\end{equation}
In the above display, $M_i^{\setminus h} \in \R^{n\times n}$ is a symmetric matrix with a rank-one-plus-diagonal form
\begin{equation}\label{eq:M_i_def}
\begin{aligned}
    M_i^{\setminus h}
    \;&:=\; b_i^{\setminus h}\,Q_{:, i}^{\setminus h} (Q_{:, i}^{\setminus h})^\top
    \;+\; a_i^{\setminus h}\,\diag\bigl(\widetilde R_{:, i}^{\setminus h}\bigr)\\
    \widetilde R_{j, i}^{\setminus h} \;&:=\;\begin{cases}
    \frac{\beta}{k}\,q_{j, i}^{\setminus h}\bigl(1 - q_{j, i}^{\setminus h}\bigr)\,&j \ne i,\ j \ne h\\
    0, &\text{otherwise}\\
    \end{cases}.
\end{aligned}
\end{equation}
Here $b_i^{\setminus h}$ is the quantity in \eqref{eq:b_def} evaluated at $(\WLOO, \muLOO)$. The operator $H^{\setminus h}$ is self-adjoint and strictly positive-definite on $\R^{d \times d}$: its smallest eigenvalue is at least $\lambda n / d^2 = \Theta(1)$ in the regime $d^2 \asymp n$, so that $\norm{(H^{\setminus h})^{-1}}_\mathrm{op} = \Op(1)$.

\paragraph{Rank-one reduction.}
The leave-one-out perturbation is governed, to first order, by the held-out feature pair $(u_h,v_h)$.
\begin{proposition}[LOO rank-one reduction]\label{prop:loo_rank_one_reduction}
Under the LOO stability estimates \ref{post:loo-mu}--\ref{post:loo-bounded} stated below, with high probability,
\begin{equation}\label{eq:What_WLOO_expansion}
    \Wh - \WLOO
    \;=\; \sigma(-\hat m_h)\,\bigl(H^{\setminus h}\bigr)^{-1}\!\bigl[u_h v_h^\top\bigr]
    \;+\; R_h,
    \qquad
    \bigl\lVert R_h \bigr\rVert_\mathsf{F} \;=\; \Op(d^{-1/2}).
\end{equation}
\end{proposition}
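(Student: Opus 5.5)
Our approach is to compare the first-order optimality conditions of the two problems and linearize around the LOO point. The full optimizer satisfies $\nabla_W\Psi(\Wh,\muh)=0$ and $\nabla_\mu\Psi(\Wh,\muh)=0$. The LOO optimizer satisfies the analogous equations for $\Psi^{\setminus h}$. We first decompose
\[
\Psi(W,\mu)=\Psi^{\setminus h}(W,\mu)+\ell\bigl(m_h(W,\mu_h)\bigr)+\sum_{i\ne h}\Bigl[\ell\bigl(m_i(W,\mu_i)\bigr)-\ell\bigl(m_i^{\setminus h}(W,\mu_i)\bigr)\Bigr].
\]
The second term is the held-out sample's own loss. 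The bracket is the effect of reinserting $u_h$ as a competitor in every other column $i\ne h$. Its $W$-gradient is
\[
-a_h u_hv_h^\top+\tfrac1k a_h\sum_{j\ne h}q_{j,h}u_jv_h^\top
\]
from sample $h$, plus, from the bracket,
\[
\tfrac1k\sum_{i\ne h}a_iq_{h,i}u_hv_i^\top
\]
and a correction coming from the change of $a_i$ induced by the extra summand $\tfrac1k\phi_\beta(s_{h,i}-\mu_i)$ in $m_i$.

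Next we Taylor-expand $\nabla_W\Psi^{\setminus h}$ and $\nabla_\mu\Psi^{\setminus h}$ around $(\WLOO,\muLOO)$. At that point the expansions vanish, and the linear term in $(\Delta W,\Delta\mu)$ is given by the joint LOO Hessian. We eliminate $\Delta\mu$ by a Schur complement. The $\mu$-block is diagonal and, by the stability estimate \ref{post:loo-mu}, has entries bounded below. This elimination should change $H^{\setminus h}$ only by terms that the estimates control, or else the reduced Hessian coincides with $H^{\setminus h}$ up to $\Op(d^{-1/2})$ in the relevant direction. Combining, we obtain
\[
H^{\setminus h}[\Delta W]=\hat a_h u_hv_h^\top-\mathcal E_h,
\]
where $\mathcal E_h$ collects three pieces: the competitor terms in column $h$, the reinsertion terms in row $h$, and the second-order Taylor remainders. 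Inverting and using $\norm{(H^{\setminus h})^{-1}}_{\mathrm{op}}=\Op(1)$ gives the claim with $R_h=-(H^{\setminus h})^{-1}\mathcal E_h$. It therefore suffices to show $\norm{\mathcal E_h}_{\mathsf F}=\Op(d^{-1/2})$.

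We bound the pieces in turn.

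First, the column-$h$ competitor term $\tfrac1k\sum_{j\ne h}q_{j,h}u_jv_h^\top$. It equals $\tfrac1k U_{-h}q_{:,h}v_h^\top$. Using $\norm{U}_{\mathrm{op}}=O(\sqrt{n/d})$, $\norm{q_{:,h}}\le\sqrt n$ and $k\asymp n$, its Frobenius norm is $O(d^{-1/2})$.

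Second, the row-$h$ term $\tfrac1k u_h(\sum_i a_iq_{h,i}v_i)^\top$. This has norm $O(k^{-1}\norm{V}_{\mathrm{op}}\sqrt n)=O(d^{-1/2})$.

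Third, the second-order remainders. These are quadratic in $\Delta S=U^\top\Delta W V$ and $\Delta\mu$, weighted by bounded derivatives of $\sigma$ with the factor $\beta/k$. We plan to control them using the remaining LOO stability estimates \ref{post:loo-mu}--\ref{post:loo-bounded}. These should supply $\norm{\Delta W}_{\mathsf F}=\Op(1)$, entrywise bounds $\max_{j,i}|\Delta s_{j,i}|=\Op(d^{-1/2})$ off the $h$ row/column, and $\norm{\Delta\mu}_\infty=\Op(d^{-1/2})$.

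Two further points must be checked. One is that the evaluation points of $a_h$ match: the coefficient from the exact first-order condition is $\hat a_h=\sigma(-\hat m_h)$, which is evaluated at the full optimizer. This is precisely why the statement uses $\hat m_h$ rather than an LOO quantity, so no further approximation is needed there. The other is the change in the Hessian coefficients along the path from $\WLOO$ to $\Wh$. We handle this with the integral form of the mean value theorem: the difference between the averaged Hessian and $H^{\setminus h}$, applied to $\Delta W$, is bounded by $\norm{U}_{\mathrm{op}}^2\norm{V}_{\mathrm{op}}^2\cdot\max|\Delta s|\cdot\norm{\Delta W}/k$-type quantities, which are $\Op(d^{-1/2})$.

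The main obstacle is the remainder term from the row $h$ and column $h$ scores. There, $\Delta s_{h,i}$ and $\Delta s_{j,h}$ are not small entrywise; for example $\Delta s_{h,h}$ is order one. A crude Frobenius bound would lose a factor $\sqrt n$. We will isolate these $O(n)$ entries. Because each enters the Hessian with weight $\beta/k$, or through $b_iQ_{:,i}Q_{:,i}^\top$ with $Q_{h,i}=O(1/k)$, their aggregate contribution to $U M(\cdot)V^\top$ can be bounded by the operator norms of $U$ and $V$ times $\sqrt n/k\cdot\Op(1)$. This gives $\Op(d^{-1/2})$, provided the stability estimate \ref{post:loo-bounded} gives uniform $\Op(1)$ bounds on these exceptional entries.
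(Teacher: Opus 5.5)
Your skeleton matches the paper's proof. You subtract the two $W$-stationarity conditions and keep the held-out outer term exact, which gives the coefficient $\hat a_h=\sigma(-\hat m_h)$. You split off the column-$h$ piece $\hat a_h\eta_hv_h^\top$ and the row-$h$ piece $u_h\xi_h^\top$, and your $\Op(d^{-1/2})$ bounds on these are exactly the paper's. You then linearize the surviving columns with $H^{\setminus h}$ and invert using the ridge floor.

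\textbf{The remainder bound fails at your rates.} You posit $\max_{j,i\ne h}|\Delta S_{j,i}|=\Op(d^{-1/2})$ and $\max_i|(\Delta\mu)_i|=\Op(d^{-1/2})$. You then assert that every second-order remainder, and every change of Hessian coefficients along your mean-value path, carries a factor $\beta/k$. That is false for the outer logistic loss. For $i\ne h$, $\hat a_i-a_i^{\setminus h}=-b_i^{\setminus h}(\hat m_i-m_i^{\setminus h})+O\bigl((\hat m_i-m_i^{\setminus h})^2\bigr)$, and $\hat m_i-m_i^{\setminus h}$ contains $-\Delta S_{i,i}$ with unit weight. Likewise, $b_i$ moves by $O(|\hat m_i-m_i^{\setminus h}|)$ along the path.

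At your rate these diagonal remainders $\rho_i$ are of order $d^{-1}$ in each of the $n\asymp d^2$ columns. The best available bound on $\norm{\sum_{i\ne h}\rho_iu_iv_i^\top}_{\mathsf F}$ is then $\sqrt n\,d^{-1}=O(1)$, even with the Khatri--Rao bound $\norm{[\vecop(u_iv_i^\top)]_i}_{\mathrm{op}}=\Op(1)$. That is not $\Op(d^{-1/2})$.

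The argument closes only because (A2) is sharper: $\max_{j,i\ne h}|\Delta S_{j,i}|=\Op(d^{-1})$, diagonal included. This makes the score remainders $\Op(n^{-1})$ on the diagonal and $\Op(n^{-2})$ off it. Hence that part satisfies $\norm{U\hat EV^\top}_{\mathsf F}\le\sqrt d\,\norm{U}_{\mathrm{op}}\norm{\hat E}_{\mathrm{op}}\norm{V}_{\mathrm{op}}=\Op(d^{-1/2})$.

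\textbf{Your ``main obstacle'' is not one.} $M_i^{\setminus h}$ has zero $h$-th row and column, and $M^{\setminus h}$ annihilates column $h$. So $H^{\setminus h}[\Delta W]$ never sees $\Delta S_{h,i}$ or $\Delta S_{j,h}$. Those entries enter only in two places: the exact held-out terms, and the removed-interferer term $\frac1k\phi_\beta(s_{h,i}-\mu_i)=\Op(k^{-1})$ in each margin (by (A3)). That second term is the ``change of $a_i$'' correction you mention and then leave out of $\mathcal E_h$. It must be kept in the diagonal residual.

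\textbf{The Schur complement gives the wrong operator.} Eliminating $\Delta\mu$ this way does not produce $H^{\setminus h}$, which is the $W$-Hessian at frozen $\mu$. It produces the Hessian of the profiled objective. There each block $a_i^{\setminus h}\diag(\widetilde R^{\setminus h}_{:,i})$ becomes $a_i^{\setminus h}\bigl(D_i-(D_i\mathbf 1)(D_i\mathbf 1)^\top/\mathbf 1^\top D_i\mathbf 1\bigr)$, with $D_i=\diag(\widetilde R^{\setminus h}_{:,i})$. You would then owe a proof that the two inverses agree on $u_hv_h^\top$ to $\Op(d^{-1/2})$; you only assert it. Also, (A1) is a perturbation bound on $\Delta\mu$, not a lower bound on the $\mu$-block, which would need a separate bulk-curvature estimate. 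The Newton comparison in the stability appendix does work with the profiled Hessian, but the proposition is stated for the frozen-$\mu$ operator.

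\textbf{How the paper handles $\Delta\mu$.} It freezes $\mu$ and moves every $\Delta\mu$-linear term into the residual, which is where the sharp rate of (A1) is spent. This works for two reasons:
\begin{itemize}
\item The $b_i$-weighted $\Delta\mu$ terms carry the factor $(Q^{\setminus h}_{:,i})^\top\mathbf 1_n$, which vanishes by leave-one-out $\mu$-stationarity.
\item The remaining term $\sum_{i\ne h}a_i^{\setminus h}(\Delta\mu)_i\bigl(\sum_{j\ne i,h}\widetilde R^{\setminus h}_{j,i}u_j\bigr)v_i^\top$ has Frobenius norm at most $\sqrt n\,\max_i|(\Delta\mu)_i|\cdot\Op(d^{-1/2})\cdot\norm{V}_{\mathrm{op}}$. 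This is $\Op(d^{-1/2})$ at the rate $d^{-3/2}$ of (A1), but $\Op(d^{1/2})$ at your rate $d^{-1/2}$.
\end{itemize}
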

Here $\hat m_h$ is the TAM margin at the held-out sample evaluated at the full minimizer $(\Wh, \muh)$:
\begin{equation}\label{eq:m_hat_h}
    \hat m_h \;=\; s_{h, h}(\Wh) \;-\; \hat\mu_h \;-\; \frac{1}{k}\sum_{j \ne h} \phi_\beta\!\bigl(s_{j, h}(\Wh) - \hat\mu_h\bigr).
\end{equation}

Thus the leading change in the weight matrix is the inverse LOO Hessian applied to $\sigma(-\hat m_h)\,u_h v_h^\top$, with a Frobenius-norm correction of lower order. In \sref{tam_selfconsistent}, this formula is combined with trace concentration for $(H^{\setminus h})^{-1}$ to obtain the susceptibility equation.
\subsubsection{Proof of Proposition~\ref{prop:loo_rank_one_reduction}}
We derive \eqref{eq:What_WLOO_expansion} from the following LOO
stability proposition.

\begin{proposition}[LOO stability]\label{prop:loo_stability}
Uniformly after removing any fixed finite set of samples,
\begin{enumerate}[label=\textup{(A\arabic*)}, ref=\textup{(A\arabic*)}, leftmargin=*]
  \item\label{post:loo-mu} $\displaystyle \max_{i \ne h}\, \bigl|(\Delta \mu)_i\bigr| \;=\; \Op(d^{-3/2})$;
  \item\label{post:loo-score} $\displaystyle \max_{j, i \ne h}\, \bigl|(\Delta S)_{j, i}\bigr| \;=\; \Op(d^{-1})$;
  \item\label{post:loo-bounded} $\displaystyle \max_{j, i \in [n]}\, \bigl|s_{j, i}(\Wh)\bigr| \;=\; \Op(1)$.
\end{enumerate}
\end{proposition}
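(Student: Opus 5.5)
The plan is a bootstrap. First I would get crude a priori bounds from strong convexity. Then I would upgrade them through a nested leave-one-out comparison, in which each index whose independence we need (the held-out $h$, and the query $i$ and competitor $j$ appearing in $(\Delta S)_{j,i}$) is removed in turn. The three estimates are proved jointly, because \ref{post:loo-bounded} is needed to control the curvature in $\mu$ and the sigmoid weights, while \ref{post:loo-bounded} is in turn derived from a two-index version of \ref{post:loo-score}.

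\emph{Step 1: a priori bounds.} Comparing $\Psi(\Wh,\muh)$ with $\Psi(0,\mu)$ for a suitable constant $\mu$ gives $\frac{\lambda n}{2d^2}\norm{\Wh}_{\mathsf F}^2\le Cn$, hence $\norm{\Wh}_{\mathsf F}=O(d)$. The same bound holds for every leave-$S$-out optimizer with $|S|$ finite.

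\emph{Step 2: bounded scores, \ref{post:loo-bounded}.} For a pair $(j,i)$, I would write
\[
s_{j,i}(\Wh)=u_j^\top \Wh^{\setminus\{i,j\}}v_i+u_j^\top(\Wh-\Wh^{\setminus\{i,j\}})v_i .
\]
The first term is independent of $(u_j,v_i)$ when $j\ne i$. Conditionally it is centered Gaussian-like with standard deviation $\norm{\Wh^{\setminus\{i,j\}}}_{\mathsf F}/d=O(1)$, so a union bound over $n^2$ pairs gives $\Op(1)$; the $\sqrt{\log n}$ cost is absorbed by the $n^\varepsilon$ slack. The diagonal case $j=i$ is handled in the same way, conditioning on $(u_i,v_i)$ jointly. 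For the second term I would use the gradient identity \eqref{eq:grad_W_expanded}. The gradient of the full objective at the leave-$\{i,j\}$-out point consists only of the removed terms: a rank-one piece $a_iu_iv_i^\top$ of norm $O(1)$, plus pieces such as $u_h\frac1k\sum_{l}a_lq_{h,l}v_l^\top$ whose Frobenius norm is at most $\frac1k\norm{V}_{\mathrm{op}}\sqrt n=O(d^{-1/2})$. Joint strong convexity then yields $\norm{\Wh-\Wh^{\setminus\{i,j\}}}_{\mathsf F}=\Op(1)$, and hence $|u_j^\top(\cdot)v_i|\le\Op(1)$ trivially.

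Strong convexity in $\mu$ is the subtle point, since $\mu$ is unregularized. The diagonal curvature in $\mu_i$ is $a_i\frac{\beta}{k}\sum_j q_{j,i}(1-q_{j,i})$ plus a nonnegative $b_i$ term. I would lower-bound it by a constant using the stationarity equation $\frac1k\sum_{j\ne i}q_{j,i}=1$ from \eqref{eq:grad_mu}: it forces roughly a fraction $r$ of the $q_{j,i}$ to be non-negligible, so that, once scores are bounded, a positive fraction of the $q_{j,i}$ lie in $[\delta,1-\delta]$. The same bounded scores keep the margins, and hence $a_i$, bounded away from $0$. This is circular, so I would run it as a continuity/bootstrap argument. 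Concretely, I would work on the event where scores are at most $n^{\varepsilon}$, obtain curvature of order $n^{-C\varepsilon}$ there, and then improve the score bound to $\Op(1)$. That closes the loop.

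\emph{Step 3: \ref{post:loo-score} and \ref{post:loo-mu}.} With curvature in hand, I would Taylor-expand the stationarity conditions around $(\WLOO,\muLOO)$. This gives $\Delta W=(H^{\setminus h})^{-1}[a_h u_hv_h^\top+E]$ with $\norm{E}_{\mathsf F}=\Op(d^{-1/2})$, after eliminating $\Delta\mu$ through the block (Schur-complement) structure. Then for $i,j\ne h$,
\[
(\Delta S)_{j,i}=u_j^\top\Delta W v_i .
\]
I would bound this by a further removal of $(i,j)$. The matrix $(H^{\setminus\{h,i,j\}})^{-1}[u_hv_h^\top]$ is independent of $(u_j,v_i)$ and has Frobenius norm $O(1)$, so its bilinear form in $(u_j,v_i)$ is $\Op(1/d)$; the correction from the Hessian change is controlled by resolvent identities and the same argument. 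For \ref{post:loo-mu}, I would differentiate the $\mu_i$ equation $\frac1k\sum_{j\ne i}q_{j,i}=1$. This gives $(\Delta\mu)_i\approx\frac{\sum_j q_{j,i}'(\Delta S)_{j,i}}{\sum_jq'_{j,i}}$ plus an $O(1/k)=O(d^{-2})$ term from dropping $j=h$. The weighted average of $u_j^\top\Delta W v_i$ over $j$ gains an extra $n^{-1/2}$ over the entrywise bound $d^{-1}$, because the weights and $\Delta W$ are nearly independent of individual $u_j$ (checked again by one more LOO step). A direct $\sqrt n$ CLT gain would give $d^{-1}\cdot n^{-1/2}\approx d^{-2}$. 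The residual correlation between the weights $q_{j,i}$ and $u_j$ contributes the dominant $d^{-3/2}$.

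\emph{Main obstacle.} I expect the hardest part to be the self-consistent closure in Steps 2--3. The $\mu$-curvature and the bounds on $a_i$ require bounded scores, while bounded scores require LOO perturbation bounds that themselves use that curvature. On top of this, every bilinear estimate needs one more level of leave-out to decouple the data dependence of $q_{j,i}$ and $\mu_i$. I would organize this as an induction over removing finitely many indices, with uniform constants. That is exactly why the statement is phrased ``uniformly after removing any fixed finite set of samples.''
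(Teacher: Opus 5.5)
Your proposal has a genuine gap, and it sits in Step~3, where the proposition is actually decided: as written, that step is circular. Taylor-expanding the stationarity conditions around $(\widehat W^{\setminus h},\widehat\mu^{\setminus h})$ and bounding the remainder requires a priori entrywise control of $\Delta S$ and $\Delta\mu$ along the segment to the unknown optimizer. That control is (A1)--(A2) themselves, which is exactly why Proposition~\ref{prop:loo_rank_one_reduction} is stated \emph{assuming} the present proposition. Your ``continuity/bootstrap'' never names a parameter along which a good event propagates. An induction over the number of removed indices has no base case, since every leave-$S$-out problem is as hard as the original.

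The missing idea is never to expand at $\widehat W$. Instead, build an explicit candidate from leave-one-out-measurable objects:
\[
\widetilde W_h=\widehat W^{\setminus h}+B_h+a_h^\star T_h[X_h^{\mathrm{loo}}].
\]
Here $T_h$ is the inverse Hessian of the profiled leave-one-out objective (with $\mu$ minimized out). $X_h^{\mathrm{loo}}=(u_h-\eta_h^{\mathrm{loo}})v_h^\top$ is the gradient of the held-out margin at $\widehat W^{\setminus h}$. $B_h=-T_h[G_h(\widehat W^{\setminus h})]$ absorbs the insertion of $u_h$ as a competitor in every other column. Finally, $a_h^\star$ solves the scalar equation $a=\sigma(-m_h^{\mathrm{loo}}-\langle X_h^{\mathrm{loo}},B_h+aT_h[X_h^{\mathrm{loo}}]\rangle)$.

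Self-adjointness, $u_j^\top T_h[X]v_i=\langle T_h[u_jv_i^\top],X\rangle$, together with freshness of $(u_h,v_h)$, shows the candidate moves every surviving score by $O_\prec(d^{-1})$. The Taylor remainders \emph{at the candidate} are therefore controlled by deterministic profiled-tail bounds. One obtains $\|\nabla\Phi(\widetilde W_h)\|_{\mathsf F}=O_\prec(d^{-1})$ for the profiled full objective $\Phi(W)=\min_\mu\Psi(W,\mu)$, hence $\|\widehat W-\widetilde W_h\|_{\mathsf F}=O_\prec(d^{-1})$ by strong convexity.

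That $d^{-1}$ rate is essential, and your $\|E\|_{\mathsf F}=O_\prec(d^{-1/2})$ is too weak. From that bound alone, $u_j^\top(H^{\setminus h})^{-1}[E]v_i$ is only $O_\prec(d^{-1/2})$, so (A2) fails. In (A1), write the numerator of your $(\Delta\mu)_i$ formula as $g_{h,i}^\top\Delta W v_i$ with $g_{h,i}=\frac1k\sum_j\bar c_{j,i}u_j$ and $\|g_{h,i}\|=O_\prec(d^{-1/2})$; the $E$-contribution is then only $O_\prec(d^{-1})$, not $O_\prec(d^{-3/2})$. You must split off the $d^{-1/2}$-size pieces as structured terms with a fresh leg ($u_h\xi_h^\top$, $\eta_hv_h^\top$) and show the rest is $O_\prec(d^{-1})$. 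The diagonal part of that remainder needs the Khatri--Rao bound $\|[\vecop(u_iv_i^\top)]_i\|_{\mathrm{op}}=O_\prec(1)$, not a rank bound. The $d^{-3/2}$ in (A1) then comes from $u_h^\top C_{h,i}v_h$, where $C_{h,i}=T_h[g^{\mathrm{loo}}_{h,i}v_i^\top]$ is independent of $(u_h,v_h)$ with $\|C_{h,i}\|_{\mathsf F}=O_\prec(d^{-1/2})$. It does not come from a CLT over $j$.

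Your handling of the unregularized $\mu$ adds a second, avoidable circularity and contains an incorrect estimate. The profiled objective is $\gamma_d$-strongly convex with $\gamma_d=\lambda n/d^2$ unconditionally. So neither your Step~2 Frobenius stability nor (A2) needs $\mu$-curvature or a priori bounded scores. The paper proves (A2) first and only afterwards derives (A3) by exactly your leave-two-out decomposition.

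Where $\mu$-curvature is genuinely needed, namely the denominator of the threshold identity in (A1), bounded scores do not deliver it. From $|s_{j,i}|\le M$ alone one gets only $\sum_j q_{j,i}(1-q_{j,i})\gtrsim n e^{-2\beta M}$. No bound polynomial in $M$ can hold: put a fraction $r$ of the scores at $+M$ and the rest at $-M$. So $M=n^{\varepsilon}$ gives super-polynomially small curvature, not $n^{-C\varepsilon}$.

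The paper instead gets $\sum_j q_{j,i}(1-q_{j,i})\ge cn/(1+\rho)$ from the Gaussian bulk geometry of $(u_j^\top x)_j$, uniformly over $x$ via a net and Bernstein's inequality. Here $\rho=\|\widehat W^{\setminus h}v_i\|/\sqrt d=O_\prec(1)$ is supplied by a leave-two-out freshness argument. No lower bound on $a_i$ is needed, since the threshold equation does not involve it.
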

The proposition is proved in
Appendix~\ref{app:loo_stability_proofs}. The estimates are in the sense of
stochastic domination, absorbing $d^\varepsilon$ factors. In
particular, \ref{post:loo-bounded} applies to the optimizer of each
finite-removal problem. These estimates control the Taylor remainder
and the coefficient perturbations in the LOO expansion \eqref{eq:What_WLOO_expansion}. 

\subparagraph*{Step 1: Subtract the stationarity conditions.}
The $W$-components of the stationarity conditions \eqref{eq:grad_W} at the full and LOO optimizers, subtracted, yield
\begin{equation}\label{eq:FOC_W_diff}
    \frac{\lambda n}{d^2}\,\Delta W
    \;+\; U\,\bigl(\hat Q \Lambda_{\hat a} - Q^{\setminus h} \Lambda_{a^{\setminus h}}\bigr)\, V^\top
    \;=\; 0.
\end{equation}

\subparagraph*{Step 2: Separate the terms involving the held-out sample.}
With the extension conventions on $a^{\setminus h}$ and $q^{\setminus h}$, the matrix $\hat Q \Lambda_{\hat a} - Q^{\setminus h} \Lambda_{a^{\setminus h}}$ decomposes entrywise as
\begin{equation}\label{eq:Rsetminus_explicit}
    \bigl(\hat Q \Lambda_{\hat a} - Q^{\setminus h} \Lambda_{a^{\setminus h}}\bigr)_{j, i}
    \;=\;
    \begin{cases}
      -\,\hat a_h & (j, i) = (h, h),                                    \\[2pt]
      \hat a_h\,\hat q_{j, h}/k       & j \ne h,\ i = h,                \\[2pt]
      \hat a_i\,\hat q_{h, i}/k       & j = h,\ i \ne h,                \\[2pt]
      \mathcal{R}_{j, i}^{\setminus h} & j, i \ne h,
    \end{cases}
\end{equation}
where the first three cases are the entries involving the held-out sample: the held-out outer term ($i = h$) and the held-out interferer row ($j = h$, across interior columns) are present in the full objective but absent in the LOO objective. The remaining entries are collected in $\mathcal{R}^{\setminus h} \in \R^{n \times n}$, supported on $\{j, i \ne h\}$:
\begin{equation}\label{eq:R_def}
    \mathcal{R}_{j, i}^{\setminus h}
    \;:=\;
    \begin{cases}
      \dfrac{\hat a_i\,\hat q_{j, i} \;-\; a_i^{\setminus h}\,q_{j, i}^{\setminus h}}{k}
        & j \ne i,\ j, i \ne h,  \\[8pt]
      a_i^{\setminus h} \;-\; \hat a_i
        & j = i \ne h.
    \end{cases}
\end{equation}
Separating these entries inside \eqref{eq:FOC_W_diff},
\begin{equation}\label{eq:FOC_W_sources}
    \frac{\lambda n}{d^2}\,\Delta W \;+\; U\,\mathcal{R}^{\setminus h}\,V^\top
    \;=\; \hat a_h\, u_h v_h^\top \;-\; \hat a_h\,\eta_h\, v_h^\top \;-\; u_h\,\xi_h^\top,
\end{equation}
where
\begin{equation}\label{eq:eta_xi_def}
    \eta_h \;:=\; \frac{1}{k}\sum_{j \ne h} \hat q_{j, h}\,u_j,
    \qquad
    \xi_h \;:=\; \frac{1}{k}\sum_{i \ne h} \hat a_i\,\hat q_{h, i}\,v_i.
\end{equation}

\subparagraph*{Step 3: Linearize the common part.}
In Appendix~\ref{app:tam_taylor}, we carry out a first-order Taylor expansion of $\mathcal{R}^{\setminus h}$ in the perturbations $\Delta S$ and $\Delta \mu$, with all higher-order terms collected into a residual matrix $\hat E^{\setminus h}$ supported on $\{j, i \ne h\}$. This yields
\begin{equation}\label{eq:interior_linearization}
    \mathcal{R}_{:, i}^{\setminus h} \;=\; M_i^{\setminus h}\,\Delta S_{:, i} \;+\; \hat E^{\setminus h}_{:, i}, \qquad i \ne h,
\end{equation}
with $M_i^{\setminus h}$ as in \eqref{eq:M_i_def}. Combining \eqref{eq:interior_linearization} with \eqref{eq:FOC_W_sources} and the definition of $H^{\setminus h}$ yields
\begin{equation}\label{eq:H_LOO_identity}
    H^{\setminus h}[\Delta W]
    \;=\; \hat a_h\,u_h v_h^\top
    \;-\; \hat a_h\,\eta_h\,v_h^\top
    \;-\; u_h\,\xi_h^\top
    \;-\; U \hat E^{\setminus h} V^\top.
\end{equation}
Moreover, we show in Appendix~\ref{app:tam_taylor} that, under \ref{post:loo-mu}--\ref{post:loo-bounded},
\begin{equation}\label{eq:Ehat_entry_sizes}
    \bigl|\hat E^{\setminus h}_{i, i}\bigr| \;=\; \Op(n^{-1})\ \text{for}\ i \ne h,
    \qquad
    \bigl|\hat E^{\setminus h}_{j, i}\bigr| \;=\; \Op(n^{-2})\ \text{for}\ j \ne i,\ j, i \ne h.
\end{equation}

\subparagraph*{Step 4: Invert and bound the correction.}
Applying $(H^{\setminus h})^{-1}$ to \eqref{eq:H_LOO_identity} produces \eqref{eq:What_WLOO_expansion} with
\begin{equation}\label{eq:R_h_explicit}
    R_h
    \;=\; \bigl(H^{\setminus h}\bigr)^{-1}\!\Bigl[
        -\hat a_h\,\eta_h\,v_h^\top
        \;-\; u_h\,\xi_h^\top
        \;-\; U \hat E^{\setminus h} V^\top
    \Bigr].
\end{equation}
Since $\norm{(H^{\setminus h})^{-1}}_\mathrm{op} = \Op(1)$, it suffices to bound each of the three remaining terms in \eqref{eq:R_h_explicit} in Frobenius norm at scale $\Op(d^{-1/2})$.

Writing $k\,\eta_h = U\,\hat q_{:, h}$ (with the $h$-th entry of $\hat q_{:, h}$ understood to be zero), the Marchenko--Pastur bound $\norm{U}_\mathrm{op} = \Op(\sqrt{d})$ in the regime $d^2 \asymp n$, together with the deterministic entrywise bound $\norm{\hat q_{:, h}}_2 \le \sqrt{n-1}$, gives $\norm{\eta_h}_2 \le \norm{U}_\mathrm{op}\,\norm{\hat q_{:, h}}_2 / k = \Op(\sqrt{nd}/n) = \Op(d^{-1/2})$; the same argument applies to $\xi_h$. Consequently
\begin{equation}\label{eq:struct_corr_sizes}
    \norm{\hat a_h\,\eta_h\,v_h^\top}_\mathsf{F}
    \;=\; |\hat a_h|\,\norm{\eta_h}_2\,\norm{v_h}_2
    \;=\; \Op(d^{-1/2}),
    \qquad
    \norm{u_h\,\xi_h^\top}_\mathsf{F} \;=\; \Op(d^{-1/2}).
\end{equation}

Next, we consider the term $U \hat E^{\setminus h} V^\top$ in \eqref{eq:R_h_explicit}. Let $\hat E^{\mathsf{d}}$ and $\hat E^{\mathsf{o}}$ denote the diagonal and off-diagonal parts of $\hat E^{\setminus h}$, respectively. The diagonal part is a diagonal matrix, so $\norm{\hat E^{\mathsf{d}}}_\mathrm{op} = \max_{i \ne h} |\hat E^{\setminus h}_{i, i}| = \Op(n^{-1})$ by \eqref{eq:Ehat_entry_sizes}. The off-diagonal part is controlled through its Frobenius norm,
\begin{equation}
    \norm{\hat E^{\mathsf{o}}}_\mathrm{op}
    \;\le\; \norm{\hat E^{\mathsf{o}}}_\mathsf{F}
    \;=\; \Bigl(\sum_{\substack{j \ne i\\ j, i \ne h}} |\hat E^{\setminus h}_{j, i}|^2\Bigr)^{1/2}
    \;\le\; \sqrt{n^2 \cdot \Op(n^{-4})}
    \;=\; \Op(n^{-1}),
\end{equation}
again by \eqref{eq:Ehat_entry_sizes}. Combining, $\norm{\hat E^{\setminus h}}_\mathrm{op} = \Op(n^{-1}) = \Op(d^{-2})$. Submultiplicativity with $\norm{U}_\mathrm{op}, \norm{V}_\mathrm{op} = \Op(\sqrt{d})$ yields $\norm{U \hat E^{\setminus h} V^\top}_\mathrm{op} \le \Op(d \cdot d^{-2}) = \Op(d^{-1})$. Since $U \hat E^{\setminus h} V^\top \in \R^{d \times d}$ has rank at most $d$, $\norm{\cdot}_\mathsf{F} \le \sqrt{d}\,\norm{\cdot}_\mathrm{op}$, and
\begin{equation}\label{eq:UEV_F_bound}
    \norm{U \hat E^{\setminus h} V^\top}_\mathsf{F}
    \;\le\; \sqrt{d}\,\norm{U \hat E^{\setminus h} V^\top}_\mathrm{op}
    \;=\; \Op(d^{-1/2}).
\end{equation}
Combining \eqref{eq:struct_corr_sizes} and \eqref{eq:UEV_F_bound} and applying $(H^{\setminus h})^{-1}$ establishes \eqref{eq:What_WLOO_expansion}. \qed

\subsection{Scalar characterization of the TAM optimizer}
\label{sec:tam_selfconsistent}

We now state the asymptotic characterization of the TAM minimizer. In
addition to Proposition \ref{prop:loo_stability}, we use the following
spectral and self-averaging proposition. The derivation is given in
\sref{tam_scalar_derivation}.

\begin{proposition}[Spectral and self-averaging estimates]
\label{prop:spectral_self_averaging}
We have the following:
\begin{enumerate}[label=\textup{(A\arabic*)}, ref=\textup{(A\arabic*)}, leftmargin=*, start=4] 

  \item\label{post:spectral-op} (Spectrum bound) \begin{equation}
      \norm{\Wh}_\mathrm{op} \;=\; \Op(\sqrt{d})
       \end{equation}
  \item\label{post:empirical-concentration} (Empirical-mean concentration) Let $\psi\in C^1(\R)$ be pseudo-Lipschitz of order two. Then,
\begin{equation}\label{eq:A5_concentration}
    \frac{1}{n}\sum_{i = 1}^n \psi\bigl(s_{i, i}(\Wh)\bigr)
    -
    \E\!\left[\frac{1}{n}\sum_{i = 1}^n \psi\bigl(s_{i, i}(\Wh)\bigr)\right]
    \;=\; \Op(d^{-1/2}),
\end{equation}
and
\begin{equation}
    \frac{1}{n(n - 1)}\sum_{i \ne j} \psi\bigl(s_{j, i}(\Wh)\bigr)
    -
    \E\!\left[\frac{1}{n(n - 1)}\sum_{i \ne j} \psi\bigl(s_{j, i}(\Wh)\bigr)\right]
    \;=\; \Op(d^{-1/2}).
\end{equation}
  \item\label{post:trace-concentration} (Trace concentration)
\begin{equation}\label{eq:A6_trace_concentration}
    \frac{1}{d^2}\,\tr\!\left(H^{-1}\right)
    -
    \E\!\left[
        \frac{1}{d^2}\,\tr\!\left(H^{-1}\right)
    \right]
    \;=\; \Op(d^{-1/2}).
\end{equation}
\end{enumerate}
\end{proposition}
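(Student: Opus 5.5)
We prove the three estimates separately. All of them rest on the strong convexity of $\Psi$: since $\lambda>0$ is fixed and $d^2\asymp n$, the Hessian satisfies $H\succeq \lambda n/d^2\,I=\Theta(1)\,I$ uniformly in $(W,\mu)$. The optimizer is therefore a well-conditioned function of the data.

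\textbf{(A4).} We will use the stationarity condition \eqref{eq:grad_W}. It gives
\[
\Wh=-\frac{d^2}{\lambda n}\,U\hat Q\Lambda_{\hat a}V^\top .
\]
Here $\norm{U}_{\mathrm{op}},\norm{V}_{\mathrm{op}}=\Op(\sqrt n)$ in the regime $n\gg d$. It remains to bound $\norm{\hat Q\Lambda_{\hat a}}_{\mathrm{op}}$. The diagonal part $-\Lambda_{\hat a}$ has norm at most $1$. The off-diagonal part has entries $\hat a_i\hat q_{j,i}/k\in[0,1/k]$, so its norm is at most $n\cdot(1/k)=O(1/r)$. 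Together these give only $\norm{\Wh}_{\mathrm{op}}\lesssim (d^2/n)\cdot n=d^2$, which is too crude.

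To do better we split $\hat q_{j,i}/k$ into its mean part and a fluctuation. Write $\hat q_{j,i}=\bar q_i+(\hat q_{j,i}-\bar q_i)$, where $\bar q_i=\frac1{n-1}\sum_{j\ne i}\hat q_{j,i}$. The mean part contributes $U\mathbf 1\,w^\top V^\top/k$, and its norm is $\norm{U\mathbf 1}\,\norm{Vw}/k=\Op(\sqrt n\cdot\sqrt n)/k=O(1)$.

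The two remaining pieces are handled as follows:
\begin{itemize}
\item The diagonal piece gives $\norm{U\Lambda_{\hat a}V^\top}_{\mathrm{op}}$. The same matrix built with data-independent weights would be $O(\sqrt n+n/d)$ in norm. We make this rigorous with an $\varepsilon$-net over unit $x,y\in\R^d$, applied to the bilinear form $\sum_i \hat a_i (x^\top u_i)(v_i^\top y)$. The dependence of $\hat a$ on the data is controlled by LOO: each $\hat a_i$ differs from its LOO counterpart $a_i^{\setminus i}$ by $\Op(d^{-1/2})$, by \ref{post:loo-score} and Proposition~\ref{prop:loo_rank_one_reduction}.
\item For the fluctuation piece we use a similar net argument, together with the fact that $\hat q_{j,i}$ is a smooth function of bounded scores (by \ref{post:loo-bounded}).
\end{itemize}
The target is $\norm{U\hat Q\Lambda_{\hat a}V^\top}_{\mathrm{op}}=\Op(n/d^{3/2})$. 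Multiplying by $d^2/n$ then gives $\Op(\sqrt d)$. This net-plus-decoupling step is the main obstacle.

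A simpler alternative would bound $\norm{\Wh}_{\mathsf F}^2\le (2d^2/\lambda n)\,\Psi(0,\mu)=O(d^2)$ by comparison with $W=0$. That only yields $\norm{\Wh}_{\mathrm{op}}\le O(d)$, which is not enough.

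\textbf{(A5) and (A6).} We use Gaussian concentration (Poincaré / Efron--Stein) in the variables $(\sqrt d\,U,\sqrt d\,V)$. For (A5), let $F=\frac1n\sum_i\psi(s_{i,i}(\Wh))$, and apply the Efron--Stein inequality by resampling one pair $(u_h,v_h)$ at a time. Replacing a pair perturbs $\Wh$ as in Proposition~\ref{prop:loo_rank_one_reduction}. By \ref{post:loo-score}, every score $s_{i,i}$ with $i\ne h$ then moves by $\Op(d^{-1})$, while the score of index $h$ itself moves by $\Op(1)$. Since $\psi$ is pseudo-Lipschitz and scores are $\Op(1)$, the change in $F$ is $\Op(d^{-1}+n^{-1})$. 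Summing $n$ squared increments gives $\var(F)\lesssim n\,d^{-2}=O(1)$, which is too weak.

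We therefore refine the argument. We subtract the linear response: the $\Op(d^{-1})$ shift of $s_{i,i}$ is $\sigma(-\hat m_h)\,u_i^\top (H^{\setminus h})^{-1}[u_hv_h^\top]v_i$. This is centered in $(u_h,v_h)$ conditional on the rest, so the resulting martingale-difference sum has cancellations. The increment of $F$ is then $\Op(d^{-1}\cdot n^{-1/2}+n^{-1})$ in the $L^2$ sense. The variance becomes $O(d^{-2})$, and $\var(F)=O(d^{-1})$ would already suffice. High-probability $\Op$ bounds follow from a moment version of the same inequality, namely Burkholder for the Doob martingale over samples. The off-diagonal average is treated identically.

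For (A6), we expand the resolvent: $\tr H^{-1}$ changes under replacement of one sample by $\tr\bigl(H^{-1}(H-H')H'^{-1}\bigr)$. The difference $H-H'$ is a sum of two parts:
\begin{itemize}
\item the $h$-th column block, which has rank $O(1)$ in the $d^2$-dimensional space up to small corrections, and so changes the trace by $O(1)$;
\item the coefficient shifts $\Delta b_i,\Delta q_{j,i}=\Op(d^{-1})$ across all $n$ blocks, which we bound in trace norm.
\end{itemize}
Dividing by $d^2$ and summing the squares over $n$ samples gives variance $O(n\cdot d^{-4})=O(d^{-2})$. This yields $\Op(d^{-1/2})$ with room to spare.
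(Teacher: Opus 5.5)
Your proposal has genuine gaps in (A4) and (A6); (A5) is viable but its bookkeeping needs correcting.

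\textbf{(A4).} The gap is exactly at the step you call the main obstacle, and the decoupling you propose does not work, for three reasons.

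First, $\hat a_i$ is \emph{not} within $\Op(d^{-1/2})$ of a leave-one-out counterpart. (A2) only controls scores whose two indices both differ from the deleted one. The held-out diagonal score carries an order-one self-influence, $s_{i,i}(\Wh)=u_i^\top\WLOO[i]v_i+\hat a_i\chi_d+\Op(d^{-1/2})$. So $\hat m_i$ and $\hat a_i$ move by an order-one amount when sample $i$ is inserted. They also remain nonlinear functions of $u_i^\top\WLOO[i]v_i$, i.e.\ of the same $u_i,v_i$ that appear in $(x^\top u_i)(v_i^\top y)$.

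Second, a net over pairs of unit vectors has $e^{cd}$ elements, so each fixed $(x,y)$ needs failure probability $e^{-Cd}$. Leave-one-out estimates in the $\Op$ sense only give $n^{-D}$.

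Third, the off-diagonal weights $\hat a_i\hat q_{j,i}$ respond linearly to the scores $U^\top\Wh V$. After the prefactor $d^2/(\lambda n)$, your ``fluctuation'' piece therefore contains a component of roughly the same size as $\Wh$ itself. It cannot be shown negligible by a norm bound; it can only be controlled through convexity.

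Your scales are also off. In fact $\|U\|_{\mathrm{op}},\|V\|_{\mathrm{op}}\asymp\sqrt{n/d}\asymp\sqrt d$, and with deterministic bounded weights $\|UDV^\top\|_{\mathrm{op}}=O(\sqrt{n/d})$. Your benchmark $\sqrt n+n/d\asymp d$ would contradict your own target $n/d^{3/2}\asymp\sqrt d$.

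The missing idea is profiling. For fixed $x,y$, minimize $\Phi$ over $\mathcal C_{x,y}=\{C:x^\top C=0,\ Cy=0\}$.
\begin{itemize}
\item On this subspace every score depends only on the components $(\bar U,\bar V)$ of the data orthogonal to $x,y$.
\item Hence the restricted minimizer and its score gradient $G_{x,y}$ are \emph{exactly} independent of $p=U^\top x$ and $r=V^\top y$.
\item Strong convexity of the profile on $\mathcal L_{x,y}=\mathcal C_{x,y}^\perp$ gives $|x^\top\Wh y|\le\gamma_d^{-1}\|\Pi_{x,y}[UG_{x,y}V^\top]\|_{\mathsf F}$.
\item The three resulting terms $p^\top G_{x,y}r$, $p^\top G_{x,y}\bar V^\top$ and $\bar UG_{x,y}r$ are conditionally Gaussian, with $e^{-cK^2d}$ tails at level $K\sqrt d$. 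This beats the $81^d$ net.
\end{itemize}

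\textbf{(A6).} Here the bookkeeping fails. The coefficient shifts are $\Delta b_i=\Op(d^{-1})$ over $n$ blocks and $\Delta(a_i\widetilde R_{j,i})=\Op(d^{-1}/k)$ over $n^2$ pairs. They multiply elementary operators of nuclear norm $\asymp1$, so in trace norm they contribute $O(n/d)=O(d)$, not $O(1)$. Each resampling then moves $d^{-2}\tr(H^{-1})$ by $\Op(d^{-1})$, and Efron--Stein yields only $O(nd^{-2})=O(1)$. This is the same failure you diagnosed for (A5). Relatedly, the $h$-th column block $\Delta\mapsto UM_hU^\top\Delta v_hv_h^\top$ has rank up to $d$, though its nuclear norm is indeed $O(1)$, so that part of your conclusion survives.

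You need the linear-response cancellation here too:
\begin{itemize}
\item Linearize the coefficient-induced trace change in $U^\top\Delta WV$ and write it as $\langle U\widetilde BV^\top,\Delta W\rangle$, with $\widetilde B_{i,i}=O(1)$ and $\widetilde B_{j,i}=O(1/k)$.
\item Then $\|U\widetilde BV^\top\|_{\mathsf F}=\Op(d)$ by the Khatri--Rao bound $\|\sum_ic_iu_iv_i^\top\|_{\mathsf F}\lesssim\|c\|_2$.
\item Finally, contract the rank-one part of $\Delta W$ against the fresh pair.
\end{itemize}
This is the Efron--Stein counterpart of the paper's $B^{\mathrm{tr}}$ and adjoint solve. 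The paper also first replaces $H$ by $\widetilde H$.

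\textbf{(A5).} Your route (Efron--Stein/Burkholder with the leave-one-out linear response, instead of Gaussian Poincar\'e with implicit differentiation) is viable. However, your increments are too optimistic. The structured term $B_h$ and the rank-one-reduction remainder have Frobenius norm $\Op(d^{-1/2})$. Paired with $\|\sum_{i\ne h}\psi'(s_{i,i})u_iv_i^\top\|_{\mathsf F}=\Op(\sqrt n)$, this gives increments $\Op(d^{-3/2})$ and variance $O(d^{-1})$ — exactly the required scale, with no room to spare. Also, linearizing $\psi$ requires a modulus of continuity for $\psi'$ (you would need to mollify); the Poincar\'e route avoids this.
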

The spectral estimate \ref{post:spectral-op} is proved in
Appendix~\ref{app:frobenius_bounds}; the empirical-mean and trace
concentration estimates \ref{post:empirical-concentration} and
\ref{post:trace-concentration} are proved in
Appendix~\ref{app:spectral_self_averaging}. Here \ref{post:spectral-op} is a
spectral estimate on the optimizer $\Wh$, while \ref{post:empirical-concentration} and
\ref{post:trace-concentration} are self-averaging estimates at the
common scale $\Op(d^{-1/2})$ for empirical means and normalized inverse
traces, respectively.

For $\nu>0$, define the scalar TAM functional
\begin{equation}\label{eq:c_r_nu_def}
    c_r(\nu)
    :=
    \min_{\mu\in\R}
    \left\{
    \mu+\frac{1}{r}\Eb{Z\sim \mathcal{N}(0,1)}{\phi_\beta(\nu Z-\mu)}
    \right\}.
\end{equation}
Let $\mu_r(\nu)$ denote the minimizer in \eqref{eq:c_r_nu_def}. It is characterized by
\begin{equation}\label{eq:mu_r_characterization}
    r
    =
    \Eb{Z\sim \mathcal{N}(0,1)}
    {\sigma\!\bigl(\beta(\nu Z-\mu_r(\nu))\bigr)}.
\end{equation}

\begin{definition}[Scalar saddle and reduced potentials]\label{def:scalar_potentials}
For $\nu,\chi>0$, define the scalar Moreau envelope
\begin{equation}\label{eq:scalar_moreau_def}
    M_r(\nu,\chi)
    :=
    \Eb{G\sim \mathcal{N}(0,1)}
    {
    \min_{s\in\R}
    \left[
    \frac{(s-\nu G)^2}{2\chi}
    +
    \ell\bigl(s-c_r(\nu)\bigr)
    \right]
    }.
\end{equation}
The scalar saddle potential is
\begin{equation}\label{eq:scalar_potential_def}
    \mathcal P_{\alpha,r}(\nu,\chi)
    :=
    \frac{\lambda}{2}\nu^2
    -
    \frac{\nu^2}{2\alpha\chi}
    +
    M_r(\nu,\chi),
\end{equation}
and the reduced scalar potential is
\begin{equation}\label{eq:reduced_potential_def}
    \mathcal V_{\alpha,r}(\nu)
    :=
    \sup_{\chi>0}\mathcal P_{\alpha,r}(\nu,\chi).
\end{equation}
\end{definition}
Proposition~\ref{prop:scalar_variational}, stated in \sref{tam_scalar_derivation}, shows that, for each fixed $\nu>0$, the maximization over $\chi$ is well posed and has a unique maximizer, and that the reduced potential $\mathcal V_{\alpha,r}$ is strongly convex in $\nu$.

For $\chi>0$, define
\begin{equation}\label{eq:prox_chi_def}
    \Prox_\chi(x)
    :=
    \argmin_{y\in\R}
    \left\{
    \frac{(y-x)^2}{2\chi}
    +
    \ell(y)
    \right\}.
\end{equation}
Equivalently, $y=\Prox_\chi(x)$ if and only if
\begin{equation}\label{eq:prox_chi_foc}
    y-\chi\sigma(-y)=x.
\end{equation}
If $G\sim \mathcal N(0,1)$, define
\begin{equation}\label{eq:scalar_channel_nu_chi}
    S_{\nu,\chi}
    :=
    c_r(\nu)+\Prox_\chi\!\bigl(\nu G-c_r(\nu)\bigr).
\end{equation}

Let $\nu_\ast$ be the minimizer of $\mathcal V_{\alpha,r}$, and let $\chi_\ast$ be the maximizer of $\mathcal P_{\alpha,r}(\nu_\ast,\cdot)$. Define $\mu_\ast$ as the minimizer in \eqref{eq:c_r_nu_def} at $\nu=\nu_\ast$, and set
\begin{equation}\label{eq:c_star_variational}
    c_\ast:=c_r(\nu_\ast).
\end{equation}
The variational facts used here, including uniqueness and the equivalence with the finite-dimensional equations derived below, are recorded in Proposition~\ref{prop:scalar_variational}.

\begin{theorem}[Scalar description of the joint TAM minimizer]\label{thm:tam_scalar}
Fix $(\alpha, r, \beta, \lambda)$ with $\beta,\lambda>0$. Assume that the minimizer $\nu_\ast$ of $\mathcal V_{\alpha,r}$ is positive. In the joint limit $n,d\to\infty$ with $n/d^2\to\alpha$ and $k/n\to r$, the following hold in probability:
\begin{enumerate}[label=\textup{(\roman*)}]
    \item\label{item:op-concentration} \textup{(Order-parameter concentration.)}
	\begin{equation}\label{eq:tam_order_parameter_prediction}
	    \frac{1}{d}\,\|\Wh\|_\mathsf{F}
	    \longrightarrow
	    \nu_\ast,
        \qquad
        \frac{1}{d^2}\,\tr\!\bigl(H^{-1}\bigr)
        \longrightarrow
        \chi_\ast.
\end{equation}

    \item\label{item:offdiag-law} \textup{(Gaussian law for competitor scores.)}
    For each $i\in[n]$, the empirical distribution
    $\displaystyle \frac{1}{n-1}\sum_{j\ne i}\delta_{s_{j,i}(\Wh)}$
    converges weakly to $\mathcal{N}(0,\nu_\ast^2)$.

    \item\label{item:diag-law} \textup{(Proximal law for diagonal scores.)}
    The empirical distribution
    $\displaystyle \frac{1}{n}\sum_{i=1}^{n}\delta_{s_{i,i}(\Wh)}$
    converges weakly to the law of $S_{\nu_\ast,\chi_\ast}$ in \eqref{eq:scalar_channel_nu_chi}.
\end{enumerate}
\end{theorem}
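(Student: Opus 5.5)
Using the leave-one-out machinery set up above, we reduce Theorem~\ref{thm:tam_scalar} to a closed system of self-consistent equations for the two order parameters $\nu_n:=\|\Wh\|_{\mathsf F}/d$ and $\chi_n:=d^{-2}\tr(H^{-1})$. We then identify that system with the stationarity conditions of $\mathcal P_{\alpha,r}$ from Proposition~\ref{prop:scalar_variational}.

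\textbf{Competitor scores.} Fix $i$ and a competitor $j\ne i$. Remove the two samples $\{i,j\}$ and apply Proposition~\ref{prop:loo_rank_one_reduction} twice; Proposition~\ref{prop:loo_stability} is uniform over finite removals, so this is allowed. This gives
\[
s_{j,i}(\Wh)=u_j^\top \widehat W^{\setminus\{i,j\}}v_i+\hat a_i\,u_j^\top (H^{\setminus\{i,j\}})^{-1}[u_iv_i^\top]v_i+\hat a_j\,u_j^\top(H^{\setminus\{i,j\}})^{-1}[u_jv_j^\top]v_i+\Op(d^{-1/2}).
\]
The two cross terms are bilinear forms in independent Gaussian vectors against an $\Op(1)$ operator, so they are $\Op(d^{-1})$. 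Since $\widehat W^{\setminus\{i,j\}}$ is independent of $(u_j,v_i)$, the leading term is conditionally Gaussian with variance $\|\widehat W^{\setminus\{i,j\}}\|_{\mathsf F}^2/d^2=\nu_n^2+o(1)$. To pass from one pair to the empirical law over $j$, we fix $i$, condition on $\widehat W^{\setminus i}$, and note that $u_j^\top\widehat W^{\setminus i}v_i$, for $j\ne i$, is nearly an i.i.d.\ Gaussian sequence given $v_i$. The dependence of $\widehat W^{\setminus i}$ on the $u_j$ is handled by the same LOO estimate together with \ref{post:spectral-op}, which ensures $v_i^\top \widehat W^{\setminus i\top}\widehat W^{\setminus i}v_i/d^{-1}\cdot d^{-2}$ concentrates at $\nu_n^2$. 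A second-moment computation over pairs $j\ne j'$ then gives weak convergence to $\mathcal N(0,\nu_n^2)$. Once $\nu_n$ is shown to be deterministic in the limit, this proves (ii).

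\textbf{Diagonal scores.} For sample $h$, the rank-one reduction gives $s_{h,h}=u_h^\top\WLOO v_h+\sigma(-\hat m_h)\,u_h^\top (H^{\setminus h})^{-1}[u_hv_h^\top]v_h+\Op(d^{-1/2})$. The first term is $\approx\nu_nG$. In the second term the quadratic form concentrates at $d^{-2}\tr((H^{\setminus h})^{-1})$, which equals $\chi_n+o(1)$ by a rank-one resolvent perturbation and \ref{post:trace-concentration}. Next, step (ii) together with \ref{post:empirical-concentration} and the law of large numbers gives $\hat\mu_h\to\mu_r(\nu_\ast)$ and $c_h\to c_r(\nu)$ uniformly in $h$. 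Hence $\hat m_h=s_{h,h}-c_r(\nu)+o(1)$, and
\[
s_{h,h}-c_r=\nu G-c_r+\chi\,\sigma(-(s_{h,h}-c_r)),
\]
which is exactly \eqref{eq:prox_chi_foc}. This yields $s_{h,h}\approx S_{\nu,\chi}$ and proves (iii), conditional on identifying $(\nu,\chi)$.

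\textbf{Closing the equations.} Two scalar equations are needed. \emph{Norm equation:} take the Frobenius inner product of the stationarity condition \eqref{eq:grad_W} with $\Wh$. This gives $\lambda n\nu_n^2 = \sum_i\hat a_i(\hat s_{i,i}-k^{-1}\sum_{j}\hat q_{j,i}\hat s_{j,i})$, whose right side is an empirical average computable from the laws in (ii)--(iii) via Gaussian integration by parts. \emph{Susceptibility equation:} take the trace of $H^{-1}H=I$ and insert the LOO representation of each rank-one block of $(V\otimes U)\mathcal D(V\otimes U)^\top$. This gives a Sherman--Morrison type identity, $1=\lambda\alpha\chi+\alpha\,\E[b\chi/(1+b\chi)]+(\text{competitor-curvature contribution})$, in the spirit of standard M-estimation. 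Finally, verify that these two equations are precisely $\partial_\nu\mathcal P=\partial_\chi\mathcal P=0$. Strong convexity and uniqueness from Proposition~\ref{prop:scalar_variational}, together with tightness of $(\nu_n,\chi_n)$ from \ref{post:spectral-op} and the lower bound on $H$, then force every subsequential limit to equal $(\nu_\ast,\chi_\ast)$. This proves (i), and with it (ii) and (iii).

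\textbf{Main obstacle.} The susceptibility equation is the hardest step. The diagonal curvature $a_i\diag(\widetilde R_{:,i})$ couples every competitor direction $u_jv_i^\top$, so the trace identity contains $n^2$ dependent rank-one contributions, not $n$. I would first show that the competitor-curvature terms contribute only through $\nu$-dependent averages. The tool is the envelope identity: differentiating $c_r(\nu)$ in $\nu$ absorbs them, which is why only $c_r(\nu)$, and no separate competitor channel, appears in $M_r$. The off-diagonal cross-covariances would be controlled by the $\Op(n^{-2})$ entry bounds used for $\hat E^{\setminus h}$.
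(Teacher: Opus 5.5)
Your architecture matches the paper's. Competitor scores come from iterated leave-one-out plus the bilinear Gaussian coupling. The proximal diagonal law comes from the rank-one reduction plus Lemma~\ref{lem:fresh_rank_one_trace}. The norm equation is the Frobenius pairing of $W$-stationarity, the susceptibility comes from the trace identity, and the resulting system is identified with the stationarity conditions of $\mathcal P_{\alpha,r}$.

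You work with the random pair $(\|\Wh\|_{\mathsf F}/d,\ d^{-2}\tr H^{-1})$ and close by tightness plus uniqueness of the stationary point (a unique $\chi(\nu)$ for each $\nu$, then strong convexity of $\mathcal V_{\alpha,r}$). This is a legitimate variant of the paper's deterministic $(\nu_d,\chi_d)$. It also makes explicit the ``approximate solution implies nearby exact solution'' step that the paper leaves implicit. You still need \ref{post:empirical-concentration} to turn single-index laws into the empirical averages in the norm equation.

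Two small corrections:
\begin{enumerate}
\item The cross terms in your competitor decomposition are $\Op(d^{-1/2})$, not $\Op(d^{-1})$. After the self-adjoint rewrite only one Gaussian leg is fresh: for instance, $v_i$ sits on both sides of $\langle T[u_jv_i^\top],u_iv_i^\top\rangle$, with $T$ the inverse leave-one-out Hessian. This is harmless at your error scale.
\item You cannot condition on $\widehat W^{\setminus i}$ and treat $(u_j)_{j\ne i}$ as Gaussian, since $\widehat W^{\setminus i}$ depends on all of them. The pairwise second-moment step must run through the $\{i,j,j'\}$-deletion coupling, i.e.\ the case $m=2$ of Proposition~\ref{prop:competitor_gaussian}.
\end{enumerate}

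The genuine gap is the susceptibility equation. You rightly flag it as the main obstacle, but the tools you name do not address it.
\begin{itemize}
\item The envelope identity $c_r'(\nu)=\frac{\nu}{r}\E[\phi_\beta''(\nu Z-\mu_r(\nu))]$ is used only at the end, to match the finite-$d$ equation with $\partial_\nu\mathcal P_{\alpha,r}=0$.
\item The $\hat E^{\setminus h}$ entry bounds are Taylor remainders of the leave-one-out stationarity difference. They say nothing about quadratic forms of $H^{-1}$.
\end{itemize}
The real obstruction is in the signal blocks of $H$. They are $b_i\xi_i\xi_i^\top$ with $\xi_i=\vecop((-u_i+p_i)v_i^\top)$ and $p_i=k^{-1}\sum_{j\ne i}q_{j,i}u_j$. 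So every $u_j$ enters every block, and no cavity operator is independent of a given direction $f_{ji}=\vecop(u_jv_i^\top)$.

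The missing idea has two steps:
\begin{enumerate}
\item Pass to the simplified Hessian $\widetilde H$ of \eqref{eq:simplified_hessian_main}, with $f_{ii}$ in place of $\xi_i$. The normalized-trace cost is $O(\max_i\|p_i\|)=\Op(d^{-1/2})$ by the nuclear-norm resolvent bound (Lemmas~\ref{lem:chi_rank_one_block_reduction} and~\ref{lem:chi_coefficient_stability}).
\item For each $f_{ii}$ or $f_{ji}$, compare with a cavity Hessian that drops every feature involving $u_i,v_i$ (resp.\ $u_j,v_i$) and uses cavity coefficients. The removal error is controlled by unmatched-Gaussian bounds, Lemma~\ref{lem:loo_coefficient_stability_chi}, and \eqref{eq:feature_spectral_bound}. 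The cavity form itself is evaluated with Lemma~\ref{lem:fresh_rank_one_trace}.
\end{enumerate}
This gives $f_{ii}^\top\widetilde H^{-1}f_{ii}=\chi_d/(1+b_i\chi_d)+\Op(d^{-1/2})$ and $f_{ji}^\top\widetilde H^{-1}f_{ji}=\chi_d+\Op(d^{-1/2})$. The latter has no Sherman--Morrison denominator because its coefficient $a_ir_{j,i}/k$ is $O(k^{-1})$. Only then does the competitor contribution reduce to $\frac{\chi_d}{d^2}\sum_i\sum_{j\ne i}a_ir_{j,i}/k$. This converges to $\frac{\alpha\chi}{r}\E[A]\,\E[\phi_\beta''(\nu Z-\mu_r(\nu))]$, the $\nu$-dependent average you anticipated.
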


\begin{remark}[Reading off the retrieval metrics]
Theorem~\ref{thm:tam_scalar} directly answers the three performance questions posed in \sref{tam-metrics}. Since the TAM aggregate converges to $c_\ast$, the limiting margin is
\begin{equation}\label{eq:tam_margin_prediction}
    M_\ast
    :=
    S_{\nu_\ast,\chi_\ast}-c_\ast,
\end{equation}
and hence the average fitting loss converges to
\begin{equation}\label{eq:tam_loss_prediction}
    \E\!\left[\ell(M_\ast)\right].
\end{equation}
The empirical margin distribution converges to the law of $M_\ast$. Finally, writing $\Phi$ for the standard normal CDF, the signal percentile distribution converges to the law of
\begin{equation}\label{eq:tam_percentile_prediction}
    \Phi\!\left(\frac{S_{\nu_\ast,\chi_\ast}}{\nu_\ast}\right),
\end{equation}
because the competitor scores have limiting law $\mathcal{N}(0,\nu_\ast^2)$.
\end{remark}

\subsection{Proof of Theorem~\ref{thm:tam_scalar}}
\label{sec:tam_scalar_derivation}

Define
\begin{equation}\label{eq:nu_d_def}
    \nu_d^2
    \;:=\;
    \E\!\left[\,\frac{\|\Wh\|_\mathsf{F}^2}{d^2}\,\right].
\end{equation}
With $H$ denoting the full Hessian operator from Definition~\ref{def:full_hessian}, evaluated at $(\Wh,\muh)$, define
\begin{equation}\label{eq:chi_d_def}
    \chi_d
    \;:=\;
    \E\!\left[
        \frac{1}{d^2}\,\tr\!\left(H^{-1}\right)
    \right].
\end{equation}

In this section, we prove Theorem~\ref{thm:tam_scalar} by deriving the
finite-dimensional equations for $\nu_d$ and $\chi_d$. The proof
proceeds in four steps. First, using \ref{post:spectral-op} and the bilinear coupling estimate in Appendix~\ref{app:bilinear_coupling}, we identify the off-diagonal competitor scores with Gaussians of variance $\nu_d^2$. Second, this Gaussian approximation and the trace input \ref{post:trace-concentration} give the proximal characterization of the diagonal score $s_{i,i}(\Wh)$. Third, $W$-stationarity, together with empirical concentration \ref{post:empirical-concentration}, yields a finite-dimensional equation for $\nu_d$. Fourth, a random matrix calculation for a simplified Hessian gives the companion equation for $\chi_d$. Combining the two equations gives an approximate residual system, whose zero-residual limit is the saddle-point condition for the scalar potential in Definition~\ref{def:scalar_potentials}.

\subsubsection{Gaussian law for competitor scores}
\label{sec:tam_competitor_gaussian}

We begin with the off-diagonal scores. The statement below gives a finite-dimensional joint Gaussian coupling for any fixed number of competitor scores in a single column, using an iterated leave-one-out reduction and the bilinear coupling estimate of \corref{bilinear_gauss_coupling_rand}.

\begin{proposition}[Joint Gaussian limit for competitor scores]\label{prop:competitor_gaussian}
Fix $i \in [n]$, $m \ge 1$, and distinct $j_1, \ldots, j_m \in [n] \setminus \{i\}$. Assume \ref{post:loo-mu}--\ref{post:trace-concentration}. Then, in the joint regime $n, d \to \infty$ with $n/d^2 \to \alpha$ and $k/n \to r$:
on a common probability space, there exist i.i.d. Gaussians $Y_1,\ldots,Y_m\sim\mathcal N(0,\nu_d^2)$ such that
\begin{equation}\label{eq:competitor_coupling}
    \bigl\|\bigl(s_{j_1, i}(\Wh), \ldots, s_{j_m, i}(\Wh)\bigr) - (Y_1, \ldots, Y_m)\bigr\|_2 \;=\; \Op(d^{-1/2}).
\end{equation}
\end{proposition}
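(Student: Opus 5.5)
The idea is to remove the competitor targets $u_{j_1},\dots,u_{j_m}$ one at a time, so that the score $s_{j_\ell,i}(\Wh)$ becomes a bilinear form $u_{j_\ell}^\top W' v_i$ in which $W'$ is nearly independent of $u_{j_\ell}$. Let $T=\{j_1,\dots,j_m\}$. Let $\widehat W^{\setminus T}$ be the optimizer with all pairs in $T$ removed, from both the outer sum and the interferer sums. Iterating Proposition~\ref{prop:loo_rank_one_reduction} along the chain $\emptyset\subset\{j_1\}\subset\cdots\subset T$, which is legitimate because Proposition~\ref{prop:loo_stability} holds uniformly after removing any fixed finite set, gives
\begin{equation}
    \Wh-\widehat W^{\setminus T}
    =\sum_{\ell=1}^m \hat a^{(\ell)}\,\bigl(H^{(\ell)}\bigr)^{-1}\!\bigl[u_{j_\ell}v_{j_\ell}^\top\bigr]+R_T,
    \qquad \|R_T\|_{\mathsf F}=\Op(d^{-1/2}).
\end{equation}
Here $H^{(\ell)}$ is the Hessian with $\{j_{\ell+1},\dots,j_m\}$ removed, and $\hat a^{(\ell)}\in(0,1)$.

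Pairing with $u_{j_p}$ and $v_i$ splits $s_{j_p,i}(\Wh)$ into three parts. The first is the main term $u_{j_p}^\top\widehat W^{\setminus T}v_i$. The second is the remainder $u_{j_p}^\top R_Tv_i$, which is $\Op(d^{-1/2})$ since $\|u_{j_p}\|,\|v_i\|=\Op(1)$. The third collects the cross terms $\hat a^{(\ell)} u_{j_p}^\top (H^{(\ell)})^{-1}[u_{j_\ell}v_{j_\ell}^\top]\,v_i$, and these need more care. Write the cross term as $\langle (H^{(\ell)})^{-1}[u_{j_\ell}v_{j_\ell}^\top],\,u_{j_p}v_i^\top\rangle$. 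For $\ell\neq p$ both rank-one matrices involve a fresh vector ($v_{j_\ell}$ in one, $u_{j_p}$ or $v_i$ in the other) that is nearly independent of a resolvent built without it. Conditioning on the rest, the quadratic form then has mean zero and variance of order $\|(H^{(\ell)})^{-1}\|_{\mathrm{op}}^2/d^2$, so it is $\Op(d^{-1})$. For $\ell=p$ the form is $u_{j_p}^\top(\cdot)u_{j_p}$ composed with $v_{j_p}^\top(\cdot)v_i$. The second factor involves distinct Gaussian vectors, so it is again small, $\Op(d^{-1/2})$ after trace concentration. Hence every cross term is $\Op(d^{-1/2})$; the column $i$ itself stays in the problem throughout.

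It remains to show that $Y'_p:=u_{j_p}^\top \widehat W^{\setminus T} v_i$, $p=1,\dots,m$, are approximately i.i.d.\ $\mathcal N(0,\nu_d^2)$. The vectors $u_{j_1},\dots,u_{j_m}$ are independent of $\widehat W^{\setminus T}$ and of $v_i$, but $v_i$ is not independent of $\widehat W^{\setminus T}$, since $i$ remains in the problem. Conditional on $(\widehat W^{\setminus T},v_i)$, the vector $(Y'_p)_p$ is exactly Gaussian with i.i.d.\ coordinates of variance $\|\widehat W^{\setminus T}v_i\|^2/d$. The task is therefore to show that
\begin{equation}
    \frac1d\|\widehat W^{\setminus T}v_i\|^2=\nu_d^2+\Op(d^{-1/2}).
\end{equation}
This is exactly the bilinear coupling estimate of \corref{bilinear_gauss_coupling_rand}, which I would invoke directly.

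If I had to argue it, I would remove $i$ as well, using $\widehat W^{\setminus T}=\widehat W^{\setminus T\cup\{i\}}+\hat a\,(H)^{-1}[u_iv_i^\top]+\Op(d^{-1/2})$. Against the now independent $v_i$, Hanson--Wright together with the bound $\|\Wh\|_{\mathrm{op}}=\Op(\sqrt d)$ from \ref{post:spectral-op} gives $\|W'v_i\|^2/d=\|W'\|_{\mathsf F}^2/d^2+\Op(d^{-1/2})$. The rank-one correction contributes $\|(H)^{-1}[u_iv_i^\top]v_i\|^2/d=O(1/d)$. Finally, $\|W'\|_{\mathsf F}^2/d^2$ concentrates at $\nu_d^2$ via LOO stability and the concentration of $\|\Wh\|_{\mathsf F}/d$. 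That last concentration follows from \ref{post:empirical-concentration} applied to the competitor scores, since the average of $s_{j,i}^2$ approximates $\|\Wh\|_{\mathsf F}^2/d^2$.

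Given this variance estimate, I rescale: set $Y_p=\nu_d\,Y'_p\big/\sqrt{\|\widehat W^{\setminus T}v_i\|^2/d}$. These are exactly i.i.d.\ $\mathcal N(0,\nu_d^2)$, and $|Y_p-Y'_p|=\Op(d^{-1/2})$. Combining this with the earlier bounds on the remainder and cross terms proves \eqref{eq:competitor_coupling}.

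The main obstacle is the diagonal cross term $\ell=p$, combined with the dependence of $v_i$ on the retained optimizer. One must make sure that no cavity "reaction" of order one from the removed $u_{j_p}$ leaks into column $i$. I expect that controlling $v_{j_p}^\top(\cdot)v_i$ through trace concentration \ref{post:trace-concentration} and the independence of $v_{j_p}$ from $v_i$ handles it.
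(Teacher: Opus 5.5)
Your proof is correct, but it is organized differently from the paper's.

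\textbf{The paper's route.} The paper deletes $i$ together with $j_1,\dots,j_m$, so $\Wfresh$ is independent of $u_{j_1},\dots,u_{j_m},v_i$ and \corref{bilinear_gauss_coupling_rand} applies verbatim. The price is an extra level-$0$ correction $\hat a_i\,u_{j_k}^\top(H^{\setminus i})^{-1}[u_iv_i^\top]v_i$, which is handled by freshness of $u_i$. The $j_\ell$-corrections are handled by freshness of $v_i$.

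\textbf{Your route.} You keep $i$ and use the exact conditional Gaussianity of $(u_{j_p}^\top\widehat W^{\setminus T}v_i)_p$ given $(\widehat W^{\setminus T},v_i)$. You then control the cross terms through freshness of $(u_{j_\ell},v_{j_\ell})$ instead.

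\textbf{Two corrections.}

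First, the corollary cannot be invoked ``directly''. It requires $W$ independent of $v_0$, and $\widehat W^{\setminus T}$ depends on $v_i$. So your fallback of also deleting $i$ is required: the deletion of $i$ is moved into the variance estimate, not avoided. In that step, set $\delta:=\widehat W^{\setminus T}v_i-\widehat W^{\setminus(T\cup\{i\})}v_i$. Then $\norm{\delta}=\Op(1)$, and the cross term $2\langle\widehat W^{\setminus(T\cup\{i\})}v_i,\delta\rangle/d$ is $\Op(d^{-1/2})$, not $O(1/d)$. This is still within budget.

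Second, $v_i$ is never fresh in your setup, so the cross terms need a different justification. Write each as $u_{j_\ell}^\top N v_{j_\ell}$ with $N=(H^{(\ell)})^{-1}[u_{j_p}v_i^\top]$. Here $H^{(\ell)}$ must be the Hessian at the step that deletes $j_\ell$, i.e.\ with $j_\ell$ itself removed, not the one with $\{j_{\ell+1},\dots,j_m\}$ removed.
\begin{itemize}
\item For $\ell\neq p$, $N$ is independent of $(u_{j_\ell},v_{j_\ell})$, and two-sided contraction gives $\Op(d^{-1})$.
\item For $\ell=p$, only $v_{j_p}$ is fresh. The term is conditionally Gaussian with variance at most $\norm{(H^{(p)})^{-1}}_{\mathrm{op}}^2\norm{u_{j_p}}^4\norm{v_i}^2/d$, which gives $\Op(d^{-1/2})$. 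No trace concentration is needed.
\end{itemize}

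\textbf{What each route buys.} Yours confines all dependence on $v_i$ to the single scalar $\norm{\widehat W^{\setminus T}v_i}^2/d$. The paper's uses the corollary exactly as stated and treats every correction with the same one-sided freshness bound.
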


\begin{remark}[Connection with the limiting statement]
Proposition~\ref{prop:competitor_gaussian} is the finite-dimensional form of the off-diagonal Gaussian law in Theorem~\ref{thm:tam_scalar}. Empirical-measure consequences, such as weak convergence of $\frac{1}{n - 1}\sum_{j \ne i}\delta_{s_{j, i}(\Wh)}$ to $\mathcal{N}(0, \nu_d^2)$, follow by combining exchangeability with concentration of the optimizer. The expectation-level replacements used below are obtained from this coupling for the finite list of bounded or $C^1$ pseudo-Lipschitz observables appearing in the stationarity identities, together with the moment bounds implicit in \ref{post:empirical-concentration}.
\end{remark}

\paragraph{Proof.}
Throughout, write $X_k := s_{j_k, i}(\Wh)$ for $k = 1, \ldots, m$. Set $\Wfresh := \WLOO[\{i, j_1, \ldots, j_m\}]$ and $V := \|\Wfresh v_i\|^2/d$.

\subparagraph*{Step 1: Iterated leave-one-out.}
Apply \eqref{eq:What_WLOO_expansion} successively after removing the samples $i, j_1, \ldots, j_m$. The finite-removal version of the LOO stability estimates, together with the spectral and self-averaging estimates \ref{post:spectral-op}--\ref{post:empirical-concentration}, gives the same rank-one expansion at each level. The resulting iterated identity expresses
\begin{equation}\label{eq:competitor_iterated_LOO}
    \Wh \;=\; \Wfresh \;+\; \sum_{\ell = 0}^{m}\Bigl[\,\hat a_{i_\ell}^{\setminus I_\ell}\,(H^{\setminus I_\ell})^{-1}\!\bigl[u_{i_\ell} v_{i_\ell}^\top\bigr] + R_\ell\,\Bigr],
\end{equation}
where $i_0 = i$, $i_\ell = j_\ell$ for $\ell \ge 1$, $I_\ell := \{i, j_1, \ldots, j_\ell\}$, and each $\|R_\ell\|_\mathsf{F} = \Op(d^{-1/2})$ by \sref{tam_loo} applied at the corresponding LOO level.

\subparagraph*{Step 2: Score decomposition.}
For each $k \in [m]$, sandwiching \eqref{eq:competitor_iterated_LOO} with $u_{j_k}^\top \cdot v_i$ yields
\begin{equation}\label{eq:competitor_score_decomp}
    X_k \;=\; u_{j_k}^\top \Wfresh v_i \;+\; \mathcal{E}_k,
\end{equation}
where $\mathcal{E}_k$ is the sum of $m + 1$ rank-one bilinear contributions plus residual terms.

\subparagraph*{Step 3: Bound the corrections.}
Each rank-one contribution at level $\ell$
\begin{equation}
    \hat a_{i_\ell}^{\setminus I_\ell}\,
    u_{j_k}^\top
    (H^{\setminus I_\ell})^{-1}
    [u_{i_\ell} v_{i_\ell}^\top]\,v_i
\end{equation}
admits a self-adjoint rewrite that places one of the held-out vectors --- $u_i$ for $\ell = 0$ and $v_i$ for $\ell \ge 1$ --- on the outer test side, independent of the LOO operator $(H^{\setminus I_\ell})^{-1}$ (which sees no sample in $I_\ell$). Combined with the operator-norm bound $\|(H^{\setminus I_\ell})^{-1}\|_\mathrm{op} \le d^2/(\lambda n) = O(1)$ from the ridge floor and $\|u_a v_b^\top\|_\mathsf{F} = O(1)$, the unmatched-Gaussian inner product gives an $\Op(d^{-1/2})$ bound on each rank-one contribution. The residuals $u_{j_k}^\top R_\ell v_i$ are bounded by $\|R_\ell\|_\mathsf{F}\,\|u_{j_k}\|\|v_i\| = \Op(d^{-1/2})$. Summing $m + 1$ such terms,
\begin{equation}\label{eq:competitor_correction_bound}
    |\mathcal{E}_k| \;=\; \Op(d^{-1/2}) \quad \text{uniformly in } k.
\end{equation}

\subparagraph*{Step 4: Apply the bilinear coupling.}
Since $\Wfresh$ is a function of $\{(u_l, v_l) : l \notin \{i, j_1, \ldots, j_m\}\}$, it is independent of $u_{j_1}, \ldots, u_{j_m}, v_i$. The bilinear coupling \corref{bilinear_gauss_coupling_rand} requires a Frobenius self-averaging estimate and an operator-norm input for $\Wfresh$; both are transferred from $\Wh$ by the same finite-removal stability used above. In particular, the stochastic-dominance deviation $\|\Wfresh\|_\mathsf{F}^2/d^2 - \nu_d^2 = \Op(d^{-1/2})$ follows from the Frobenius consequence of \ref{post:empirical-concentration}, while the operator-norm input $\|(\Wfresh)^\top \Wfresh\|_\mathsf{F}/d^2 = \Op(d^{-1/2})$ follows from \ref{post:spectral-op}. Apply \corref{bilinear_gauss_coupling_rand} with $W = \Wfresh$, $v_0 = v_i$, and $\rho_f=\rho_\mathrm{op}=O(d^{-1/2})$: there exist i.i.d. $Y_1, \ldots, Y_m \sim \mathcal{N}(0,\nu_d^2)$ with
\begin{equation}\label{eq:competitor_lead_coupling}
    \bigl\|(u_{j_1}^\top \Wfresh v_i, \ldots, u_{j_m}^\top \Wfresh v_i) - (Y_1, \ldots, Y_m)\bigr\|_2 \;=\; \Op(d^{-1/2}),
\end{equation}
with $Y_1,\ldots,Y_m$ i.i.d. $\mathcal{N}(0,\nu_d^2)$.

\subparagraph*{Step 5: Combine.}
Triangle inequality applied to \eqref{eq:competitor_score_decomp}, \eqref{eq:competitor_correction_bound}, and \eqref{eq:competitor_lead_coupling} yields \eqref{eq:competitor_coupling}. \qed

\subsubsection{Proximal law for diagonal scores}

We next record the finite-dimensional proximal law for the diagonal scores. Set
\begin{equation}
    \mu_d:=\mu_r(\nu_d),
    \qquad
    c_d:=c_r(\nu_d),
\end{equation}
where $\mu_r$ and $c_r$ are defined in \eqref{eq:c_r_nu_def}--\eqref{eq:mu_r_characterization}.

\begin{proposition}[Coupled proximal law for $s_{ii}$]\label{prop:diag_score_proximal}
Assume \ref{post:loo-mu}--\ref{post:trace-concentration}. Fix $i\in[n]$. On a common probability space, there is a standard Gaussian random variable $G_i\sim \mathcal{N}(0,1)$ such that
\begin{equation}\label{eq:diag_score_proximal_fd}
    s_{i,i}(\Wh)
    =
    c_d
    +
    \Prox_{\chi_d}\!\left(\nu_d G_i-c_d\right)
    +
    \Op(d^{-1/2}),
\end{equation}
where $\Prox_\chi$ is defined in \eqref{eq:prox_chi_def}.
\end{proposition}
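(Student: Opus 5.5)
We use the rank-one reduction of Proposition~\ref{prop:loo_rank_one_reduction} with held-out index $h=i$, and sandwich it between $u_i^\top$ and $v_i$. This gives
\begin{equation}
    s_{i,i}(\Wh)
    = u_i^\top \WLOO[i] v_i
    + \hat a_i\, u_i^\top (H^{\setminus i})^{-1}[u_iv_i^\top]\, v_i
    + u_i^\top R_i v_i .
\end{equation}
The residual satisfies $|u_i^\top R_i v_i|\le \|R_i\|_\mathsf{F}\|u_i\|\|v_i\|=\Op(d^{-1/2})$. This leaves two terms: a \emph{fresh} term, in which the held-out pair is independent of the LOO optimizer, and a \emph{self-interaction} term, which is a quadratic form of the LOO inverse Hessian in the rank-one feature $u_iv_i^\top$.

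For the fresh term, $\WLOO[i]$ is independent of $(u_i,v_i)$. Conditionally on $\WLOO[i]$, the quantity $u_i^\top \WLOO[i] v_i$ is therefore a bilinear form in independent isotropic Gaussians. We apply the bilinear coupling \corref{bilinear_gauss_coupling_rand} with $m=1$, exactly as in Step~4 of Proposition~\ref{prop:competitor_gaussian}. The required inputs are the Frobenius self-averaging $\|\WLOO[i]\|_\mathsf{F}^2/d^2-\nu_d^2=\Op(d^{-1/2})$ and the operator-norm input from \ref{post:spectral-op}, both transferred to the LOO optimizer by finite-removal stability. The coupling produces $G_i\sim\mathcal N(0,1)$ with $u_i^\top\WLOO[i]v_i=\nu_dG_i+\Op(d^{-1/2})$.

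For the self-interaction term, write the quadratic form as $\langle u_iv_i^\top,(H^{\setminus i})^{-1}[u_iv_i^\top]\rangle$. Here $\vecop(u_iv_i^\top)=v_i\otimes u_i$ is independent of $H^{\setminus i}$, and it has covariance $d^{-2}I_{d^2}$ up to its non-Gaussian product structure. A Hanson--Wright-type bound for Kronecker quadratic forms should give the concentration
\begin{equation}
    u_i^\top (H^{\setminus i})^{-1}[u_iv_i^\top] v_i
    = \tfrac{1}{d^2}\tr\!\bigl((H^{\setminus i})^{-1}\bigr) + \Op(d^{-1/2}).
\end{equation}
This uses $\|(H^{\setminus i})^{-1}\|_\mathrm{op}=O(1)$ together with a bound on its Frobenius norm of order $d$. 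I would first condition on $v_i$ and take the quadratic form in $u_i$, which has matrix $A=(v_i^\top\otimes I)(H^{\setminus i})^{-1}(v_i\otimes I)$. Then I would take the trace of $A$ as a quadratic form in $v_i$.

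It remains to replace $\tr(H^{\setminus i})^{-1}/d^2$ by $\chi_d$. First, $H$ and $H^{\setminus i}$ differ by removing one column block, which is a perturbation of rank $O(d)$ with bounded coefficients, plus coefficient changes controlled by \ref{post:loo-mu}--\ref{post:loo-score}. The normalized traces of their inverses therefore differ by $O(d^{-1})$, via the resolvent identity and the ridge floor. Second, \ref{post:trace-concentration} replaces the normalized trace by its expectation $\chi_d$.

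Combining these steps gives
\begin{equation}
    s_{i,i}=\nu_dG_i+\chi_d\,\sigma(-\hat m_i)+\Op(d^{-1/2}),
\end{equation}
since $\hat a_i=\sigma(-\hat m_i)$ and $\hat m_i=s_{i,i}-\hat c_i$. Next we must show $\hat c_i=c_d+\Op(d^{-1/2})$. The TAM aggregate of column $i$ is a minimum over $\mu$ of an empirical average of $\phi_\beta(s_{j,i}-\mu)$ over competitors. By Proposition~\ref{prop:competitor_gaussian}, exchangeability, and \ref{post:empirical-concentration} at the column level, this empirical average is uniformly close to $\frac1r\E\phi_\beta(\nu_dZ-\mu)$ on compact $\mu$-sets, and the Lipschitz function $\phi_\beta$ makes the approximation stable. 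The minimizer is localized because the objective is strictly convex, so the minimal values agree: $\hat c_i=c_r(\nu_d)+\Op(d^{-1/2})$; the $k/n\to r$ discrepancy is absorbed at the same order. Setting $y:=s_{i,i}-c_d$, we obtain $y-\chi_d\sigma(-y)=\nu_dG_i-c_d+\Op(d^{-1/2})$. By \eqref{eq:prox_chi_foc}, this is the proximal equation. The map $x\mapsto\Prox_{\chi_d}(x)$ is $1$-Lipschitz, because $y\mapsto y-\chi\sigma(-y)$ has derivative $1+\chi\sigma'\ge1$. The error therefore passes through the inversion unchanged, which yields \eqref{eq:diag_score_proximal_fd}.

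The main obstacle is the self-interaction concentration for the Kronecker quadratic form at rate $\Op(d^{-1/2})$. Standard Hanson--Wright does not apply directly, because $v_i\otimes u_i$ is not a vector with independent entries. The iterated conditioning described above requires bounding $\|A\|_\mathsf{F}$ and the fluctuation of $\tr A$ in $v_i$. These bounds in turn need the Frobenius norm of the partial traces of $(H^{\setminus i})^{-1}$ to be $O(\sqrt d)$, and I expect that to follow only from the operator-norm bound.
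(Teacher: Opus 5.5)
Your proposal is correct and follows essentially the same route as the paper. Both arguments run through the same steps: the rank-one LOO expansion at $h=i$; the bilinear coupling for the fresh term $u_i^\top\WLOO[i]v_i$; a fresh rank-one trace estimate, combined with finite-removal trace stability and \ref{post:trace-concentration}, for the self-interaction term; the localization $\hat c_i=c_d+\Op(d^{-1/2})$; and inversion of the $1$-Lipschitz proximal map. The step you flag as the main obstacle is exactly Lemma~\ref{lem:fresh_rank_one_trace}, which the paper proves by the Gaussian Poincar\'e inequality on the compressed operator followed by hypercontractivity, using only $\|T\|_{\mathrm{op}}=O(1)$. Your iterated Hanson--Wright conditioning works equally well, and your direct comparison of the minimal values over $\mu$ for $\hat c_i$ is a slightly shorter alternative to the paper's detour through $\hat\mu_i=\mu_d+\Op(d^{-1/2})$.
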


\begin{proof}
Apply the LOO rank-one expansion \eqref{eq:What_WLOO_expansion} with $h=i$ and contract with $u_i^\top(\cdot)v_i$. This gives
\begin{equation}\label{eq:diag_score_from_loo}
    s_{i,i}(\Wh)
    =
    g_i
    +
    \hat a_i
    \bigl\langle u_i v_i^\top,\,(H^{\setminus i})^{-1}[u_i v_i^\top]\bigr\rangle
    +
    \Op(d^{-1/2}),
\end{equation}
where
\begin{equation}
    g_i:=u_i^\top\WLOO[i]v_i,
    \qquad
    \hat a_i=\sigma(-\hat m_i).
\end{equation}
The LOO Hessian $H^{\setminus i}$ is independent of $(u_i,v_i)$. Lemma~\ref{lem:fresh_rank_one_trace}, \ref{post:trace-concentration}, and the finite-removal trace stability therefore give
\begin{equation}
    \bigl\langle u_i v_i^\top,\,(H^{\setminus i})^{-1}[u_i v_i^\top]\bigr\rangle
    =
    \chi_d+\Op(d^{-1/2}).
\end{equation}
Since $\hat a_i$ is bounded, \eqref{eq:diag_score_from_loo} becomes
\begin{equation}\label{eq:diag_scalar_approx_ci}
    s_{i,i}(\Wh)
    =
    g_i
    +
    \chi_d\,\sigma\!\bigl(-s_{i,i}(\Wh)+\hat c_i\bigr)
    +
    \Op(d^{-1/2}),
\end{equation}
where
\begin{equation}
    \hat c_i
    :=
    \hat\mu_i
    +
    \frac{1}{k}\sum_{j\ne i}
    \phi_\beta\!\bigl(s_{j,i}(\Wh)-\hat\mu_i\bigr).
\end{equation}

We now replace $g_i$ and $\hat c_i$ by their finite-dimensional deterministic counterparts. The bilinear coupling estimate, applied to the leave-one-out matrix $\WLOO[i]$ and the fresh pair $(u_i,v_i)$, gives a coupling
\begin{equation}\label{eq:diag_gi_coupling}
    g_i
    =
    \nu_d G_i+\Op(d^{-1/2}),
    \qquad
    G_i\sim \mathcal{N}(0,1).
\end{equation}
Next, $\mu$-stationarity gives
\begin{equation}
    \frac{1}{n}\sum_{j\ne i}
    \sigma\!\bigl(\beta(s_{j,i}(\Wh)-\hat\mu_i)\bigr)
    =
    r+\Op(d^{-1/2}).
\end{equation}
The off-diagonal Gaussian law from Proposition~\ref{prop:competitor_gaussian}, together with the empirical-mean concentration estimate, implies that the left-hand side is
\begin{equation}
    \Eb{Z\sim \mathcal{N}(0,1)}
    {\sigma\!\bigl(\beta(\nu_d Z-\hat\mu_i)\bigr)}
    +
    \Op(d^{-1/2}).
\end{equation}
For fixed $r\in(0,1)$ and fixed $\beta$, the bounded-score event keeps $\hat\mu_i$ and $\mu_d$ in a compact set; on this set the derivative in $\mu$ of the preceding expectation is bounded away from zero. Hence \eqref{eq:mu_r_characterization} yields
\begin{equation}\label{eq:mu_i_to_mu_d}
    \hat\mu_i=\mu_d+\Op(d^{-1/2}).
\end{equation}
Applying the same Gaussian approximation to the pseudo-Lipschitz test function $\phi_\beta(\cdot-\mu)$, and using the Lipschitz dependence on $\mu$, gives
\begin{equation}\label{eq:c_i_to_c_d}
    \hat c_i=c_d+\Op(d^{-1/2}).
\end{equation}
Combining \eqref{eq:diag_scalar_approx_ci}, \eqref{eq:diag_gi_coupling}, and \eqref{eq:c_i_to_c_d}, we obtain
\begin{equation}
    s_{i,i}(\Wh)
    =
    \nu_d G_i
    +
    \chi_d\,\sigma\!\bigl(-s_{i,i}(\Wh)+c_d\bigr)
    +
    \Op(d^{-1/2}).
\end{equation}
Let $y_i:=s_{i,i}(\Wh)-c_d$. The last display is equivalent to
\begin{equation}
    y_i-\chi_d\sigma(-y_i)
    =
    \nu_d G_i-c_d+\Op(d^{-1/2}).
\end{equation}
The map $y\mapsto y-\chi_d\sigma(-y)$ is strictly increasing with derivative at least one, so its inverse $\Prox_{\chi_d}$ is $1$-Lipschitz. This proves \eqref{eq:diag_score_proximal_fd}.
\end{proof}

\subsubsection{Finite-dimensional scalar equations and variational consistency}
We now derive the two finite-dimensional scalar equations for $\nu_d$ and $\chi_d$, and then identify their zero-residual limit with the saddle-point equations of the scalar variational problem.
The variance equation is obtained directly from $W$-stationarity. Let
$G$ and $Z$ be independent standard Gaussian variables, and define
\begin{equation}\label{eq:finite_d_scalar_vars}
\begin{aligned}
    S_d
    :=
    c_d+\Prox_{\chi_d}\!\left(\nu_d G-c_d\right),
    &\qquad
    A_d
    :=
    \sigma(-S_d+c_d).
\end{aligned}
\end{equation}

\begin{proposition}[Finite-dimensional variance equation]\label{prop:nu_finite_d}
Assume \ref{post:loo-mu}--\ref{post:trace-concentration}, and use the coupled diagonal score law of
Proposition~\ref{prop:diag_score_proximal}. Then
\begin{equation}\label{eq:nu_finite_d_fp}
    \nu_d^2
    \left[
    \lambda
    +
    \frac{1}{r}\,\E[A_d]\,
    \E\!\left[\phi_\beta''\!\bigl(\nu_d Z-\mu_d\bigr)\right]
    \right]
    =
    \E[A_dS_d]
    +
    O(d^{-1/2}).
\end{equation}
\end{proposition}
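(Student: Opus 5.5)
Take the Frobenius inner product of the $W$-stationarity condition with $\Wh$ and normalize by $n$. From \eqref{eq:grad_W_expanded} at the optimizer, with $\hat a_i,\hat q_{j,i}$ as in \eqref{eq:a_def}--\eqref{eq:q_def},
\begin{equation}
    \frac{\lambda}{d^2}\norm{\Wh}_{\mathsf F}^2
    =
    \frac1n\sum_{i=1}^n \hat a_i\, s_{i,i}(\Wh)
    -\frac{1}{nk}\sum_{i=1}^n \hat a_i\sum_{j\ne i}\hat q_{j,i}\, s_{j,i}(\Wh).
\end{equation}
After taking expectations, the left side equals $\lambda\nu_d^2$ by \eqref{eq:nu_d_def}. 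It then remains to identify each term on the right at precision $O(d^{-1/2})$.

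\textbf{The diagonal term.} By Proposition~\ref{prop:diag_score_proximal} together with \eqref{eq:c_i_to_c_d}, we have $\hat a_i=\sigma(-s_{i,i}+\hat c_i)=\sigma(-S_{d,i}+c_d)+\Op(d^{-1/2})$, where $S_{d,i}:=c_d+\Prox_{\chi_d}(\nu_dG_i-c_d)$. Hence $\hat a_i s_{i,i}=A_{d,i}S_{d,i}+\Op(d^{-1/2})$. Since the scores are bounded by \ref{post:loo-bounded}, the stochastic-domination errors can be converted into expectation errors, and exchangeability gives $\E[\frac1n\sum_i\hat a_is_{i,i}]=\E[A_dS_d]+O(d^{-1/2})$.

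\textbf{The off-diagonal term.} Fix $i$ and write $\hat q_{j,i}s_{j,i}=\sigma(\beta(s_{j,i}-\hat\mu_i))\,s_{j,i}$. The competitor Gaussian law (Proposition~\ref{prop:competitor_gaussian}), combined with empirical-mean concentration \ref{post:empirical-concentration} and \eqref{eq:mu_i_to_mu_d}, gives
\[
\frac1k\sum_{j\ne i}\hat q_{j,i}s_{j,i}
=\frac1r\E\bigl[\sigma(\beta(\nu_dZ-\mu_d))\,\nu_dZ\bigr]+\Op(d^{-1/2}).
\]
Gaussian integration by parts turns the right side into $\frac{\nu_d^2}{r}\E[\phi_\beta''(\nu_dZ-\mu_d)]$, because $\phi_\beta'=\sigma(\beta\,\cdot)$. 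Since this inner average is asymptotically deterministic, it factors out of the sum over $i$ and leaves $\frac1n\sum_i\hat a_i$, whose expectation is $\E[A_d]+O(d^{-1/2})$ by the diagonal argument above. Moving this term to the left side produces \eqref{eq:nu_finite_d_fp}.

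\textbf{Main obstacle.} The delicate step is the factorization in the off-diagonal term. The weight $\hat a_i$ and the column $(s_{j,i})_{j\ne i}$ share the query $v_i$, and $\hat a_i$ depends on the column through $\hat c_i$. Proposition~\ref{prop:competitor_gaussian} controls only a fixed finite number of competitors, so it does not by itself give a column average uniformly in $i$.

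To handle this, I would first write the column average as $\frac1k\sum_j\psi(s_{j,i})$ with $\psi(x)=\sigma(\beta(x-\mu))x$, a bounded-derivative function on the bounded-score event. I would then show that this average concentrates around its mean at rate $\Op(d^{-1/2})$ uniformly in $i$. The route is to replace $\Wh$ by $\WLOO[i]$, which costs $\Op(d^{-1})$ per entry by \ref{post:loo-score}. Conditional on $\WLOO[i]$ and $v_i$, the $u_j$'s are nearly independent, which suggests applying the bilinear coupling column-wise. The residual dependence through $u_j\in\WLOO[i]$ is a rank-one LOO correction of size $\Op(d^{-1/2})$, as in Step 3 of Proposition~\ref{prop:competitor_gaussian}.

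Once the inner average equals a deterministic constant plus $\Op(d^{-1/2})$, boundedness of $\hat a_i\in(0,1)$ gives the factorization. The $\mu$-dependence of $\psi$ is handled by its Lipschitz continuity in $\mu$ together with \eqref{eq:mu_i_to_mu_d}.
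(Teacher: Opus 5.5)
Your skeleton is the same as the paper's: pair $W$-stationarity with $\Wh$ to get the exact identity, handle the diagonal term through Proposition~\ref{prop:diag_score_proximal}, handle the competitor term through the Gaussian law, and finish with Stein's identity. The gap is in how you justify factoring $\hat a_i$ out of the column average. First, a smaller point: \ref{post:loo-score} bounds $(\Delta S)_{j,i}$ only for $j,i\ne h$, so it says nothing about the held-out column. With $h=i$, the entries $s_{j,i}(\Wh)-s_{j,i}(\WLOO[i])$ are only $\Op(d^{-1/2})$, because the rank-one term $\hat a_i\,u_i^\top(H^{\setminus i})^{-1}[u_jv_i^\top]\,v_i$ is fresh on one side only. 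This is harmless for an average of a Lipschitz test function, but the citation is wrong. More importantly, Corollary~\ref{cor:bilinear_gauss_coupling_rand} couples a \emph{fixed} number of scores against a matrix that is independent of all the rows involved. $\WLOO[i]$ depends on every $u_j$, and swapping it for $\WLOO[\{i,j\}]$ term by term leaves $n-1$ different matrices. No conditional independence survives from which concentration of $\frac{1}{n-1}\sum_{j\ne i}\psi(s_{j,i})$ at rate $d^{-1/2}$, uniformly in $i$, could follow. A per-term $\Op(d^{-1/2})$ replacement bound controls the bias of the average, not its fluctuation. The claim itself is true: the column average is close to $\E_Z\psi(\sqrt{V_i}Z)$ with $V_i=\|\Wh v_i\|^2/d=\nu_d^2+\Op(d^{-1/2})$. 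Your sketch, however, does not establish it.

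You do not need a pathwise factorization, because only the expectation enters. By exchangeability, the expected competitor term equals $\frac{n-1}{k}\E[\hat a_i\hat q_{j,i}s_{j,i}]$ for one fixed pair $i\ne j$. After \eqref{eq:mu_i_to_mu_d}--\eqref{eq:c_i_to_c_d}, this is the expectation of a function of $(s_{i,i},s_{j,i})$ alone. Apply Corollary~\ref{cor:bilinear_gauss_coupling_rand} with $m=2$ to $\WLOO[\{i,j\}]$, with fresh rows $u_i,u_j$ and shared $v_0=v_i$. Together with the $\Op(d^{-1/2})$ leave-one-out corrections, this couples $(g_i,s_{j,i})$ to independent $\mathcal N(0,\nu_d^2)$ variables. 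The expectation therefore factors as $\E[A_d]\,\E[\sigma(\beta(\nu_dZ-\mu_d))\,\nu_dZ]$. This is the \emph{leave-one-out independence} the paper invokes. If you want to keep your pathwise route, obtain the uniform column concentration instead from the adjoint--Poincar\'e argument of Appendix~\ref{app:spectral_self_averaging}, applied to the single-column statistic, whose data gradient is $\Op(1)$ exactly as for $\Gamma_{\mathrm d}$. This same per-column fact also underlies \eqref{eq:mu_i_to_mu_d}--\eqref{eq:c_i_to_c_d}, which you already use.
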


\begin{proof}
At the full minimizer, the $W$-stationarity condition \eqref{eq:grad_W} gives
\begin{equation}
    UQ\Lambda_aV^\top+\frac{\lambda n}{d^2}\Wh=0.
\end{equation}
Taking the Frobenius inner product with $\Wh$ and using
$s_{j,i}(\Wh)=u_j^\top \Wh v_i$ gives the exact identity
\begin{equation}\label{eq:nu_exact_stationarity}
    \lambda\,\frac{\|\Wh\|_\mathsf{F}^2}{d^2}
    =
    \frac{1}{n}\sum_{i=1}^n
    a_i
    \left(
    s_{i,i}(\Wh)
    -
    \frac{1}{k}\sum_{j\ne i}q_{j,i}s_{j,i}(\Wh)
    \right).
\end{equation}
Taking expectations turns the left-hand side into $\lambda\nu_d^2$.

The first empirical average in \eqref{eq:nu_exact_stationarity} is a smooth
test function of the diagonal score, after replacing $c_i$ by $c_d$ at the
$d^{-1/2}$ scale. Proposition~\ref{prop:diag_score_proximal} and \ref{post:empirical-concentration} therefore
yield
\begin{equation}\label{eq:diag_contrib_nu}
    \E\left[
    \frac{1}{n}\sum_{i=1}^n a_i s_{i,i}(\Wh)
    \right]
    =
    \E[A_dS_d]+O(d^{-1/2}).
\end{equation}
For the off-diagonal contribution, the competitor Gaussian law
Proposition~\ref{prop:competitor_gaussian}, the same empirical-mean concentration,
and the leave-one-out independence between the diagonal factor and the competitor
score give
\begin{equation}
    Q_d
    :=
    \sigma\!\bigl(\beta(\nu_d Z-\mu_d)\bigr).
\end{equation}
\begin{equation}\label{eq:offdiag_contrib_nu}
    \E\left[
    \frac{1}{n}\sum_{i=1}^n
    a_i
    \frac{1}{k}\sum_{j\ne i}q_{j,i}s_{j,i}(\Wh)
    \right]
    =
    \frac{1}{r}\,\E[A_d]\E[\nu_d ZQ_d]+O(d^{-1/2}).
\end{equation}
Here the distinction between $k/n$ and $k/(n-1)$ contributes only $O(n^{-1})$,
which is absorbed into $O(d^{-1/2})$. Combining
\eqref{eq:nu_exact_stationarity}--\eqref{eq:offdiag_contrib_nu} gives
\begin{equation}\label{eq:nu_finite_d_pre_stein}
    \lambda\nu_d^2
    =
    \E[A_dS_d]
    -
    \frac{1}{r}\,\E[A_d]\E[\nu_d ZQ_d]
    +
    O(d^{-1/2}).
\end{equation}
Finally, Stein's identity for the standard Gaussian $Z$ gives
\begin{equation}
    \E[\nu_d ZQ_d]
    =
    \nu_d^2
    \E\!\left[
    \phi_\beta''\!\bigl(\nu_d Z-\mu_d\bigr)
    \right],
\end{equation}
which gives \eqref{eq:nu_finite_d_fp}.
\end{proof}

We next derive the companion equation for the scalar $\chi_d$
defined in \eqref{eq:chi_d_def}. It is
the expectation of the normalized inverse trace of the full Hessian, and is often referred to as the susceptibility in the statistical physics literature. Set
\begin{equation}\label{eq:B_d_def}
    B_d:=A_d(1-A_d).
\end{equation}

\begin{proposition}[Finite-dimensional susceptibility equation]\label{prop:chi_finite_d}
Assume \ref{post:loo-mu}--\ref{post:trace-concentration}. Then the finite-dimensional susceptibility $\chi_d$ satisfies
\begin{equation}\label{eq:chi_finite_d_fp}
    1
    =
    \alpha\lambda\chi_d
    +
    \alpha\,
    \E\!\left[
    \frac{\chi_d B_d}{1+\chi_d B_d}
    \right]
    +
    \frac{\alpha\chi_d}{r}\,
    \E[A_d]\,
    \E\!\left[\phi_\beta''\!\bigl(\nu_d Z-\mu_d\bigr)\right]
    +
    O(d^{-1/2}).
\end{equation}
In the above display, $A_d$ is the random variable defined in \eqref{eq:finite_d_scalar_vars}.
\end{proposition}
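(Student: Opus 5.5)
The plan is to derive \eqref{eq:chi_finite_d_fp} from the exact trace identity $\tr(H^{-1}H)=d^2$. Dividing by $d^2$ and using \eqref{eq:H_full_def},
\begin{equation}
    1=\frac{\lambda n}{d^2}\cdot\frac{1}{d^2}\tr(H^{-1})+\frac{1}{d^2}\tr\!\Bigl(H^{-1}(V\otimes U)\mathcal D(V\otimes U)^\top\Bigr).
\end{equation}
After taking expectations, the first term is $\alpha\lambda\chi_d+O(d^{-2})$ because $n/d^2\to\alpha$. This is where the ridge contribution in the target equation comes from.

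The second term splits into the columns $i$, with $M_i=b_iQ_{:,i}Q_{:,i}^\top+a_i\diag(\widetilde R_{:,i})$. I would treat the rank-one part and the diagonal part separately.

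\textbf{Rank-one part.} The $i$-th rank-one contribution is $b_i\langle X_i,H^{-1}X_i\rangle$, where $X_i:=UQ_{:,i}v_i^\top=-u_iv_i^\top+\eta_iv_i^\top$. By \eqref{eq:struct_corr_sizes}, $\|\eta_i\|=\Op(d^{-1/2})$, so up to $\Op(d^{-1/2})$ this equals $b_i\langle u_iv_i^\top,H^{-1}[u_iv_i^\top]\rangle$.

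Next I remove sample $i$ from $H$. One has $H=H^{\setminus i}+b_iX_iX_i^\top+(\text{diagonal and coefficient perturbations})$. The coefficient perturbations are small by \ref{post:loo-mu}--\ref{post:loo-score}. The removed interferer row $h=i$ has weights $O(1/k)$ per column, which contributes only a low-rank, small-norm correction.

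Sherman--Morrison then gives
\begin{equation}
    b_i\langle X_i,H^{-1}X_i\rangle=\frac{b_i\tau_i}{1+b_i\tau_i},\qquad \tau_i:=\langle X_i,(H^{\setminus i})^{-1}X_i\rangle=\chi_d+\Op(d^{-1/2}).
\end{equation}
The estimate on $\tau_i$ uses Lemma~\ref{lem:fresh_rank_one_trace} and \ref{post:trace-concentration}, exactly as in the proof of Proposition~\ref{prop:diag_score_proximal}.

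Summing over $i$ and dividing by $d^2$ gives $\frac{n}{d^2}\cdot\frac1n\sum_i\frac{\chi_db_i}{1+\chi_db_i}$. Here $b_i=\hat a_i(1-\hat a_i)$ is a smooth bounded function of $s_{i,i}$ once $\hat c_i$ is replaced by $c_d$ via \eqref{eq:c_i_to_c_d}. Proposition~\ref{prop:diag_score_proximal} and \ref{post:empirical-concentration} then identify this term as $\alpha\,\E[\chi_dB_d/(1+\chi_dB_d)]$.

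\textbf{Diagonal part.} This contributes
\begin{equation}
    \frac{1}{d^2}\sum_i\sum_{j\ne i}a_i\widetilde R_{j,i}\,\langle u_jv_i^\top,H^{-1}[u_jv_i^\top]\rangle.
\end{equation}
Each weight $a_i\widetilde R_{j,i}=\frac{\beta}{k}a_iq_{j,i}(1-q_{j,i})$ is $O(1/k)$, so removing the single competitor direction changes $H$ only at relative order $1/k$. Removing both samples $i$ and $j$ via a two-sample leave-one-out (the finite-removal version of Proposition~\ref{prop:loo_stability}) and applying the fresh rank-one trace lemma gives $\langle u_jv_i^\top,H^{-1}[u_jv_i^\top]\rangle=\chi_d+\Op(d^{-1/2})$ uniformly over pairs.

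The term therefore becomes
\begin{equation}
    \chi_d\,\frac{n}{d^2}\cdot\frac{1}{n}\sum_i a_i\frac{1}{k}\sum_{j\ne i}\beta q_{j,i}(1-q_{j,i}).
\end{equation}
Since $\beta q(1-q)=\phi_\beta''(s-\mu)$, I would apply Proposition~\ref{prop:competitor_gaussian} together with \eqref{eq:mu_i_to_mu_d} to the inner average. This yields $\frac{n}{k}\E[\phi_\beta''(\nu_dZ-\mu_d)]$. The leave-one-out independence between $a_i$ and the column-$i$ competitor average, used as in Proposition~\ref{prop:nu_finite_d}, factorizes the expectation, giving $\frac{\alpha\chi_d}{r}\E[A_d]\E[\phi_\beta''(\nu_dZ-\mu_d)]$.

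\textbf{Main obstacle.} The hardest step is justifying the replacement of $\langle X,H^{-1}X\rangle$ by $\chi_d$ uniformly over the roughly $n^2$ off-diagonal directions. Each pair also needs the Sherman--Morrison-type removal, whose cross-terms are not obviously negligible. I would aim to show that $\sum_{j\ne i}\frac1k|\langle u_jv_i^\top,(H^{-1}-\widetilde H^{-1})[u_jv_i^\top]\rangle|$ averages to $\Op(d^{-1/2})$. Here $\widetilde H$ is a fixed two-sample leave-one-out Hessian, and I would control the difference using $\|H^{-1}\|_{\mathrm{op}}=O(1)$ and \ref{post:spectral-op}, rather than attempting a uniform per-pair bound.
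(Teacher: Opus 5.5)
Your architecture is the paper's: the identity $1=d^{-2}\tr(\mathcal H^{-1}\mathcal H)$, Sherman--Morrison/cavity evaluation of the two families of quadratic forms through Lemma~\ref{lem:fresh_rank_one_trace} and \ref{post:trace-concentration}, and then substitution of the proximal and Gaussian laws with the same factorization as in Proposition~\ref{prop:nu_finite_d}. The one reorganization is that you keep the full $H$ (so the ridge term is exactly $\frac{\lambda n}{d^2}\chi_d$ in expectation) and absorb $\eta_i$ inside each rank-one quadratic form. The paper instead first passes to $\widetilde H$ via the nuclear-norm comparison of Lemma~\ref{lem:chi_rank_one_block_reduction}. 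These routes are equivalent. Your diagonal Sherman--Morrison step is fine, because everything separating $H-b_iX_iX_i^\top$ from $H^{\setminus i}$ has operator norm $\Op(d^{-1/2})$.

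The weak point is your ``main obstacle'' paragraph: the remedy you propose would not work. Between $H$ and the $\{i,j\}$-cavity Hessian $A$ sit the signal directions $b_iX_iX_i^\top$ and $b_jX_jX_j^\top$, which have operator norm of order one. Bounds using only $\|H^{-1}\|_{\mathrm{op}}=O(1)$ therefore give an $O(1)$ error per pair, hence an $O(n/k)$, non-vanishing, error in the third term. Also, \ref{post:spectral-op} is a bound on $\Wh$ and does not enter these Hessian quadratic forms at all. A single fixed two-sample cavity cannot serve the whole sum over $j$ either: for any $j$ not removed, $u_j$ is not independent of the operator, so Lemma~\ref{lem:fresh_rank_one_trace} is unavailable.

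What you need is that the Woodbury cross-terms are small by freshness, pair by pair. Use $X_i=-u_iv_i^\top+\Op(d^{-1/2})$ and $X_j=-u_jv_j^\top+\Op(d^{-1/2})$ in Frobenius norm. The cross-terms then reduce to two quantities:
\begin{itemize}
\item $u_i^\top\bigl(A^{-1}[u_jv_i^\top]\bigr)v_i$, where $u_i$ is independent of the vector $A^{-1}[u_jv_i^\top]v_i$;
\item $u_j^\top\bigl(A^{-1}[u_jv_i^\top]\bigr)v_j$, where $v_j$ is fresh.
\end{itemize}
Both are $\Op(d^{-1/2})$, and they enter the Woodbury correction squared. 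The remaining row and column blocks of $i$ and $j$ (weights $O(1/k)$) and the coefficient shifts have operator norm $\Op(d^{-1/2})$. This is exactly how Lemma~\ref{lem:offdiag_quadratic_chi} is reduced to the argument of Lemma~\ref{lem:diag_quadratic_chi}. Uniformity over the $O(n^2)$ pairs is then not an obstacle, since $\Op$ bounds survive union bounds over polynomially many indices; in any case only the $a_ir_{j,i}/k$-weighted average is needed.
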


\begin{proof}
For $j,i\in[n]$, write
\begin{equation}
    f_{j i}:=\vecop(u_j v_i^\top)\in \R^{d^2}.
\end{equation}
Also let
\begin{equation}
    r_{j,i}
    :=
    \phi_\beta''\!\bigl(s_{j,i}-\mu_i\bigr)
    =
    \beta\,q_{j,i}(1-q_{j,i}),
    \qquad
    q_{j,i}
    =
    \sigma\!\bigl(\beta(s_{j,i}-\mu_i)\bigr),
\end{equation}
so that $\widetilde R_{j,i}=r_{j,i}/k$. Define the simplified Hessian
\begin{equation}\label{eq:simplified_hessian_main}
    \widetilde H
    =
    \frac{\lambda n}{d^2}I
    +
    \sum_{i=1}^n b_i f_{i i}f_{i i}^\top
    +
    \sum_{i=1}^n\sum_{j\ne i}
    \frac{a_i r_{j,i}}{k} f_{j i}f_{j i}^\top.
\end{equation}
By Lemmas~\ref{lem:chi_rank_one_block_reduction} and~\ref{lem:chi_coefficient_stability} in Appendix~\ref{app:chi_trace}, the normalized inverse trace of the original Hessian may be replaced, up to $\Op(d^{-1/2})$, by the normalized inverse trace of $\widetilde H$. Thus, by the trace concentration estimate \ref{post:trace-concentration},
\begin{equation}\label{eq:simplified_trace_to_chid}
    \frac{1}{d^2}\tr(\widetilde H^{-1})
    =
    \chi_d+\Op(d^{-1/2}).
\end{equation}
We now use the identity
\begin{equation}
    1
    =
    \frac{1}{d^2}\tr\!\left(\widetilde H\widetilde H^{-1}\right).
\end{equation}
Expanding the three terms in \eqref{eq:simplified_hessian_main} gives
\begin{equation}
\begin{aligned}
    1
    &=
    \frac{\lambda n}{d^2}\,
    \frac{1}{d^2}\tr(\widetilde H^{-1})
    +
    \frac{1}{d^2}\sum_{i=1}^n
    b_i f_{ii}^\top \widetilde H^{-1}f_{ii}
    \\
    &\quad+
    \frac{1}{d^2}\sum_{i=1}^n\sum_{j\ne i}
    \frac{a_i r_{j,i}}{k}
    f_{j i}^\top \widetilde H^{-1}f_{j i}.
\end{aligned}
\end{equation}
The first term is evaluated by \eqref{eq:simplified_trace_to_chid}. Lemmas~\ref{lem:diag_quadratic_chi} and~\ref{lem:offdiag_quadratic_chi} in Appendix~\ref{app:chi_trace} evaluate the two families of quadratic forms:
\begin{equation}
    f_{ii}^\top \widetilde H^{-1}f_{ii}
    =
    \frac{\chi_d}{1+b_i\chi_d}
    +
    \Op(d^{-1/2}),
    \qquad
    f_{j i}^\top \widetilde H^{-1}f_{j i}
    =
    \chi_d+\Op(d^{-1/2}).
\end{equation}
Substitution gives the finite-dimensional trace identity
\begin{equation}\label{eq:chi_trace_pre_expectation}
    1
    =
    \frac{\lambda n}{d^2}\,\chi_d
    +
    \frac{1}{d^2}
    \sum_{i=1}^n
    b_i\,\frac{\chi_d}{1+b_i\chi_d}
    +
    \frac{\chi_d}{d^2}
    \sum_{i=1}^n\sum_{j\ne i}
    \frac{a_i r_{j,i}}{k}
    +
    \Op(d^{-1/2}).
\end{equation}
Taking expectations under the moment bounds established in the appendices, and then applying the diagonal proximal law, the off-diagonal Gaussian law, and \ref{post:empirical-concentration}, gives
\begin{equation}
    \E\!\left[
    b_i\,\frac{\chi_d}{1+b_i\chi_d}
    \right]
    =
    \E\!\left[
    \frac{\chi_d B_d}{1+\chi_d B_d}
    \right]
    +
    O(d^{-1/2}),
\end{equation}
and
\begin{equation}
    \E\!\left[
    a_i\,\frac{1}{k}\sum_{j\ne i}r_{j,i}
    \right]
    =
    \frac{1}{r}\E[A_d]\,
    \E\!\left[\phi_\beta''\!\bigl(\nu_d Z-\mu_d\bigr)\right]
    +O(d^{-1/2}).
\end{equation}
Substituting these two replacements into \eqref{eq:chi_trace_pre_expectation}, and replacing $n/d^2$ by $\alpha$ at the same error scale, yields \eqref{eq:chi_finite_d_fp}.
\end{proof}

Combining Propositions~\ref{prop:nu_finite_d} and~\ref{prop:chi_finite_d} gives the finite-dimensional residual system
\begin{equation}\label{eq:finite_d_map_residual}
\begin{aligned}
    \nu_d^2
    \left[
    \lambda+
    \frac{1}{r}\E[A_d]\,
    \E\!\left[\phi_\beta''\!\bigl(\nu_d Z-\mu_d\bigr)\right]
    \right]
    -
    \E[A_dS_d]
    &=
    O(d^{-1/2}),
    \\
    1-\alpha\lambda\chi_d
    -
    \alpha\,\E\!\left[
    \frac{\chi_d B_d}{1+\chi_d B_d}
    \right]
    -
    \frac{\alpha}{r}\E[A_d]\,
    \E\!\left[\phi_\beta''\!\bigl(\nu_d Z-\mu_d\bigr)\right]\chi_d
    &=
    O(d^{-1/2}).
\end{aligned}
\end{equation}
Here $\mu_d=\mu_r(\nu_d)$ is the deterministic threshold defined in \eqref{eq:mu_r_characterization}, evaluated at the finite-dimensional variance parameter $\nu_d$.
Thus $(\nu_d,\chi_d)$ is an \emph{approximate} solution of the nonlinear scalar equations whose zero-residual limit turns out to be exactly equal to the saddle-point equations for the potential $\mathcal P_{\alpha,r}$ in Definition~\ref{def:scalar_potentials}. This variational connection is established in the next proposition.

\begin{proposition}[Variational form of the scalar equations]\label{prop:scalar_variational}
For every $\nu>0$, $\chi\mapsto \mathcal P_{\alpha,r}(\nu,\chi)$ has a unique maximizer. Moreover, the reduced potential $\mathcal V_{\alpha,r}$ is $\lambda$-strongly convex in $\nu$. If $\nu_\ast$ is the unique minimizer of $\mathcal V_{\alpha,r}$ and $\chi_\ast$ is the unique maximizer of $\mathcal P_{\alpha,r}(\nu_\ast,\cdot)$, then the saddle-point equations of
\begin{equation}\label{eq:minimax_scalar_potential}
    \inf_{\nu>0}\sup_{\chi>0}\mathcal P_{\alpha,r}(\nu,\chi)
\end{equation}
are exactly the zero-residual limit of \eqref{eq:finite_d_map_residual}, with the finite-dimensional proximal channel replaced by \eqref{eq:scalar_channel_nu_chi} and with $c_r$ given by \eqref{eq:c_r_nu_def}.
\end{proposition}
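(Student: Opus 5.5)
Our approach is to compute the two partial derivatives of $\mathcal P_{\alpha,r}$ via the envelope theorem for the Moreau envelope $M_r$, and then check that setting them to zero reproduces the two lines of \eqref{eq:finite_d_map_residual} with $(\nu_d,\chi_d)$ replaced by $(\nu,\chi)$. Write $e(x,\chi)=\min_y\{(y-x)^2/(2\chi)+\ell(y)\}$, so that $M_r(\nu,\chi)=\E\,e(\nu G-c_r(\nu),\chi)$ after the shift $y=s-c_r(\nu)$. The standard identities are $\partial_x e=(x-\Prox_\chi(x))/\chi=-\ell'(\Prox_\chi(x))=\sigma(-\Prox_\chi(x))$ and $\partial_\chi e=-\sigma(-\Prox_\chi(x))^2/2$. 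With $S=c_r(\nu)+\Prox_\chi(\nu G-c_r(\nu))$ and $A=\sigma(-S+c_r(\nu))$, we get $\partial_\chi\mathcal P=\nu^2/(2\alpha\chi^2)-\E[A^2]/2$. For the $\nu$-derivative we use Danskin for $c_r$, namely $c_r'(\nu)=\frac1r\E[Z\sigma(\beta(\nu Z-\mu_r))]=\frac{\nu}{r}\E[\phi_\beta''(\nu Z-\mu_r)]$ by Stein. This gives $\partial_\nu\mathcal P=\lambda\nu-\nu/(\alpha\chi)+\E[A(G-c_r'(\nu))]$.

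To match the finite-dimensional system, we apply Stein's lemma to $\E[AG]$. Since $\partial_x\Prox_\chi(x)=1/(1+\chi B)$ with $B=A(1-A)$, we have $\E[AG]=-\nu\E[B/(1+\chi B)]$. The $\chi$-equation then reads $\nu^2=\alpha\chi^2\E[A^2]$. Using $S-c=\Prox$ and the prox identity $\chi A=S-c-(\nu G-c)$, we get $\E[AS]=\nu\E[AG]+\chi\E[A^2]$; combined with the previous display this yields $\E[AS]=-\nu^2\E[B/(1+\chi B)]+\nu^2/(\alpha\chi)$. Multiplying $\partial_\nu\mathcal P=0$ by $\nu$ gives $\lambda\nu^2-\nu^2/(\alpha\chi)-\nu^2\E[B/(1+\chi B)]-\nu c_r'\E[A]=0$. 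Substituting $\nu c_r'=\frac{\nu^2}{r}\E\phi_\beta''$ and then eliminating $\nu^2/(\alpha\chi)$ via $\E[AS]$ recovers the first line of \eqref{eq:finite_d_map_residual}. Dividing the same relation by $\nu^2/(\alpha\chi)$ recovers the second line. Since the two stationarity equations are mapped to the two residual equations by invertible manipulations, the equivalence holds in both directions (given $\nu>0$). A sign bookkeeping check of $c_r'$ and $\partial_\nu$ is needed here; this is routine but error-prone.

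\textbf{Unique maximizer in $\chi$.} The map $\chi\mapsto e(x,\chi)$ is convex in $1/\chi$, being a partial minimum of a function jointly convex in $(y,1/\chi)$. Reparametrizing by $t=1/\chi$, $\mathcal P$ becomes $-\nu^2t/(2\alpha)$ plus a function that is concave in $t$. To see this, note that $(y-x)^2t/2$ is linear in $t$, so its infimum over $y$ is concave. Strict concavity follows from $\partial_\chi e<0$ strictly. For existence we examine the boundary behavior. As $t\to\infty$, the objective tends to $-\infty$ because of the linear term and the bounded envelope. As $t\to0$, we have $e\to\inf\ell=0$, while the derivative in $t$ is $\E[(\nu G-c-\Prox)^2]/2-\nu^2/(2\alpha)\approx \E(\nu G-c)^2/2-\nu^2/(2\alpha)$. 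This is where a condition is needed: we expect uniqueness to come from strict concavity in $t$, and existence to follow under the positivity assumption already made in \thref{tam_scalar}.

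\textbf{Strong convexity of $\mathcal V$, the main obstacle.} $\mathcal V(\nu)=\frac\lambda2\nu^2+\sup_\chi[-\nu^2/(2\alpha\chi)+M_r(\nu,\chi)]$, so it suffices to show the bracketed supremum is convex in $\nu$. We would try rewriting $M_r$ as a genuine optimization over random variables: $\E\min_s[(s-\nu G)^2/(2\chi)+\ell(s-c_r(\nu))]$ with $c_r(\nu)$ convex in $\nu$ and $\ell$ decreasing, so that $\ell(s-c_r(\nu))$ is jointly convex in $(s,\nu)$. Then $(s,\nu,\chi)\mapsto (s-\nu G)^2/(2\chi)$ is a jointly convex perspective, and after minimizing in $s$ the result is convex in $(\nu,\chi)$, but the $-\nu^2/(2\alpha\chi)$ term is concave. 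We would handle this by a Gordon/min–max style duality, writing $-\nu^2/(2\alpha\chi)=\min_{\tau}\{\ldots\}$ or passing to variables $(\nu,t=1/\chi)$ and showing that the sup over $t$ of the whole expression equals a convex function of $\nu$ through a dual representation $\sup_t=\min_w$. If this fails, the fallback is to verify $\mathcal V''\ge\lambda$ directly via the implicit function theorem for $\chi_\ast(\nu)$.
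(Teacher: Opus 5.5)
Your stationarity computation follows the paper's route: the envelope theorem for $M_r$, Danskin plus Stein for $c_r'$, Stein for $\E[AG]$, and the identity $\E[AS]=\nu\E[AG]+\chi\E[A^2]$. There is, however, a genuine gap. The strong convexity of $\mathcal V_{\alpha,r}$, which you correctly flag as the main obstacle, is left as an unexecuted plan, so uniqueness of $\nu_\ast$ is not established.

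The missing idea is to dualize the logistic loss, not the term $-\nu^2/(2\alpha\chi)$. Use $\ell(t)=\sup_{a\in[0,1]}\{-at-I(a)\}$ with $I(a)=a\log a+(1-a)\log(1-a)$, and exchange $\min_s$ with $\sup_a$. This gives
\[
M_r(\nu,\chi)=\sup_{a(\cdot)\in[0,1]}\E\Bigl[a(G)\bigl(c_r(\nu)-\nu G\bigr)-I(a(G))-\tfrac{\chi}{2}a(G)^2\Bigr],
\]
so $\chi$ enters only through $-\frac{\chi}{2}\E[a(G)^2]$. After exchanging the two suprema, the $\chi$-optimization is explicit: $\sup_{\chi>0}\{-\nu^2/(2\alpha\chi)-\frac{\chi}{2}\E[a^2]\}=-\nu(\E[a^2])^{1/2}/\sqrt{\alpha}$. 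Hence $\mathcal V_{\alpha,r}(\nu)-\frac{\lambda}{2}\nu^2$ is a supremum over $a(\cdot)$ of functions convex in $\nu$. Convexity holds because $c_r$ is convex and $\E[a(G)]\ge0$, and every other term is linear in $\nu$ or constant. $\lambda$-strong convexity follows. Your fallback via $\mathcal V''=\partial_{\nu\nu}\mathcal P-(\partial_{\nu\chi}\mathcal P)^2/\partial_{\chi\chi}\mathcal P$ is not obviously $\ge\lambda$, since $\partial_{\nu\nu}\mathcal P$ contains $-1/(\alpha\chi)$.

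Two further steps are wrong as written.

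First, the Moreau-envelope gradient has the opposite sign: $\partial_x e=(x-\Prox_\chi(x))/\chi=\ell'(\Prox_\chi(x))=-\sigma(-\Prox_\chi(x))$, so $\partial_\nu M_r=-\E[AG]+\E[A]c_r'(\nu)$. With your sign the $\nu$-equation becomes $\lambda=1/(\alpha\chi)+\dots$, and neither line of \eqref{eq:finite_d_map_residual} is recovered. With the correct sign your elimination goes through exactly as you outline.

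Second, your existence analysis for the $\chi$-maximizer takes the wrong limit. As $\chi\to\infty$ (i.e.\ $t\to0$), $\Prox_\chi(x)\to\infty$ by $y-\chi\sigma(-y)=x$, so $\chi A=S-\nu G\to\infty$ and $\chi^2\E[A^2]\to\infty$. As $\chi\to0$, $\chi^2\E[A^2]\le\chi^2\to0$. Combined with $\frac{d}{d\chi}\bigl(\chi^2\E[A^2]\bigr)=2\chi\E[A^2/(1+\chi B)]>0$, this shows that $\partial_\chi\mathcal P_{\alpha,r}=\frac{1}{2\chi^2}\bigl(\nu^2/\alpha-\chi^2\E[A^2]\bigr)$ changes sign exactly once, from positive to negative, for every $\nu>0$. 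This is the paper's argument. No extra condition is needed, and the hypothesis $\nu_\ast>0$ of Theorem~\ref{thm:tam_scalar} could not supply a statement for every $\nu$ anyway. Also, $\partial_\chi e<0$ gives monotonicity, not strict concavity in $t=1/\chi$; what is really used is the strict monotonicity of $\chi^2\E[A^2]$.
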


\begin{proof}
See Appendix~\ref{app:scalar_variational}.
\end{proof}

\section{Ridgeless limit and phase transition}
\label{sec:tam_phase_transitions}

The scalar theory in \sref{tam_selfconsistent} gives a limiting
variational problem for each fixed ridge parameter $\lambda>0$ and
fixed smoothing parameter $\beta<\infty$. In this section we take the
sequential ridgeless limit: first $n,d\to\infty$ with
$n/d^2\to\alpha$ and $k/n\to r$, and then
$\lambda\downarrow0$ while keeping $\beta\in(0,\infty)$ fixed. The
limit separates two regimes. Below the critical load
$\alpha_c(r)$ defined in \eqref{eq:ridgeless_alpha_c} below, the scalar
norm $\nu$ escapes to infinity as $\lambda \downarrow 0$ and the limiting fitting loss vanishes; we
call this the satisfiable, or SAT, phase. Above this critical load the
scalar norm remains finite and the limiting fitting loss is positive;
we call this the unsatisfiable, or UNSAT, phase. After deriving the
transition from the variational principle, we describe the score,
margin, and percentile profiles on both sides.

All results in this section concern the scalar saddle problem of
Definition~\ref{def:scalar_potentials}. They translate into
statements about the original TAM minimizer $(\Wh,\muh)$ through the
scalar identification in Theorem~\ref{thm:tam_scalar}.

\subsection{The critical load}

\begin{definition}[Ridgeless threshold]\label{def:ridgeless_threshold}
Let $G\sim N(0,1)$, and define
\begin{equation}\label{eq:kappa_r_def}
    \kappa_r:=\frac{\varphi(\Phi^{-1}(1-r))}{r},
\end{equation}
where $\Phi$ and $\varphi$ are the CDF and density of a standard
normal random variable. The critical load is
\begin{equation}\label{eq:ridgeless_alpha_c}
    \alpha_c(r)
    =
    \frac{1}{\E[(\kappa_r-G)_+^2]}
    =
    \frac{1}{
        (1+\kappa_r^2)\Phi(\kappa_r)+\kappa_r\varphi(\kappa_r)
    }.
\end{equation}
\end{definition}

In what follows, we first explain the
origin and underlying ideas behind the transition at $\alpha_c(r)$. The formal proof is then given in
Proposition~\ref{prop:ridgeless_phase_transition}.

We start from the dual form of the reduced potential in
\eqref{eq:reduced_potential_dual}, writing the ridge parameter
explicitly:
\begin{equation}\label{eq:section5_dual_potential}
\begin{aligned}
    \mathcal V_{\alpha,r,\lambda}(\nu)
    &=
    \frac{\lambda}{2}\nu^2
    +
    \sup_{0\le a(\cdot)\le1}
    \Bigg\{
    \E\!\left[
    a(G)\bigl(c_r(\nu)-\nu G\bigr)-I(a(G))
    \right]
    \\
    &\hspace{3.5cm}
    -
    \frac{\nu}{\sqrt{\alpha}}
    \left(\E[a(G)^2]\right)^{1/2}
    \Bigg\},
\end{aligned}
\end{equation}
where $G\sim N(0,1)$ and
$I(a)=a\log a+(1-a)\log(1-a)$. The supremum is over measurable
functions $a:\R\to[0,1]$. We write
$\mathcal V_{\alpha,r,0}$ for the ridgeless potential obtained by
setting $\lambda=0$; explicitly,
\begin{equation}\label{eq:ridgeless_potential_explicit}
\begin{aligned}
    \mathcal V_{\alpha,r,0}(\nu)
    =
    \sup_{0\le a(\cdot)\le1}
    \Bigg\{
    \E\!\left[
    a(G)\bigl(c_r(\nu)-\nu G\bigr)-I(a(G))
    \right]
    -
    \frac{\nu}{\sqrt{\alpha}}
    \left(\E[a(G)^2]\right)^{1/2}
    \Bigg\}.
\end{aligned}
\end{equation}

The phase transition is determined by the large-$\nu$ behavior of the
ridgeless potential $\mathcal V_{\alpha,r,0}$. The only input from the
TAM threshold $c_r(\nu)$ needed for the leading-order calculation is
the asymptotic relation
\begin{equation}\label{eq:cr_over_nu_limit}
    \frac{c_r(\nu)}{\nu}\longrightarrow \kappa_r .
\end{equation}
We defer the verification of \eqref{eq:cr_over_nu_limit} to the
proof of Proposition~\ref{prop:ridgeless_phase_transition}.
Now set $m_r(\nu):=c_r(\nu)/\nu$. For fixed $a(\cdot)$, the
linear-in-$\nu$ part of \eqref{eq:section5_dual_potential} is
\begin{equation}
    \nu
    \left[
    \E\!\left[a(G)(m_r(\nu)-G)\right]
    -
    \frac{1}{\sqrt{\alpha}}\|a\|_2
    \right],
    \qquad
    \|a\|_2:=\left(\E[a(G)^2]\right)^{1/2}.
\end{equation}
The entropy term $I(a)$ is not multiplied by $\nu$.
Thus the large-norm behavior is governed by a simple question: can we
choose a nonzero dual profile $a$ so that the coefficient of the
leading linear term in $\nu$ is positive?

Letting $\nu\to\infty$ in this coefficient and using
\eqref{eq:cr_over_nu_limit}, the relevant expression becomes
\begin{equation}
    \E[a(G)(\kappa_r-G)]
    -
    \frac{1}{\sqrt{\alpha}}\|a\|_2 .
\end{equation}
Since $a\ge0$, the negative part of $\kappa_r-G$ can only hurt, so
the best profile should live on the event $\{G<\kappa_r\}$. More
precisely, by Cauchy--Schwarz, for any $0\le a\le1$,
\begin{equation}
    \E[a(G)(\kappa_r-G)]
    \le
    \E[a(G)(\kappa_r-G)_+]
    \le
    \|a\|_2\,
    \left(\E[(\kappa_r-G)_+^2]\right)^{1/2}.
\end{equation}
The upper bound is sharp despite the pointwise constraint $a\le1$.
Indeed, equality in Cauchy--Schwarz would ask for
$a$ proportional to $(\kappa_r-G)_+$, which is unbounded. We can
approximate this direction by the admissible truncations
\begin{equation}
    a_M(G):=\min\left\{\frac{(\kappa_r-G)_+}{M},1\right\},
    \qquad M\to\infty .
\end{equation}
The ratio
$\E[a_M(G)(\kappa_r-G)]/\|a_M\|_2$ converges to
$\left(\E[(\kappa_r-G)_+^2]\right)^{1/2}$. Hence a positive
large-$\nu$ direction exists exactly when
$\alpha>\alpha_c(r)$, with $\alpha_c(r)$ as in
\eqref{eq:ridgeless_alpha_c}. 

\begin{proposition}[Ridgeless scalar phase transition]\label{prop:ridgeless_phase_transition}
Fix $r\in(0,1)$ and $\beta\in(0,\infty)$. For each
$\lambda>0$, let $\nu_\lambda$ be the unique minimizer of
$\mathcal V_{\alpha,r,\lambda}$.
\begin{enumerate}[label=\textup{(\roman*)}]
    \item If $\alpha<\alpha_c(r)$, then
    \begin{equation}
        \nu_{\lambda}\to\infty
        \qquad\text{and}\qquad
        \mathcal V_{\alpha,r,0}(\nu_\lambda)
        \to0
    \end{equation}
    as $\lambda\downarrow0$.

    \item If $\alpha>\alpha_c(r)$, then $\nu_{\lambda}$ remains
    bounded as $\lambda\downarrow0$ and converges to the unique
    minimizer $\nu_0<\infty$ of $\mathcal V_{\alpha,r,0}$. The
    ridgeless fitting loss $\mathcal V_{\alpha,r,0}(\nu_0)$ is
    strictly positive.
\end{enumerate}
\end{proposition}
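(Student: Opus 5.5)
We will study the ridgeless potential $\mathcal V_{\alpha,r,0}$ through its large-$\nu$ asymptotics and then pass from $\lambda>0$ to $\lambda=0$ by monotonicity and epi-convergence.

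\textbf{Preliminary facts.} We first verify \eqref{eq:cr_over_nu_limit}. Rescaling $\mu=\nu t$ in \eqref{eq:c_r_nu_def} gives
\[
c_r(\nu)/\nu=\min_t\bigl\{t+\tfrac1r\E[\nu^{-1}\phi_\beta(\nu(Z-t))]\bigr\}.
\]
Since $0\le\phi_\beta(x)-x_+\le(\log2)/\beta$, this converges uniformly to the CVaR value $\min_t\{t+\tfrac1r\E(Z-t)_+\}=\E[Z\mid Z\ge\Phi^{-1}(1-r)]=\kappa_r$. Moreover $|c_r(\nu)-\nu\kappa_r|\le C$ with $C=(\log 2)/(r\beta)$.

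Next we record structural properties of $\mathcal V_{\alpha,r,0}$.
\begin{itemize}
\item It is convex in $\nu$, as the limit of the $\lambda$-strongly convex $\mathcal V_{\alpha,r,\lambda}$ from Proposition~\ref{prop:scalar_variational}.
\item It is nonnegative: take $a\equiv 0$.
\item For each $\lambda$ it equals the limiting fitting loss $\inf_s$-envelope value $\E\ell(M_\ast)$ minus the ridge term. We will use only the dual representation \eqref{eq:ridgeless_potential_explicit}.
\end{itemize}

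\textbf{Asymptotic slope.} Define $D:=\sup_{0\le a\le1}\{\E[a(\kappa_r-G)]-\alpha^{-1/2}\|a\|_2\}$. The Cauchy--Schwarz bound and the truncation argument above show that $D=0$ when $\alpha<\alpha_c$, and $D>0$ when $\alpha>\alpha_c$ (attained up to the truncations $a_M$). Since $0\le -I(a)\le\log 2$ and $|c_r(\nu)-\nu\kappa_r|\le C$, we get
\[
\nu D-C'\le\mathcal V_{\alpha,r,0}(\nu)\le \nu D+C'.
\]
This sandwich is not precise enough for case (i), where $D=0$. There we need the sharper statement $\mathcal V_{\alpha,r,0}(\nu)\to0$ as $\nu\to\infty$. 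To prove it, write the objective with $m_r(\nu)=c_r(\nu)/\nu\le\kappa_r+C/\nu$. Cauchy--Schwarz gives
\[
\E[a(m_r-G)]\le\|a\|_2\sqrt{\E(m_r-G)_+^2}\le\|a\|_2\bigl(\alpha_c^{-1/2}+O(1/\nu)\bigr),
\]
so the linear part is at most $-\nu\|a\|_2(\alpha^{-1/2}-\alpha_c^{-1/2})+O(\|a\|_2)$, with a strictly negative coefficient $\delta:=\alpha^{-1/2}-\alpha_c^{-1/2}>0$. The entropy term $-\E I(a)$ is at most of order $\E[a\log(1/a)]+\E[a]$, which is small when $\|a\|_2$ is small; for instance, $\E[-I(a)]\le C\|a\|_2\log(e/\|a\|_2)$ by a Jensen/entropy bound. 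Hence
\[
\mathcal V_{\alpha,r,0}(\nu)\le\sup_{x\in[0,1]}\bigl\{-\delta\nu x+Cx\log(e/x)+Cx\bigr\},
\]
which tends to $0$ as $\nu\to\infty$, since the maximizing $x$ is of order $e^{-c\nu}$. This entropy bound is the step I expect to be the main technical obstacle, because the naive bound $\log 2$ is not enough.

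\textbf{Case (i), $\alpha<\alpha_c$.} Write $\mathcal V_\lambda=\frac\lambda2\nu^2+\mathcal V_0$. Then $\min\mathcal V_\lambda\le\inf_\nu\{\frac\lambda2\nu^2+\mathcal V_0(\nu)\}\to\inf\mathcal V_0=0$, choosing $\nu=\lambda^{-1/3}$. Since $\mathcal V_0(\nu_\lambda)\le\mathcal V_\lambda(\nu_\lambda)$, this gives $\mathcal V_0(\nu_\lambda)\to0$. To get $\nu_\lambda\to\infty$ we need $\mathcal V_0(\nu)>0$ for every finite $\nu$. For fixed $\nu$, take $a=\epsilon\,\mathbf 1\{G<-K\}$: the gain $\epsilon\,\E[(c_r-\nu G)\mathbf 1]$ is linear in $\epsilon$, the norm penalty is $O(\epsilon)$ with a fixed coefficient, and $-I(\epsilon)\approx\epsilon\log(1/\epsilon)$ dominates as $\epsilon\to0$. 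Hence $\mathcal V_0(\nu)>0$, and by continuity $\mathcal V_0$ is bounded away from $0$ on compacts. Therefore $\nu_\lambda$ cannot stay bounded.

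\textbf{Case (ii), $\alpha>\alpha_c$.} The lower bound $\mathcal V_0(\nu)\ge\nu D-C'$ with $D>0$ makes $\mathcal V_0$ coercive, so a minimizer $\nu_0$ exists. Since $\mathcal V_\lambda\ge\mathcal V_0$, all $\nu_\lambda$ lie in the bounded sublevel set $\{\mathcal V_0\le\mathcal V_1(\nu_0)\}$. Uniqueness of $\nu_0$ would follow from strict convexity of $\mathcal V_0$. I would try to deduce this from strict concavity of $\mathcal P$ in $\chi$ together with the strict convexity of the Moreau envelope in $\nu$, and would accept a uniqueness argument via the dual if needed. Given uniqueness, the $\mathcal V_\lambda$ converge uniformly on compacts to $\mathcal V_0$, so every limit point of $\nu_\lambda$ minimizes $\mathcal V_0$, which gives $\nu_\lambda\to\nu_0$. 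Finally, $\mathcal V_0(\nu_0)>0$ by the positivity argument of case (i) at finite $\nu_0$.
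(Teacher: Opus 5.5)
Apart from one step, your argument is sound and has the same architecture as the paper's: Cauchy--Schwarz with the truncations $a_M$ for the slope, an entropy bound giving $\mathcal V_{\alpha,r,0}(\nu)\to0$ in the SAT case, and coercivity in the UNSAT case. Your variants are harmless. The uniform bound $0\le c_r(\nu)-\nu\kappa_r\le(\log 2)/(r\beta)$ replaces the paper's monotone-convergence argument. The small test profiles $a=\epsilon\mathbf{1}\{G<-K\}$ replace the paper's identification of $\mathcal V_{\alpha,r,0}$ with the scalar logistic loss when proving positivity at finite $\nu$. The entropy step you worried about is immediate: $-I(a)\le a\log(e/a)$, and $x\mapsto x\log(e/x)$ is concave and increasing on $[0,1]$, so Jensen gives $\E[-I(a(G))]\le\|a\|_2\log(e/\|a\|_2)$. (The paper uses $-I(x)\le C_px^p$ instead.)

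The genuine gap is uniqueness of the ridgeless minimizer in (ii). The statement asserts it, but you only assume it. The route you suggest through $\mathcal P_{\alpha,r}(\nu,\chi)$ does not work. For fixed $\chi$ the potential contains the concave term $-\nu^2/(2\alpha\chi)$, so $\mathcal P_{\alpha,r}(\cdot,\chi)$ need not be convex in $\nu$. Convexity of $\mathcal V_{\alpha,r,0}$ only appears after $\chi$ is eliminated, when that term becomes the linear penalty $-\nu\alpha^{-1/2}\|a\|_2$ in \eqref{eq:reduced_potential_dual}. In that dual form every term is affine in $\nu$ except $\E[a(G)]\,c_r(\nu)$, so strictness can only come from $c_r$.

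Two ingredients are missing:
\begin{enumerate}
\item[(a)] For finite $\beta$, $c_r$ is strictly convex. The envelope theorem and \eqref{eq:mu_r_characterization} give
\[
c_r''(\nu)=\frac1r\Bigl\{\E[w_\nu Z^2]-\frac{\E[w_\nu Z]^2}{\E[w_\nu]}\Bigr\}>0,
\qquad w_\nu:=\phi_\beta''\bigl(\nu Z-\mu_r(\nu)\bigr)>0 .
\]
\item[(b)] At each $\nu$ the dual supremum is attained by some $a_\ast$ with $\E[a_\ast]>0$. The paper takes $a_\ast=\sigma(-S_{\nu,\chi}+c_r(\nu))\in(0,1)$. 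Alternatively, use weak compactness of $\{0\le a\le 1\}$ in $L^2$ and weak upper semicontinuity of the concave dual objective; your own positivity result $\mathcal V_{\alpha,r,0}(\nu)>0$ then rules out $a_\ast=0$.
\end{enumerate}
Evaluate the dual objective at the maximizer for $\nu_\theta=\theta\nu_1+(1-\theta)\nu_2$. This gives $\mathcal V_{\alpha,r,0}(\nu_\theta)<\theta\mathcal V_{\alpha,r,0}(\nu_1)+(1-\theta)\mathcal V_{\alpha,r,0}(\nu_2)$, hence a unique $\nu_0$. Without this step, your compactness argument only places the limit points of $\nu_\lambda$ in a possibly non-singleton set of minimizers.
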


\begin{proof}
We first justify \eqref{eq:cr_over_nu_limit}. Rescaling the
definition of $c_r$ by writing $\mu=\nu t$ gives
\begin{equation}\label{eq:cr_over_nu_rescaled}
    \frac{c_r(\nu)}{\nu}
    =
    \min_{t\in\R}
    \left\{
    t+\frac{1}{r}\E\!\left[
    \frac{\phi_\beta(\nu(G-t))}{\nu}
    \right]
    \right\}.
\end{equation}
For each fixed $y$, the function
$\nu\mapsto \phi_\beta(\nu y)/\nu$ is nonincreasing and converges to
$y_+$. Thus the limiting problem is
\begin{equation}
    \min_{t\in\R}
    \left\{
    t+\frac{1}{r}\E[(G-t)_+]
    \right\}.
\end{equation}
Its minimizer is $t=\Phi^{-1}(1-r)$ and its value is $\kappa_r$, which proves
\eqref{eq:cr_over_nu_limit}. In fact the convergence is monotone
decreasing.

\emph{SAT case: $\alpha<\alpha_c(r)$.} By
\eqref{eq:cr_over_nu_limit} and the definition
\eqref{eq:ridgeless_alpha_c}, there are $\delta>0$ and $\nu_0<\infty$
such that, for all $\nu\ge\nu_0$,
\begin{equation}
    \left(\E[(m_r(\nu)-G)_+^2]\right)^{1/2}
    \le
    \frac{1}{\sqrt{\alpha}}-\delta .
\end{equation}
Therefore, uniformly over $0\le a\le1$,
\begin{equation}
    \E[a(G)(m_r(\nu)-G)]
    -
    \frac{1}{\sqrt{\alpha}}\|a\|_2
    \le
    -\delta\|a\|_2 .
\end{equation}
The remaining terms in the dual representation are lower order. More
precisely, for any $p\in(0,1)$ there is $C_p<\infty$ such that
$-I(x)\le C_p x^p$ for $0\le x\le1$, while
$\E[a^p]\le\|a\|_2^p$. Hence
\begin{equation}
\begin{aligned}
    \mathcal V_{\alpha,r,0}(\nu)
    &\le
    \sup_{0\le x\le1}
    \left\{
    -\delta\nu x+C_p x^p
    \right\}
    \longrightarrow0 \quad \text{as } \nu \to \infty.
\end{aligned}
\end{equation}
Since $a=0$ is admissible, $\mathcal V_{\alpha,r,0}(\nu)\ge0$, and
therefore $\mathcal V_{\alpha,r,0}(\nu)\to0$ as $\nu\to\infty$.

To identify the variational value with the scalar fitting loss, fix
$\nu$ and let $\chi(\nu)$ be the unique maximizer of
$\mathcal P_{\alpha,r,\lambda}(\nu,\cdot)$, whose existence and
uniqueness are given by Proposition~\ref{prop:scalar_variational}.
With the scalar proximal channel $S_{\nu,\chi}$ defined in
\eqref{eq:scalar_channel_nu_chi}, the identity
\begin{equation}
    \mathcal V_{\alpha,r,\lambda}(\nu)
    =
    \frac{\lambda}{2}\nu^2
    +
    \E\!\left[
    \ell(S_{\nu,\chi}-c_r(\nu))
    \right]
\end{equation}
follows from a short cancellation. Writing
$A:=\sigma(-S_{\nu,\chi}+c_r(\nu))$, the proximal equation gives
$S_{\nu,\chi}-\nu G=\chi A$, and hence
\begin{equation}
    M_r(\nu,\chi)
    =
    \frac{\chi}{2}\E[A^2]
    +
    \E\!\left[
    \ell(S_{\nu,\chi}-c_r(\nu))
    \right].
\end{equation}
At the maximizing $\chi$, the saddle equation
$\nu^2=\alpha\chi^2\E[A^2]$ gives
$\nu^2/(2\alpha\chi)=(\chi/2)\E[A^2]$, so the two quadratic terms in
$\mathcal P_{\alpha,r,\lambda}$ cancel. Thus the ridgeless potential
is the scalar fitting loss and is strictly positive at every finite
$\nu$.
For any $\varepsilon>0$, choose $N$ so large that
$\mathcal V_{\alpha,r,0}(N)\le\varepsilon$, and then choose
$\lambda$ so small that $\lambda N^2/2\le\varepsilon$. This gives
\begin{equation}
    \mathcal V_{\alpha,r,\lambda}(\nu_\lambda)
    \le
    2\varepsilon .
\end{equation}
Hence the optimal value tends to zero. If $\nu_\lambda$ remained
bounded along a subsequence, lower semicontinuity would force a
finite point with zero ridgeless fitting loss, contradicting strict
positivity. Therefore $\nu_\lambda\to\infty$, and the displayed
identity implies that the per-sample fitting loss tends to zero.

\emph{UNSAT case: $\alpha>\alpha_c(r)$.} By the sharpness of the
Cauchy--Schwarz bound, there is a bounded measurable
$a:\R\to[0,1]$ and a number $\delta>0$ such that
\begin{equation}
    \E[a(G)(\kappa_r-G)]
    -
    \frac{1}{\sqrt{\alpha}}\|a\|_2
    \ge
    2\delta .
\end{equation}
Using $m_r(\nu)\to\kappa_r$ and bounded convergence, the same $a$
satisfies, for all sufficiently large $\nu$,
\begin{equation}
    \E[a(G)(m_r(\nu)-G)]
    -
    \frac{1}{\sqrt{\alpha}}\|a\|_2
    \ge
    \delta .
\end{equation}
Substituting this fixed test function into
\eqref{eq:section5_dual_potential} gives
\begin{equation}
    \mathcal V_{\alpha,r,0}(\nu)
    \ge
    \delta\nu-O(1),
\end{equation}
so the ridgeless potential is coercive. The minimizers of
$\mathcal V_{\alpha,r,\lambda}
=\mathcal V_{\alpha,r,0}+\lambda\nu^2/2$ therefore remain in a
compact set as $\lambda\downarrow0$. Any limit point minimizes
$\mathcal V_{\alpha,r,0}$ by the standard variational argument:
compare $\nu_\lambda$ against an arbitrary fixed $\nu$ and let
$\lambda\downarrow0$.

We now show that the ridgeless minimizer is unique. For finite
$\beta$, the TAM threshold $c_r$ is strictly convex. Indeed, with
\begin{equation}
    w_\nu(Z):=\phi_\beta''\!\bigl(\nu Z-\mu_r(\nu)\bigr),
\end{equation}
the envelope theorem and the defining equation for $\mu_r(\nu)$ give
\begin{equation}
    c_r''(\nu)
    =
    \frac{1}{r}
    \left\{
    \E[w_\nu(Z)Z^2]
    -
    \frac{\E[w_\nu(Z)Z]^2}{\E[w_\nu(Z)]}
    \right\}
    >0 .
\end{equation}
For each finite $\nu$, the dual maximizer in
\eqref{eq:ridgeless_potential_explicit} has positive expectation. In
the primal variables this maximizer is
$a(G)=\sigma(-S_{\nu,\chi}+c_r(\nu))$, and hence
$0<a(G)<1$ almost surely. Therefore
$\mathcal V_{\alpha,r,0}$ is strictly convex: if
$\nu_\theta=\theta\nu_1+(1-\theta)\nu_2$ with
$0<\theta<1$ and $\nu_1\ne\nu_2$, evaluating the dual formula at the
maximizer for $\nu_\theta$ and using the strict convexity of $c_r$
gives
\begin{equation}
    \mathcal V_{\alpha,r,0}(\nu_\theta)
    <
    \theta\mathcal V_{\alpha,r,0}(\nu_1)
    +
    (1-\theta)\mathcal V_{\alpha,r,0}(\nu_2).
\end{equation}
Thus the ridgeless minimizer is unique; call it $\nu_0$. Hence
$\nu_\lambda\to\nu_0$. Since $\nu_0$ is finite, the corresponding
scalar logistic loss is strictly positive.
\end{proof}

\begin{remark}[No dependence on the smoothing parameter]
The critical load $\alpha_c(r)$ does not depend on the smoothing
parameter $\beta\in(0,\infty)$. The parameter $\beta$ affects the
finite-$\lambda$ scalar channel and the finite ridgeless minimizer in
the UNSAT phase, but it disappears from the large-$\nu$ limit
\eqref{eq:cr_over_nu_limit}, where the smoothed logistic tail
$\phi_\beta(\nu y)/\nu$ converges to $y_+$.
\end{remark}

\subsection{Score, margin, and percentile profiles}

We next translate the ridgeless phase transition into predictions for
the true scores, the competing scores, the margins, and the percentile
rank of the true score among its competitors. The two phases have
different scaling behavior. In the SAT phase,
Proposition~\ref{prop:ridgeless_phase_transition} shows that
$\nu_\lambda\to\infty$. Thus the true scores and the competitor
scores both diverge on the original scale. The meaningful objects are
the normalized scores, obtained by dividing by $\nu_\lambda$, and
these have a non-degenerate limit. In the UNSAT phase, by contrast,
$\nu_\lambda$ remains finite, so the ridgeless profiles live on the
original scale.

\subsubsection{The SAT phase}

We begin with the SAT phase, where the limiting normalized profiles
are explicit. The single parameter controlling them is the number
$\rho_\alpha>\kappa_r$, defined as the unique solution of
\begin{equation}\label{eq:rho_alpha_equation}
    \E[(\rho_\alpha-G)_+^2]
    =
    \frac{1}{\alpha},
    \qquad G\sim N(0,1).
\end{equation}
The left-hand side is strictly increasing in $\rho_\alpha$, and the
boundary value $\rho_\alpha=\kappa_r$ is exactly
\eqref{eq:ridgeless_alpha_c}. Hence $\rho_\alpha$ is monotone
decreasing in $\alpha$ and satisfies
$\rho_\alpha\downarrow\kappa_r$ as
$\alpha\uparrow\alpha_c(r)$.

\begin{proposition}[SAT score, margin, and percentile profiles]\label{prop:sat_profiles}
Assume $\alpha<\alpha_c(r)$, and let $\rho_\alpha>\kappa_r$ be the
unique solution of \eqref{eq:rho_alpha_equation}. In the sequential
ridgeless limit $\lambda\downarrow0$, the normalized true score
satisfies
\begin{equation}\label{eq:sat_score_profile_statement}
    \frac{
    S_{\nu_{\lambda},\chi_{\lambda}}
    }{\nu_{\lambda}}
    \Rightarrow
    \max\{G,\rho_\alpha\},
    \qquad G\sim N(0,1).
\end{equation}
Consequently, the normalized TAM margin satisfies
\begin{equation}\label{eq:sat_margin_profile}
    \frac{
    S_{\nu_{\lambda},\chi_{\lambda}}
    -
    c_r(\nu_{\lambda})
    }{\nu_{\lambda}}
    \Rightarrow
    \max\{G,\rho_\alpha\}-\kappa_r,
    \qquad G\sim N(0,1).
\end{equation}
Here and below, $\Rightarrow$ denotes weak convergence of laws as
$\lambda\downarrow0$. The associated percentile variable satisfies
\begin{equation}\label{eq:sat_percentile_profile}
    \Omega
    \Rightarrow
    \Phi\!\left(\max\{G,\rho_\alpha\}\right).
\end{equation}
Equivalently, if $p_\alpha:=\Phi(\rho_\alpha)$, then
\begin{equation}\label{eq:sat_percentile_cdf}
    \P(\Omega\le \omega)
    =
    \begin{cases}
        0, & \omega<p_\alpha,\\
        \omega, & p_\alpha\le \omega\le 1.
    \end{cases}
\end{equation}
\end{proposition}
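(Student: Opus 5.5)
The plan is to work directly from the scalar channel \eqref{eq:scalar_channel_nu_chi}, writing $S=S_{\nu,\chi}$ with $(\nu,\chi)=(\nu_\lambda,\chi_\lambda)$. The proximal equation \eqref{eq:prox_chi_foc} gives the identity $S=\nu G+\chi A$, where $A=\sigma(-S+c_r(\nu))$. After dividing by $\nu$ and setting $x:=S/\nu$, $m:=c_r(\nu)/\nu$ and $\tau:=\chi/\nu$, this becomes
\[
x=G+\tau\,\sigma\bigl(-\nu(x-m)\bigr).
\]
By Proposition~\ref{prop:ridgeless_phase_transition}(i) we have $\nu\to\infty$, and by \eqref{eq:cr_over_nu_limit} we have $m\to\kappa_r$. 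As $\nu\to\infty$, the sigmoid $\sigma(-\nu(x-m))$ tends to the indicator $\mathbf 1\{x<m\}$, except in a boundary layer of width $O(1/\nu)$.

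\textbf{Step 1: the limiting normalized fixed point.} I will show that $\tau=\chi/\nu$ grows to infinity. In that regime the fixed point is pinned in the boundary layer. If $G$ lies well below $m$, then $x$ cannot stay below $m$, since otherwise $x\approx G+\tau\to\infty$. So $x=m+O(\log\tau/\nu)$. If $G>m$, then $x=G+o(1)$. It follows that $x\to\max\{G,\rho\}$, where $\rho=\lim m=\kappa_r$. This cannot be right, because the statement has $\rho_\alpha>\kappa_r$. So the pinning threshold has to sit \emph{above} $m$. The resolution is that $\chi A$ is $O(\nu)$ but $A\to0$: the sigmoid argument $-\nu(x-m)$ is large and negative. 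So I write $A=\epsilon(G)$, with $x-m=\nu^{-1}\log(1/\epsilon)$ to leading order. This allows $x$ to exceed $m$ by an $O(1)$ amount only if $\log(1/\epsilon)\sim\nu$, that is, if $\epsilon$ is exponentially small while $\tau\epsilon$ stays $O(1)$. The better scaling is therefore to let $\tau\to\infty$ and track the normalized dual profile $\hat a:=\tau A$, which is the quantity that stays $O(1)$. The equation then reads $x=G+\hat a(G)$, with $\hat a\ge0$. Wherever $\hat a>0$, we have $x-m=\nu^{-1}\log(\tau/\hat a)$, and this converges to a constant $\rho-\kappa_r\ge0$ once $\log\tau/\nu\to\rho-\kappa_r$. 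Hence $\hat a(G)\to(\rho-G)_+$ and $x\to\max\{G,\rho\}$.

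\textbf{Step 2: identifying $\rho=\rho_\alpha$.} I will use the $\chi$-saddle equation $\nu^2=\alpha\chi^2\E[A^2]$, which already appears in the proof of Proposition~\ref{prop:ridgeless_phase_transition}. In normalized variables it reads $1=\alpha\,\E[\hat a^2]$. Passing to the limit by dominated convergence, with $\hat a\le(\rho-G)_++o(1)$ coming from monotonicity of $\Prox_\chi$, gives $\E[(\rho-G)_+^2]=1/\alpha$, which is \eqref{eq:rho_alpha_equation}. Since the left side is strictly increasing in $\rho$, this identifies $\rho=\rho_\alpha$. As a consistency check, $\alpha<\alpha_c(r)$ forces $\rho_\alpha>\kappa_r$, so $\log\tau/\nu$ indeed has a positive limit. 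Equivalently, the same conclusion can be read off the dual potential \eqref{eq:section5_dual_potential}: the maximizing profile in the normalized direction is the Cauchy--Schwarz extremal $(\rho-G)_+$, truncated exactly at the norm constraint.

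\textbf{Step 3: consequences.} The margin statement \eqref{eq:sat_margin_profile} follows by subtracting $m\to\kappa_r$ and applying Slutsky. For the percentile, Theorem~\ref{thm:tam_scalar} and the remark after it give $\Omega=\Phi(S/\nu)$, and continuity of $\Phi$ yields \eqref{eq:sat_percentile_profile}. The CDF \eqref{eq:sat_percentile_cdf} is then a direct computation: $\Phi(\max\{G,\rho_\alpha\})$ has an atom at $p_\alpha$ and is uniform on $(p_\alpha,1]$.

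\textbf{Main obstacle.} The delicate step is Step 1 made rigorous. I need to establish the scaling $\log(\chi/\nu)/\nu\to\rho-\kappa_r$, and the convergence of the boundary-layer fixed point, uniformly enough in $G$ to pass limits through $\E[\hat a^2]$. I would first try to obtain a priori bounds $0\le\hat a\le(\rho-G)_++C$ from monotonicity and comparison of the fixed-point map. After that, I would use a subsequence argument on $\log\tau/\nu$, showing that any limit point must satisfy \eqref{eq:rho_alpha_equation}.
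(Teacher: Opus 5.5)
Your architecture is the paper's. Your $\hat a=\tau A$ is the paper's correction $B_\lambda=X_\lambda-G$. Your level $\lim(m+\log\tau/\nu)$ is the paper's $\rho_\lambda=(c_\lambda+\log(\chi_\lambda/\nu_\lambda))/\nu_\lambda$. The closing step (the $\chi$-saddle identity $\E[\hat a^2]=1/\alpha$, domination, uniqueness of $\rho_\alpha$) is the same.

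There is, however, a genuine gap in the order of the argument. Step 1's conclusion $\hat a\to(\rho-G)_+$ holds only if $\tau=\chi/\nu=\exp\{\nu(\rho_\lambda-m)\}\to\infty$, i.e.\ only if the subsequential limit satisfies $\rho>\kappa_r$. Your formula $x-m\approx\nu^{-1}\log(\tau/\hat a)$ drops the exact term $\nu^{-1}\log(1-\hat a/\tau)$, which dominates when $A\approx1$. Without $\tau\to\infty$, for $g<\kappa_r$ the fixed point can sit below $m$ with $\hat a\approx\tau$. For instance, with $\rho=\kappa_r$ and $\tau\to\tau_\infty<\infty$ the limit profile is $\min\{\tau_\infty,(\kappa_r-G)_+\}$, not $(\rho-G)_+$. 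You assert $\rho-\kappa_r\ge0$ without proof and then ``check'' it using $\rho=\rho_\alpha$, but that identity was derived from Step 1, so the argument is circular. You also never show that $\rho_\lambda$ is bounded in either direction, which you need before any subsequence or domination argument.

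The paper closes this with an input you never use: the vanishing ridgeless fitting loss. Since $(-m)_+\le\ell(m)$, $\E[\ell(M_\lambda)]\to0$ and $\nu_\lambda\to\infty$, it gets $\E[(\kappa_r-X_\lambda(G))_+]\to0$. Hence $\liminf X_\lambda(G)\ge\kappa_r$ a.s.\ along a subsequence. This both excludes $\rho<\kappa_r$ and excludes limits $x<\kappa_r$ for $g<\rho$. Boundedness of $\rho_\lambda$ is proved separately: $\rho_\lambda\to-\infty$ gives $\hat a\le\tau\to0$ uniformly, while $\rho_\lambda\to+\infty$ forces $\hat a\ge\rho_\lambda/2$ for $g\in[-1,0]$. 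Both contradict $\E[\hat a^2]=1/\alpha$.

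Alternatively, you can repair your own version without the loss, provided $\rho_\lambda$ is first shown bounded as above.
\begin{itemize}
\item From the exact identity $\hat a=e^{\nu(\rho_\lambda-x)}/(1+e^{c_\lambda-\nu x})$ you get $\hat a\le e^{-\nu\epsilon}+(\rho_\lambda+\epsilon-g)_+$ for every $\epsilon>0$. This, not monotonicity of $\Prox_\chi$, is the true source of your one-sided bound.
\item Hence $\limsup\hat a(g)\le(\rho-g)_+$ for \emph{any} subsequential limit $\rho$.
\item Dominate by $\hat a^2\le1+(\sup_\lambda\rho_\lambda-g)_+^2$ and apply dominated convergence to get $1/\alpha\le\E[(\rho-G)_+^2]$.
\item Since $1/\alpha>1/\alpha_c(r)=\E[(\kappa_r-G)_+^2]$, this forces $\rho\ge\rho_\alpha>\kappa_r$ \emph{before} the pointwise identification.
\end{itemize}
Only after that does $\tau\to\infty$ hold, and your boundary-layer Step 1 then goes through as you describe.
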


\begin{proof}
By Proposition~\ref{prop:ridgeless_phase_transition}, in the SAT
phase $\nu_\lambda\to\infty$ and the scalar fitting loss tends to
zero. The proof has three steps. First, we rewrite the proximal
channel in terms of the normalized score and the correction term
created by the loss. Second, we introduce the scalar quantity
$\rho_\lambda$ and use the saddle-point equation to show that it is
bounded. Third, we identify every subsequential limit of
$\rho_\lambda$; uniqueness of the equation defining $\rho_\alpha$
then gives the stated weak convergence.

Set $c_\lambda:=c_r(\nu_\lambda)$. For each deterministic scalar
input $g\in\R$, define
\begin{equation}
    S_\lambda(g)
    :=
    c_\lambda+\Prox_{\chi_\lambda}(\nu_\lambda g-c_\lambda),
    \qquad
    X_\lambda(g):=\frac{S_\lambda(g)}{\nu_\lambda}.
\end{equation}
The proximal first-order condition is the pointwise identity
\begin{equation}\label{eq:sat_channel_scaled}
    S_\lambda(g)
    =
    \nu_\lambda g
    +
    \chi_\lambda
    \sigma(-S_\lambda(g)+c_\lambda).
\end{equation}
We also define the pointwise correction
\begin{equation}\label{eq:sat_B_lambda_def}
    B_\lambda(g)
    :=
    X_\lambda(g)-g
    =
    \frac{\chi_\lambda}{\nu_\lambda}
    \sigma(-S_\lambda(g)+c_\lambda)
    \ge0 .
\end{equation}
Only now do we insert the Gaussian variable: when $G\sim N(0,1)$,
the random variables in the scalar channel are
$S_\lambda(G)$, $X_\lambda(G)$, and $B_\lambda(G)$. The $\chi$
saddle-point equation for the scalar
potential in Definition~\ref{def:scalar_potentials}, recorded in
\eqref{eq:app_chi_saddle_first} of
Appendix~\ref{app:scalar_variational}, gives
\begin{equation}\label{eq:sat_B_norm}
    \E[B_\lambda(G)^2]
    =
    \frac{1}{\alpha}.
\end{equation}
Moreover, the zero-loss statement implies that the normalized score
does not fall below the TAM threshold. Indeed, with
$M_\lambda(g):=S_\lambda(g)-c_\lambda$, we have pointwise
\begin{equation}
    \left(
    \frac{c_\lambda}{\nu_\lambda}
    -
    X_\lambda(g)
    \right)_+
    =
    \frac{(-M_\lambda(g))_+}{\nu_\lambda}.
\end{equation}
For every $m\in\R$,
\begin{equation}
    (-m)_+
    \le
    \ell(m).
\end{equation}
Therefore, since the fitting loss
$\E[\ell(M_\lambda)]$ tends to zero and
$\nu_\lambda\to\infty$,
\begin{equation}
    \E\left[
    \left(
    \frac{c_\lambda}{\nu_\lambda}
    -
    X_\lambda(G)
    \right)_+
    \right]
    \longrightarrow0 .
\end{equation}
Since $c_\lambda/\nu_\lambda\to\kappa_r$ by
\eqref{eq:cr_over_nu_limit}, the $1$-Lipschitz property of
$x\mapsto x_+$ gives
\begin{equation}\label{eq:sat_loss_implies_above_threshold}
    \E\left[
    \left(
    \kappa_r
    -
    X_\lambda(G)
    \right)_+
    \right]
    \longrightarrow0 .
\end{equation}

We now identify the limit of the channel. The useful normalization is
the level at which the sigmoid correction in
\eqref{eq:sat_B_lambda_def} changes from negligible to order one.
Define
\begin{equation}\label{eq:rho_lambda_def}
    \rho_\lambda
    :=
    \frac{c_\lambda+\log(\chi_\lambda/\nu_\lambda)}
    {\nu_\lambda}.
\end{equation}
Combining this definition with
\eqref{eq:sat_B_lambda_def} and the identity
$\sigma(x)=(1+\exp\{-x\})^{-1}$ gives the exact formula
\begin{equation}\label{eq:B_lambda_exact_identity}
    B_\lambda(g)
    =
    \frac{\exp\{\nu_\lambda(\rho_\lambda-X_\lambda(g))\}}
    {1+\exp\{c_\lambda-\nu_\lambda X_\lambda(g)\}} .
\end{equation}
The sequence $\rho_\lambda$ is bounded. Indeed, if
$\rho_\lambda\to-\infty$ along a subsequence, then
$\chi_\lambda/\nu_\lambda
=\exp\{\nu_\lambda\rho_\lambda-c_\lambda\}\to0$, and hence
$B_\lambda(g)\le\chi_\lambda/\nu_\lambda\to0$ uniformly in $g$,
contradicting
\eqref{eq:sat_B_norm}. Next suppose that
$\rho_\lambda\to+\infty$ along a subsequence. We claim that, for all
sufficiently small $\lambda$,
\begin{equation}
    B_\lambda(g)\ge \frac{\rho_\lambda}{2}
    \qquad\text{for every }g\in[-1,0].
\end{equation}
Indeed, if the reverse inequality held at some $g\in[-1,0]$, then
$X_\lambda(g)=g+B_\lambda(g)\le\rho_\lambda/2$. Using
\eqref{eq:B_lambda_exact_identity} and the boundedness of
$c_\lambda/\nu_\lambda$, we would get
\begin{equation}
    B_\lambda(g)
    \ge
    \exp\{\nu_\lambda\rho_\lambda/3\}
\end{equation}
for all sufficiently small $\lambda$, contradicting
$B_\lambda(g)<\rho_\lambda/2$. Hence the claim holds. Since
$\P(-1\le G\le0)=\Phi(0)-\Phi(-1)>0$, the claim implies
\begin{equation}
    \E[B_\lambda(G)^2]
    \ge
    \bigl(\Phi(0)-\Phi(-1)\bigr)\frac{\rho_\lambda^2}{4}
    \longrightarrow\infty,
\end{equation}
contradicting \eqref{eq:sat_B_norm}.

It remains to identify the possible limits of $\rho_\lambda$. Take a
subsequence along which $\rho_\lambda\to\rho$. Passing to a further
subsequence, we may assume from
\eqref{eq:sat_loss_implies_above_threshold} that
\begin{equation}\label{eq:sat_pointwise_above_kappa}
    \liminf_{\lambda\downarrow0} X_\lambda(G)
    \ge
    \kappa_r
    \qquad\text{almost surely}.
\end{equation}
The rest of the argument is pointwise in $g$. First, if $g>\rho$,
then $B_\lambda(g)\to0$ and $X_\lambda(g)\to g$. Indeed, any positive
subsequential limit of $B_\lambda(g)$ would force
$X_\lambda(g)=g+B_\lambda(g)>\rho_\lambda$ eventually, and the
right-hand side of \eqref{eq:B_lambda_exact_identity} would then
converge to zero. This excludes $\rho<\kappa_r$, because on the
positive-probability event $\{\rho<G<\kappa_r\}$ it would imply
$X_\lambda(G)\to G<\kappa_r$, contradicting
\eqref{eq:sat_pointwise_above_kappa}. Hence $\rho\ge\kappa_r$.

We now identify the channel limit for any subsequential limit
$\rho\ge\kappa_r$. Along any bounded subsequence of $\rho_\lambda$,
\eqref{eq:B_lambda_exact_identity} gives the domination
\begin{equation}\label{eq:sat_B_domination}
    B_\lambda(g)^2
    \le
    1+
    \left(\sup_\lambda \rho_\lambda-g\right)_+^2 .
\end{equation}
Indeed, if $B_\lambda(g)\le1$ there is nothing to prove; if
$B_\lambda(g)>1$, then \eqref{eq:B_lambda_exact_identity} implies
$X_\lambda(g)\le\rho_\lambda$, and so
$B_\lambda(g)=X_\lambda(g)-g\le(\rho_\lambda-g)_+$.

For $g>\rho$, we already proved $X_\lambda(g)\to g$. Now fix
$g<\rho$. For almost every such $g$, every subsequential limit $x$ of
$X_\lambda(g)$ satisfies $x\ge\kappa_r$ by
\eqref{eq:sat_pointwise_above_kappa}. If $x<\rho$, then the
exponential numerator in \eqref{eq:B_lambda_exact_identity} grows like
$\exp\{\nu_\lambda(\rho-x+o(1))\}$, while the denominator grows at
most like $\exp\{\nu_\lambda(\kappa_r-x)_++o(\nu_\lambda)\}$. Since
$\rho\ge\kappa_r$ and $x\ge\kappa_r$, this is impossible. If
$x>\rho$, the correction term vanishes by \eqref{eq:B_lambda_exact_identity},
forcing $x=g$, also impossible because $g<\rho$. Hence $x=\rho$.
Thus, with $G\sim N(0,1)$,
\begin{equation}\label{eq:sat_score_profile}
    X_\lambda(G)
    \longrightarrow
    \max\{G,\rho\},
    \qquad
    B_\lambda(G)
    \longrightarrow
    (\rho-G)_+ .
\end{equation}
The domination bound \eqref{eq:sat_B_domination} allows us to pass to
the second moment in \eqref{eq:sat_B_norm}, so
\begin{equation}
    \E[(\rho-G)_+^2]
    =
    \frac{1}{\alpha}.
\end{equation}
By uniqueness of the solution to \eqref{eq:rho_alpha_equation}, every
subsequential limit is $\rho_\alpha$. Hence the whole sequence
satisfies
$X_\lambda(G)\Rightarrow\max\{G,\rho_\alpha\}$. Subtracting
$c_\lambda/\nu_\lambda\to\kappa_r$ proves
\eqref{eq:sat_score_profile_statement} and
\eqref{eq:sat_margin_profile}.

The percentile distribution is obtained by comparing the diagonal
score with the limiting competitor field $N(0,\nu_\lambda^2)$. Hence
\eqref{eq:sat_percentile_profile}. The CDF formula
\eqref{eq:sat_percentile_cdf} follows by noting that
$\Phi(\max\{G,\rho_\alpha\})$ has an atom of mass
$p_\alpha=\Phi(\rho_\alpha)$ at $p_\alpha$, followed by a uniform tail
on $(p_\alpha,1)$.
\end{proof}

\subsubsection{The UNSAT phase}

Assume $\alpha>\alpha_c(r)$. In this regime the ridgeless limit does
not escape to infinite norm. By
Proposition~\ref{prop:ridgeless_phase_transition}, the minimizers
$\nu_\lambda$ converge to the unique finite minimizer $\nu_0$ of the
ridgeless reduced potential. The next proposition records the
corresponding finite-scale profiles. We omit the proof, since the
statement is obtained by specializing the scalar channel from
\sref{tam_selfconsistent} at the ridgeless minimizer.

\begin{proposition}[UNSAT score, margin, and percentile profiles]\label{prop:unsat_profiles}
Let $\chi_0$ be the maximizer in the scalar saddle problem
corresponding to $\nu_0$, and define
\begin{equation}
    c_0:=c_r(\nu_0).
\end{equation}
Then the limiting true score is the finite-scale scalar random
variable
\begin{equation}\label{eq:unsat_score_channel}
    S_0
    :=
    S_{\nu_0,\chi_0}
    =
    c_0+\Prox_{\chi_0}\!\bigl(\nu_0G-c_0\bigr),
    \qquad G\sim N(0,1).
\end{equation}
The competing scores remain on the same finite scale, with limiting
Gaussian scale $\nu_0$. The limiting TAM margin and fitting loss are
\begin{equation}\label{eq:unsat_loss_margin}
    M_0:=S_0-c_0,
    \qquad
    L_0
    :=
    \E\!\left[\ell(M_0)\right]
    >
    0.
\end{equation}
Finally, the limiting percentile rank of the true score is
\begin{equation}\label{eq:unsat_percentile}
    \Omega_0
    :=
    \Phi\!\left(\frac{S_0}{\nu_0}\right).
\end{equation}
\end{proposition}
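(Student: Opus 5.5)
The plan is to obtain each item by passing to the limit $\lambda\downarrow0$ in the fixed-$\lambda$ scalar channel, using the compactness supplied by Proposition~\ref{prop:ridgeless_phase_transition}\,(ii). Fix $\alpha>\alpha_c(r)$. That proposition gives $\nu_\lambda\to\nu_0\in(0,\infty)$. Let $\chi_\lambda$ be the unique maximizer of $\mathcal P_{\alpha,r,\lambda}(\nu_\lambda,\cdot)$, which exists by Proposition~\ref{prop:scalar_variational}.

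The first step is to show that $\chi_\lambda$ stays in a compact subset of $(0,\infty)$ and converges to $\chi_0$. For this I will use the $\chi$ saddle equation $\nu^2=\alpha\chi^2\E[A^2]$ with $A=\sigma(-S_{\nu,\chi}+c_r(\nu))$. The argument has three parts.
\begin{itemize}
\item If $\chi_\lambda\to0$, then $S\to\nu G$. Hence $\E[A^2]$ has a positive limit, so the right-hand side tends to $0\neq\nu_0^2$, a contradiction.
\item If $\chi_\lambda\to\infty$, the proximal equation $S=\nu G+\chi A$ forces $\chi A$ to stay bounded in $L^2$. The identity $\E[(\chi A)^2]=\nu^2/\alpha$ then shows $\E[A^2]\to0$. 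This requires $S-c_r\to+\infty$ in probability, which would make the loss vanish. That contradicts the strict positivity of $\mathcal V_{\alpha,r,0}(\nu_0)$ from Proposition~\ref{prop:ridgeless_phase_transition}.
\item $\mathcal P_{\alpha,r,0}(\nu_0,\cdot)$ is strictly concave, or at least has a unique maximizer by the argument of Proposition~\ref{prop:scalar_variational}, which also applies at $\lambda=0$. So the subsequential limits of $\chi_\lambda$ are all equal to this maximizer $\chi_0$, since $\mathcal P_{\alpha,r,\lambda}\to\mathcal P_{\alpha,r,0}$ locally uniformly.
\end{itemize}

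The second step uses continuity. The map $(\nu,\chi,g)\mapsto c_r(\nu)+\Prox_\chi(\nu g-c_r(\nu))$ is jointly continuous: $c_r$ is convex and finite, hence continuous, and $\Prox_\chi(x)$ is continuous in $(\chi,x)$ by the implicit function theorem applied to $y-\chi\sigma(-y)=x$. It follows that $S_{\nu_\lambda,\chi_\lambda}\to S_0$ almost surely, which identifies the score channel \eqref{eq:unsat_score_channel}. The margin $M_\lambda=S_\lambda-c_r(\nu_\lambda)$ converges to $M_0$ in the same way. Because $\ell(M)\le \ell(\nu G-c)$-type bounds give domination, we get $\E[\ell(M_\lambda)]\to L_0$. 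Positivity $L_0>0$ follows from the fact that $\mathcal V_{\alpha,r,0}(\nu_0)=\E[\ell(M_0)]$, by the cancellation identity in the proof of Proposition~\ref{prop:ridgeless_phase_transition}. It also follows directly, since $\ell>0$ everywhere.

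The last step gives the competitor and percentile statements. By Theorem~\ref{thm:tam_scalar}\,(ii), the competitors have law $\mathcal N(0,\nu_\lambda^2)\Rightarrow\mathcal N(0,\nu_0^2)$. The percentile is therefore the continuous functional $\Phi(S_\lambda/\nu_\lambda)$, which converges to $\Omega_0$.

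The main obstacle is the first step, in particular ruling out $\chi_\lambda\to\infty$. I expect to handle it via the loss-positivity contradiction above, together with uniform integrability of $\chi_\lambda A_\lambda$ coming from the saddle identity.
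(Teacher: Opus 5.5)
Your proposal is correct and fills in the argument the paper omits: the paper obtains the statement by evaluating the fixed-$\lambda$ scalar channel at the ridgeless pair $(\nu_0,\chi_0)$ supplied by Proposition~\ref{prop:ridgeless_phase_transition}, with $L_0>0$ immediate from $\ell>0$, and your limiting argument makes the passage $\lambda\downarrow0$ explicit. Your first step can be shortened considerably: $\lambda$ enters $\mathcal P_{\alpha,r}$ only through the $\chi$-independent term $\frac{\lambda}{2}\nu^2$, so the maximizer $\chi(\nu)$ is the same for every $\lambda\ge0$, and then $\chi_\lambda=\chi(\nu_\lambda)\to\chi(\nu_0)=\chi_0$ follows from continuity of the unique root of $\nu^2=\alpha\chi^2\E[A^2]$ (whose right side is strictly increasing in $\chi$, as shown in Appendix~\ref{app:scalar_variational}), which makes the three-case compactness argument unnecessary.
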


\begin{remark}
Unlike the phase boundary and the SAT normalized profiles, these
UNSAT quantities generally retain a dependence on the fixed value of
$\beta$ through the scalar functions $c_r(\nu)$ and
$S_{\nu,\chi}$.
\end{remark}

\subsection{Numerical validation}

We now give finite-dimensional numerical checks of the scalar
predictions. The most direct prediction is the phase boundary of
Proposition~\ref{prop:ridgeless_phase_transition}: as $(\alpha,r)$
crosses the curve $\alpha=\alpha_c(r)$, the optimized fitting loss
should separate from zero and the norm scale
$\nu=\|\Wh\|_{\mathsf F}/d$ should change sharply. Figure
\ref{fig:phase_diagram_validation} tests this prediction in a
two-dimensional sweep over the load $\alpha=n/d^2$ and the tail scale
$r=k/(n-1)$. The empirical transition extracted from the simulations is
closely aligned with the theoretical curve \eqref{eq:ridgeless_alpha_c},
both in the loss landscape and in the growth of the norm scale.

\begin{figure}[t]
    \centering
    \includegraphics[width=\textwidth]{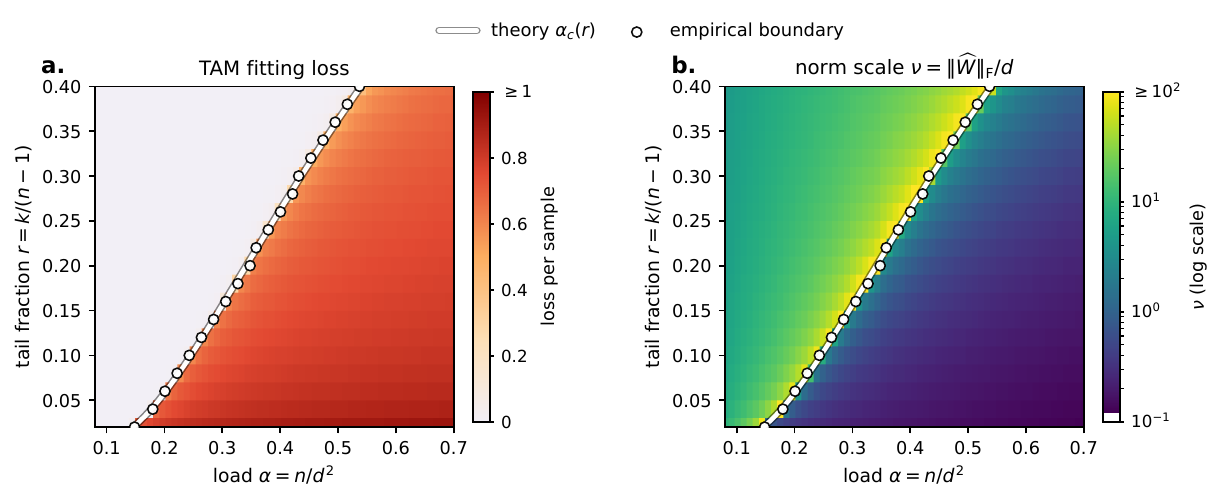}
    \caption{Finite-dimensional validation of the predicted phase
    boundary. The solid curve is the theoretical threshold
    $\alpha_c(r)$ from \eqref{eq:ridgeless_alpha_c}, and open circles
    show empirical boundary estimates from the numerical sweep.
    \textbf{a.} Optimized TAM fitting loss per sample, after
    subtracting the ridge contribution; colors are clipped at one.
    \textbf{b.} Norm scale
    $\nu=\|\Wh\|_{\mathsf F}/d$ on a logarithmic color scale; colors
    are clipped at $100$. Both panels use $d=400$ and small ridge
    regularization $\lambda = 1e-6$. The observed transition is sharply aligned with
    the scalar prediction.}
    \label{fig:phase_diagram_validation}
\end{figure}

The phase diagram gives a global view of the transition. We next look
more locally, along a one-dimensional slice in $\alpha$ (with $r = 0.15$), and compare
the finite-dimensional optimizer with the scalar predictions on both
sides of the transition. This second comparison provides more information: it
tests not only the location of the phase boundary, but also the
predicted loss, norm scale, margin profile, and percentile distribution.

\begin{figure}[tbp]
    \centering
    \includegraphics[width=\textwidth]{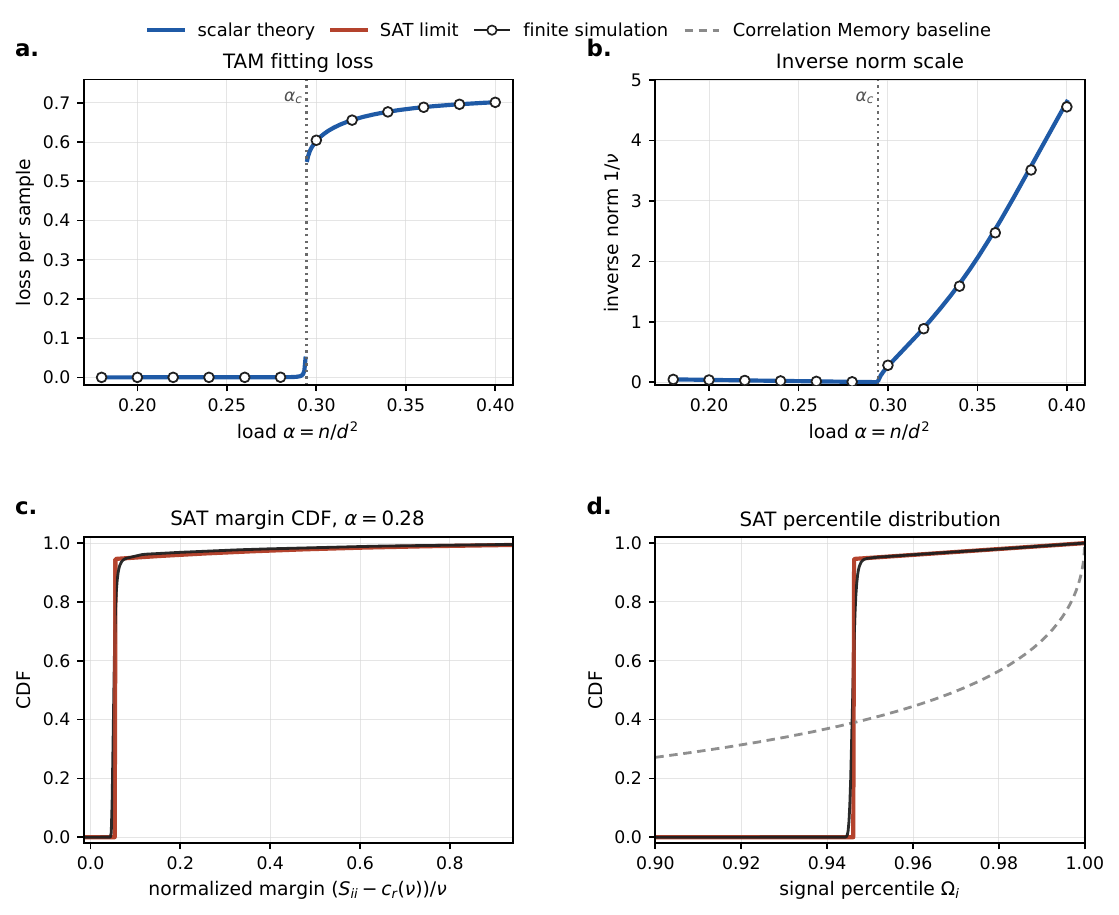}
    \caption{Scalar theory compared with finite-dimensional
    simulations along a one-dimensional slice at $r=0.15$.
    \textbf{a.} Optimized TAM fitting loss per sample as a function of
    the load $\alpha$, after subtracting the ridge contribution.
    \textbf{b.} Inverse norm scale $1/\nu$, showing the collapse of
    $1/\nu$ in the SAT phase and its positive value in the UNSAT
    phase. In both upper panels, the solid curve is obtained from the
    scalar prediction \eqref{eq:reduced_potential_def}, with the norm
    and loss read off through
    \eqref{eq:tam_order_parameter_prediction} and
    \eqref{eq:tam_loss_prediction}, at $\beta=30$ and ridge
    $\lambda=10^{-7}$. Open circles are finite-dimensional
    simulations at $d=600$, and the dashed vertical line marks the
    ridgeless threshold $\alpha_c(r)$. \textbf{c.} Empirical CDF of
    the normalized TAM margin $(S_{ii}-c_r(\nu))/\nu$ in the SAT
    phase, compared with \eqref{eq:sat_margin_profile}. \textbf{d.}
    Empirical CDF of the signal percentile $\Omega_i$, compared with
    \eqref{eq:sat_percentile_cdf} and the correlation-memory heuristic
    \eqref{eq:percentile-copula}.
    The lower panels use the representative SAT load $\alpha=0.28$.}
    \label{fig:scalar_theory_validation}
\end{figure}

Figure~\ref{fig:scalar_theory_validation} shows good agreement between
the scalar theory and the finite-dimensional simulations. The upper
panels compare two scalar observables: the fitting loss and the inverse
norm scale $1/\nu$. The loss remains close to zero throughout the SAT
phase and becomes positive beyond the predicted critical load. At the
same transition, $1/\nu$ collapses toward zero on the SAT side, while
it remains positive in the UNSAT phase, as predicted by
Proposition~\ref{prop:ridgeless_phase_transition}.

The lower panels compare distributional predictions in the SAT phase.
The empirical normalized margin distribution follows the limiting law
\eqref{eq:sat_margin_profile}, with the visible rounding caused by
finite dimension. The percentile panel
is particularly informative. The TAM prediction
\eqref{eq:sat_percentile_cdf} has a hard lower edge at
$p_\alpha=\Phi(\rho_\alpha)$, meaning that in the ridgeless SAT limit
the true item is never predicted to fall below this percentile among
its competitors. The finite-dimensional curve tracks this edge closely.
The same panel also shows the correlation-memory heuristic benchmark
\eqref{eq:percentile-copula}; compared with this non-optimized
baseline, the TAM optimizer shifts the signal percentiles sharply
toward the upper tail, reflecting the improvement obtained by learning
the memory matrix for the listwise objective.

\FloatBarrier

\section{Conclusion}
\label{sec:conclusion}

The capacity of a linear associative memory depends as much on how it is read
as on how many parameters it has. Under isotropic Gaussian embeddings, top-1
retrieval has a sharp threshold: the critical value of $d^2/(n\log n)$ is
$2$. If this ratio stays above $2$, our adaptive sparse-repair construction
retrieves all $n$ associations with high probability; if it stays below $2$,
no data-dependent linear memory can do so. The logarithmic factor is therefore
not a weakness of a particular storage rule. It is the price of requiring
every query to beat its strongest competitor. The critical value was
conjectured in an earlier version of this paper and independently in the
concurrent work of \citet{giorlandino2026factual}.

The picture changes when the correct target need only remain within a fixed
fraction of the candidates. TAM captures this regime by comparing the signal
with the average of its $k=\lceil r(n-1)\rceil$ strongest competitors. For
fixed $r\in(0,1)$, the list size is $k=\Theta(n)$, so this is a percentile-rank
guarantee rather than a sublinear shortlist. At any fixed finite smoothing
parameter and positive ridge, empirical risk minimization with the smoothed
TAM objective has a high-dimensional limit governed by two scalar parameters.
These parameters determine the limiting score, margin, and percentile laws.
Taking the high-dimensional limit first and then sending the ridge parameter
to zero gives the critical load $\alpha_c(r)$ for $\alpha=n/d^2$. Below this
load, the average fitting loss vanishes and the exact TAM criterion holds for
all but a vanishing fraction of keys; above it, the limiting average fitting
loss remains positive.

The TAM proof required a leave-one-out argument adapted to the strong
dependence among the losses. Removing one association does two things at once:
it changes that association's signal term, and it changes its role as a
competitor for every other query. Our coupled leave-one-out expansion tracks
both effects and reduces the leading response to an inverse Hessian applied to
a single rank-one feature. The accompanying stability, spectral, and
self-averaging estimates then lead to the scalar theory. This mechanism is not
specific to TAM and may also be useful for other matrix-valued empirical-risk
problems in which every sample appears throughout the loss.

Between the two retrieval regimes lies a natural open question: what happens
when the list fraction $r$ tends to zero with $n$? This limit should clarify how
the quadratic TAM scale crosses over to the extra logarithmic cost of top-1
retrieval. It also remains to prove the second-order correction proposed in
Conjecture~\ref{conj:top1-finite-size}, which appears to capture the slow
finite-size drift, and to understand how much of the theory persists beyond
isotropic Gaussian embeddings.

\subsection*{Disclosure of AI Assistance}
\addcontentsline{toc}{section}{Disclosure of AI Assistance}

We used AI tools, specifically ChatGPT 5.5 and 5.6, in this work.
The problem formulations and central ideas---including the
overall approach to top-1 retrieval, the Tail-Average
Margin criterion, and the leave-one-out framework for its analysis---originated with the authors. For top-1
retrieval, the authors first established nonmatching achievability and
impossibility bounds, based respectively on CMM and a trace comparison using
uniform Gaussian anticoncentration, and reported them in an earlier version
of this paper. Starting from these arguments, iterative collaboration with AI
tools helped sharpen both sides to the exact constant $2$. Building on our
earlier CMM analysis, the sharp achievability argument augments CMM with an
adaptive sparse correction that repairs the exceptional low-margin queries.
This construction, formalized in Proposition~\ref*{prop:optimal}, drew inspiration from
multiscale majority and deficit-repair constructions for random perceptrons
\citep{kim1998covering,abbe2022binary,ding2025capacity}; AI tools assisted in
refining the argument. For the converse, the
dual-certificate approach was motivated by classical separation and perceptron
methods \citep{wendel1962problem,cover1965geometrical}, and AI tools assisted
substantially in completing the technical arguments. We view this as an encouraging example of how human--AI collaboration can
contribute to mathematical research.

The TAM formulation and leave-one-out analysis, including the rank-one
reduction in Proposition~\ref*{prop:loo_rank_one_reduction} and the resulting
scalar characterization, were developed by the authors and reported in an
earlier version of this paper. AI tools assisted in completing and checking
several supporting technical arguments, particularly
the leave-one-out stability estimates in Proposition~\ref*{prop:loo_stability}
and the spectral and self-averaging concentration estimates in
Proposition~\ref*{prop:spectral_self_averaging}. AI tools also assisted with
numerical experiments and with editing and polishing the manuscript. The
authors carefully verified all results and take full responsibility for the
content of this paper.

\FloatBarrier
\bibliographystyle{plainnat}
\bibliography{refs}

\clearpage
\appendix

\section{Correlation matrix memory: score law and retrieval}
\label{app:correlation_memory}

This appendix gives the CMM score-law and retrieval calculations used
in Section~\ref{sec:superposition}.  Section~A.1 derives the exact
one-query law underlying~\eqref{eq:cmm-score-heuristic}, and
Section~A.2 proves Proposition~\ref{prop:sup-achievability} using a
conditional moment-generating function.

Under the Gaussian model~\eqref{eq:gaussian_model}, let
$U=[u_1\;\cdots\;u_n]$ and $V=[v_1\;\cdots\;v_n]$ in
$\R^{d\times n}$; their columns are mutually independent
$\mathcal N(0,I_d/d)$ vectors.  For the correlation matrix memory
$W_{\mathsf{CMM}}=UV^\top$ from~\eqref{eq:cmm}, throughout this appendix
we write
\[
    S
    \bydef
    U^\top W_{\mathsf{CMM}}V
    =U^\top U V^\top V
    \in\R^{n\times n},
    \qquad
    S_{j,i}=s_{j,i}=u_j^\top W_{\mathsf{CMM}}v_i.
\]
Thus column~$i$ contains all decoder scores for query~$v_i$:
$s_{i,i}$ is its signal score, while $(s_{j,i})_{j\ne i}$ are its
competitor scores.

Moreover, $\chi(k)$ denotes the chi distribution with
$k$ degrees of freedom.

\subsection{Law of one CMM score column}\label{app:lemma_1_proof}

\begin{lemma}[Exact law of one CMM score column]\label{lem:single-coord-law}
Assume $n\ge3$ and $d\ge2$.  Let $e_1^{(n)}$ be the first standard
basis vector in $\R^n$ and write
\[
    \begin{bmatrix}\theta\\ \xi\end{bmatrix}
    \bydef Se_1^{(n)},
    \qquad
    \theta=s_{1,1},
    \qquad
    \xi=(s_{2,1},\ldots,s_{n,1})^\top\in\R^{n-1}.
\]
Let $a \sim \mathcal{N}(0, 1)$,
$g \sim \mathcal{N}(0, I_{n-1})$, and
\begin{equation}\label{eq:rho-vars}
    \rho_1, \rho_2 \sim \frac{\chi(d)}{\sqrt{d}}, \quad
    \rho_3 \sim \frac{\chi(n-1)}{\sqrt{n}}, \quad
    \rho_4 \sim \frac{\chi(d-1)}{\sqrt{d}}, \quad
    \rho_5 \sim \frac{\chi(n-2)}{\sqrt{n}}.
\end{equation}
Assume that $a,g,\rho_1,\ldots,\rho_5$ are mutually independent. Then
\begin{equation}
    \begin{bmatrix} \theta \\ \xi \end{bmatrix}
    \;\eqd\;
    \rho_1
    \begin{bmatrix}
        \rho_2 \tau \\[2pt]
        \tfrac{\sigma}{\norm{g}/\sqrt{n}}\, g
    \end{bmatrix},
\end{equation}
where
\begin{equation}\label{eq:single_coor_law_parameters}
\tau = \rho_1 \rho_2 + \sqrt{\frac{n}{d^2}}\, \rho_3\, a, \qquad
    \sigma = \left[
        \frac{(a\tau/\sqrt{n} + \rho_3 \rho_4^2)^2}{d}
        + \frac{\tau^2 \rho_5^2}{d}
        + \frac{n}{d^2}\, \rho_3^2 \rho_4^2 \rho_5^2
    \right]^{1/2}.
\end{equation}
\end{lemma}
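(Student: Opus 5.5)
The plan is to reduce everything to a few independent radial and angular variables by repeatedly using rotational invariance of Gaussian vectors. The first observation is a symmetry in the index space. For any orthogonal $O\in\R^{(n-1)\times(n-1)}$, replace $(U,V)$ by $(U\,\diag(1,O),V\,\diag(1,O))$. This preserves the joint law, because the columns are i.i.d.\ $\mathcal N(0,I_d/d)$ and a fixed orthogonal mixing of i.i.d.\ Gaussian columns is again i.i.d.\ Gaussian. It also sends $S$ to $\diag(1,O)^\top S\,\diag(1,O)$. Hence $(\theta,\xi)\eqd(\theta,O^\top\xi)$ for every such $O$. It follows that $\xi=\norm{\xi}\,\omega$ with $\omega$ uniform on the sphere and independent of $(\theta,\norm{\xi})$, and I will realize $\omega=g/\norm{g}$. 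So it remains to identify the joint law of $(\theta,\norm{\xi})$ with that of $(\rho_1\rho_2\tau,\ \rho_1\sigma\sqrt n)$.

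To compute this joint law, write $S e_1=U^\top x$ with $x:=UV^\top v_1$, and expand in steps.

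\emph{Step 1.} The vector $V^\top v_1$ has first entry $\norm{v_1}^2=\rho_1^2$, where $\rho_1:=\norm{v_1}\sim\chi(d)/\sqrt d$. Its remaining entries are $\rho_1 h/\sqrt d$ with $h\sim\mathcal N(0,I_{n-1})$ independent of $\rho_1$. Set $\rho_3:=\norm{h}/\sqrt n$.

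\emph{Step 2.} Write $U=[u_1,U_2]$ and pick an orthogonal $O$, depending only on $h$, with $Oh=\norm h e_1$. Then $\widetilde U:=U_2O^\top$ is again i.i.d.\ Gaussian and independent of $(u_1,h,v\text{'s})$. We get $U_2h=\sqrt n\rho_3\,\tilde u_1$, where $\tilde u_1$ is the first column of $\widetilde U$, and $\norm{U_2^\top x}=\norm{\widetilde U^\top x}$. Thus
\[
x=\rho_1\bigl(\rho_1 u_1+\tfrac{\sqrt n}{\sqrt d}\rho_3\tilde u_1\bigr).
\]

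\emph{Step 3.} Decompose $u_1=\rho_2 e$ with $\rho_2=\norm{u_1}\sim\chi(d)/\sqrt d$ and $e$ a unit vector. Then write $\tilde u_1=(a/\sqrt d)e+\rho_4 f$ with $f\perp e$ a unit vector, $a\sim\mathcal N(0,1)$ and $\rho_4\sim\chi(d-1)/\sqrt d$, all independent. Direct substitution gives
\[
\theta=u_1^\top x=\rho_1\rho_2\tau,
\qquad
\tilde u_1^\top x=\rho_1\sqrt{\tfrac nd}\bigl(a\tau/\sqrt n+\rho_3\rho_4^2\bigr),
\]
and $\norm{x}^2=\rho_1^2\bigl(\tau^2+\tfrac nd\rho_3^2\rho_4^2\bigr)$, where $\tau$ is the component along $e$ and the second term comes from the $f$-component.

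\emph{Step 4.} The other $n-2$ columns of $\widetilde U$ are independent of $x$. So, conditionally on $x$, their inner products with $x$ form $\norm{x}\,\mathcal N(0,I_{n-2})/\sqrt d$, and their squared norm is $\norm{x}^2 n\rho_5^2/d$ with $\rho_5\sim\chi(n-2)/\sqrt n$ independent. Adding $(\tilde u_1^\top x)^2$ gives $\norm{\xi}^2=n\rho_1^2\sigma^2$, which matches \eqref{eq:single_coor_law_parameters} exactly.

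The main obstacle is not the algebra but the independence bookkeeping: one must check at each stage that the new radial variables $\rho_2,\dots,\rho_5$ and $a$ are independent of one another and of everything already introduced. The plan is to make every rotation ($O$ in Step~2, the frame $(e,f)$ in Step~3) a measurable function only of previously revealed variables, and to invoke rotational invariance conditionally on that $\sigma$-field each time. Finally, the angular variable $g/\norm g$ from the first paragraph is independent of all radial quantities by the symmetry argument, which yields the stated joint representation.
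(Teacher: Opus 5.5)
Your proposal is correct and follows essentially the same route as the paper. Both use the index-space orthogonal symmetry to separate off the direction $g/\norm{g}$, then reduce $V^\top v_1$ to $\norm{v_1}[\norm{v_1};z]$. Both then rotate the remaining target columns so that one is aligned with $z$ and the frame with $u_1$. Your $a/\sqrt d$ and $\rho_4$ are the paper's $\tilde a$ and $\norm{p}$, and your Step~4 is its identity $\tau q+\norm{z}G^\top p\eqd\sqrt{\tau^2+\norm{z}^2\norm{p}^2}\,\tilde z$. The only difference is presentation: you work with $x=UV^\top v_1$ and track only $\norm{\xi}$, instead of writing $U=H_{u_1}[\norm{u_1}e_1,\,MH_z^\top]$.
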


\begin{proof}

We derive the distributional representation~\eqref{eq:single_coor_law_parameters} for the first CMM score column. It is easy to see that, for any deterministic orthogonal matrix $O \in O(n-1)$,
\begin{equation} \label{eq:orthong_equiv_s_xi}
    \begin{bmatrix}
        \theta\\
        \xi
    \end{bmatrix} \overset{(d)}{=} \begin{bmatrix}
        \theta \\
        O \xi
    \end{bmatrix}.
\end{equation}
To see this, writing 
\begin{equation}
    Q := \begin{bmatrix}
        1 & 0 \\
        0 & O
    \end{bmatrix} \in O(n),
\end{equation}
we have $(UQ^\top, VQ^\top) \overset{(d)}{=} (U, V)$. Defining 
\[
w := U^\top U V^\top V e_1^{(n)}, \qquad w' := (UQ^\top)^\top (UQ^\top) (VQ^\top)^\top (VQ^\top) e_1^{(n)},
\]
it holds that $w' = Q w$ since $Q^\top e_1^{(n)} = e_1^{(n)}$. Since $w' \overset{(d)}{=} w$, we have $w \overset{(d)}{=} Qw$ and so \eqref{eq:orthong_equiv_s_xi} holds. From rotational invariance, it follows that
\begin{equation} \label{eq:rot_inv_conseq_xi}
    \begin{bmatrix}
        \theta\\
        \xi
    \end{bmatrix} \overset{(d)}{=} \begin{bmatrix}
        \theta\\
        \norm{\xi} \frac{g}{\norm{g}}
    \end{bmatrix},
\end{equation}
where $g \sim \mathcal{N}(0, I_{n-1})$ is a Gaussian vector independent of~$\theta$ and $\norm{\xi}$.

Let $v_1 \in \R^d$ denote the first column of $V$. It is straightforward to check that
\begin{equation}
    V^\top V e_1^{(n)} \overset{(d)}{=} \norm{v_1}\begin{bmatrix}
        \norm{v_1}\\
        z
    \end{bmatrix},
\end{equation}
where $z \sim \mathcal{N}(0, I_{n-1}/d)$ is independent of $v_1$. We may also write
\begin{equation}
    U = \begin{bmatrix}
        u_1 & H_{u_1} M H_z^\top
    \end{bmatrix} = H_{u_1} \begin{bmatrix}
        \norm{u_1} e_1^{(d)} & M H_z^\top
    \end{bmatrix},
\end{equation}
where $H_{u_1} \in \R^{d \times d}$ and $H_z \in \R^{(n-1) \times (n-1)}$ denote orthogonal matrices whose first columns are aligned with $u_1$ and $z$ respectively, and
\begin{equation}
    M = \begin{bmatrix}
        \tilde a & q^\top\\
        p & G
    \end{bmatrix} \in \R^{d \times (n-1)}, \qquad \tilde a \in \R,\; q \in \R^{n-2},\; p \in \R^{d-1},\; G \in \R^{(d-1) \times (n-2)}
\end{equation}
is a Gaussian matrix with i.i.d. $\mathcal{N}(0, 1/d)$ entries. Substituting the above representations in
\begin{equation}
    U^\top U V^\top V e_1^{(n)} = \begin{bmatrix}
        \theta\\
        \xi
    \end{bmatrix}
\end{equation}
yields
\begin{equation}
    \theta = \norm{v_1}\norm{u_1} \tau, \qquad\text{where}\qquad \tau \bydef \norm{u_1}\norm{v_1} + \tilde a \norm{z}.
\end{equation}
Moreover,
\begin{equation} \label{eq:xi_representation}
    \xi = \|v_1 \| H_z \begin{bmatrix}
        \tilde a \tau + \norm{z}\norm{p}^2\\
        \tau q + \|z\| G^\top p
    \end{bmatrix} \overset{(d)}{=} \|v_1 \| H_z \begin{bmatrix}
        \tilde a \tau + \norm{z}\norm{p}^2\\
        \sqrt{\tau^2 + \norm{z}^2\norm{p}^2} \tilde z
    \end{bmatrix},
\end{equation}
where $\tilde z \sim \mathcal{N}(0, I_{n-2}/d)$ is independent of $(\tau, z, p)$.
To reach the claim of the lemma, set
\begin{equation}
    \rho_1 = \norm{v_1}, \quad \rho_2 = \norm{u_1}, \quad \rho_3 = \sqrt{\frac d n} \norm{z}, \quad \rho_4 = \norm{p}, \quad \rho_5 = \sqrt{\frac d n} \norm{\tilde z}. 
\end{equation}
Moreover, the standard Gaussian variable $a$ in the statement of the lemma is $\tilde a \sqrt d$. The claim of the lemma then follows readily by applying \eqref{eq:rot_inv_conseq_xi} with \eqref{eq:xi_representation}.
\end{proof}

\begin{remark}[Gaussian consequence of the one-column law]
\label{rem:cmm-one-column-gaussian-limit}
Suppose that $n,d\to\infty$ with $n/d^2\to\alpha\in(0,\infty)$.
Then, for every fixed $m$,
\[
    (\theta,\xi_1,\ldots,\xi_m)
    \Longrightarrow
    \bigl(
        1+\sqrt\alpha\,G_0,
        \sqrt\alpha\,G_1,
        \ldots,
        \sqrt\alpha\,G_m
    \bigr),
\]
where $G_0,\ldots,G_m$ are independent standard Gaussians. To see
this, apply Lemma~\ref{lem:single-coord-law}: each $\rho_\ell$,
$\ell\in[5]$, and $\|g\|/\sqrt n$ converge in probability to $1$,
whereas $\sigma\to\sqrt\alpha$.

The same representation yields a more general standardized
formulation.  Set $\alpha_n\bydef n/d^2$.  If $d/n\to0$, then, for
every fixed $m$,
\[
    \left(
        \frac{\theta-1}{\sqrt{\alpha_n}},
        \frac{\xi_1}{\sqrt{\alpha_n}},
        \ldots,
        \frac{\xi_m}{\sqrt{\alpha_n}}
    \right)
    \Longrightarrow
    (G_0,G_1,\ldots,G_m),
\]
where $G_0,G_1,\ldots,G_m$ are independent standard Gaussians, and
the convergence is joint in $\R^{m+1}$.

Thus, for a fixed query, the signal score and any fixed number of
competitor scores have precisely the leading-order Gaussian form
suggested by \eqref{eq:cmm-score-heuristic}, including their
asymptotic independence. This finite-dimensional consequence does not
by itself control the maximum over all $n-1$ competitors; the uniform
tail estimate needed for perfect retrieval is proved in
Appendix~\ref{app:perfect_retrieval}.
\end{remark}

\subsection{CMM top-1 retrieval above constant 8}
\label{app:perfect_retrieval}

\begin{proof}[Proof of Proposition~\ref{prop:sup-achievability}]
For an ordered pair $i\ne j$, write
\[
    D_{j,i}
    \bydef
    s_{i,i}-s_{j,i},
    \qquad
    x_{j,i}\bydef u_i-u_j.
\]
The CMM succeeds precisely when $D_{j,i}>0$ for every ordered pair.
We bound the lower tail of one such margin directly.

\paragraph{Exact conditional moment-generating function.}
Fix $i\ne j$ and condition on $(u_i,u_j,v_i)$.  With
$x=x_{j,i}$ and $y=v_i$, expanding
$W_{\mathsf{CMM}}=\sum_{k=1}^n u_kv_k^\top$ gives
\begin{equation}\label{eq:cmm-pairwise-margin-decomposition}
    D_{j,i}
    =
    a_{j,i}
    +b_{j,i}\langle v_j,y\rangle
    +\sum_{k\notin\{i,j\}}
      \langle x,u_k\rangle\langle v_k,y\rangle,
\end{equation}
where
\[
    a_{j,i}\bydef\langle x,u_i\rangle\|y\|^2,
    \qquad
    b_{j,i}\bydef\langle x,u_j\rangle.
\]
The terms after $a_{j,i}$ in
\eqref{eq:cmm-pairwise-margin-decomposition} are conditionally
independent and centered.  Moreover,
$\langle v_j,y\rangle\sim\mathcal N(0,\|y\|^2/d)$, while for
$k\notin\{i,j\}$ the two factors
$\langle x,u_k\rangle$ and $\langle v_k,y\rangle$ are independent
centered Gaussians with variances $\|x\|^2/d$ and $\|y\|^2/d$.
Thus, if $R_{j,i}\bydef D_{j,i}-a_{j,i}$, then
\begin{equation}\label{eq:cmm-pairwise-margin-mgf}
\E\left[e^{\lambda R_{j,i}}\mid u_i,u_j,v_i\right]
=
\exp\left(
    \frac{\lambda^2b_{j,i}^2\|y\|^2}{2d}
\right)
\left(
    1-\frac{\lambda^2\|x\|^2\|y\|^2}{d^2}
\right)^{-(n-2)/2}
\end{equation}
whenever $\lambda^2\|x\|^2\|y\|^2<d^2$.  This identity uses
$\E e^{tGH}=(1-t^2\sigma_G^2\sigma_H^2)^{-1/2}$ for independent
centered Gaussians $G,H$.

\paragraph{Uniform concentration.}
The condition $\liminf d^2/(n\log n)>8$ implies
$d/\log n\to\infty$.
Standard chi-square and Gaussian tail bounds, followed by a union
bound, therefore give a deterministic sequence $\varepsilon_n\to0$
such that, with probability $1-o(1)$,
\begin{equation}\label{eq:cmm-pairwise-good-event}
\begin{split}
    &\max_{i\le n}
      \left(
        \bigl|\|u_i\|^2-1\bigr|
        \vee
        \bigl|\|v_i\|^2-1\bigr|
      \right)
      \le\varepsilon_n,\\
    &\hspace{35mm}
      \max_{i\ne j}|\langle u_i,u_j\rangle|
      \le\varepsilon_n.
\end{split}
\end{equation}
Call this event $\mathcal H_n$, and let $\mathcal H_{j,i}$ denote
the corresponding restrictions involving only $(u_i,u_j,v_i)$.
On $\mathcal H_{j,i}$, uniformly over $i\ne j$,
\begin{equation}\label{eq:cmm-pairwise-coefficients}
    a_{j,i}=1+o(1),
    \qquad
    b_{j,i}=-1+o(1),
    \qquad
    \|x_{j,i}\|^2\|v_i\|^2=2+o(1).
\end{equation}

It remains to apply Chernoff's bound to
\eqref{eq:cmm-pairwise-margin-mgf}.  We do so along an arbitrary
subsequence.  Passing to a further subsequence, either
$d/n\to0$, or $d/n$ is bounded below by a positive constant.

Suppose first that $d/n\to0$, and set
$\alpha_n=n/d^2$.  Conditional on
$(u_i,u_j,v_i)\in\mathcal H_{j,i}$, choose
$\lambda=a_{j,i}/(2\alpha_n)$.  Then
$\lambda/d=O(d/n)=o(1)$, and
$-\log(1-t)=t+O(t^2)$ in
\eqref{eq:cmm-pairwise-margin-mgf} yields, uniformly over $i\ne j$,
\[
    \log\E\left[e^{-\lambda R_{j,i}}
        \mid u_i,u_j,v_i\right]
    =
    (1+o(1))\lambda^2\alpha_n.
\]
Here the Gaussian term in
\eqref{eq:cmm-pairwise-margin-mgf} is negligible because
$1/d=o(\alpha_n)$.  Since $R_{j,i}$ is symmetric, Chernoff's bound
and \eqref{eq:cmm-pairwise-coefficients} give
\begin{equation}\label{eq:cmm-pairwise-sharp-tail}
    \sup_{(u_i,u_j,v_i)\in\mathcal H_{j,i}}
    \P\left(D_{j,i}\le0\mid u_i,u_j,v_i\right)
    \le
    \exp\left(-\frac{1-o(1)}{4\alpha_n}\right).
\end{equation}
Choose $\eta>0$ so that
$d^2/(n\log n)\ge8+\eta$ eventually.  The exponent on the
right-hand side of \eqref{eq:cmm-pairwise-sharp-tail} is then at
least $(2+c_\eta)\log n$ for some $c_\eta>0$ and all sufficiently
large $n$.

Suppose instead that $d/n\ge c_0>0$.  Taking $\lambda=\kappa d$ in
\eqref{eq:cmm-pairwise-margin-mgf}, with $\kappa>0$ sufficiently
small depending only on $c_0$, gives
\begin{equation}\label{eq:cmm-pairwise-high-dimensional-tail}
    \sup_{(u_i,u_j,v_i)\in\mathcal H_{j,i}}
    \P\left(D_{j,i}\le0\mid u_i,u_j,v_i\right)
    \le e^{-c_1d}
\end{equation}
for some $c_1>0$.  Indeed, the Chernoff exponent contributes
$-\kappa d(1+o(1))$, whereas the logarithm of the moment-generating
function is $O(\kappa^2(d+n))=O(\kappa^2d)$.  Since
$d/\log n\to\infty$, the right-hand side of
\eqref{eq:cmm-pairwise-high-dimensional-tail} is $o(n^{-2})$.

Finally, a union bound over the $n(n-1)$ ordered pairs gives
\[
\begin{split}
    \P\bigl(\mathsf{PR}(W_{\mathsf{CMM}};U,V)^c\bigr)
    &\le
    \P(\mathcal H_n^c)
    +\sum_{i\ne j}
      \P\bigl(D_{j,i}\le0,\mathcal H_{j,i}\bigr)\\
    &=o(1)
\end{split}
\]
along either type of further subsequence.  Every subsequence admits
such a further subsequence, so the same convergence holds along the
original sequence.
\end{proof}

\section{Taylor expansion of the interior remainder}
\label{app:tam_taylor}

This appendix carries out the first-order Taylor expansion of the interior remainder $\mathcal{R}^{\setminus h}$ defined in \eqref{eq:R_def} of \sref{tam_loo}, derives the explicit form of the residual $\hat E$ appearing in the main identity \eqref{eq:H_LOO_identity}, and establishes the entry-size bounds \eqref{eq:Ehat_entry_sizes} used there. All notation is as in \sref{tam_loo}: $(\Wh, \muh)$ and $(\WLOO, \muLOO)$ are the full and leave-one-out minimizers, $\Delta S := U^\top(\Wh - \WLOO)V$, $(\Delta \mu)_i := \hat \mu_i - \mu_i^{\setminus h}$; a hat denotes a quantity evaluated at $(\Wh, \muh)$ and a superscript $\setminus h$ one evaluated at $(\WLOO, \muLOO)$. The analysis is conditional on the LOO postulates (A1)--(A3).

\subsection{Taylor identities}
\label{app:tam_taylor_identities}

We record three Taylor identities around the LOO point. Throughout, for a scalar quantity $X(W, \mu)$ we write $\Delta X := X(\Wh, \muh) - X(\WLOO, \muLOO)$.

\paragraph{Distractor sigmoid.} Since $q_{j, i} = \phi_\beta'(s_{j, i} - \mu_i) = \sigma(\beta(s_{j, i} - \mu_i))$ and $\sigma'(t) = \sigma(t)(1 - \sigma(t))$, a Taylor expansion in the single variable $s_{j, i} - \mu_i$ gives, for $j, i \ne h$ and $j \ne i$,
\begin{equation}\label{eq:taylor_q}
\begin{split}
    \hat q_{j, i} - q_{j, i}^{\setminus h}
    &\;=\; k\,\widetilde R_{j, i}^{\setminus h}\,\bigl(\Delta S_{j, i} - (\Delta \mu)_i\bigr)
    \;+\; \rho_{j, i}^{(q)},\\
    \rho_{j, i}^{(q)} &\;=\; \Op\bigl(\beta^2\,|\Delta S_{j, i} - (\Delta \mu)_i|^2\bigr) \;=\; \Op(\beta^2 / d^2),
    \end{split}
\end{equation}
where $k\,\widetilde R_{j, i}^{\setminus h} = \beta\,q_{j, i}^{\setminus h}(1 - q_{j, i}^{\setminus h}) = \phi_\beta''(s_{j, i}^{\setminus h} - \mu_i^{\setminus h})$ and the last bound uses $\max_{j, i \ne h}|\Delta S_{j, i}| = \Op(d^{-1})$ by (A2) and $|\Delta \mu_i| = \Op(n^{-1}) = \Op(d^{-2})$ by (A1).

\paragraph{Margin.} For $i \ne h$, the full margin $\hat m_i = s_{i, i}(\Wh) - \hat c_i$ and the LOO margin $m_i^{\setminus h} = s_{i, i}(\WLOO) - c_i^{\setminus h}$ differ both through the shifts $(\Delta S_{i, i}, (\Delta \mu)_i)$ and through the missing $j = h$ interferer in the LOO inner sum. Taylor-expanding the inner softplus summation in the column-$i$ score perturbations and peeling off the missing interferer term,
\begin{equation}\label{eq:taylor_m}
    \hat m_i - m_i^{\setminus h}
    \;=\; -\,\bigl(Q_{:, i}^{\setminus h}\bigr)^\top \Delta S_{:, i}
    \;+\; \bigl[\,(Q_{:, i}^{\setminus h})^\top \mathbf{1}_n\,\bigr](\Delta \mu)_i
    \;-\; \tfrac{1}{k}\,\phi_\beta\bigl(s_{h, i}^{\setminus h} - \mu_i^{\setminus h}\bigr)
    \;+\; \rho_i^{(m)},
\end{equation}
where the first two terms assemble the linear dependence on $(\Delta S_{:, i}, (\Delta \mu)_i)$ (using $\partial m_i/\partial s_{j, i} = -Q_{j, i}$ with the convention $Q_{i, i} = -1$, $Q_{j, i} = q_{j, i}/k$ for $j \ne i$), the third term is the removed-interferer contribution, and $\rho_i^{(m)}$ collects second-order Taylor remainders of the $\phi_\beta$ terms and cross contributions. Using the second-order softplus bound $|\phi_\beta(x + \delta) - \phi_\beta(x) - \phi_\beta'(x)\delta| \le \tfrac{\beta}{8}\,\delta^2$ together with (A1)--(A2),
\begin{equation}\label{eq:rho_m_bound}
\begin{split}
    |\rho_i^{(m)}|
    &\;\le\; \frac{\beta}{8 k}\sum_{j \ne i,\, j \ne h}\bigl(\Delta S_{j, i} - (\Delta \mu)_i\bigr)^2
    \;+\; \Op\bigl(|\Delta S_{h, i}| + |(\Delta \mu)_i|\bigr)\cdot \tfrac{1}{k}
    \\
    &\;=\; \Op(\beta / d^2) + \Op(1/(nd))
    \;=\; \Op(1/d^2).
    \end{split}
\end{equation}

\paragraph{Activation.} Since $\ell'(t) = -\sigma(-t)$, $\ell''(t) = \sigma'(-t) = a(1-a) = b$, and $\ell'''$ is uniformly bounded,
\begin{equation}\label{eq:taylor_a}
\begin{split}
    \hat a_i - a_i^{\setminus h}
    &\;=\; -\,b_i^{\setminus h}\,(\hat m_i - m_i^{\setminus h})
    \;+\; \rho_i^{(a)},
    \\
    |\rho_i^{(a)}| &\;=\; \Op\bigl((\hat m_i - m_i^{\setminus h})^2\bigr)
    \;=\; \Op(1/d^2)
    \;=\; \Op(1/n).
\end{split}
\end{equation}

\subsection{Explicit form of the residual}
\label{app:tam_taylor_Ehat}

We now unfold the two cases of $\mathcal{R}^{\setminus h}$ in \eqref{eq:R_def} and identify the linear part with $(M_i^{\setminus h} \Delta S_{:, i})_j$ from \eqref{eq:M_operator}, so that $\hat E_{j, i}$ is what remains.

\paragraph{Off-diagonal case ($j \ne i$, $j, i \ne h$).} Using the algebraic identity
\begin{equation*}
    \hat a_i \hat q_{j, i} - a_i^{\setminus h} q_{j, i}^{\setminus h}
    \;=\; a_i^{\setminus h}\,(\hat q_{j, i} - q_{j, i}^{\setminus h})
    \;+\; (\hat a_i - a_i^{\setminus h})\,q_{j, i}^{\setminus h}
    \;+\; (\hat a_i - a_i^{\setminus h})\,(\hat q_{j, i} - q_{j, i}^{\setminus h}),
\end{equation*}
and substituting \eqref{eq:taylor_q}, \eqref{eq:taylor_m}, \eqref{eq:taylor_a},
\begin{align}
    \mathcal{R}_{j, i}^{\setminus h}
    &\;=\; \frac{a_i^{\setminus h} \cdot k\,\widetilde R_{j, i}^{\setminus h}}{k}\bigl(\Delta S_{j, i} - (\Delta \mu)_i\bigr)
    \;+\; \frac{q_{j, i}^{\setminus h}}{k}\cdot\bigl[-b_i^{\setminus h}(\hat m_i - m_i^{\setminus h})\bigr]
    \;+\; \hat E_{j, i}^{(\mathrm{off})} \notag\\[2pt]
    &\;=\; a_i^{\setminus h}\,\widetilde R_{j, i}^{\setminus h}\bigl(\Delta S_{j, i} - (\Delta \mu)_i\bigr)
    \;+\; b_i^{\setminus h}\,Q_{j, i}^{\setminus h}\,(Q_{:, i}^{\setminus h})^\top \Delta S_{:, i}
    \;+\; \hat E_{j, i}^{(\mathrm{off})}, \label{eq:R_off_expanded}
\end{align}
where in the second equality we used $Q_{j, i}^{\setminus h} = q_{j, i}^{\setminus h}/k$ for $j \ne i$, $j \ne h$, and absorbed the $(\Delta \mu)_i$-dependent piece of $\hat m_i - m_i^{\setminus h}$ from \eqref{eq:taylor_m} into $\hat E_{j, i}^{(\mathrm{off})}$. The first two terms match $(M_i^{\setminus h} \Delta S_{:, i})_j$ for $j \ne i$ from \eqref{eq:M_operator}, provided we recognize that the $a_i^{\setminus h}\widetilde R$ piece also absorbs the $-a_i^{\setminus h}\widetilde R_{j, i}^{\setminus h}(\Delta \mu)_i$ contribution into $\hat E$. Collecting the residual pieces,
\begin{equation}\label{eq:Ehat_off_explicit}
\begin{split}
    \hat E_{j, i}^{(\mathrm{off})}
    &\;=\; -\,\frac{b_i^{\setminus h}\,q_{j, i}^{\setminus h}}{k^2}\,\phi_\beta\bigl(s_{h, i}^{\setminus h} - \mu_i^{\setminus h}\bigr)
    \;+\; \frac{a_i^{\setminus h}}{k}\,\rho_{j, i}^{(q)}
    \;-\; a_i^{\setminus h}\,\widetilde R_{j, i}^{\setminus h}\,(\Delta \mu)_i\\
    &\qquad \;+\; \frac{q_{j, i}^{\setminus h}}{k}\,\bigl[\,b_i^{\setminus h}(Q_{:, i}^{\setminus h})^\top \mathbf{1}_n\,(\Delta \mu)_i - \rho_i^{(a)}\,\bigr]
    \;+\; \frac{(\hat a_i - a_i^{\setminus h})(\hat q_{j, i} - q_{j, i}^{\setminus h})}{k}.
\end{split}
\end{equation}

\paragraph{Diagonal case ($j = i \ne h$).} We have $\mathcal{R}_{i, i}^{\setminus h} = a_i^{\setminus h} - \hat a_i = b_i^{\setminus h}(\hat m_i - m_i^{\setminus h}) - \rho_i^{(a)}$. Substituting \eqref{eq:taylor_m},
\begin{align}
    \mathcal{R}_{i, i}^{\setminus h}
    &\;=\; b_i^{\setminus h}\cdot\Bigl[\,-\,(Q_{:, i}^{\setminus h})^\top \Delta S_{:, i}
    \;+\; \bigl[(Q_{:, i}^{\setminus h})^\top \mathbf{1}_n\bigr](\Delta \mu)_i
    \;-\; \tfrac{1}{k}\,\phi_\beta\bigl(s_{h, i}^{\setminus h} - \mu_i^{\setminus h}\bigr)
    \;+\; \rho_i^{(m)}\,\Bigr] \;-\; \rho_i^{(a)} \notag\\[2pt]
    &\;=\; -\,b_i^{\setminus h}\,(Q_{:, i}^{\setminus h})^\top \Delta S_{:, i}
    \;+\; \hat E_{i, i}, \label{eq:R_diag_expanded}
\end{align}
where the first term equals $(M_i^{\setminus h} \Delta S_{:, i})_i = b_i^{\setminus h}\,Q_{i, i}^{\setminus h}\,(Q_{:, i}^{\setminus h})^\top \Delta S_{:, i} = -b_i^{\setminus h}\,(Q_{:, i}^{\setminus h})^\top \Delta S_{:, i}$ (using $Q_{i, i}^{\setminus h} = -1$ and $\widetilde R_{i, i}^{\setminus h} = 0$), and
\begin{equation}\label{eq:Ehat_diag_explicit}
    \hat E_{i, i}
    \;=\; -\,\frac{b_i^{\setminus h}}{k}\,\phi_\beta\bigl(s_{h, i}^{\setminus h} - \mu_i^{\setminus h}\bigr)
    \;+\; b_i^{\setminus h}\bigl[(Q_{:, i}^{\setminus h})^\top \mathbf{1}_n\bigr](\Delta \mu)_i
    \;+\; b_i^{\setminus h}\,\rho_i^{(m)}
    \;-\; \rho_i^{(a)}.
\end{equation}

\subsection{Entry-size bounds}
\label{app:tam_taylor_entry_sizes}

We now bound each term in \eqref{eq:Ehat_off_explicit} and \eqref{eq:Ehat_diag_explicit} under (A1)--(A3), arriving at the claim \eqref{eq:Ehat_entry_sizes}. Throughout, we use the deterministic bounds $a_i, b_i, q_{j, i} \in [0, 1]$ (with $b_i \le 1/4$) and $|\widetilde R_{j, i}^{\setminus h}| \le \beta/(4k)$.

We first record the only pointwise consequence of (A3) needed below. The $\mu$-stationarity condition for $\Psi^{\setminus h}$ gives $\frac{1}{n-1}\sum_{j\ne i,h}\sigma(\beta(s_{j,i}^{\setminus h}-\mu_i^{\setminus h}))\approx r_d$ for each $i\ne h$. Since the full scores are bounded by (A3), the LOO scores with $i\ne h$ are bounded by (A2). As $r_d\in(0,1)$, this implies $\mu_i^{\setminus h}=\Op(1)$ uniformly. Therefore, by Lipschitz continuity of the softplus,
\begin{equation}\label{eq:phi_bound_from_A3}
    \bigl|\phi_\beta\bigl(s_{j,i}^{\setminus h}-\mu_i^{\setminus h}\bigr)\bigr|
    \;\le\; \frac{\log 2}{\beta}+\bigl|s_{j,i}^{\setminus h}-\mu_i^{\setminus h}\bigr|
    \;=\; \Op(1)
\end{equation}
uniformly in $(j,i)$.

\paragraph{Off-diagonal entries.} For $j \ne i$, $j, i \ne h$, each term in \eqref{eq:Ehat_off_explicit}:
\begin{enumerate}[label=(\roman*), leftmargin=2.2em]
    \item $\bigl|\tfrac{b_i^{\setminus h}\,q_{j, i}^{\setminus h}}{k^2}\,\phi_\beta(\cdot)\bigr| \le \tfrac{1}{4 k^2}\cdot \Op(1) = \Op(n^{-2})$ by \eqref{eq:phi_bound_from_A3};
    \item $\bigl|\tfrac{a_i^{\setminus h}}{k}\,\rho_{j, i}^{(q)}\bigr| \le \tfrac{1}{k}\cdot \Op(\beta^2/d^2) = \Op(\beta^2/(n d^2)) = \Op(n^{-2})$, using $\beta = \Op(1)$ in the retrieval regime, or absorbing $\beta$ as a constant;
    \item $\bigl|a_i^{\setminus h}\,\widetilde R_{j, i}^{\setminus h}\,(\Delta \mu)_i\bigr| \le \tfrac{\beta}{4 k}\cdot \Op(n^{-1}) = \Op(n^{-2})$ by (A1);
    \item $\bigl|\tfrac{q_{j, i}^{\setminus h}}{k}\,b_i^{\setminus h}(Q_{:, i}^{\setminus h})^\top \mathbf{1}_n\,(\Delta \mu)_i\bigr| \le \tfrac{1}{k}\cdot \tfrac{1}{4}\cdot \Op(1)\cdot \Op(n^{-1}) = \Op(n^{-2})$, using $|(Q_{:, i}^{\setminus h})^\top \mathbf{1}_n| = |-1 + (1/k)\sum q| = \Op(1)$;
    \item $\bigl|\tfrac{q_{j, i}^{\setminus h}}{k}\,\rho_i^{(a)}\bigr| \le \tfrac{1}{k}\cdot \Op(1/n) = \Op(n^{-2})$ by \eqref{eq:taylor_a};
    \item $\bigl|\tfrac{(\hat a_i - a_i^{\setminus h})(\hat q_{j, i} - q_{j, i}^{\setminus h})}{k}\bigr| \le \tfrac{1}{k}\cdot \Op(d^{-1})\cdot \Op(d^{-1}) = \Op(1/(n d^2)) = \Op(n^{-2})$, using \eqref{eq:taylor_q}, \eqref{eq:taylor_a} and $|\hat m - m^{\setminus h}| = \Op(d^{-1})$.
\end{enumerate}
Summing, $|\hat E_{j, i}^{(\mathrm{off})}| = \Op(n^{-2})$.

\paragraph{Diagonal entries.} For $i \ne h$, each term in \eqref{eq:Ehat_diag_explicit}:
\begin{enumerate}[label=(\roman*), leftmargin=2.2em]
    \item $\bigl|\tfrac{b_i^{\setminus h}}{k}\,\phi_\beta(\cdot)\bigr| \le \tfrac{1}{4 k}\cdot \Op(1) = \Op(n^{-1})$ by \eqref{eq:phi_bound_from_A3};
    \item $\bigl|b_i^{\setminus h}\,(Q_{:, i}^{\setminus h})^\top \mathbf{1}_n\,(\Delta \mu)_i\bigr| \le \tfrac{1}{4}\cdot \Op(1)\cdot \Op(n^{-1}) = \Op(n^{-1})$ by (A1);
    \item $\bigl|b_i^{\setminus h}\,\rho_i^{(m)}\bigr| \le \tfrac{1}{4}\cdot \Op(d^{-2}) = \Op(n^{-1})$ by \eqref{eq:rho_m_bound};
    \item $|\rho_i^{(a)}| = \Op(n^{-1})$ by \eqref{eq:taylor_a}.
\end{enumerate}
Summing, $|\hat E_{i, i}| = \Op(n^{-1})$.

This completes the proof of \eqref{eq:Ehat_entry_sizes} under (A1)--(A3).

\section{Leave-one-out stability}
\label{app:loo_stability_proofs}

We use the notation and standing assumptions of \sref{tam_theory} and
\sref{tam_loo}: $n/d^2\to\alpha\in(0,\infty)$, $r\in(0,1)$, and
$\beta,\lambda>0$ are fixed.  We prove the three stability inputs
\ref{post:loo-mu}--\ref{post:loo-bounded}.

The proof has two main steps.  First, after profiling out the threshold
variables, we construct a one-step Newton candidate for the full optimizer
from the leave-one-out optimizer.  Freshness of the held-out pair controls
the candidate on all surviving scores, while a residual estimate and strong
convexity show that the candidate is $O_\prec(d^{-1})$-close to the true
optimizer.  Second, the exact threshold equation reduces the sharper
threshold estimate to a weighted average of these score perturbations.  A
second use of freshness gains an additional factor $d^{-1/2}$ in this
average, yielding \ref{post:loo-mu}.  The remaining score-stability and
boundedness conclusions then follow directly from the same finite-deletion
comparison.

\subsection{Leave-one-out expansion and Newton comparison}
\label{app:profiled_tail_calculus}

For $J\subset[n]$ with $|J|>k$, define
\[
\mathcal T_J(z):=\min_{\mu\in\R}\left\{\mu+\frac1k\sum_{j\in J}\phi_\beta(z_j-\mu)\right\}.
\]
Its unique minimizer $\mu_J(z)$ satisfies
\begin{equation}\label{eq:proof-quantile-equation}
\sum_{j\in J}q_j=k,\qquad q_j=\sigma\!\left(\beta(z_j-\mu_J(z))\right),
\qquad p_j=\frac{q_j}{k},
\end{equation}
so $0<p_j\le k^{-1}$, $\sum_jp_j=1$, and $\|p\|_2\le k^{-1/2}$.

The next lemma collects the two inputs used throughout the appendix: the
first group of estimates is deterministic profiled-tail calculus, while the
last group supplies the Gaussian bulk curvature and delocalization needed
for uniform leave-one-out bounds.

\begin{lemma}[Profiled-tail identities and bulk geometry]\label{lem:tail-derivatives}
The map $\mathcal T_J$ is smooth, convex, and translation equivariant with
$\nabla\mathcal T_J=p$.  Put
$w_j=(\beta/k)q_j(1-q_j)$, $D=\diag(w_j)$, and $s=\sum_jw_j$.  Then
\begin{equation}\label{eq:tail-threshold-derivative}
\frac{\partial\mu_J}{\partial z_\ell}=\frac{w_\ell}{s},
\qquad
\nabla^2\mathcal T_J(z)=B_J(z):=D-\frac{(D\mathbf{1})(D\mathbf{1})^\top}{\mathbf{1}^\top D\mathbf{1}},
\end{equation}
with
\begin{equation}\label{eq:B-bounds}
0\preceq B_J(z)\preceq D\preceq\frac{\beta}{4k}I.
\end{equation}
Consequently, for $H=\|h\|_\infty$,
\begin{align}
\left|\mathcal T_J(z+h)-\mathcal T_J(z)-p^\top h\right|
&\le \frac{\beta}{8k}\|h\|_2^2,\label{eq:tail-scalar-rem}\\
\|p(z+h)-p(z)\|_2&\le\frac{\beta}{4k}\|h\|_2,\label{eq:p-lipschitz}\\
\max_j\left|p_j(z+h)-p_j(z)-(B_J(z)h)_j\right|
&\le\frac{C_\beta}{k}H^2.\label{eq:p-component-rem}
\end{align}
Moreover, uniformly under fixed finite deletions, if
$z_j=u_j^\top x$, $\rho=\|x\|/\sqrt d\le C\sqrt d$, and
$\sum_{j\in J}\sigma(\beta(z_j-\mu))=k+O(1)$, then
\begin{equation}\label{eq:bulk-geometry-summary}
|\mu|\le C(1+\rho),\qquad
\sum_{j\in J}q_j(1-q_j)\ge c\frac{n}{1+\rho},\qquad
\left\|\frac{(q_j(1-q_j))_{j\in J}}{\sum_\ell q_\ell(1-q_\ell)}\right\|_2^2
\le C\frac{1+\rho}{n}.
\end{equation}
If one competitor is inserted, the old tail weights decrease and their
change $r=p^{\rm old}-p^{\rm new}$ obeys
\begin{equation}\label{eq:r-l2}
 r\ge0,\qquad \|r\|_2\le\frac{C}{k}\sqrt{\frac{1+\rho}{n}}.
\end{equation}
\end{lemma}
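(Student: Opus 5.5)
The plan separates the deterministic profiled-tail calculus from the Gaussian bulk statements.

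\emph{Deterministic part.} Smoothness and uniqueness of $\mu_J(z)$ come from the implicit function theorem. The map $\mu\mapsto\mu+\frac1k\sum_j\phi_\beta(z_j-\mu)$ is strictly convex with derivative $1-\frac1k\sum_jq_j$, and this derivative runs from $1-|J|/k<0$ to $1$. Hence there is a unique root, which is \eqref{eq:proof-quantile-equation}. Its $\mu$-derivative is $s=\sum_jw_j>0$. Convexity of $\mathcal T_J$ holds because it is a partial minimum of a jointly convex function, and translation equivariance follows by substituting $\mu\mapsto\mu+t$.

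The envelope theorem gives $\nabla\mathcal T_J=p$. Differentiating $\sum_jq_j=k$ in $z_\ell$ gives $\partial\mu_J/\partial z_\ell=w_\ell/s$. Differentiating $p_j=q_j/k$ then gives $\partial p_j/\partial z_\ell=w_j\delta_{j\ell}-w_jw_\ell/s$, which is $B_J$.

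The bound $B_J\preceq D$ is immediate. The bound $B_J\succeq0$ follows from Cauchy--Schwarz in the form $(\sum_j w_jh_j)^2\le s\sum_j w_jh_j^2$. Together with $q(1-q)\le1/4$ this gives \eqref{eq:B-bounds}. Taylor's theorem with the Hessian bound $\beta/(4k)$ then yields \eqref{eq:tail-scalar-rem}, and the mean value theorem yields \eqref{eq:p-lipschitz}.

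For \eqref{eq:p-component-rem}, I will bound the third derivative of $p_j$ along the segment. Write $\partial^2p_j/\partial z_\ell\partial z_m$ in terms of $\sigma''$ and derivatives of $\mu_J$. The terms are of the form $(\beta^2/k)\,\cdot$ (weights) $\cdot$ $(\delta_{j\ell}\delta_{jm}$, $w_\ell/s$, $w_m/s$, $w_\ell w_m/s^2)$. After contracting with $h\otimes h$, each contributes at most $C_\beta H^2/k$ via $\sum_\ell w_\ell|h_\ell|/s\le H$.

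\emph{Insertion.} Adding a competitor raises $\sum_jq_j$ at fixed $\mu$. Hence $\mu$ must increase, so every old $q_j$ decreases and $r\ge0$.

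\emph{Gaussian bulk.} Conditionally on $x$, the $z_j$ are i.i.d.\ $\mathcal N(0,\rho^2)$. I will use the uniform empirical-CDF bound (DKW plus a union bound over finite deletions). This gives $\frac1n\sum_j\sigma(\beta(z_j-\mu))=\E\sigma(\beta(\rho G-\mu))+O_\prec(n^{-1/2})$ uniformly in $\mu$. Since $r$ is fixed in $(0,1)$, the solution satisfies $|\mu|\le C(1+\rho)$.

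The mass bound comes from the fraction of $z_j$ lying in a window of width $O(1/\beta)$ around $\mu$. That window sits within $O(1)$ standard deviations of the quantile, so the fraction is $\gtrsim 1/(1+\rho)$. The quantile is bounded in units of $\rho$, so the Gaussian density there is $\gtrsim1/\rho$. This gives $\sum_jq_j(1-q_j)\ge cn/(1+\rho)$.

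The delocalization bound follows because numerator terms are $\le1/4$. Hence $\|\cdot\|_2^2\le\frac14\big/\sum_\ell q_\ell(1-q_\ell)\le C(1+\rho)/n$.

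For \eqref{eq:r-l2}, the shift in $\mu$ solves $\sum_j[q_j^{\rm old}-q_j^{\rm new}]=q_{\rm new}\le1$. Linearizing gives $\delta\mu\approx k/(\beta k\,s)\le C(1+\rho)/n$. Then $r_j\approx w_j\delta\mu$, so $\|r\|_2\le\delta\mu\,\|w\|_2$ and $\|w\|_2\le(\beta/k)\sqrt{\sum_\ell q_\ell(1-q_\ell)}/2$. Combining these with the lower bound on $s$ gives the stated rate.

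I expect the main obstacle to be the bulk-geometry estimate \eqref{eq:bulk-geometry-summary} when $\rho$ is allowed up to $C\sqrt d$. The window fraction is then only of order $1/\rho\sim d^{-1/2}$. The empirical fluctuation $n^{-1/2}\sim d^{-1}$ is smaller, so the estimate should survive. Still, the uniformity over $\mu$, over deletions, and over the random $x$ (which may depend on the data) must be handled through stochastic domination, conditioning on $x$ independent of the $u_j$ where possible.
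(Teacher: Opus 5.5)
Your deterministic calculus (existence and smoothness of $\mu_J$, the formulas for $\partial\mu_J/\partial z_\ell$ and $B_J$, and \eqref{eq:B-bounds}--\eqref{eq:p-component-rem}) and your monotonicity argument for $r\ge0$ are correct and match the paper. Two cosmetic points in \eqref{eq:r-l2}. The ``linearizing'' step should be made exact, either by interpolating the inserted competitor or by using that $q_j(1-q_j)$ has $\mu$-log-derivative at most $\beta$ over a shift of size $O((1+\rho)/n)$. Also, the shift is $\delta\mu\lesssim 1/(\beta\sum_\ell q_\ell(1-q_\ell))$, not $k/(\beta k s)$.

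The genuine gap is in the Gaussian bulk part. You condition on $x$ and treat $z_j=u_j^\top x$ as i.i.d.\ $\mathcal N(0,\rho^2)$. But the lemma is applied with $x=Wv_i$, where $W$ is fitted on the same targets:
\begin{itemize}
\item $x=\widehat W^{\setminus h}v_i$ in Lemma~\ref{lem:A1-output-curvature};
\item $W=\widehat W^{\setminus h}$ and $W=\widetilde W_h$ in Lemma~\ref{lem:delta-t} and its uses.
\end{itemize}
Such $x$ depends on all $u_j$, $j\in J$, so there is nothing independent to condition on. Appealing to stochastic domination does not remove this dependence. What you need is \eqref{eq:bulk-geometry-summary} simultaneously for all $x$ with $\rho\le C\sqrt d$, on a single high-probability event.

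Your DKW route does not upgrade to that statement. A union bound over a net in $x$ costs $e^{O(d\log d)}$. With Hoeffding/DKW tails $e^{-2nt^2}$, you then control only deviations $t\gtrsim\sqrt{d\log d/n}\asymp\sqrt{\log d/d}$. At the top of the allowed range this swamps the window mass $\asymp 1/\rho\asymp d^{-1/2}$ you must certify; your ``$n^{-1/2}$'' is only the fixed-$x$ fluctuation scale.

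The paper closes this in two steps.
\begin{enumerate}
\item $|\mu|\le C(1+\rho)$ is obtained deterministically. The quantile equation forces a fraction $\gtrsim r$ of the $z_j$ above $\mu-O(1)$ and a fraction $\gtrsim 1-r$ below $\mu+O(1)$. The frame bound $n^{-1}\sum_j(u_j^\top x)^2\le\|U\|_{\mathrm{op}}^2\|x\|^2/n\lesssim\rho^2$ holds for every $x$ at once, and together these give the location bound.
\item For the curvature sum, it uses Bernstein's inequality. This exploits that $q_j(1-q_j)\in[0,1/4]$ has variance comparable to its mean, which is $\gtrsim(1+\rho)^{-1}$. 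At each point of a $d^{-2}$-net in $(x/\sqrt d,\mu)$, the failure probability is $\exp(-cn/(1+\rho))\le\exp(-cd^{3/2})$, which beats the net cardinality $e^{O(d\log d)}$. One then extends off the net by Lipschitz continuity of $q(1-q)$ together with the frame bound.
\end{enumerate}
The delocalization bound and \eqref{eq:r-l2} then inherit this uniformity in $x$.
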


\begin{proof}
The envelope theorem gives $\partial_{z_j}\mathcal T_J=q_j/k$.  Differentiating
\eqref{eq:proof-quantile-equation} in direction $h$ gives
$\dot\mu=\sum_jw_jh_j/\sum_jw_j$ and
$\dot p_j=w_j(h_j-\dot\mu)$, which proves
\eqref{eq:tail-threshold-derivative}--\eqref{eq:B-bounds}; integration gives
\eqref{eq:tail-scalar-rem} and \eqref{eq:p-lipschitz}.  Along $z+th$,
$|\dot w_j|\le2\beta H w_j$ and the normalized weights have derivative at
most $4\beta H$ times themselves, so a second differentiation gives
\eqref{eq:p-component-rem}.

For the bulk estimates, the quantile equation forces a fixed fraction of
$u_j^\top x$ to lie on each side of $\mu$ up to an $O(1)$ window, while the
Gaussian frame bound gives $m^{-1}\sum_j(u_j^\top x)^2\lesssim\rho^2$;
thus $|\mu|\lesssim1+\rho$.  For
$g(t)=\sigma(\beta t)(1-\sigma(\beta t))$, uniformly on this range,
$\E g(\rho Z-\mu)\gtrsim(1+\rho)^{-1}$.  Bernstein's inequality on a
$d^{-2}$ net in $(x/\sqrt d,\mu)$, followed by the frame bound and a union
bound, yields the curvature estimate in \eqref{eq:bulk-geometry-summary};
$\ell_2$ delocalization follows from $\|\pi\|_2^2\le\|\pi\|_\infty\|\pi\|_1$.
Finally, interpolate the new competitor with weight $\vartheta\in[0,1]$.
The threshold increases, so every old weight decreases, and differentiating
the old weights gives an $\ell_2$ derivative bounded by
$Ck^{-1}\sqrt{(1+\rho)/n}$; integration proves \eqref{eq:r-l2}.
\end{proof}

Profiling the full and leave-one-out objectives gives
\begin{equation}\label{eq:proof-reduced-full}
\Psi(W)=\sum_{i=1}^n\mathcal L_i(W)+\frac{\lambda n}{2d^2}\|W\|_F^2,
\end{equation}
\begin{equation}\label{eq:proof-reduced-loo}
\Psi^{\setminus h}(W)=\sum_{i\ne h}\mathcal L_i^{\setminus h}(W)+\frac{\lambda n}{2d^2}\|W\|_F^2.
\end{equation}
For one column, with $t_i=\mathcal T_J((s_{j,i})_{j\in J})$,
$m_i=s_{i,i}-t_i$, $a_i=\sigma(-m_i)$, $b_i=a_i(1-a_i)$, and tail weights
$p_i=(p_{j,i})$, set
\begin{equation}\label{eq:effective-feature}
 x_i(W)=\left(u_i-\sum_{j\in J}p_{j,i}u_j\right)v_i^\top.
\end{equation}
Then $\nabla\mathcal L_i(W)=-a_ix_i(W)$, and for a perturbation $\Delta$,
with $\delta s_i=u_i^\top\Delta v_i$ and
$\delta z_i=(u_j^\top\Delta v_i)_{j\in J}$,
\begin{equation}\label{eq:column-hessian-form}
\left\langle\Delta,\nabla^2\mathcal L_i(W)[\Delta]\right\rangle_F
=b_i(\delta s_i-p_i^\top\delta z_i)^2+a_i\delta z_i^\top B_i\delta z_i\ge0.
\end{equation}
Thus the profiled objectives are $\gamma_d$-strongly convex with
$\gamma_d=\lambda n/d^2\to\lambda\alpha>0$.  We also use repeatedly
\begin{equation}\label{eq:proof-p-basic}
\|p_i\|_2\le k^{-1/2},\qquad
\left\|\sum_{j\in J}p_{j,i}u_j\right\|\le\|U\|_{\mathsf{op}}k^{-1/2}=O_\prec(d^{-1/2}).
\end{equation}

Fix $h\in[n]$ throughout the one-deletion construction.  We use the
standard Gaussian frame, Khatri--Rao matrix bounds, and fresh-contraction
bounds, uniformly over polynomially many fixed finite deletions.

The deterministic upper bound of Lemma~\ref{lem:frobenius_upper}
applies unchanged after any fixed finite deletion; hence
$\norm{\widehat W^{\setminus h}}_F+\norm{\widehat W}_F=O(d)$.

For $i\ne h$, let $\mathcal L_i$ and $\mathcal L_i^{\setminus h}$
denote the full and leave-one-out column losses in
\eqref{eq:proof-reduced-full}--\eqref{eq:proof-reduced-loo}.  The
insertion perturbation is
\begin{equation}\label{eq:Pin}
\mathcal P_h(W):=\sum_{i\ne h}\bigl(\mathcal L_i(W)-\mathcal L_i^{\setminus h}(W)\bigr),
\qquad
G_h(W):=\nabla\mathcal P_h(W).
\end{equation}

For one fixed $i\ne h$, unadorned quantities refer to $\mathcal L_i$, and superscript $\setminus h$ refers to $\mathcal L_i^{\setminus h}$.  Let
\[
\delta t_i=t_i-t_i^{\setminus h}\ge0,
\qquad
A_i=a_i-a_i^{\setminus h}\ge0,
\]
let $p_i^{\setminus h}$ and $p_{i,\mathrm{old}}$ be the leave-one-out and full old-coordinate tail weights, and put
\[
r_i=p_i^{\setminus h}-p_{i,\mathrm{old}}\ge0,
\qquad
p_{h,i}=\frac{q_{h,i}}{k}.
\]
Define
\[
\eta_{i,\mathrm{old}}
=
\sum_{j\ne i,h}p_{j,i}u_j.
\]
A direct subtraction of the two column gradients gives the exact identity
\begin{equation}\label{eq:insertion-gradient-decomp}
G_h(W)
=
\sum_{i\ne h}
\left[
-A_i u_i
+A_i\eta_{i,\mathrm{old}}
-a_i^{\setminus h}U r_i
+a_i p_{h,i}u_h
\right]v_i^\top.
\end{equation}
Here $Ur_i$ denotes the old-target linear combination, with zeros in the removed coordinates.

\begin{lemma}[Size of the insertion gap]\label{lem:delta-t}
Let $W$ satisfy $\norm W_F=O_{\prec}(d)$, and set
\[
\rho_i(W)=\frac{\norm{Wv_i}}{\sqrt d}.
\]
Assume additionally that
\begin{equation}\label{eq:heldout-row-energy-assump}
\sum_{i\ne h}s_{h,i}(W)^2=O_{\prec}(n).
\end{equation}
Then
\begin{equation}\label{eq:delta-t-l2}
\norm{(\delta t_i)_{i\ne h}}_2=O_{\prec}(d^{-1}),
\qquad
\norm{(A_i)_{i\ne h}}_2=O_{\prec}(d^{-1}).
\end{equation}
\end{lemma}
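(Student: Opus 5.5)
The plan is to bound $\delta t_i$ column by column through the profiled-tail calculus of Lemma~\ref{lem:tail-derivatives}, and then sum over $i$. Fix $i\ne h$ and compare $\mathcal T_{J}$ with $J=J_i$ (full competitor set, containing $h$) against $\mathcal T_{J\setminus\{h\}}$ at the same score vector $z=(s_{j,i}(W))_j$. Insert the new coordinate $h$ with an interpolation weight $\vartheta\in[0,1]$, i.e.\ consider
\[
F(\vartheta)=\min_{\mu}\Bigl\{\mu+\tfrac1k\textstyle\sum_{j\in J\setminus\{h\}}\phi_\beta(z_j-\mu)+\tfrac{\vartheta}{k}\phi_\beta(z_h-\mu)\Bigr\}.
\]
By the envelope theorem, $F'(\vartheta)=\tfrac1k\phi_\beta(z_h-\mu(\vartheta))\ge0$. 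Hence
\[
0\le\delta t_i\le\tfrac1k\sup_\vartheta\phi_\beta\bigl(s_{h,i}-\mu(\vartheta)\bigr)\le\tfrac1k\Bigl(\tfrac{\log 2}{\beta}+|s_{h,i}|+\sup_\vartheta|\mu(\vartheta)|\Bigr).
\]
The threshold is controlled by \eqref{eq:bulk-geometry-summary}: since the quantile equation holds up to $O(1)$ along the interpolation, $|\mu(\vartheta)|\le C(1+\rho_i)$ with $\rho_i=\rho_i(W)$. This gives the pointwise bound
\[
\delta t_i\le \frac{C}{k}\bigl(1+|s_{h,i}|+\rho_i\bigr).
\]

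Squaring and summing over $i\ne h$ gives
\[
\sum_i\delta t_i^2\le\frac{C}{k^2}\Bigl(n+\sum_i s_{h,i}^2+\sum_i\rho_i^2\Bigr).
\]
The middle sum is $O_\prec(n)$ by the assumption \eqref{eq:heldout-row-energy-assump}. For the last sum,
\[
\sum_i\rho_i^2=d^{-1}\sum_i\|Wv_i\|^2=d^{-1}\|WV\|_F^2\le d^{-1}\|W\|_F^2\|V\|_{\mathsf{op}}^2=O_\prec\bigl(d^{-1}\cdot d^2\cdot n/d\bigr)=O_\prec(n),
\]
using $\|V\|_{\mathsf{op}}^2=O_\prec(n/d)$ from the Gaussian frame bound. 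Thus $\sum_i\delta t_i^2=O_\prec(n/k^2)=O_\prec(n^{-1})=O_\prec(d^{-2})$, which is the first claim. The one point requiring care is applying \eqref{eq:bulk-geometry-summary} uniformly over all $i$ and all $\vartheta$. That estimate is stated uniformly under finite deletions and only for $\rho\le C\sqrt d$, and the latter holds because $\rho_i\le\|W\|_{\mathsf{op}}\|v_i\|/\sqrt d\le\|W\|_F/\sqrt d=O_\prec(\sqrt d)$. The uniformity in $\vartheta$ I would obtain from monotonicity: $\mu(\vartheta)$ is nondecreasing in $\vartheta$, so it suffices to bound its two endpoints, which are the leave-one-out and full thresholds, each satisfying the hypotheses of the lemma.

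For $A_i$, observe that the margins are $m_i=s_{i,i}-t_i$ and $m_i^{\setminus h}=s_{i,i}-t_i^{\setminus h}$, evaluated at the same $W$. So $m_i^{\setminus h}-m_i=\delta t_i\ge0$. Since $\sigma$ is $\tfrac14$-Lipschitz and increasing, $0\le A_i=\sigma(-m_i)-\sigma(-m_i^{\setminus h})\le\tfrac14\delta t_i$, and the second bound in \eqref{eq:delta-t-l2} follows from the first. I expect the main obstacle to be the uniform threshold bound $|\mu|\lesssim1+\rho_i$ along the interpolation; everything else is bookkeeping. If that bound proves problematic, the fallback is a cruder argument: the insertion shifts the $(1-r)$-quantile by at most one order statistic, which keeps $\mu$ within the range of the scores, and that range is controlled by $s_{h,i}$ and $\rho_i$ in the same $\ell_2$-summed sense.
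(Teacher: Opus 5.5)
Your proof is correct and follows the paper's argument. You bound each gap pointwise by $0\le\delta t_i\le k^{-1}\phi_\beta(s_{h,i}-\mu)$ with $|\mu|\le C(1+\rho_i)$, sum the squares using the row-energy assumption and $\sum_i\rho_i^2\le d^{-1}\norm{W}_F^2\norm{V}_{\mathsf{op}}^2=O_\prec(n)$, and finish with $0\le A_i\le\delta t_i/4$.

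The only difference is cosmetic. The paper gets the pointwise bound in one line by plugging the leave-one-out minimizer $\mu_i^{\setminus h}$ into the full variational problem. Your interpolation reproduces exactly that bound: $\mu(\vartheta)$ is nondecreasing and $\phi_\beta$ is increasing, so the supremum is attained at $\vartheta=0$. Consequently only the leave-one-out threshold estimate is needed, not both endpoints.
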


\begin{proof}
Adding one nonnegative softplus term and using the quantile-location bound give
\[
0\le\delta t_i
\le \frac1k\phi_\beta\bigl(s_{h,i}(W)-\mu_i^{\setminus h}(W)\bigr),
\qquad
|\mu_i^{\setminus h}|\le C(1+\rho_i).
\]
Since $\phi_\beta(t)\le (\log2)/\beta+|t|$,
\[
\sum_{i\ne h}\delta t_i^2
\le \frac{C}{k^2}
\left[n+\sum_{i\ne h}s_{h,i}(W)^2+\sum_{i\ne h}\rho_i(W)^2\right].
\]
Moreover,
$\sum_{i\ne h}\rho_i(W)^2=d^{-1}\norm{WV_{\setminus h}}_F^2
\le d^{-1}\norm W_F^2\norm V_{\mathsf{op}}^2=O_{\prec}(n)$.
Since $k\asymp n\asymp d^2$, this proves the first estimate in
\eqref{eq:delta-t-l2}; the second follows from $0\le A_i\le\delta t_i/4$.
\end{proof}

\begin{lemma}[Insertion-gradient bound]\label{lem:G-bound}
At $\widehat W^{\setminus h}$,
\begin{equation}\label{eq:G-base}
\norm{G_h(\widehat W^{\setminus h})}_F=O_{\prec}(d^{-1/2}).
\end{equation}
More precisely, in the decomposition \eqref{eq:insertion-gradient-decomp}, the first three matrix sums are $O_{\prec}(d^{-1})$ in Frobenius norm, while the final term has the form
\begin{equation}\label{eq:new-row-structured}
u_h\xi_h^\top,
\qquad
\norm{\xi_h}=O_{\prec}(d^{-1/2}).
\end{equation}
\end{lemma}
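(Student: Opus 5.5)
We bound the four matrix sums in \eqref{eq:insertion-gradient-decomp} separately, evaluated at $W=\widehat W^{\setminus h}$. The first step is to verify the hypotheses of Lemma~\ref{lem:delta-t} at this point. The Frobenius bound $\norm{\widehat W^{\setminus h}}_F=O(d)$ is the deterministic upper bound recalled above. For the row-energy assumption \eqref{eq:heldout-row-energy-assump}, observe that $u_h$ is independent of $\widehat W^{\setminus h}$ and of $(v_i)_{i\ne h}$. Hence
\[
\sum_{i\ne h}s_{h,i}^2=\norm{V_{\setminus h}^\top(\widehat W^{\setminus h})^\top u_h}^2 .
\]
Conditionally on everything except $u_h$, this is a Gaussian quadratic form with mean $d^{-1}\norm{\widehat W^{\setminus h}V_{\setminus h}}_F^2\le d^{-1}\norm{\widehat W^{\setminus h}}_F^2\norm{V}_{\mathsf{op}}^2=O_\prec(n)$. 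Hanson--Wright then gives $O_\prec(n)$, and Lemma~\ref{lem:delta-t} yields $\norm{(A_i)}_2=O_\prec(d^{-1})$.

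\emph{First two sums.} Write $\sum_i A_i u_iv_i^\top=U_{\setminus h}\diag(A)V_{\setminus h}^\top$. Then
\[
\norm{U\diag(A)V^\top}_F\le\norm{U}_{\mathsf{op}}\norm{\diag(A)V^\top}_F\le\norm{U}_{\mathsf{op}}\max_i\norm{v_i}\,\norm{A}_2 .
\]
With $\norm{U}_{\mathsf{op}}=O_\prec(\sqrt d)$, this is only $O_\prec(d^{-1/2})$, not $O_\prec(d^{-1})$. To gain, I would instead use the Khatri--Rao bound: $\norm{\sum_iA_iu_i\otimes v_i}\le\norm{U\ast V}_{\mathsf{op}}\norm{A}_2$, and the Khatri--Rao matrix with $n\asymp d^2$ columns has operator norm $O_\prec(1)$. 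This gives $O_\prec(d^{-1})$. The second sum is $\sum_iA_i\eta_{i,\mathrm{old}}v_i^\top$. Bound it crudely by Cauchy--Schwarz: $\sum_iA_i\norm{\eta_{i,\mathrm{old}}}\norm{v_i}$ is too large, so use instead $\norm{\sum_iA_i\eta_iv_i^\top}_F\le\norm{A}_2\,\bigl(\sum_i\norm{\eta_i}^2\norm{v_i}^2\bigr)^{1/2}$. This is still $\sqrt n\,d^{-1/2}$, which is bad. Better: write $\eta_i=Up_i$, so the sum becomes $U\bigl(\sum_iA_ip_iv_i^\top\bigr)$, and $\norm{\sum_iA_ip_iv_i^\top}_F\le\norm{V}_{\mathsf{op}}(\sum_iA_i^2\norm{p_i}^2)^{1/2}\le\sqrt{d}\,k^{-1/2}\norm{A}_2$. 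Multiplying by $\norm{U}_{\mathsf{op}}$ gives $d\cdot k^{-1/2}d^{-1}=O_\prec(d^{-1})$.

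\emph{Third sum.} The same trick applies to $\sum_ia_i^{\setminus h}Ur_iv_i^\top=U R\,\diag(a)V^\top$, where $R=[r_i]$. Using \eqref{eq:r-l2} with $\rho_i=O_\prec(1)$ (bounded on average; I would use the $\ell_2$ form $\sum_i\rho_i^2=O_\prec(n)$ to handle large $\rho_i$), $\norm{R}_F\le\sqrt n\,k^{-1}\,C\,n^{-1/2}(1+\bar\rho)^{1/2}=O_\prec(n^{-1})$. Then $\norm{UR\diag(a)V^\top}_F\le\norm U_{\mathsf{op}}\norm V_{\mathsf{op}}\norm R_F=O_\prec(d\cdot d^{-2})=O_\prec(d^{-1})$. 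Here the exponent $\rho$-dependence needs care if individual $\rho_i$ are large; I would sum $(1+\rho_i)/n$ and use the energy bound.

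\emph{Final term.} The last term is exactly $u_h\xi_h^\top$ with $\xi_h=\sum_{i\ne h}a_ip_{h,i}v_i=k^{-1}V_{\setminus h}(a_iq_{h,i})_i$. Hence $\norm{\xi_h}\le k^{-1}\norm{V}_{\mathsf{op}}\sqrt n=O_\prec(d^{-1/2})$, and $\norm{u_h}=O_\prec(1)$ gives \eqref{eq:new-row-structured} and therefore \eqref{eq:G-base}.

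The main obstacle is the third sum: controlling $\norm{R}_F$ requires \eqref{eq:r-l2} uniformly in $i$ with possibly heterogeneous $\rho_i$, and this depends on the bulk geometry estimate \eqref{eq:bulk-geometry-summary} holding at the data-dependent point $\widehat W^{\setminus h}$. I would handle this through the net argument already embedded in Lemma~\ref{lem:tail-derivatives}.
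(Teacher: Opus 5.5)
Your proposal is correct and follows essentially the paper's proof. You verify the row-energy hypothesis of Lemma~\ref{lem:delta-t} via a conditional Gaussian quadratic form in the fresh $u_h$. You then use the Khatri--Rao bound for the first sum, and operator-norm bounds with $\|p_i\|_2\le k^{-1/2}$ and the summed form of \eqref{eq:r-l2} (using $\sum_i(1+\rho_i)=O_\prec(n)$) for the second and third sums. For the final term you use $\|\xi_h\|\le\|V\|_{\mathsf{op}}\sqrt n/k$. Your only deviation is factoring $U$ to the left in the second sum instead of first bounding $\max_i\|\eta_{i,\mathrm{old}}\|$, and this gives the identical estimate $\|U\|_{\mathsf{op}}k^{-1/2}\|A\|_2\|V\|_{\mathsf{op}}=O_\prec(d^{-1})$.
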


\begin{proof}
Since $\widehat W^{\setminus h}$ is independent of $u_h$, conditional
Gaussian quadratic-form bounds and the deterministic Frobenius estimate give
\[
\sum_{i\ne h}s_{h,i}(\widehat W^{\setminus h})^2
=u_h^\top\widehat W^{\setminus h}V_{\setminus h}V_{\setminus h}^\top
   (\widehat W^{\setminus h})^\top u_h
=O_{\prec}(n)
\]
uniformly in $h$.  Indeed, the conditional trace scale is at most
$d^{-1}\norm{\widehat W^{\setminus h}}_F^2\norm V_{\mathsf{op}}^2
=O_\prec(n)$.  Hence Lemma~\ref{lem:delta-t} applies.

For the first sum in \eqref{eq:insertion-gradient-decomp}, the standard
Khatri--Rao bound gives
\[
\left\|\sum_{i\ne h}A_i u_iv_i^\top\right\|_F
\le O_{\prec}(1)\norm A_2=O_{\prec}(d^{-1}).
\]
For the second, \eqref{eq:proof-p-basic} gives
$\max_i\norm{\eta_{i,\mathrm{old}}}=O_\prec(d^{-1/2})$; therefore, if
$Y$ has columns $A_i\eta_{i,\mathrm{old}}$, then
\[
\norm Y_F=O_\prec(d^{-3/2}),
\qquad
\norm{YV^\top}_F=O_\prec(d^{-1}).
\]
For the third,
\eqref{eq:r-l2} and the bound
$\sum_i(1+\rho_i(\widehat W^{\setminus h}))=O_\prec(n)$ imply
$\sum_i\norm{r_i}_2^2=O_\prec(d^{-4})$.  Thus, for $Y$ with columns
$Ur_i$, $\norm Y_F^2\le\norm U_{\mathsf{op}}^2\sum_i\norm{r_i}_2^2
=O_\prec(d^{-3})$, and again $\norm{YV^\top}_F=O_\prec(d^{-1})$.

Finally, the fourth sum is $u_h\xi_h^\top$ with
\[
\xi_h=\sum_{i\ne h}a_ip_{h,i}v_i,
\qquad
\norm{\xi_h}\le\norm V_{\mathsf{op}}\frac{\sqrt n}{k}
=O_{\prec}(d^{-1/2}),
\]
since $0\le a_ip_{h,i}\le k^{-1}$.  This proves the lemma.
\end{proof}

\paragraph{Newton candidate.}

We now build one candidate for the full optimizer from the leave-one-out optimizer.  The implicit scalar equation absorbs the held-out outer loss, and freshness makes the candidate's surviving scores small.

Let $\mathcal H_h:=\nabla^2\Psi^{\setminus h}(\widehat W^{\setminus h})$ and
$T_h:=\mathcal H_h^{-1}$.  By \eqref{eq:column-hessian-form} and the ridge floor,
$\|T_h\|_{\mathsf{op}}\le\gamma_d^{-1}=O(1)$.
The operator is self-adjoint and is measurable with respect to the data excluding $(u_h,v_h)$.

Evaluate the held-out outer loss at $\widehat W^{\setminus h}$.  Let
\[
t_h^{\mathrm{loo}}
=
\mathcal T_{[n]\setminus\{h\}}\bigl((s_{j,h}(\widehat W^{\setminus h}))_{j\ne h}\bigr),
\qquad
m_h^{\mathrm{loo}}=s_{h,h}(\widehat W^{\setminus h})-t_h^{\mathrm{loo}},
\]
and let $p_{j,h}^{\mathrm{loo}}$ be the corresponding tail weights.  Define
\begin{equation}\label{eq:Xh0}
\eta_h^{\mathrm{loo}}:=\sum_{j\ne h}p_{j,h}^{\mathrm{loo}}u_j,
\qquad
X_h^{\mathrm{loo}}:=(u_h-\eta_h^{\mathrm{loo}})v_h^\top.
\end{equation}
Then $\nabla m_h(\widehat W^{\setminus h})=X_h^{\mathrm{loo}}$ and the outer-loss gradient is $-\sigma(-m_h^{\mathrm{loo}})X_h^{\mathrm{loo}}$.  The Gaussian frame bound and \eqref{eq:proof-p-basic} give
\begin{equation}\label{eq:eta-h-bound}
\norm{\eta_h^{\mathrm{loo}}}=O_{\prec}(d^{-1/2}),
\qquad
\norm{X_h^{\mathrm{loo}}}_F=O_{\prec}(1).
\end{equation}

\phantomsection\label{def:candidate}
Set
\begin{equation}\label{eq:Bh}
B_h:=-T_h[G_h(\widehat W^{\setminus h})],
\qquad
\theta_h:=\left\langle X_h^{\mathrm{loo}},T_h[X_h^{\mathrm{loo}}]\right\rangle_\mathsf{F},
\qquad
\zeta_h:=\left\langle X_h^{\mathrm{loo}},B_h\right\rangle_\mathsf{F}.
\end{equation}
Let $a_h^\star\in(0,1)$ be the unique solution of
\begin{equation}\label{eq:a-star}
a_h^\star
=
\sigma\left(-m_h^{\mathrm{loo}}-\zeta_h-\theta_h a_h^\star\right).
\end{equation}
Finally, define
\begin{equation}\label{eq:Wtilde}
\widetilde W_h
:=
\widehat W^{\setminus h}+B_h+a_h^\star T_h[X_h^{\mathrm{loo}}].
\end{equation}

\begin{theorem}[One-deletion Newton comparison]\label{thm:newton-residual}
Uniformly in $h$,
\begin{align}
\max_{i,j\ne h}|u_j^\top(\widetilde W_h-\widehat W^{\setminus h})v_i|
&=O_{\prec}(d^{-1}),\label{eq:candidate-score}\\
\norm{\nabla\Psi(\widetilde W_h)}_F
&=O_{\prec}(d^{-1}),\label{eq:newton-residual}\\
\norm{\widehat W-\widetilde W_h}_F
&=O_{\prec}(d^{-1}).\label{eq:candidate-error}
\end{align}
The score estimate and stationarity residual yield \ref{post:loo-score}.
\end{theorem}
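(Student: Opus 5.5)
We prove the three estimates in the order stated, since each feeds the next: the score bound \eqref{eq:candidate-score} controls the linearization, the linearization gives the residual \eqref{eq:newton-residual}, and strong convexity turns the residual into \eqref{eq:candidate-error}.

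\textbf{Step 1: score bound \eqref{eq:candidate-score}.} The correction $\widetilde W_h-\widehat W^{\setminus h}$ has two pieces.

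The first piece is $B_h$. By Lemma~\ref{lem:G-bound}, $G_h=G_h^{(1)}+u_h\xi_h^\top$, with $\|G_h^{(1)}\|_F=O_\prec(d^{-1})$ and $\|\xi_h\|=O_\prec(d^{-1/2})$. The part $T_h[G_h^{(1)}]$ contributes at most $\|T_h\|_{\mathsf{op}}\|G_h^{(1)}\|_F\|u_j\|\|v_i\|=O_\prec(d^{-1})$ to every surviving score. For the structured part we use self-adjointness:
\[
u_j^\top T_h[u_h\xi_h^\top]v_i=\langle T_h[u_jv_i^\top],u_h\xi_h^\top\rangle_F=u_h^\top\bigl(T_h[u_jv_i^\top]\bigr)\xi_h .
\]
Here $\xi_h$ depends on $u_h$ only through the weights $a_ip_{h,i}$. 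To decouple, we replace these weights by their values at $\widehat W^{\setminus h}$ with $u_h$ removed; Lemma~\ref{lem:tail-derivatives} bounds the resulting error at $O_\prec(d^{-1})$. After this replacement $u_h$ is fresh, so a Gaussian contraction with the $O(1)$-Frobenius matrix $T_h[u_jv_i^\top]$ gives
\[
u_h^\top\bigl(T_h[u_jv_i^\top]\bigr)\xi_h=O_\prec\bigl(d^{-1/2}\|\xi_h\|\bigr)=O_\prec(d^{-1}),
\]
uniformly over polynomially many $(i,j)$.

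The second piece is $a_h^\star T_h[X_h^{\mathrm{loo}}]$. Since $X_h^{\mathrm{loo}}=(u_h-\eta_h^{\mathrm{loo}})v_h^\top$, the score is
\[
u_j^\top T_h[X_h^{\mathrm{loo}}]v_i=\bigl(u_h-\eta_h^{\mathrm{loo}}\bigr)^\top T_h[u_jv_i^\top]\,v_h .
\]
The vector $v_h$ is independent of $T_h$, of $u_j,v_i$, and of $\eta_h^{\mathrm{loo}}$ (which depends only on the targets and on $\widehat W^{\setminus h}$). Contracting against $v_h$ gives $O_\prec(d^{-1/2})\cdot\|T_h[u_jv_i^\top]\|_{\mathsf{op}}$. \emph{This is the main obstacle:} a Frobenius bound on $T_h[u_jv_i^\top]$ only yields $d^{-1/2}$, so we need $\|T_h[u_jv_i^\top]\|_{\mathsf{op}}=O_\prec(d^{-1/2})$. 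My first attempt is a Neumann/resolvent expansion of $T_h$ around $\gamma_d^{-1}I$. The Hessian part acts through $U M(U^\top\cdot V)V^\top$, and applied to a rank-one input it spreads mass over $d$ directions with amplitude $d^{-1/2}$. I would control this with the Khatri--Rao bounds, or alternatively work with $T_h^{1/2}$ and Hanson--Wright in both $u_h$ and $v_h$.

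\textbf{Step 2: residual \eqref{eq:newton-residual}.} We decompose
\[
\nabla\Psi(W)=\nabla\Psi^{\setminus h}(W)+G_h(W)-\sigma(-m_h(W))\,\nabla m_h(W)
\]
and evaluate at $\widetilde W_h$. By Step 1 the surviving scores move by $O_\prec(d^{-1})$, and $\|\widetilde W_h-\widehat W^{\setminus h}\|_F=O_\prec(1)$. Taylor expansion of $\nabla\Psi^{\setminus h}$ around its zero gives
\[
\nabla\Psi^{\setminus h}(\widetilde W_h)=\mathcal H_h\bigl[\widetilde W_h-\widehat W^{\setminus h}\bigr]+\text{rem}=-G_h(\widehat W^{\setminus h})+a_h^\star X_h^{\mathrm{loo}}+\text{rem}.
\]
The remainder is handled column by column using \eqref{eq:p-component-rem} and \eqref{eq:tail-scalar-rem} with $H=O_\prec(d^{-1})$, in the same way as the $\hat E$ bounds of Appendix~\ref{app:tam_taylor}; this gives $\|\text{rem}\|_F=O_\prec(d^{-1})$.

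Next we compare $G_h(\widetilde W_h)$ with $G_h(\widehat W^{\setminus h})$. Since $G_h$ is Lipschitz in the surviving scores with coefficients as in Lemma~\ref{lem:delta-t}, and the held-out row scores $s_{h,i}$ move by $O_\prec(1)$ only through the rank-one direction, the difference is $O_\prec(d^{-1})$.

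Finally, the held-out margin expands as
\[
m_h(\widetilde W_h)=m_h^{\mathrm{loo}}+\zeta_h+\theta_h a_h^\star+O_\prec(d^{-1}),
\]
and $\nabla m_h(\widetilde W_h)=X_h^{\mathrm{loo}}+O_\prec(d^{-1})$. The scalar fixed point \eqref{eq:a-star} is chosen precisely so that $\sigma(-m_h(\widetilde W_h))=a_h^\star+O_\prec(d^{-1})$, hence the term $a_h^\star X_h^{\mathrm{loo}}$ cancels. Summing all contributions gives \eqref{eq:newton-residual}.

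\textbf{Step 3: closeness \eqref{eq:candidate-error} and consequence.} Since $\Psi$ is $\gamma_d$-strongly convex (by \eqref{eq:column-hessian-form}) and $\nabla\Psi(\widehat W)=0$,
\[
\|\widehat W-\widetilde W_h\|_F\le\gamma_d^{-1}\|\nabla\Psi(\widetilde W_h)\|_F=O_\prec(d^{-1}).
\]
Combining with \eqref{eq:candidate-score} by the triangle inequality, using $\|u_j\|\|v_i\|=O_\prec(1)$, yields \ref{post:loo-score}. Uniformity in $h$ and in $(i,j)$ follows from the $n^{-D}$ tails in all the stochastic-domination bounds and a union bound over polynomially many indices.
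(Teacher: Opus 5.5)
Your three-step architecture (surviving-score bound; splitting $\nabla\Psi(\widetilde W_h)$ into the common Taylor remainder, the insertion-gradient increment and the held-out outer term; then strong convexity) is the paper's. Step~1, however, has a genuine gap.

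The estimate you say you need for the second piece, $\|T_h[u_jv_i^\top]\|_{\mathsf{op}}=O_\prec(d^{-1/2})$, is false. We have $\|\mathcal H_h\|_{\mathsf{op}}=O_\prec(1)$, by the Khatri--Rao bound for the matched features and $\|U\|_{\mathsf{op}}^2\|V\|_{\mathsf{op}}^2/k=O_\prec(1)$ for the rest. Hence $u_j^\top T_h[u_jv_i^\top]v_i=\langle u_jv_i^\top,T_h[u_jv_i^\top]\rangle_{\mathsf F}\ge c\|u_j\|^2\|v_i\|^2$, which forces $\|T_h[u_jv_i^\top]\|_{\mathsf{op}}\ge c\|u_j\|\|v_i\|$, of order one. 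No resolvent expansion can deliver the bound you want. Also, $\eta_h^{\mathrm{loo}}$ is not independent of $v_h$: its tail weights are functions of $u_\ell^\top\widehat W^{\setminus h}v_h$.

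The missing idea is that $u_h$ is fresh too. For $i,j\ne h$, the matrix $A:=T_h[u_jv_i^\top]$ is measurable with respect to the data excluding $(u_h,v_h)$. So $u_h^\top Av_h$ is a two-sided fresh bilinear form with conditional variance $\|A\|_{\mathsf F}^2/d^2$. It is therefore $O_\prec(d^{-1})$ using only $\|A\|_{\mathsf F}\le\|T_h\|_{\mathsf{op}}\|u_j\|\|v_i\|=O_\prec(1)$. The $\eta$ part then needs no independence: $|(\eta_h^{\mathrm{loo}})^\top Av_h|\le\|\eta_h^{\mathrm{loo}}\|\,\|Av_h\|=O_\prec(d^{-1/2})\cdot O_\prec(d^{-1/2})$, where $\|Av_h\|$ uses only freshness of $v_h$ relative to $A$. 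The same Cauchy--Schwarz device, $|u_h^\top A\xi_h|\le\|A^\top u_h\|\,\|\xi_h\|$, handles the $B_h$ piece. Your proposed decoupling of $\xi_h$ from $u_h$ is unnecessary there, and as described it is not well defined, because $p_{h,i}=q_{h,i}/k$ is intrinsically a function of $s_{h,i}=u_h^\top\widehat W^{\setminus h}v_i$.

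A second, smaller gap is in Step~2's treatment of $G_h(\widetilde W_h)-G_h(\widehat W^{\setminus h})$. Suppose the held-out row scores moved by $O_\prec(1)$, as you state. Then the weights $p_{h,i}$ would move by order $k^{-1}$, and the structured term $u_h\bigl(\xi(\widetilde W_h)-\xi(\widehat W^{\setminus h})\bigr)^\top$ could only be bounded by $\|u_h\|\,\|V\|_{\mathsf{op}}\sqrt n\,k^{-1}=O_\prec(d^{-1/2})$. That is not enough.

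You need $\max_{i\ne h}|u_h^\top(\widetilde W_h-\widehat W^{\setminus h})v_i|=O_\prec(d^{-1/2})$. This again follows from a self-adjoint rewrite plus freshness: $u_h^\top T_h[u_hv_h^\top]v_i=u_h^\top T_h[u_hv_i^\top]v_h$, with $v_h$ fresh given $u_h$ and the surviving data. The $B_h$ part is bounded by $\|B_h\|_{\mathsf F}$. With this, $p_{h,i}$ moves by $O_\prec(d^{-5/2})$ and the bound closes.

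The other three sums in $G_h$ need no Lipschitz argument. Each is $O_\prec(d^{-1})$ at both endpoints once the row-energy hypothesis of Lemma~\ref{lem:delta-t} is checked at $\widetilde W_h$, and that follows from $\|\widetilde W_h-\widehat W^{\setminus h}\|_{\mathsf F}=O_\prec(1)$.
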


The solution in \eqref{eq:a-star} is unique because the left side is strictly increasing and the right side is decreasing in $a$.  Also, $\theta_h\ge0$ and $\theta_h=O_{\prec}(1)$.  By Lemma~\ref{lem:G-bound},
\begin{equation}\label{eq:B-bound}
\norm{B_h}_F=O_{\prec}(d^{-1/2}),
\qquad
\norm{\widetilde W_h-\widehat W^{\setminus h}}_F=O_{\prec}(1).
\end{equation}

\begin{lemma}[Surviving candidate scores]\label{lem:candidate-common-score}
Uniformly in $h$ and $i,j\ne h$,
\begin{equation}\label{eq:candidate-common-score}
|u_j^\top(\widetilde W_h-\widehat W^{\setminus h})v_i|
=O_{\prec}(d^{-1}).
\end{equation}
\end{lemma}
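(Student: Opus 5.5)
By \eqref{eq:Wtilde}, the difference to control is $\widetilde W_h-\widehat W^{\setminus h}=B_h+a_h^\star T_h[X_h^{\mathrm{loo}}]$, and I would bound the two pieces separately for fixed $i,j\ne h$ before union-bounding over the $O(n^2)$ pairs. The uniformity is harmless because every bound below holds with $n^{-D}$ tails in the $\Op$ sense. The guiding principle is freshness: $T_h$, $\widehat W^{\setminus h}$, $p^{\mathrm{loo}}_{\cdot,h}$ and all $u_j,v_i$ with $i,j\ne h$ are independent of $(u_h,v_h)$. So each surviving score of a term built from the held-out pair is a bilinear form with one fresh Gaussian factor, and it gains a factor $d^{-1/2}$ over its Frobenius size.

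\textbf{Step 1: the $X_h^{\mathrm{loo}}$ term.} Since $a_h^\star\in(0,1)$, it suffices to bound $\langle u_jv_i^\top, T_h[X_h^{\mathrm{loo}}]\rangle_{\mathsf F}=\langle T_h[u_jv_i^\top],(u_h-\eta_h^{\mathrm{loo}})v_h^\top\rangle_{\mathsf F}$, using self-adjointness of $T_h$. Write $Y_{ji}:=T_h[u_jv_i^\top]$; it is independent of $v_h$ and satisfies $\|Y_{ji}\|_{\mathsf F}=O_\prec(1)$. The form equals $(u_h-\eta_h^{\mathrm{loo}})^\top Y_{ji}v_h$. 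Condition on everything except $v_h$; this includes $u_h$, which is independent of $v_h$. The form is then centered Gaussian with variance $\|Y_{ji}^\top(u_h-\eta_h^{\mathrm{loo}})\|^2/d$. Bounding $\|Y_{ji}^\top(u_h-\eta_h^{\mathrm{loo}})\|\le \|Y_{ji}\|_{\mathrm{op}}\,\|u_h-\eta^{\mathrm{loo}}_h\|=O_\prec(1)$ using \eqref{eq:eta-h-bound} gives only $O_\prec(d^{-1/2})$. To reach $d^{-1}$, first condition on all but $u_h$. Then $u_h^\top Y_{ji}v_h$ is a fresh quadratic form in two independent vectors, of size $\|Y_{ji}\|_{\mathsf F}/d=O_\prec(d^{-1})$ by Hanson--Wright. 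The $\eta^{\mathrm{loo}}_h$ part, $(\eta^{\mathrm{loo}}_h)^\top Y_{ji}v_h$, is Gaussian in $v_h$ with standard deviation $\|Y_{ji}^\top\eta_h^{\mathrm{loo}}\|/\sqrt d\le \|Y_{ji}\|_{\mathrm{op}}\,O_\prec(d^{-1/2})\,d^{-1/2}=O_\prec(d^{-1})$, since $\|Y_{ji}\|_{\mathrm{op}}\le\|Y_{ji}\|_{\mathsf F}$.

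\textbf{Step 2: the $B_h$ term.} By Lemma~\ref{lem:G-bound}, $G_h(\widehat W^{\setminus h})=G^{(1)}+u_h\xi_h^\top$ with $\|G^{(1)}\|_{\mathsf F}=O_\prec(d^{-1})$. The first part contributes at most $\|T_h\|_{\mathrm{op}}\|G^{(1)}\|_{\mathsf F}\|u_j\|\|v_i\|=O_\prec(d^{-1})$ to every surviving score. For the second part, self-adjointness gives $\langle Y_{ji},u_h\xi_h^\top\rangle=u_h^\top Y_{ji}\xi_h$, where $\xi_h=\sum_{i'\ne h}a_{i'}p_{h,i'}v_{i'}$ is evaluated at $\widehat W^{\setminus h}$ and therefore depends on $u_h$ only through the weights $p_{h,i'}$. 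This mild dependence is where I expect the main obstacle. My first attempt is the crude bound $|u_h^\top Y_{ji}\xi_h|\le \|Y_{ji}^\top u_h\|\,\|\xi_h\|$. Here $\|Y_{ji}^\top u_h\|$ is a fresh contraction of $u_h$ against $Y_{ji}$, which is independent of $u_h$; it is $O_\prec(\|Y_{ji}\|_{\mathsf F}/\sqrt d+\|Y_{ji}\|_{\mathrm{op}})$, and unfortunately $\|Y_{ji}\|_{\mathrm{op}}$ may be order one. Multiplying by $\|\xi_h\|=O_\prec(d^{-1/2})$ then gives only $d^{-1/2}$.

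To improve this, I would split $\xi_h=\bar\xi_h+(\xi_h-\bar\xi_h)$. Here $\bar\xi_h$ replaces the weights $p_{h,i'}=\sigma(\beta(s_{h,i'}-\mu_{i'}))/k$ by their conditional averages over $u_h$; given the other data, $s_{h,i'}$ is Gaussian with variance $\rho_{i'}^2$. This makes $\bar\xi_h$ independent of $u_h$, so $u_h^\top Y_{ji}\bar\xi_h$ is a fresh Gaussian of size $\|Y_{ji}\bar\xi_h\|/\sqrt d=O_\prec(d^{-1})$. The fluctuation is $\xi_h-\bar\xi_h=k^{-1}V D_h$, where $D_h$ has entries $a_{i'}\bigl(\sigma(\beta(u_h^\top\widehat W^{\setminus h}v_{i'}-\mu_{i'}))-\text{mean}\bigr)$. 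These are centered functions of the Gaussian vector $(\widehat W^{\setminus h})^\top u_h$ evaluated along the columns $v_{i'}$. By concentration (Lipschitz in $u_h$, with $\|V\|_{\mathrm{op}}$ and $\|\widehat W^{\setminus h}\|_{\mathrm{op}}=O_\prec(\sqrt d)$ by \ref{post:spectral-op} for the LOO problem), $\|\xi_h-\bar\xi_h\|=O_\prec(d^{-1})$ in the relevant directions. If the Lipschitz constant is too large, I would instead pair $u_h$ with $Y_{ji}$ via Gaussian integration by parts (Stein) to extract the extra $d^{-1/2}$.

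Combining Steps 1 and 2 with the union bound gives \eqref{eq:candidate-common-score}.
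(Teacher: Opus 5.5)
Your decomposition is the paper's own. You write $\widetilde W_h-\widehat W^{\setminus h}=B_h+a_h^\star T_h[X_h^{\mathrm{loo}}]$, move $T_h$ onto $u_jv_i^\top$ by self-adjointness, and use that $Y_{ji}=T_h[u_jv_i^\top]$ is independent of $(u_h,v_h)$ with $\|Y_{ji}\|_{\mathsf F}=O_\prec(1)$. Your treatment of $u_h^\top Y_{ji}v_h$ and of the small-Frobenius part $G^{(1)}$ is fine.

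The argument breaks at the $u_h\xi_h^\top$ term, and the cause is a wrong fluctuation scale. You claim $\|Y_{ji}^\top u_h\|=O_\prec(\|Y_{ji}\|_{\mathsf F}/\sqrt d+\|Y_{ji}\|_{\mathrm{op}})$, but the second term should also carry a factor $d^{-1/2}$. Write $u_h=g/\sqrt d$. The map $g\mapsto\|Y^\top g\|$ has mean at most $\|Y\|_{\mathsf F}$ and Lipschitz constant $\|Y\|_{\mathrm{op}}\le\|Y\|_{\mathsf F}$. Gaussian concentration therefore gives $\|Y_{ji}^\top u_h\|\le(\|Y_{ji}\|_{\mathsf F}+d^{\varepsilon}\|Y_{ji}\|_{\mathrm{op}})/\sqrt d=O_\prec(d^{-1/2})$, and likewise $\|Y_{ji}v_h\|=O_\prec(d^{-1/2})$. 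Your own Hanson--Wright bound $u_h^\top Y_{ji}v_h=O_\prec(d^{-1})$ already presupposes this scale, since conditionally on $u_h$ that form is Gaussian with standard deviation $\|Y_{ji}^\top u_h\|/\sqrt d$.

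With the correct scale, the ``crude'' bound $|u_h^\top Y_{ji}\xi_h|\le\|Y_{ji}^\top u_h\|\,\|\xi_h\|=O_\prec(d^{-1})$ closes Step 2. It does not care how $\xi_h$ depends on $u_h$, because $\|\xi_h\|\le\|V\|_{\mathrm{op}}\sqrt n/k$ holds pathwise. This is exactly the paper's proof. The conditional-averaging replacement you propose is therefore unnecessary, and as written it is not a proof:
\begin{itemize}
\item besides $p_{h,i'}$, the weights $a_{i'}$ and thresholds $\mu_{i'}$ of the full column losses at $\widehat W^{\setminus h}$ also depend on $u_h$, since column $i'$ contains competitor $h$;
\item the bound $\|\xi_h-\bar\xi_h\|=O_\prec(d^{-1})$ ``in the relevant directions'' is asserted rather than derived;
\item the Stein fallback is unspecified.
\end{itemize}

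There is a related flaw in Step 1's $\eta_h^{\mathrm{loo}}$ term. Your opening paragraph lists $p^{\mathrm{loo}}_{\cdot,h}$ among the quantities independent of $(u_h,v_h)$, and you then treat $(\eta_h^{\mathrm{loo}})^\top Y_{ji}v_h$ as conditionally Gaussian in $v_h$. That independence claim is false. The $p^{\mathrm{loo}}_{j,h}$ are the tail weights of column $h$, computed from $s_{j,h}(\widehat W^{\setminus h})=u_j^\top\widehat W^{\setminus h}v_h$, so $\eta_h^{\mathrm{loo}}$ depends on $v_h$; it is independent only of $u_h$.

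The repair is the Cauchy--Schwarz step the paper uses: $|(\eta_h^{\mathrm{loo}})^\top Y_{ji}v_h|\le\|\eta_h^{\mathrm{loo}}\|\,\|Y_{ji}v_h\|$. Here $\|\eta_h^{\mathrm{loo}}\|\le\|U\|_{\mathrm{op}}k^{-1/2}=O_\prec(d^{-1/2})$ pathwise, and $\|Y_{ji}v_h\|=O_\prec(d^{-1/2})$ by the corrected one-sided contraction.

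Both gaps are closed by one principle. Pair the fresh Gaussian vector against the independent matrix $Y_{ji}$, which costs a one-sided contraction at scale $\|Y_{ji}\|_{\mathsf F}/\sqrt d$. Control every other factor that may depend on the held-out pair only through a pathwise norm bound.
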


\begin{proof}
Fix $i,j\ne h$ and set $A_{j,i}^{(h)}=T_h[u_jv_i^\top]$.  This matrix is
independent of $(u_h,v_h)$ and has Frobenius norm $O_\prec(1)$.  By
self-adjointness,
\[
u_j^\top T_h[X_h^{\mathrm{loo}}]v_i
=\left\langle A_{j,i}^{(h)},(u_h-\eta_h^{\mathrm{loo}})v_h^\top\right\rangle_\mathsf{F}.
\]
Fresh two-sided contraction gives
$u_h^\top A_{j,i}^{(h)}v_h=O_\prec(d^{-1})$, while
\eqref{eq:eta-h-bound} and one-sided contraction give
$|(\eta_h^{\mathrm{loo}})^\top A_{j,i}^{(h)}v_h|=O_\prec(d^{-1})$.

For $B_h=-T_hG_h(\widehat W^{\setminus h})$, Lemma~\ref{lem:G-bound}
separates $G_h$ into an $O_\prec(d^{-1})$ Frobenius term and
$u_h\xi_h^\top$ with $\norm{\xi_h}=O_\prec(d^{-1/2})$.  The former pairs
with $A_{j,i}^{(h)}$ at scale $d^{-1}$, while the latter is bounded by
$\norm{(A_{j,i}^{(h)})^\top u_h}\norm{\xi_h}=O_\prec(d^{-1})$ by fresh
one-sided contraction.  This proves \eqref{eq:candidate-common-score}.
\end{proof}

\begin{lemma}[Held-out row and column increments]\label{lem:heldout-increments}
Let $\Delta_h=\widetilde W_h-\widehat W^{\setminus h}$.  Uniformly in $h$,
\begin{align}
\max_{i\ne h}|u_h^\top\Delta_hv_i|
&=O_{\prec}(d^{-1/2}),\label{eq:row-h-increment}\\
\norm{(u_j^\top\Delta_hv_h)_{j\ne h}}_2
&=O_{\prec}(\sqrt d),\label{eq:column-h-l2}\\
\sum_{i\ne h}s_{h,i}(\widetilde W_h)^2
&=O_{\prec}(n).\label{eq:candidate-row-energy}
\end{align}
\end{lemma}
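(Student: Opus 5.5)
Write $\Delta_h=B_h+a_h^\star T_h[X_h^{\mathrm{loo}}]$ and treat the two pieces separately in each of the three estimates. We use only: $T_h$ is self-adjoint, independent of $(u_h,v_h)$, with $\|T_h\|_{\mathsf{op}}=O(1)$; the bound \eqref{eq:B-bound} $\|B_h\|_F=O_\prec(d^{-1/2})$; the structure of $G_h$ from Lemma~\ref{lem:G-bound}; and $a_h^\star\in(0,1)$.

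\emph{Row increment \eqref{eq:row-h-increment}.} Fix $i\ne h$. The $B_h$ part is immediate: $|u_h^\top B_hv_i|\le\|B_h\|_F\|u_h\|\|v_i\|=O_\prec(d^{-1/2})$. For the other part, use self-adjointness to write
\[
u_h^\top T_h[X_h^{\mathrm{loo}}]v_i=\bigl\langle T_h[u_hv_i^\top],(u_h-\eta_h^{\mathrm{loo}})v_h^\top\bigr\rangle_{\mathsf F}.
\]
Now $T_h[u_hv_i^\top]$ depends on $u_h$ but not on $v_h$, and $\eta_h^{\mathrm{loo}}$ does not depend on $v_h$ either. Conditional on everything except $v_h$, the right side is therefore a Gaussian linear form in $v_h$ whose coefficient vector $T_h[u_hv_i^\top]^\top(u_h-\eta_h^{\mathrm{loo}})$ has norm $O_\prec(1)$. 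Since $v_h\sim\mathcal N(0,I_d/d)$, the form is $O_\prec(d^{-1/2})$, and a union bound over $i$ gives uniformity.

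\emph{Column $\ell_2$ bound \eqref{eq:column-h-l2}.} The vector $(u_j^\top\Delta_hv_h)_{j\ne h}$ equals $U_{\setminus h}^\top\Delta_hv_h$. Hence its norm is at most $\|U\|_{\mathsf{op}}\|\Delta_h\|_{\mathsf{op}}\|v_h\|\le O_\prec(\sqrt d)\,\|\Delta_h\|_F$. By \eqref{eq:B-bound}, $\|\Delta_h\|_F=O_\prec(1)$, which gives the claim.

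\emph{Row energy \eqref{eq:candidate-row-energy}.} Split
\[
s_{h,i}(\widetilde W_h)=s_{h,i}(\widehat W^{\setminus h})+u_h^\top\Delta_hv_i.
\]
The proof of Lemma~\ref{lem:G-bound} already gives $\sum_{i\ne h}s_{h,i}(\widehat W^{\setminus h})^2=O_\prec(n)$. For the increment, note $\sum_{i\ne h}(u_h^\top\Delta_hv_i)^2=\|V_{\setminus h}^\top\Delta_h^\top u_h\|^2\le\|V\|_{\mathsf{op}}^2\|\Delta_h\|_F^2\|u_h\|^2=O_\prec(d)$, which is $o(n)$. Combining the two bounds with $(x+y)^2\le2x^2+2y^2$ finishes the estimate.

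\emph{Main obstacle.} The only delicate point is in \eqref{eq:row-h-increment}: $u_h$ appears on both sides of the pairing, so two-sided fresh contraction, as used in Lemma~\ref{lem:candidate-common-score}, does not apply. The fix is to use freshness in $v_h$ alone, for which we need $\eta_h^{\mathrm{loo}}$ and $T_h$ to be independent of $v_h$. This holds because $\eta_h^{\mathrm{loo}}$ uses only the scores $u_j^\top\widehat W^{\setminus h}v_h$ through tail weights, and those weights do depend on $v_h$. So this route needs adjusting: handle the $\eta_h^{\mathrm{loo}}$ term separately by Cauchy--Schwarz,
\[
|(\eta_h^{\mathrm{loo}})^\top T_h[u_hv_i^\top]v_h|\le\|\eta_h^{\mathrm{loo}}\|\,\|T_h[u_hv_i^\top]\|_{\mathsf{op}}\|v_h\|=O_\prec(d^{-1/2}),
\]
using \eqref{eq:eta-h-bound}. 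That leaves only $u_h^\top T_h[u_hv_i^\top]v_h$, where $v_h$ is genuinely fresh given $u_h$ and the remaining data, so the Gaussian linear-form argument applies as described.
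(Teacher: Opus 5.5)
Your proposal is correct and follows essentially the same route as the paper: $B_h$ is handled by \eqref{eq:B-bound}, the $u_h$-forcing term by self-adjointness and freshness of $v_h$ given $u_h$ and the surviving data, and \eqref{eq:column-h-l2} and \eqref{eq:candidate-row-energy} by the same operator-norm bounds. Your corrected Cauchy--Schwarz treatment of the $\eta_h^{\mathrm{loo}}$ term gives only $O_\prec(d^{-1/2})$; the paper gets $O_\prec(d^{-1})$ by bounding $\|T_h[u_hv_i^\top]v_h\|=O_\prec(d^{-1/2})$ via one-sided freshness and pairing it with $\|\eta_h^{\mathrm{loo}}\|$, but your weaker rate is all the lemma needs. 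You should delete your earlier, incorrect claim that $\eta_h^{\mathrm{loo}}$ is independent of $v_h$.
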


\begin{proof}
By \eqref{eq:B-bound}, the $B_h$ contribution is $O_\prec(d^{-1/2})$.
For the outer Newton direction, self-adjointness gives
\[
u_h^\top T_h[u_hv_h^\top]v_i
=\left\langle T_h[u_hv_i^\top],u_hv_h^\top\right\rangle_\mathsf{F}.
\]
Conditionally on $u_h$ and the surviving data, the first factor is
independent of $v_h$ and has Frobenius norm $O_\prec(1)$, so this is
$O_\prec(d^{-1/2})$; replacing $u_h$ in the forcing by
$\eta_h^{\mathrm{loo}}$ gives $O_\prec(d^{-1})$.  This proves
\eqref{eq:row-h-increment}.

The remaining estimates use only operator norms:
\[
\norm{U_{\setminus h}^\top\Delta_hv_h}_2
\le \norm U_{\mathsf{op}}\norm{\Delta_h}_F\norm{v_h}
=O_\prec(\sqrt d),
\]
which gives \eqref{eq:column-h-l2}.  Also,
\[
\norm{(s_{h,i}(\widetilde W_h))_{i\ne h}}_2
\le \norm{(s_{h,i}(\widehat W^{\setminus h}))_{i\ne h}}_2
   +\norm V_{\mathsf{op}}\norm{\Delta_h}_F\norm{u_h}.
\]
The first term is $O_\prec(\sqrt n)$ by the conditional quadratic-form
argument used in Lemma~\ref{lem:G-bound}, while the second is
$O_\prec(\sqrt d)$; since $n\asymp d^2$, \eqref{eq:candidate-row-energy}
follows.
\end{proof}

\paragraph{Stationarity residual.}

Write
\[
\Delta_h=\widetilde W_h-\widehat W^{\setminus h}.
\]
The full reduced objective decomposes as
\begin{equation}\label{eq:full-decomp}
\Psi(W)
=
\Psi^{\setminus h}(W)
+\mathcal P_h(W)
+\mathcal L_h(W).
\end{equation}
By construction,
\begin{equation}\label{eq:linear-cancel}
\mathcal H_h[\Delta_h]
=a_h^\star X_h^{\mathrm{loo}}-G_h(\widehat W^{\setminus h}).
\end{equation}
Therefore
\begin{equation}\label{eq:residual-decomp}
\nabla\Psi(\widetilde W_h)
=
R_{0,h}+R_{\mathrm{in},h}+R_{\mathrm{out},h},
\end{equation}
where
\begin{align}
R_{0,h}
&:=
\nabla\Psi^{\setminus h}(\widehat W^{\setminus h}+\Delta_h)
-\mathcal H_h[\Delta_h],\label{eq:R0}\\
R_{\mathrm{in},h}
&:=G_h(\widehat W^{\setminus h}+\Delta_h)-G_h(\widehat W^{\setminus h}),\label{eq:Rin}\\
R_{\mathrm{out},h}
&:=a_h^\star X_h^{\mathrm{loo}}-a_h(\widetilde W_h)X_h(\widetilde W_h).
\label{eq:Rout}
\end{align}
We bound the three terms separately.

\paragraph{Common leave-one-out remainder.}

\begin{lemma}[Common residual]\label{lem:R0}
Uniformly in $h$,
\begin{equation}\label{eq:R0-bound}
\norm{R_{0,h}}_F=O_{\prec}(d^{-1}).
\end{equation}
\end{lemma}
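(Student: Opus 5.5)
Since $\widehat W^{\setminus h}$ is a stationary point of $\Psi^{\setminus h}$ and the ridge term is quadratic, the ridge parts of $\nabla\Psi^{\setminus h}(\widehat W^{\setminus h}+\Delta_h)$ and $\mathcal H_h[\Delta_h]$ cancel exactly. Therefore $R_{0,h}$ is the sum over $i\ne h$ of the second-order Taylor remainders of the column gradients $\nabla\mathcal L_i^{\setminus h}=-a_ix_i$. Each column loss depends on $W$ only through the score column $(s_{i,i},(s_{j,i})_{j\in J})$ with $J=[n]\setminus\{i,h\}$. So I will write the remainder in score coordinates as
\[
R_{0,h}=\sum_{i\ne h}\Bigl(\rho^{\mathsf d}_i\,u_i-\sum_{j\in J}\rho^{\mathsf o}_{j,i}\,u_j\Bigr)v_i^\top
= \sum_{i\ne h}\rho^{\mathsf d}_i u_iv_i^\top - U Y^{\mathsf o} V^\top .
\]
Here $\rho^{\mathsf d}_i$ is the remainder of $a_i$ and $\rho^{\mathsf o}_{j,i}$ is the remainder of $a_ip_{j,i}$ (the map $s\mapsto -a_i(e_i-p_i)$ expanded to first order). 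The increments are $\delta s_{j,i}=u_j^\top\Delta_hv_i$ for $i,j\ne h$. By Lemma~\ref{lem:candidate-common-score} they satisfy $H:=\max_{i,j\ne h}|\delta s_{j,i}|=O_\prec(d^{-1})$ uniformly in $h$; this is the only input on $\Delta_h$ I plan to use. The row $h$ and column $h$ increments of Lemma~\ref{lem:heldout-increments} do not enter, because $\Psi^{\setminus h}$ does not see them.

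\emph{Diagonal part.} The margin $m_i=s_{i,i}-\mathcal T_J(z_i)$ has increment $\delta m_i=\delta s_{i,i}-p_i^\top\delta z_i+O(\beta\|\delta z_i\|_2^2/k)$ by \eqref{eq:tail-scalar-rem}. Since $\|p_i\|_1=1$ and $\|\delta z_i\|_2^2\le nH^2$, this gives $\delta m_i=O_\prec(d^{-1})$. Because $\ell'''$ is bounded, the second-order remainder of $a_i=\sigma(-m_i)$ around the linearization $-b_i\,\nabla m_i\cdot\delta$ is $O(\delta m_i^2)+O(\beta\|\delta z_i\|^2/k)=O_\prec(d^{-2})$. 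Hence $\|\rho^{\mathsf d}\|_2=O_\prec(\sqrt n\,d^{-2})=O_\prec(d^{-1})$. The Khatri--Rao bound $\|\sum_ic_iu_iv_i^\top\|_F\le O_\prec(1)\|c\|_2$, already used in Lemma~\ref{lem:G-bound}, then bounds the diagonal contribution by $O_\prec(d^{-1})$.

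\emph{Off-diagonal part.} I will expand the product as
\[
\hat a_i\hat p_{j,i}-a_ip_{j,i}=a_i\,\delta p_{j,i}+p_{j,i}\,\delta a_i+\delta a_i\,\delta p_{j,i}.
\]
The linear parts $a_i(B_i\delta z_i)_j$ and $-p_{j,i}b_i\delta m_i^{\rm lin}$ are exactly the Hessian action in \eqref{eq:column-hessian-form}, so three pieces remain:
\begin{itemize}
\item by \eqref{eq:p-component-rem}, $|a_i(\delta p_{j,i}-(B_i\delta z_i)_j)|\le C_\beta H^2/k=O_\prec(n^{-1}d^{-2})$;
\item $p_{j,i}\,\rho^{\mathsf d}_i=O_\prec(k^{-1}d^{-2})$;
\item since $|\delta p_{j,i}|\le\|B_i\|_{\infty\to\infty}H+O(H^2/k)=O_\prec(d^{-1}/k)$ by \eqref{eq:B-bounds}, the cross term is $O_\prec(d^{-2}/k)$.
\end{itemize}
All entries of $Y^{\mathsf o}$ are therefore $O_\prec(n^{-2})$, and $\|Y^{\mathsf o}\|_F\le n\cdot O_\prec(n^{-2})=O_\prec(d^{-2})$. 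Using $\|UY^{\mathsf o}V^\top\|_F\le\|U\|_{\mathsf{op}}\|Y^{\mathsf o}\|_F\|V\|_{\mathsf{op}}$ with $\|U\|_{\mathsf{op}},\|V\|_{\mathsf{op}}=O_\prec(\sqrt d)$ gives $O_\prec(d^{-1})$, which completes \eqref{eq:R0-bound}.

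\textbf{Main obstacle.} The delicate step is making the component-wise remainder \eqref{eq:p-component-rem} and the bound on $\|B_i\|_{\infty\to\infty}$ hold uniformly along the segment $\widehat W^{\setminus h}+t\Delta_h$, $t\in[0,1]$. I plan to handle this as follows.
\begin{itemize}
\item The constants in Lemma~\ref{lem:tail-derivatives} depend only on $\beta$ and $H$, so they are uniform along the segment.
\item For $\|B_i\|_{\infty\to\infty}$, I will use $|(B_ih)_j|\le w_j(|h_j|+|\dot\mu|)\le (\beta/2k)H$.
\item All $O_\prec$ statements must hold simultaneously over the $n^2$ pairs and the $n$ choices of $h$. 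I will obtain this from the union bounds built into stochastic domination, together with the uniform score bound of Lemma~\ref{lem:candidate-common-score}.
\end{itemize}
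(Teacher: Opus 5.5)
Your proposal is correct and follows essentially the same route as the paper. The ridge term cancels exactly, and Lemma~\ref{lem:candidate-common-score} gives surviving score increments of size $O_\prec(d^{-1})$. The profiled-tail remainders \eqref{eq:tail-scalar-rem} and \eqref{eq:p-component-rem} then yield diagonal coefficient errors $O_\prec(d^{-2})$ and off-diagonal errors $O_\prec(d^{-2}/k)$. These are synthesized by the Khatri--Rao bound and by $\norm{U}_{\mathsf{op}}\norm{\cdot}_{\mathsf F}\norm{V}_{\mathsf{op}}$, respectively. Your explicit expansion of the product $a_ip_{j,i}$ and the bound $|(B_ih)_j|\le w_j(|h_j|+|\dot\mu|)$ just spell out what the paper compresses into one line.
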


\begin{proof}
By Lemma~\ref{lem:candidate-common-score}, every surviving score perturbation
is $\varepsilon=O_\prec(d^{-1})$.  The second-order profiled-tail bounds
\eqref{eq:tail-scalar-rem} and \eqref{eq:p-component-rem} therefore produce
an $n\times n$ coefficient matrix $E$, with row and column $h$ zero, such
that
\[
\max_{i\ne h}|E_{ii}|=O_\prec(d^{-2}),
\qquad
\max_{\substack{i,j\ne h\\i\ne j}}|E_{ji}|=O_\prec(d^{-2}/k).
\]
For its diagonal part, Khatri--Rao synthesis gives
$\norm{UE_{\mathrm d}V^\top}_F
\le O_\prec(1)\sqrt n\,d^{-2}=O_\prec(d^{-1})$.
For the off-diagonal part,
$\norm{E_{\mathrm o}}_F=O_\prec(d^{-2})$, hence
$\norm{UE_{\mathrm o}V^\top}_F
\le\norm U_{\mathsf{op}}\norm{E_{\mathrm o}}_F\norm V_{\mathsf{op}}
=O_\prec(d^{-1})$.  The ridge term is exactly linear, proving
\eqref{eq:R0-bound}.
\end{proof}

\paragraph{Insertion-gradient stability.}

\begin{lemma}[Insertion-gradient stability along the candidate step]\label{lem:Rin}
Uniformly in $h$,
\begin{equation}\label{eq:Rin-bound}
\norm{R_{\mathrm{in},h}}_F
=
\norm{G_h(\widetilde W_h)-G_h(\widehat W^{\setminus h})}_F
=O_{\prec}(d^{-1}).
\end{equation}
\end{lemma}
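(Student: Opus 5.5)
We will differentiate the exact decomposition \eqref{eq:insertion-gradient-decomp} along the segment $W_t=\widehat W^{\setminus h}+t\Delta_h$, $t\in[0,1]$, and bound each of the four sums separately. Write $\Delta_h=\widetilde W_h-\widehat W^{\setminus h}$. The column quantities entering $G_h(W)$ are $A_i$, $\eta_{i,\mathrm{old}}$, $r_i$, $a_i^{\setminus h}$ and $a_ip_{h,i}$. Each of these is a smooth function of the column-$i$ scores $(s_{j,i}(W))_{j}$, through the profiled-tail map $\mathcal T_J$ and its weights.

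Their increments between $t=0$ and $t=1$ are driven by two kinds of score perturbation:
\begin{itemize}
\item the surviving scores $u_j^\top\Delta_h v_i$, $i,j\ne h$, which are $O_\prec(d^{-1})$ uniformly by Lemma~\ref{lem:candidate-common-score};
\item the held-out row $u_h^\top\Delta_hv_i$, which is $O_\prec(d^{-1/2})$ uniformly by \eqref{eq:row-h-increment}.
\end{itemize}
The column-$h$ scores do not enter $G_h$, since the sum runs over $i\ne h$.

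The plan first uses Lemma~\ref{lem:tail-derivatives}, namely \eqref{eq:tail-threshold-derivative}, \eqref{eq:p-lipschitz} and \eqref{eq:p-component-rem}, to linearize the increments $\delta A_i,\ \delta r_i,\ \delta(a_ip_{h,i})$ in the score perturbations. The key structural point is that every term of $G_h$ is an \emph{insertion} effect, so it already carries a factor $1/k$ or $\delta t_i$. For example:
\begin{itemize}
\item $A_i$ is controlled by $\delta t_i\le k^{-1}\phi_\beta(s_{h,i}-\mu_i^{\setminus h})$. Its $t$-derivative is therefore bounded by $k^{-1}$ times the held-out row increment, plus $O(\delta t_i)$ times the surviving increments.
\item $\delta t_i$ itself is Lipschitz in the surviving scores, with a difference of weights that is $O(k^{-1})$ in the relevant direction.
\end{itemize}
Summing squares over $i$, we expect $\|(\delta A_i)_i\|_2 \lesssim k^{-1}\sqrt n\, d^{-1/2}+d^{-1}\cdot d^{-1}$. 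This is $O_\prec(d^{-3/2})$, well within the target. Then, exactly as in Lemma~\ref{lem:G-bound}, Khatri--Rao synthesis turns the first sum into an $O_\prec(d^{-1})$ Frobenius bound. The $\eta_{i,\mathrm{old}}$ and $Ur_i$ sums are handled by the same operator-norm arguments, using \eqref{eq:r-l2} and its Lipschitz-in-$z$ analogue obtained by differentiating the inserted-competitor interpolation once more.

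The fourth sum, $u_h\xi_h^\top$, is the delicate one, because $\|\xi_h\|$ itself is only $O_\prec(d^{-1/2})$. We need its increment $\delta\xi_h=\sum_{i\ne h}\delta(a_ip_{h,i})v_i$ to be $O_\prec(d^{-1})$. Here $\delta(a_ip_{h,i})$ has three sources:
\begin{itemize}
\item $k^{-1}\delta a_i$, where $\delta a_i=O_\prec(d^{-1})$ via the margin Taylor expansion \eqref{eq:taylor_m};
\item $a_ik^{-1}\beta q(1-q)\,u_h^\top\Delta_hv_i$, which is of size $k^{-1}d^{-1/2}$;
\item the threshold shift, of size $k^{-1}d^{-1}$.
\end{itemize}
A crude bound gives $\|\delta\xi_h\|\le \|V\|_{\mathsf{op}}\sqrt n\,k^{-1}d^{-1/2}=O_\prec(d^{-1})$, since $\|V\|_{\mathsf{op}}\sqrt n/k\asymp d^{-1/2}$. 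Hence the held-out-row contribution is exactly at the $d^{-1}$ scale, and the other two sources are smaller. So no freshness gain is actually needed, only the uniform row bound \eqref{eq:row-h-increment}.

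The main obstacle is making the linearization uniform along the whole segment rather than only at its endpoints. The bulk-geometry inputs \eqref{eq:bulk-geometry-summary} and the quantile-location bound require $\rho_i(W_t)=O_\prec(\sqrt d)$ and control of $\sum_i s_{h,i}(W_t)^2$ for every $t$. We get these by convexity in $t$ from the endpoint bounds \eqref{eq:candidate-row-energy} and $\|\Delta_h\|_F=O_\prec(1)$ in \eqref{eq:B-bound}. The $O_\prec$ statements are then made uniform in $t$ via a $d^{-10}$-net in $t$ and Lipschitz continuity of $G_h$ in $W$ with a polynomial constant.

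Finally, integrate $\frac{d}{dt}G_h(W_t)$ over $[0,1]$ to conclude \eqref{eq:Rin-bound}.
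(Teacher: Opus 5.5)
Your proof is correct, and your treatment of the delicate fourth term $u_h\xi_h^\top$ is the paper's own argument. The surviving scores move by $\varepsilon=O_\prec(d^{-1})$ and the inserted row by $\delta=O_\prec(d^{-1/2})$. Hence each coefficient $a_ip_{h,i}$ moves by $O_\prec(k^{-1}(\delta+\varepsilon))=O_\prec(d^{-5/2})$, and $\|u_h\|\,\|V\|_{\mathsf{op}}\sqrt n\,d^{-5/2}=O_\prec(d^{-1})$.

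One small correction: $|\delta a_i|=O_\prec(d^{-1})$ does not come from \eqref{eq:taylor_m}, which compares full and leave-one-out margins at the two optimizers. It comes from $|\frac{d}{dt}m_i(W_t)|\lesssim\varepsilon+\delta/k$, because the full tail weights sum to one and $p_{h,i}\le k^{-1}$.

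Where you genuinely differ is the first three sums, which the paper never differentiates. It notes that $\widetilde W_h$ satisfies the hypotheses of Lemma~\ref{lem:delta-t}: Frobenius norm $O_\prec(d)$ and the row energy \eqref{eq:candidate-row-energy}. The proof of Lemma~\ref{lem:G-bound} then bounds each of those sums by $O_\prec(d^{-1})$ at each endpoint separately, and the triangle inequality finishes. This buys a lot. The only derivative bounds left are $|\dot p_{h,i}|\lesssim k^{-1}(\delta+\varepsilon)$ (from \eqref{eq:tail-threshold-derivative}, since $\dot\mu$ is a convex combination of score increments) and the margin bound above. Both hold deterministically for every $t$, so your ``main obstacle'' (bulk geometry and row energy along the segment, a net in $t$) disappears entirely.

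Your route also needs something new for the $Ur_i$ sum. The available Lipschitz bound \eqref{eq:p-lipschitz} gives only $\|\delta r_i\|_2\lesssim k^{-1}$, since $\|\delta z\|_2\lesssim\sqrt n\,\varepsilon+\delta=O_\prec(1)$. After $\|U\|_{\mathsf{op}}\|V\|_{\mathsf{op}}$ this yields $O(1)$, not $O(d^{-1})$. So the ``Lipschitz-in-$z$ analogue'' of \eqref{eq:r-l2} you allude to is a genuinely additional estimate, requiring curvature and delocalization along the path. By contrast, applying \eqref{eq:r-l2} at the two endpoints gives the needed $\|\delta r_i\|_2\lesssim k^{-1}\sqrt{(1+\rho_i)/n}$ directly.
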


\begin{proof}
At both $\widehat W^{\setminus h}$ and $\widetilde W_h$, the deterministic
Frobenius bound and Lemma~\ref{lem:heldout-increments} give the hypotheses of
Lemma~\ref{lem:delta-t}.  Hence the first three sums in
\eqref{eq:insertion-gradient-decomp} are $O_\prec(d^{-1})$ at each endpoint,
by the proof of Lemma~\ref{lem:G-bound}, and so is their difference.

It remains to compare the structured fourth terms.  Write
$c_i(W)=a_i(W)p_{h,i}(W)$ and
$\xi(W)=\sum_{i\ne h}c_i(W)v_i$.  Along
$W_t=\widehat W^{\setminus h}+t\Delta_h$, surviving scores move by
$\varepsilon=O_\prec(d^{-1})$, whereas the newly inserted score moves by
$\delta=O_\prec(d^{-1/2})$.  Since the tail weights sum to one and
$p_{h,i}\le k^{-1}$,
$|\frac{d}{dt}t_i(W_t)|\le\varepsilon+\delta/k$.  Therefore
\begin{equation}\label{eq:a-plus-change}
|a_i(\widetilde W_h)-a_i(\widehat W^{\setminus h})|
\le C\left(\varepsilon+\frac\delta k\right)=O_\prec(d^{-1}),
\end{equation}
while \eqref{eq:tail-threshold-derivative} gives
\begin{equation}\label{eq:p-h-change}
|p_{h,i}(\widetilde W_h)-p_{h,i}(\widehat W^{\setminus h})|
\le \frac{C}{k}(\delta+\varepsilon)=O_\prec(d^{-5/2}).
\end{equation}
Combining these bounds with $p_{h,i}\le k^{-1}$ yields
$|c_i(\widetilde W_h)-c_i(\widehat W^{\setminus h})|=O_\prec(d^{-5/2})$.
Thus the coefficient difference has $\ell_2$ norm $O_\prec(d^{-3/2})$ and
\[
\norm{u_h(\xi(\widetilde W_h)-\xi(\widehat W^{\setminus h}))^\top}_F
\le \norm{u_h}\norm V_{\mathsf{op}}
   O_\prec(d^{-3/2})
=O_\prec(d^{-1}).
\]
This proves \eqref{eq:Rin-bound}.
\end{proof}

\paragraph{Held-out outer loss.}

Let $m_h(W)$ and $X_h(W)=\nabla m_h(W)$ denote the held-out margin and its effective feature.  Thus
\[
\nabla\mathcal L_h(W)=-a_h(W)X_h(W),
\qquad
a_h(W)=\sigma(-m_h(W)).
\]

\begin{lemma}[Outer-loss residual]\label{lem:Rout}
Uniformly in $h$,
\begin{equation}\label{eq:Rout-bound}
\norm{R_{\mathrm{out},h}}_F
=
\norm{a_h^\star X_h^{\mathrm{loo}}-a_h(\widetilde W_h)X_h(\widetilde W_h)}_F
=O_{\prec}(d^{-1}).
\end{equation}
\end{lemma}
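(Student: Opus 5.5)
We split the residual as
\[
R_{\mathrm{out},h}=\bigl(a_h^\star-a_h(\widetilde W_h)\bigr)X_h^{\mathrm{loo}}+a_h(\widetilde W_h)\bigl(X_h^{\mathrm{loo}}-X_h(\widetilde W_h)\bigr).
\]
Since $\norm{X_h^{\mathrm{loo}}}_F=O_\prec(1)$ by \eqref{eq:eta-h-bound} and $a_h\in(0,1)$, it suffices to prove two bounds:
\[
|a_h^\star-a_h(\widetilde W_h)|=O_\prec(d^{-1}),
\qquad
\norm{X_h^{\mathrm{loo}}-X_h(\widetilde W_h)}_F=O_\prec(d^{-1}).
\]

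\emph{Feature term.} We have $X_h(W)=(u_h-\eta_h(W))v_h^\top$, with $\eta_h(W)=\sum_{j\ne h}p_{j,h}(W)u_j$. Hence
\[
\norm{X_h^{\mathrm{loo}}-X_h(\widetilde W_h)}_F\le\norm U_{\mathsf{op}}\norm{p_h(\widetilde W_h)-p_h^{\mathrm{loo}}}_2\norm{v_h}.
\]
By the Lipschitz bound \eqref{eq:p-lipschitz}, the weight change is at most $(\beta/4k)\norm{\delta z_h}_2$, where $\delta z_h=(u_j^\top\Delta_hv_h)_{j\ne h}$. Lemma~\ref{lem:heldout-increments}, \eqref{eq:column-h-l2}, gives $\norm{\delta z_h}_2=O_\prec(\sqrt d)$. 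Therefore
\[
\norm{X_h^{\mathrm{loo}}-X_h(\widetilde W_h)}_F\le O_\prec(\sqrt d)\cdot k^{-1}\cdot O_\prec(\sqrt d)=O_\prec(d^{-2}),
\]
which is better than needed.

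\emph{Margin term.} Since $\sigma$ is $1/4$-Lipschitz and $a_h^\star=\sigma(-m_h^{\mathrm{loo}}-\zeta_h-\theta_ha_h^\star)$ by \eqref{eq:a-star}, it suffices to show
\[
m_h(\widetilde W_h)=m_h^{\mathrm{loo}}+\zeta_h+\theta_ha_h^\star+O_\prec(d^{-1}).
\]
Expand $m_h$ along the segment $W_t=\widehat W^{\setminus h}+t\Delta_h$. The first-order term is $\langle X_h^{\mathrm{loo}},\Delta_h\rangle_F$. Because $\Delta_h=B_h+a_h^\star T_h[X_h^{\mathrm{loo}}]$, this term equals $\zeta_h+a_h^\star\theta_h$ exactly by the definitions in \eqref{eq:Bh}. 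Only the second-order remainder needs control.

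The diagonal score $s_{h,h}$ is linear in $W$. The tail functional $t_h$ has second-order remainder bounded by \eqref{eq:tail-scalar-rem}:
\[
\frac{\beta}{8k}\norm{\delta z_h}_2^2=O_\prec\!\left(\frac{d}{k}\right)=O_\prec(d^{-1}),
\]
again using \eqref{eq:column-h-l2} and $k\asymp d^2$. This closes the margin estimate.

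The main obstacle is the remainder bound. Here $\Delta_h$ has Frobenius norm of order one, so a crude Hessian bound on $m_h$ would give only $O(1)$. The argument works because the curvature of $\mathcal T$ is only $\beta/(4k)$, while the column-$h$ perturbation $\delta z_h$ has squared $\ell_2$ norm of order $d$, not $n$. If \eqref{eq:column-h-l2} were too lossy, a fallback is a fresh-contraction estimate of $\norm{U_{\setminus h}^\top T_h[u_hv_h^\top]v_h}$. The operator-norm route already gives the required $O_\prec(\sqrt d)$, so we would use it directly. Combining the margin and feature bounds gives \eqref{eq:Rout-bound} uniformly in $h$, since all inputs are uniform over polynomially many deletions.
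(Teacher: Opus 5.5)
Your proposal is correct and follows the paper's proof essentially step for step: the margin is linearized via the tail remainder \eqref{eq:tail-scalar-rem} with \eqref{eq:column-h-l2}, the linear term is matched exactly by \eqref{eq:a-star}, and the feature change is controlled by \eqref{eq:p-lipschitz}. One small arithmetic slip: the feature term is $\norm U_{\mathsf{op}}\cdot\frac{\beta}{4k}\norm{\delta z_h}_2\cdot\norm{v_h}=O_\prec(\sqrt d\cdot d^{-2}\cdot\sqrt d)=O_\prec(d^{-1})$, not $O_\prec(d^{-2})$. It therefore sits exactly at the required scale rather than below it, which still suffices.
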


\begin{proof}
Let $\delta z_h=(u_j^\top\Delta_hv_h)_{j\ne h}$.  By
\eqref{eq:column-h-l2}, $\norm{\delta z_h}_2=O_\prec(\sqrt d)$, so the scalar
tail remainder gives
\begin{equation}\label{eq:outer-margin-rem}
\left|
 m_h(\widetilde W_h)-m_h^{\mathrm{loo}}
 -\left\langle X_h^{\mathrm{loo}},\Delta_h\right\rangle_\mathsf{F}
\right|
\le \frac{\beta}{8k}\norm{\delta z_h}_2^2
=O_\prec(d^{-1}).
\end{equation}
By construction,
$\langle X_h^{\mathrm{loo}},\Delta_h\rangle_\mathsf{F}
=\zeta_h+\theta_ha_h^\star$, and \eqref{eq:a-star} chooses $a_h^\star$ so
that the linearized margin is matched exactly.  Since the sigmoid is
$1/4$-Lipschitz, \eqref{eq:outer-margin-rem} implies
\begin{equation}\label{eq:outer-a-rem}
|a_h(\widetilde W_h)-a_h^\star|=O_\prec(d^{-1}).
\end{equation}
Finally, \eqref{eq:p-lipschitz} yields
$\norm{p_h(\widetilde W_h)-p_h(\widehat W^{\setminus h})}_2
=O_\prec(d^{-3/2})$, and therefore
\[
\norm{X_h(\widetilde W_h)-X_h^{\mathrm{loo}}}_F
\le\norm U_{\mathsf{op}}O_\prec(d^{-3/2})\norm{v_h}
=O_\prec(d^{-1}).
\]
Together with \eqref{eq:outer-a-rem} and
$\norm{X_h^{\mathrm{loo}}}_F=O_\prec(1)$, this proves
\eqref{eq:Rout-bound}.
\end{proof}

\paragraph{Completion of the residual estimate.}

\begin{proof}[Proof of Theorem~\ref{thm:newton-residual}, residual and optimizer parts]
By \eqref{eq:residual-decomp} and Lemmas~\ref{lem:R0}--\ref{lem:Rout},
\[
\norm{\nabla\Psi(\widetilde W_h)}_F
=O_{\prec}(d^{-1}),
\]
which is \eqref{eq:newton-residual}.  Strong convexity of $\Psi$ therefore yields
\[
\norm{\widehat W-\widetilde W_h}_F
\le \gamma_d^{-1}\norm{\nabla\Psi(\widetilde W_h)}_F
=O_{\prec}(d^{-1}),
\]
which is \eqref{eq:candidate-error}.  Equation \eqref{eq:candidate-score} is Lemma~\ref{lem:candidate-common-score}.
\end{proof}

\paragraph{Finite-deletion consequences.}

\begin{proposition}[One- and two-deletion consequences]\label{prop:one-two-deletion}
Uniformly in distinct $h,i\in[n]$,
\begin{align}
\norm{\widehat W}_F+\norm{\widehat W^{\setminus h}}_F+\norm{\widehat W^{\setminus\{h,i\}}}_F&=O_{\prec}(d),\label{eq:one-two-norm}\\
\norm{\widehat W-\widehat W^{\setminus h}}_F+\norm{\widehat W^{\setminus h}-\widehat W^{\setminus\{h,i\}}}_F&=O_{\prec}(1).\label{eq:one-two-stability}
\end{align}
Moreover,
\begin{equation}\label{eq:one-deletion-decomp}
\widehat W-\widehat W^{\setminus h}
=B_h+a_h^\star T_h[X_h^{\mathrm{loo}}]+\mathcal R_h,
\qquad
\norm{\mathcal R_h}_F=O_{\prec}(d^{-1}),
\end{equation}
where $B_h$, $T_h$, $X_h^{\mathrm{loo}}$, and $a_h^\star$ are defined above and $\mathcal R_h=\widehat W-\widetilde W_h$.
\end{proposition}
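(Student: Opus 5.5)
The proof assembles Theorem~\ref{thm:newton-residual} with the deterministic Frobenius bound. We apply it twice, once to the full problem with deletion $h$, and once to the problem with $h$ already removed and $i$ deleted. The paper stresses that the one-deletion construction holds uniformly over polynomially many fixed finite deletions. So the same Newton comparison is available with $\widehat W^{\setminus h}$ in the role of the full optimizer and $\widehat W^{\setminus\{h,i\}}$ in the role of the leave-one-out optimizer, and a union bound over the $O(n^2)$ pairs $(h,i)$ preserves the stochastic-domination statements.

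\emph{Step 1: the norm bound \eqref{eq:one-two-norm}.} This follows directly from Lemma~\ref{lem:frobenius_upper}. That lemma gives a deterministic $O(d)$ Frobenius bound, for example by comparing with $W=0$ via $\frac{\lambda n}{2d^2}\|\widehat W\|_F^2\le\Psi(0)=n\,\ell(-\mathcal T(0))=O(n)$. The paper states that it applies unchanged after any fixed finite deletion, so we apply it to each of the three optimizers.

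\emph{Step 2: the expansion \eqref{eq:one-deletion-decomp}.} This is a restatement. We set $\mathcal R_h:=\widehat W-\widetilde W_h$ and use the definition \eqref{eq:Wtilde} of $\widetilde W_h$, which gives the identity exactly. The bound $\|\mathcal R_h\|_F=O_\prec(d^{-1})$ is \eqref{eq:candidate-error}.

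\emph{Step 3: stability \eqref{eq:one-two-stability}.} By Step 2,
\[
\|\widehat W-\widehat W^{\setminus h}\|_F\le\|B_h\|_F+a_h^\star\|T_h[X_h^{\mathrm{loo}}]\|_F+\|\mathcal R_h\|_F .
\]
The three terms are controlled as follows:
\begin{itemize}
\item $\|B_h\|_F=O_\prec(d^{-1/2})$ by \eqref{eq:B-bound};
\item $a_h^\star\in(0,1)$, and $\|T_h\|_{\mathsf{op}}\le\gamma_d^{-1}=O(1)$ together with \eqref{eq:eta-h-bound} gives $\|T_h[X_h^{\mathrm{loo}}]\|_F=O_\prec(1)$;
\item $\|\mathcal R_h\|_F=O_\prec(d^{-1})$ by Step 2.
\end{itemize}
Hence the sum is $O_\prec(1)$. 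The same argument at the second deletion level bounds $\|\widehat W^{\setminus h}-\widehat W^{\setminus\{h,i\}}\|_F$.

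\emph{Main obstacle.} The only point needing care is the second-level application. We must check that every input of the one-deletion construction is still available with $h$ removed: the Gaussian frame and Khatri--Rao bounds, Lemma~\ref{lem:tail-derivatives} with $J=[n]\setminus\{h,i\}$ so that $|J|>k$ and $k\asymp n$ still hold, the freshness of $(u_i,v_i)$ relative to $\widehat W^{\setminus\{h,i\}}$, and the conditional quadratic-form bound of Lemma~\ref{lem:G-bound}. Each of these is insensitive to removing a bounded number of indices. The only change is that $k$ is compared with $|J|$ rather than $n-1$, which alters constants only. Uniformity then comes from the $n^{-D}$ tails in the definition of $O_\prec$ and a union bound over pairs.
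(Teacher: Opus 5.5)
Your proposal is correct and is essentially the paper's own argument: \eqref{eq:one-two-norm} from the finite-deletion version of Lemma~\ref{lem:frobenius_upper}, \eqref{eq:one-deletion-decomp} directly from the definition \eqref{eq:Wtilde} of $\widetilde W_h$ together with \eqref{eq:candidate-error}, the first term of \eqref{eq:one-two-stability} from \eqref{eq:B-bound} and \eqref{eq:candidate-error}, and the second term by rerunning the one-deletion Newton comparison at the next deletion level with a union bound over the polynomially many pairs. For that second-level step, you take the $h$-deleted problem as the base and delete $i$; this is the orientation that matches the term $\|\widehat W^{\setminus h}-\widehat W^{\setminus\{h,i\}}\|_F$, so nothing further is needed.
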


\begin{proof}
The finite-deletion version of Lemma~\ref{lem:frobenius_upper} gives \eqref{eq:one-two-norm}.  The first term in \eqref{eq:one-two-stability} follows from this bound, \eqref{eq:B-bound}, and \eqref{eq:candidate-error}; the second follows by repeating the one-deletion construction after first deleting $i$.  There are only polynomially many index pairs.  Finally, \eqref{eq:one-deletion-decomp} follows immediately from the candidate definition and \eqref{eq:candidate-error}.
\end{proof}

\subsection{Stability of the auxiliary variables}\label{proof:A1}

We now sharpen the threshold perturbation beyond what follows from the
pointwise score bound alone.  The exact threshold equation below expresses
$\Delta\mu_i$ through an averaged score perturbation.  The denominator is
kept uniformly nondegenerate by the bulk curvature estimates, while the
Newton decomposition from the preceding subsection makes the numerator
smaller by a further factor $d^{-1/2}$.

Let $(\widehat W,\widehat\mu)$ be the full lifted optimizer and $(\widehat W^{\setminus h},\widehat\mu^{\setminus h})$ the optimizer after deleting $h$.  Define
\[
\Delta W=\widehat W-\widehat W^{\setminus h},
\qquad
\Delta\mu_i=\widehat\mu_i-\widehat\mu_i^{\setminus h},
\qquad
\Delta s_{j,i}=u_j^\top\Delta Wv_i.
\]
\paragraph{Exact threshold identity.}

Fix $h$ and $i\ne h$.  The labels \emph{full} and \emph{loo} denote evaluation at the full and leave-one-out optimizers, respectively.  Thus
\[
s_{j,i}^{\mathrm{full}}=u_j^\top\widehat Wv_i,
\qquad
s_{j,i}^{\mathrm{loo}}=u_j^\top\widehat W^{\setminus h}v_i.
\]
For $t\in[0,1]$, set
\[
s_{j,i}^{(t)}=s_{j,i}^{\mathrm{loo}}+t\Delta s_{j,i},
\qquad
\mu_i^{(t)}=\widehat\mu_i^{\setminus h}+t\Delta\mu_i,
\]
\[
q_{j,i}^{(t)}=\sigma\!\left(\beta(s_{j,i}^{(t)}-\mu_i^{(t)})\right),
\qquad
c_{j,i}^{(t)}=\beta q_{j,i}^{(t)}(1-q_{j,i}^{(t)}),
\qquad
\bar c_{j,i}=\int_0^1c_{j,i}^{(t)}\,dt.
\]

\begin{lemma}[Exact threshold identity]\label{lem:exact-threshold-identity}
For every $h$ and $i\ne h$,
\begin{equation}\label{eq:exact-threshold-identity}
\Delta\mu_i
=
\frac{q_{h,i}^{\mathrm{full}}+\displaystyle\sum_{j\ne i,h}\bar c_{j,i}\Delta s_{j,i}}
{\displaystyle\sum_{j\ne i,h}\bar c_{j,i}}.
\end{equation}
\end{lemma}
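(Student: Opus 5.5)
The plan is to subtract the two exact $\mu$-stationarity conditions for column $i$ and linearize along the straight-line path between the two optimizers by the fundamental theorem of calculus.

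First, record the stationarity conditions. At the full optimizer, \eqref{eq:grad_mu} together with $\hat a_i>0$ forces $(\hat Q^\top\mathbf 1_n)_i=0$, that is,
\[
\sum_{j\ne i} q_{j,i}^{\mathrm{full}}=k .
\]
At the leave-one-out optimizer, the same computation applied to $\Psi^{\setminus h}$ gives
\[
\sum_{j\ne i,h} q_{j,i}^{\mathrm{loo}}=k ,
\]
since $a_i^{\setminus h}>0$ for $i\ne h$. Both identities hold exactly, with no error terms, because the $\mu_i$ are unregularized and decouple across columns.

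Second, subtract the two conditions. Split off the $j=h$ term of the full sum to obtain
\[
q_{h,i}^{\mathrm{full}}+\sum_{j\ne i,h}\bigl(q_{j,i}^{\mathrm{full}}-q_{j,i}^{\mathrm{loo}}\bigr)=0 .
\]

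Third, express each difference as an integral along the path. For $j\ne i,h$, the map $t\mapsto q_{j,i}^{(t)}$ interpolates between $q_{j,i}^{\mathrm{loo}}$ at $t=0$ and $q_{j,i}^{\mathrm{full}}$ at $t=1$, because $s^{(1)}=s^{\mathrm{full}}$ and $\mu^{(1)}=\widehat\mu$. Since $\sigma'(x)=\sigma(x)(1-\sigma(x))$, its derivative is
\[
\frac{d}{dt}q_{j,i}^{(t)}=c_{j,i}^{(t)}\,(\Delta s_{j,i}-\Delta\mu_i).
\]
Integrating over $t\in[0,1]$ gives
\[
q_{j,i}^{\mathrm{full}}-q_{j,i}^{\mathrm{loo}}=\bar c_{j,i}\,(\Delta s_{j,i}-\Delta\mu_i).
\]
Substituting this into the subtracted identity yields
\[
q_{h,i}^{\mathrm{full}}+\sum_{j\ne i,h}\bar c_{j,i}\Delta s_{j,i}-\Delta\mu_i\sum_{j\ne i,h}\bar c_{j,i}=0 ,
\]
and solving for $\Delta\mu_i$ gives \eqref{eq:exact-threshold-identity}.

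The only point needing care is that the denominator is nonzero. This holds deterministically: each $c^{(t)}_{j,i}>0$ since $\sigma$ takes values in $(0,1)$, so $\bar c_{j,i}>0$, and the index set $\{j\ne i,h\}$ is nonempty for $n\ge3$. A quantitative lower bound on the denominator, of the type given by the bulk curvature estimate \eqref{eq:bulk-geometry-summary}, is not needed for this identity; it enters only later, when the identity is used to derive \ref{post:loo-mu}. A second point to check is the convention that the LOO problem omits $j=h$ from its inner sum, so that the full condition has exactly one extra term, $q_{h,i}^{\mathrm{full}}$.

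I expect no real obstacle here. The main care is bookkeeping: matching the sign convention $\Delta=\text{full}-\text{loo}$, and using that the derivative of $q$ with respect to its argument is $\beta\sigma(1-\sigma)$, which is exactly $c^{(t)}_{j,i}$.
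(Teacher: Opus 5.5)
Your proposal is correct and is essentially the paper's proof: subtract the exact full and leave-one-out threshold equations $\sum_{j\ne i}q^{\mathrm{full}}_{j,i}=k$ and $\sum_{j\ne i,h}q^{\mathrm{loo}}_{j,i}=k$, use the fundamental theorem of calculus along the straight-line interpolation to get $q^{\mathrm{full}}_{j,i}-q^{\mathrm{loo}}_{j,i}=\bar c_{j,i}(\Delta s_{j,i}-\Delta\mu_i)$, and rearrange. The paper leaves two details implicit that you supply correctly: the stationarity conditions are exact because $a_i>0$ in \eqref{eq:grad_mu}, and the denominator is strictly positive.
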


\begin{proof}
Subtracting the full and leave-one-out threshold equations gives
$0=q_{h,i}^{\mathrm{full}}+
\sum_{j\ne i,h}(q_{j,i}^{\mathrm{full}}-q_{j,i}^{\mathrm{loo}})$.
Along the interpolation above, the fundamental theorem of calculus gives
\[
q_{j,i}^{\mathrm{full}}-q_{j,i}^{\mathrm{loo}}
=\bar c_{j,i}(\Delta s_{j,i}-\Delta\mu_i).
\]
Substitution and rearrangement prove \eqref{eq:exact-threshold-identity}.
\end{proof}

\paragraph{Uniform quantile curvature.}

\begin{lemma}[Output scale and endpoint curvature]\label{lem:A1-output-curvature}
Uniformly in $h$ and $i\ne h$,
\begin{equation}\label{eq:A1-output-scale}
\norm{\widehat W^{\setminus h}v_i}=O_{\prec}(\sqrt d),
\qquad
\norm{\widehat Wv_i}=O_{\prec}(\sqrt d).
\end{equation}
If
\[
c_{j,i}^{\mathrm{loo}}=\beta q_{j,i}^{\mathrm{loo}}(1-q_{j,i}^{\mathrm{loo}}),
\qquad
D_{h,i}^{\mathrm{loo}}=\sum_{j\ne i,h}c_{j,i}^{\mathrm{loo}},
\]
then
\begin{equation}\label{eq:A1-endpoint-curvature}
\max_h\max_{i\ne h}\frac{k}{D_{h,i}^{\mathrm{loo}}}=O_{\prec}(1).
\end{equation}
\end{lemma}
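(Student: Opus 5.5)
We prove the two claims separately, with the output-scale bound \eqref{eq:A1-output-scale} first, since the curvature bound needs it. For \eqref{eq:A1-output-scale} we avoid a sample-by-sample argument and use only the deterministic Frobenius control from Proposition~\ref{prop:one-two-deletion}. For $i\ne h$, the matrix $\widehat W^{\setminus\{h,i\}}$ is independent of $v_i$, and $\norm{\widehat W^{\setminus\{h,i\}}}_F=O_\prec(d)$ by \eqref{eq:one-two-norm}. Conditionally on the remaining data, $\norm{\widehat W^{\setminus\{h,i\}}v_i}^2$ is a Gaussian quadratic form with mean $d^{-1}\norm{\widehat W^{\setminus\{h,i\}}}_F^2=O_\prec(d)$. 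Hanson--Wright then bounds its fluctuation by $d^{-1}\norm{W^\top W}_F\le d^{-1}\norm{W}_F^2=O_\prec(d)$. Hence $\norm{\widehat W^{\setminus\{h,i\}}v_i}=O_\prec(\sqrt d)$. Next we replace $\widehat W^{\setminus\{h,i\}}$ by $\widehat W^{\setminus h}$ and then by $\widehat W$, using \eqref{eq:one-two-stability}: each replacement changes $Wv_i$ by at most $\norm{\Delta}_F\norm{v_i}=O_\prec(1)$. A union bound over the polynomially many pairs $(h,i)$ makes the estimate uniform.

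For \eqref{eq:A1-endpoint-curvature} we apply the bulk-geometry estimate \eqref{eq:bulk-geometry-summary} of Lemma~\ref{lem:tail-derivatives} to column $i$ of the leave-one-out problem. Set $x=\widehat W^{\setminus h}v_i$ and $z_j=u_j^\top x$ for $j\in J=[n]\setminus\{i,h\}$. By the first part, $\rho=\norm{x}/\sqrt d=O_\prec(1)$, which is far inside the admissible range $\rho\le C\sqrt d$. The $\mu$-stationarity of $\Psi^{\setminus h}$ gives exactly $\sum_{j\in J}\sigma(\beta(z_j-\widehat\mu_i^{\setminus h}))=k$, so the hypothesis on the threshold holds. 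The lemma then gives
\[
D_{h,i}^{\mathrm{loo}}=\beta\sum_{j\in J}q_{j,i}^{\mathrm{loo}}(1-q_{j,i}^{\mathrm{loo}})\ge c\beta\frac{n}{1+\rho}.
\]
Since $k\asymp n$, this yields $k/D_{h,i}^{\mathrm{loo}}\le C(1+\rho)=O_\prec(1)$. Because $\rho$ is controlled only up to $d^\varepsilon$ factors, this bound is also $O_\prec(1)$, with the $d^{\varepsilon}$ slack absorbed.

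The main obstacle is that the vectors $u_j$ entering the curvature bound are not independent of $x=\widehat W^{\setminus h}v_i$, because $\widehat W^{\setminus h}$ is fitted on them. For this reason the curvature estimate must be the uniform one: a net over $(x/\sqrt d,\mu)$ combined with the Gaussian frame bound, as stated in Lemma~\ref{lem:tail-derivatives}, rather than a fresh-sample bound. We therefore check that the lemma's statement is uniform over all $x$ with $\rho\le C\sqrt d$ and over all fixed finite deletions. Under that uniformity, the data-dependent $x$ is covered on one high-probability event, so no independence is needed. A weaker fallback would go through $\widehat W^{\setminus\{h,i\}}$, but it cannot remove the $u_j$ dependence, so we rely on the uniform net statement.
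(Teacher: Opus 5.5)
Your proof is correct and follows the paper's argument. You use one-sided Gaussian contraction (Hanson--Wright) for the fresh two-deletion optimizer $\widehat W^{\setminus\{h,i\}}$ against $v_i$, transfer the bound via \eqref{eq:one-two-stability}, and then apply the bulk-curvature estimate \eqref{eq:bulk-geometry-summary} to the exact leave-one-out quantile equation with $\rho_{h,i}=O_\prec(1)$. Your remark that the curvature bound must be the uniform, net-based version, because $x=\widehat W^{\setminus h}v_i$ depends on the $u_j$, makes explicit what the paper uses implicitly.
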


\begin{proof}
The two-deletion optimizer $\widehat W^{\setminus\{h,i\}}$ is independent
of $v_i$ and has Frobenius norm $O_\prec(d)$ by
\eqref{eq:one-two-norm}; one-sided Gaussian contraction therefore gives
$\norm{\widehat W^{\setminus\{h,i\}}v_i}=O_\prec(\sqrt d)$.
The stability estimate \eqref{eq:one-two-stability} transfers this bound to
$\widehat W^{\setminus h}v_i$ and then to $\widehat Wv_i$, proving
\eqref{eq:A1-output-scale}.

Set $x_{h,i}=\widehat W^{\setminus h}v_i$ and
$\rho_{h,i}=\norm{x_{h,i}}/\sqrt d=O_\prec(1)$.  Applying the bulk-curvature
part of Lemma~\ref{lem:tail-derivatives} to the leave-one-out quantile
equation gives
\[
\sum_{j\ne i,h}q_{j,i}^{\mathrm{loo}}(1-q_{j,i}^{\mathrm{loo}})
\ge \frac{cn}{1+\rho_{h,i}}.
\]
Since $k\asymp n$, this is exactly \eqref{eq:A1-endpoint-curvature}.
\end{proof}

\begin{lemma}[Rough threshold stability and path curvature]\label{lem:A1-rough-path}
Uniformly in $h$ and $i\ne h$,
\begin{equation}\label{eq:A1-rough}
|\Delta\mu_i|=O_{\prec}(d^{-1}),
\end{equation}
and
\begin{equation}\label{eq:A1-path-curvature}
\left(\frac1k\sum_{j\ne i,h}\bar c_{j,i}\right)^{-1}=O_{\prec}(1).
\end{equation}
\end{lemma}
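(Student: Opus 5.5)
The plan is to run a continuity argument along the interpolation path in Lemma~\ref{lem:exact-threshold-identity}. We combine the exact threshold identity \eqref{eq:exact-threshold-identity} with three inputs: the score stability from Theorem~\ref{thm:newton-residual}, the endpoint curvature \eqref{eq:A1-endpoint-curvature}, and the Lipschitz dependence of $c_{j,i}^{(t)}$ on its argument. The first step records the inputs along the path. By \eqref{eq:candidate-score} and \eqref{eq:candidate-error}, together with the Gaussian frame bound, we have $\max_{j\ne i,h}|\Delta s_{j,i}|=O_\prec(d^{-1})$ uniformly. Since $|u_j^\top X v_i|\le\|X\|_F\|u_j\|\|v_i\|$, the Frobenius error $O_\prec(d^{-1})$ adds only $O_\prec(d^{-1})$ to each score. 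Moreover $0\le q^{\mathrm{full}}_{h,i}\le1$. The map $t\mapsto c_{j,i}^{(t)}$ is Lipschitz in $(s,\mu)$ with constant $C\beta^2$, so
\[
|\bar c_{j,i}-c_{j,i}^{\mathrm{loo}}|\le C\beta^2\bigl(|\Delta s_{j,i}|+|\Delta\mu_i|\bigr).
\]

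The second step is the rough bound \eqref{eq:A1-rough}, obtained without assuming a priori curvature. Write $D(t)$ for the threshold sum at time $t$. I would avoid the circularity in the identity by using monotonicity directly on the full quantile equation. Fix the full scores and view the full threshold equation $\sum_{j\ne i}\sigma(\beta(s^{\mathrm{full}}_{j,i}-\mu))=k$ as a function of $\mu$. This function is decreasing, with derivative $-\sum_j c_{j,i}(\mu)$.

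Evaluate the sum at the candidate $\mu=\widehat\mu^{\setminus h}_i\pm\tau$ with $\tau=d^{-1+\varepsilon}$. Relative to the LOO equation, the sum changes by three contributions:
\begin{itemize}
\item the inserted term $q_{h,i}\in[0,1]$;
\item a score shift bounded by $\sum_j c^{\mathrm{loo}}_{j,i}\,O_\prec(d^{-1})$;
\item the threshold shift $\mp\tau$, which contributes at least $\tau\cdot\min_{|t|\le\tau}\sum_j c_{j,i}$ in magnitude.
\end{itemize}
Because $c$ varies by at most $O(\beta^2\tau)=o(1)$ relative to $c^{\mathrm{loo}}$ on this window, the curvature there is at least $(1-o(1))D^{\mathrm{loo}}_{h,i}\gtrsim k$ by \eqref{eq:A1-endpoint-curvature}. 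The threshold shift therefore dominates: $k\tau\gg1+k\,d^{-1}$. The signs then bracket the root $\widehat\mu_i$ within $\tau$, which gives $|\Delta\mu_i|\le d^{-1+\varepsilon}$ with overwhelming probability, uniformly over the polynomially many pairs $(h,i)$ by a union bound.

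The third step is the path curvature \eqref{eq:A1-path-curvature}. Once $|\Delta s_{j,i}|$ and $|\Delta\mu_i|$ are both $O_\prec(d^{-1})$, the Lipschitz bound gives $|\bar c_{j,i}-c^{\mathrm{loo}}_{j,i}|=O_\prec(d^{-1})$ uniformly in $j$. Summing over $j$,
\[
\frac1k\sum_j\bar c_{j,i}\ \ge\ \frac{D^{\mathrm{loo}}_{h,i}}{k}-O_\prec(d^{-1}).
\]
Combined with \eqref{eq:A1-endpoint-curvature}, this gives \eqref{eq:A1-path-curvature}. As a consistency check, we can substitute back into \eqref{eq:exact-threshold-identity}: the numerator is $O(1)+k\,O_\prec(d^{-1})$ and the denominator is $\gtrsim k$, which recovers $|\Delta\mu_i|=O_\prec(d^{-1})$.

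The main obstacle is the bracketing step. The curvature must be controlled on a whole window around $\widehat\mu_i^{\setminus h}$ rather than only at the endpoint, and the relevant events must hold simultaneously for all $(h,i)$. I would handle this by deterministic Lipschitz control of $\sigma'$, with constant $\beta/4$ after scaling, so that only the endpoint estimate \eqref{eq:A1-endpoint-curvature} and the score bound are probabilistic inputs; both are already uniform in $(h,i)$.
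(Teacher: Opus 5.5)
Your proposal is correct and follows essentially the paper's route. You bracket the root of the full threshold equation by monotonicity, using $\max_j|\Delta s_{j,i}|=O_\prec(d^{-1})$ and the endpoint curvature $D^{\mathrm{loo}}_{h,i}\gtrsim k$ (controlled on a short window) to get the rough bound, and then perturb $c^{\mathrm{loo}}_{j,i}$ to $\bar c_{j,i}$ to get the path curvature. The differences are cosmetic: the paper absorbs the score shift purely by monotonicity, $F_{\mathrm{loo}}(x+\delta_i)\le F_{\mathrm{full}}(x)\le F_{\mathrm{loo}}(x-\delta_i)$, invokes curvature only to absorb the inserted term over a window of width $e^{\beta}/D^{\mathrm{loo}}_{h,i}$, and uses the multiplicative bound $|(\log c)'|\le\beta$ where you use additive Lipschitz control.
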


\begin{proof}
Put $J=[n]\setminus\{i,h\}$ and
\[
F_{\mathrm{loo}}(x)=\sum_{j\in J}\sigma(\beta(s_{j,i}^{\mathrm{loo}}-x)),
\qquad
F_{\mathrm{full}}(x)=\sum_{j\in J}\sigma(\beta(s_{j,i}^{\mathrm{full}}-x)).
\]
Then $F_{\mathrm{loo}}(\widehat\mu_i^{\setminus h})=k$ and
$F_{\mathrm{full}}(\widehat\mu_i)=k-q_{h,i}^{\mathrm{full}}\in(k-1,k)$.  The candidate-score and candidate-error estimates in Theorem~\ref{thm:newton-residual} imply
\[
\delta_i:=\max_{j\in J}|\Delta s_{j,i}|=O_{\prec}(d^{-1}).
\]
Monotonicity gives
\[
F_{\mathrm{loo}}(x+\delta_i)\le F_{\mathrm{full}}(x)\le F_{\mathrm{loo}}(x-\delta_i).
\]
The left inequality implies $\widehat\mu_i\ge\widehat\mu_i^{\setminus h}-\delta_i$.

For $c(z)=\beta\sigma(\beta z)(1-\sigma(\beta z))$, one has $|(\log c)'(z)|\le\beta$.  Therefore, for $0\le t\le1$,
\[
-F_{\mathrm{loo}}'(\widehat\mu_i^{\setminus h}+t)\ge e^{-\beta}D_{h,i}^{\mathrm{loo}}.
\]
Set $\tau_{h,i}=e^{\beta}/D_{h,i}^{\mathrm{loo}}=O_{\prec}(n^{-1})$.  Integrating gives
$F_{\mathrm{loo}}(\widehat\mu_i^{\setminus h}+\tau_{h,i})\le k-1$, and the other monotonicity inequality implies
$\widehat\mu_i\le\widehat\mu_i^{\setminus h}+\delta_i+\tau_{h,i}$.
This proves \eqref{eq:A1-rough}.

Now
\[
\varepsilon_{h,i}:=\max_{j\ne i,h}|\Delta s_{j,i}-\Delta\mu_i|=O_{\prec}(d^{-1}).
\]
The same logarithmic derivative bound yields
$c_{j,i}^{(t)}\ge e^{-\beta\varepsilon_{h,i}}c_{j,i}^{\mathrm{loo}}$ for every $t$.  Sum over $j$, use \eqref{eq:A1-endpoint-curvature}, and integrate in $t$ to obtain \eqref{eq:A1-path-curvature}.
\end{proof}

Define
\begin{equation}\label{eq:A1-g-def}
g_{h,i}=\frac1k\sum_{j\ne i,h}\bar c_{j,i}u_j,
\qquad
g_{h,i}^{\mathrm{loo}}=\frac1k\sum_{j\ne i,h}c_{j,i}^{\mathrm{loo}}u_j.
\end{equation}
Then
\begin{equation}\label{eq:A1-hard-average}
\frac1k\sum_{j\ne i,h}\bar c_{j,i}\Delta s_{j,i}
=g_{h,i}^\top\Delta Wv_i.
\end{equation}

\begin{lemma}[Derivative-weight average bounds]\label{lem:A1-g-bounds}
Uniformly in $h$ and $i\ne h$,
\begin{equation}\label{eq:A1-g0-bound}
\norm{g_{h,i}^{\mathrm{loo}}}=O_{\prec}(d^{-1/2}),
\end{equation}
\begin{equation}\label{eq:A1-g-freeze}
\norm{g_{h,i}-g_{h,i}^{\mathrm{loo}}}=O_{\prec}(d^{-3/2}).
\end{equation}
Moreover, $g_{h,i}^{\mathrm{loo}}$ is measurable with respect to the data excluding $(u_h,v_h)$.
\end{lemma}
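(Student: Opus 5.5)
The proof has three parts: first the measurability claim, then the bound on $g_{h,i}^{\mathrm{loo}}$, then the freezing estimate.

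\textbf{Measurability.} The LOO optimizer $(\widehat W^{\setminus h},\widehat\mu^{\setminus h})$ is a function of the data with $(u_h,v_h)$ removed. So are the scores $s_{j,i}^{\mathrm{loo}}$ for $j,i\ne h$, the weights $c_{j,i}^{\mathrm{loo}}=\beta q_{j,i}^{\mathrm{loo}}(1-q_{j,i}^{\mathrm{loo}})$, and the vectors $u_j$ with $j\ne h$. Hence $g_{h,i}^{\mathrm{loo}}$ is measurable with respect to the data excluding $(u_h,v_h)$.

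\textbf{Bound on $g_{h,i}^{\mathrm{loo}}$.} We write
\[
g_{h,i}^{\mathrm{loo}}=\tfrac1k U w,\qquad w_j=c_{j,i}^{\mathrm{loo}}\mathbf 1\{j\ne i,h\}.
\]
Since $0\le c\le\beta/4$, we have $\|w\|_2\le (\beta/4)\sqrt n$. The frame bound $\|U\|_{\mathsf{op}}=O_\prec(\sqrt d)$, uniform over polynomially many deletions, then gives
\[
\|g_{h,i}^{\mathrm{loo}}\|\le \frac{C\sqrt d\sqrt n}{k}=O_\prec(d^{-1/2}),
\]
which is \eqref{eq:A1-g0-bound}. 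This is the same argument as \eqref{eq:proof-p-basic}.

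\textbf{Freezing estimate.} Write
\[
g_{h,i}-g_{h,i}^{\mathrm{loo}}=\tfrac1k U e,\qquad e_j=\bar c_{j,i}-c_{j,i}^{\mathrm{loo}}.
\]
We need $\|e\|_2=O_\prec(\sqrt n\,d^{-2})=O_\prec(d^{-1})$.

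\begin{enumerate}
\item Pointwise, $|\partial_z(\beta\sigma(\beta z)(1-\sigma(\beta z)))|\le\beta^2/4$. Hence $|c^{(t)}_{j,i}-c_{j,i}^{\mathrm{loo}}|\le (\beta^2/4)\,|\Delta s_{j,i}-\Delta\mu_i|$.
\item The uniform bound $\varepsilon_{h,i}=O_\prec(d^{-1})$ from Lemma~\ref{lem:A1-rough-path} gives only $\|e\|_2\le\sqrt n\,\varepsilon_{h,i}=O_\prec(d^{-1/2}\cdot d)$. That loses a factor $\sqrt d$, so a uniform bound is not enough. We need an $\ell_2$ bound on the column $(\Delta s_{j,i})_{j\ne i,h}$ of order $\sqrt n\,d^{-2}=O(d^{-1})$, i.e.\ a typical entry of size $d^{-2}$, together with $|\Delta\mu_i|\sqrt n=O_\prec(d^{-1})$.
\item The $\Delta\mu_i$ piece is not available from Lemma~\ref{lem:A1-rough-path}, which gives only $|\Delta\mu_i|=O_\prec(d^{-1})$. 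This is circular with (A1), so we instead bound $\|U e\|$ directly rather than $\|e\|_2$. We split $e$ into the part driven by $\Delta\mu_i$, which is $e'_j\approx \gamma_j\Delta\mu_i$, and the part driven by $\Delta s$. The first gives
\[
\tfrac1k\Delta\mu_i\textstyle\sum_j\gamma_j u_j,
\]
and $\|\sum_j\gamma_ju_j\|=O_\prec(\sqrt{nd})$ by the frame bound, so this piece is $O_\prec(d^{-1}\cdot d^{-1/2})=O_\prec(d^{-3/2})$. That suffices.
\item For the $\Delta s$ piece we use the decomposition \eqref{eq:one-deletion-decomp}:
\[
\Delta W=B_h+a_h^\star T_h[X_h^{\mathrm{loo}}]+\mathcal R_h .
\]
The column $(u_j^\top\Delta W v_i)_j=U^\top\Delta W v_i$ has $\ell_2$ norm at most $\|U\|_{\mathsf{op}}\|\Delta W v_i\|$. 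For the $\mathcal R_h$ and $B_h$ parts this is $\sqrt d\cdot O_\prec(d^{-1/2})$, which is too big by itself. So we bound $\|\Delta Wv_i\|$ more cleverly: $T_h[X_h^{\mathrm{loo}}]v_i$ pairs the fresh $v_h$ against $v_i$, and since $T_h$ does not see $v_h$, fresh contraction should give $\|T_h[u_hv_h^\top]v_i\|=O_\prec(d^{-1/2})$. Also, the entries $e_j$ are Lipschitz in $\Delta s_{j,i}$ but only with an $\ell_\infty\to\ell_2$ loss.
\end{enumerate}

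A more efficient route is to Taylor-expand $c$ to first order,
\[
e_j=\tfrac12\phi_\beta'''(\cdot)(\Delta s_{j,i}-\Delta\mu_i)+O(\varepsilon^2).
\]
The quadratic remainder contributes $\sqrt n\,d^{-2}$ in $\ell_2$, which is fine. The linear part gives
\[
\tfrac1k\sum_j\kappa_j u_ju_j^\top\Delta Wv_i=\tfrac1k U K U^\top\Delta Wv_i ,
\]
with $K$ diagonal and bounded. Its norm is at most $\tfrac1k\|U\|_{\mathsf{op}}^2\|\Delta Wv_i\|=O_\prec(d^{-2}\cdot d\cdot\|\Delta Wv_i\|)$. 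So it suffices to have $\|\Delta W v_i\|=O_\prec(d^{-1/2})$.

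That bound is the main obstacle. It follows by conditioning on everything but $v_i$, passing to the two-deletion optimizers, and applying one-sided fresh contraction, $\|Av_i\|\approx\|A\|_F/\sqrt d$, to $\Delta W^{\setminus i}$, whose Frobenius norm is $O_\prec(1)$ by \eqref{eq:one-two-stability}. The change from $\Delta W^{\setminus i}$ to $\Delta W$ is controlled by the second-order stability of the two-deletion Newton comparison.
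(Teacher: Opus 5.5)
Your measurability argument and the proof of \eqref{eq:A1-g0-bound} are correct and match the paper's. The freezing estimate \eqref{eq:A1-g-freeze}, however, rests on a miscount of powers of $d$, and it is not completed as written.

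Since $k^{-1}\|U\|_{\mathsf{op}}=O_\prec(d^{1/2}d^{-2})=O_\prec(d^{-3/2})$, the bound $\|g_{h,i}-g_{h,i}^{\mathrm{loo}}\|\le k^{-1}\|U\|_{\mathsf{op}}\|e\|_2$ only requires $\|e\|_2=O_\prec(1)$, not $O_\prec(d^{-1})$. Likewise, in your item~2, $\sqrt n\,\varepsilon_{h,i}=O_\prec(d\cdot d^{-1})=O_\prec(1)$, not $O_\prec(d^{1/2})$. So the argument you discarded is exactly the paper's proof, and it already closes:
\begin{itemize}
\item \eqref{eq:candidate-score} and \eqref{eq:candidate-error} give $\max_{j\ne i,h}|\Delta s_{j,i}|=O_\prec(d^{-1})$;
\item \eqref{eq:A1-rough} gives $|\Delta\mu_i|=O_\prec(d^{-1})$;
\item the global Lipschitz property of $z\mapsto\beta\sigma(\beta z)(1-\sigma(\beta z))$ then yields $\max_{j}|\bar c_{j,i}-c^{\mathrm{loo}}_{j,i}|=O_\prec(d^{-1})$, hence $\|e\|_2\le\sqrt n\,O_\prec(d^{-1})=O_\prec(1)$.
\end{itemize}
There is no circularity with \ref{post:loo-mu}. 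Estimate \eqref{eq:A1-rough} comes from the monotonicity argument in Lemma~\ref{lem:A1-rough-path}, which does not use \ref{post:loo-mu}, and the rough rate $d^{-1}$ is all that is needed here. The same miscount appears in your item~4: the $O_\prec(1)$ bound you obtain for the $B_h$ and $\mathcal R_h$ contributions to $\|U^\top\Delta W v_i\|$ is already enough.

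Your substitute route does not finish the proof. It reduces everything to $\|\Delta W v_i\|=O_\prec(d^{-1/2})$ and then invokes ``second-order stability of the two-deletion Newton comparison.'' That amounts to claiming the cross difference $\widehat W-\widehat W^{\setminus h}-\widehat W^{\setminus i}+\widehat W^{\setminus\{h,i\}}$ is $O_\prec(d^{-1/2})$ in Frobenius norm. Nothing in the paper supplies this: \eqref{eq:one-two-stability} bounds each first difference by $O_\prec(1)$, which gives only $O_\prec(1)$ for the cross difference. The bound on $\|\Delta W v_i\|$ could instead be read off from \eqref{eq:one-deletion-decomp}. You would use $\|B_h\|_{\mathsf F}=O_\prec(d^{-1/2})$ for $B_h$, and two-sided freshness of $(u_h,v_h)$ relative to $T_h$ and $v_i$ for the Newton term, as you began to note in item~4. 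Once the exponent count is corrected, though, none of this machinery is needed.
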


\begin{proof}
Since $0\le c_{j,i}^{\mathrm{loo}}\le\beta/4$,
$\norm{(c_{j,i}^{\mathrm{loo}})_{j\ne i,h}}_2\le C\sqrt n$; hence
\[
\norm{g_{h,i}^{\mathrm{loo}}}
\le \frac{\norm U_{\mathsf{op}}}{k}\,C\sqrt n
=O_\prec(d^{-1/2}).
\]
The map $z\mapsto\beta\sigma(\beta z)(1-\sigma(\beta z))$ is globally
Lipschitz.  The candidate score bound together with \eqref{eq:A1-rough}
therefore yields
$\max_{j\ne i,h}|\bar c_{j,i}-c_{j,i}^{\mathrm{loo}}|=O_\prec(d^{-1})$.
Applying the same operator-norm estimate gives
\[
\norm{g_{h,i}-g_{h,i}^{\mathrm{loo}}}
\le \frac{\norm U_{\mathsf{op}}}{k}\sqrt n\,O_\prec(d^{-1})
=O_\prec(d^{-3/2}).
\]
Measurability follows from the leave-one-out definition.
\end{proof}

\paragraph{Averaged Newton contraction.}

The following is the key gain in the threshold argument: although each
surviving score changes at scale $d^{-1}$, the curvature-weighted average in
\eqref{eq:A1-hard-average} is smaller by an additional factor $d^{-1/2}$.

\begin{lemma}[Averaged score perturbation]\label{lem:A1-averaged-score}
Uniformly in $h$ and $i\ne h$,
\begin{equation}\label{eq:A1-averaged-score}
\frac1k\sum_{j\ne i,h}\bar c_{j,i}\Delta s_{j,i}
=O_{\prec}(d^{-3/2}).
\end{equation}
\end{lemma}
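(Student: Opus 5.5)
By \eqref{eq:A1-hard-average}, the left side of \eqref{eq:A1-averaged-score} equals $g_{h,i}^\top\Delta W v_i$. Our plan is to freeze the weights and then expand $\Delta W$ with the Newton decomposition \eqref{eq:one-deletion-decomp}.

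\emph{Freezing the weights.} First write
\[
g_{h,i}^\top\Delta Wv_i=(g_{h,i}^{\mathrm{loo}})^\top\Delta Wv_i+(g_{h,i}-g_{h,i}^{\mathrm{loo}})^\top\Delta Wv_i .
\]
The second term is at most $\norm{g_{h,i}-g_{h,i}^{\mathrm{loo}}}\,\norm{\Delta W}_F\,\norm{v_i}$. By \eqref{eq:A1-g-freeze} and \eqref{eq:one-two-stability}, this is $O_\prec(d^{-3/2})$.

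\emph{Expanding the main term.} For the main term, substitute $\Delta W=B_h+a_h^\star T_h[X_h^{\mathrm{loo}}]+\mathcal R_h$. The remainder contributes at most $\norm{g^{\mathrm{loo}}_{h,i}}\,\norm{\mathcal R_h}_F\,\norm{v_i}=O_\prec(d^{-1/2}\cdot d^{-1})$, using \eqref{eq:A1-g0-bound}.

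For the Newton direction, use self-adjointness of $T_h$:
\[
(g^{\mathrm{loo}}_{h,i})^\top T_h[X_h^{\mathrm{loo}}]v_i=\bigl\langle T_h[g^{\mathrm{loo}}_{h,i}v_i^\top],\,(u_h-\eta_h^{\mathrm{loo}})v_h^\top\bigr\rangle_{\mathsf F}.
\]
The matrix $A:=T_h[g^{\mathrm{loo}}_{h,i}v_i^\top]$ is measurable with respect to the data excluding $(u_h,v_h)$, by Lemma~\ref{lem:A1-g-bounds} and the measurability of $T_h$. It satisfies $\norm A_F\le \gamma_d^{-1}\norm{g^{\mathrm{loo}}_{h,i}}\norm{v_i}=O_\prec(d^{-1/2})$. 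Fresh two-sided contraction then gives $u_h^\top Av_h=O_\prec(\norm A_F/d)=O_\prec(d^{-3/2})$. Fresh one-sided contraction in $v_h$, combined with \eqref{eq:eta-h-bound}, gives $(\eta_h^{\mathrm{loo}})^\top Av_h=O_\prec(d^{-1/2}\cdot d^{-1/2}\cdot d^{-1/2})$. Since $a_h^\star\in(0,1)$, the Newton direction contributes $O_\prec(d^{-3/2})$.

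\emph{The $B_h$ term.} Similarly,
\[
(g^{\mathrm{loo}}_{h,i})^\top B_hv_i=-\langle A,G_h(\widehat W^{\setminus h})\rangle_{\mathsf F}.
\]
By Lemma~\ref{lem:G-bound}, $G_h$ splits into a Frobenius part of size $O_\prec(d^{-1})$ and the structured part $u_h\xi_h^\top$. The Frobenius part pairs with $A$ at scale $d^{-1/2}\cdot d^{-1}$. The structured part gives $u_h^\top A\xi_h$. Here $\xi_h=\sum_{i'\ne h}a_{i'}p_{h,i'}v_{i'}$ depends on $u_h$ through the weights $p_{h,i'}$, so we cannot argue freshness of $\xi_h$. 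Instead we use freshness of $u_h$ against $A$ alone, via the bound $|u_h^\top A\xi_h|\le\norm{A^\top u_h}\,\norm{\xi_h}$. One-sided contraction gives $\norm{A^\top u_h}=O_\prec(\norm A_F/\sqrt d)=O_\prec(d^{-1})$, and $\norm{\xi_h}=O_\prec(d^{-1/2})$ by \eqref{eq:new-row-structured}. The product is $O_\prec(d^{-3/2})$.

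\emph{Main obstacle.} The delicate point is uniformity. Every contraction must hold simultaneously over all pairs $(h,i)$, which is obtained by a union bound over polynomially many pairs together with the $n^{-D}$ tails built into $O_\prec$. It also requires that $A$ be genuinely independent of $(u_h,v_h)$. The frozen weights $c^{\mathrm{loo}}_{j,i}$ depend on $v_i$ and on $\widehat W^{\setminus h}$, but not on $(u_h,v_h)$, so this independence holds. That is exactly why the weights must be frozen at the leave-one-out point before any contraction is applied. Summing the four contributions gives \eqref{eq:A1-averaged-score}.
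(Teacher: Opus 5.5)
Your proposal is correct and follows the paper's proof essentially step for step. You freeze $g_{h,i}$ at $g_{h,i}^{\mathrm{loo}}$ via \eqref{eq:A1-g-freeze}, expand $\Delta W$ by \eqref{eq:one-deletion-decomp}, and control the Newton and $B_h$ pieces through the fresh matrix $T_h[g_{h,i}^{\mathrm{loo}}v_i^\top]$ (the paper's $C_{h,i}$): two-sided contraction handles $u_h^\top C_{h,i}v_h$, while one-sided contraction plus the pathwise bounds on $\eta_h^{\mathrm{loo}}$ and $\xi_h$ handle the terms that depend on the held-out pair, exactly as in the paper.
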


\begin{proof}
Use the one-deletion decomposition \eqref{eq:one-deletion-decomp}:
\[
\Delta W=B_h+a_h^\star T_h[X_h^{\mathrm{loo}}]+\mathcal R_h.
\]
First, by \eqref{eq:A1-g-freeze}, \eqref{eq:one-two-stability}, and the Gaussian norm bound,
\[
|(g_{h,i}-g_{h,i}^{\mathrm{loo}})^\top\Delta Wv_i|=O_{\prec}(d^{-3/2}).
\]
The residual contributes
\[
|(g_{h,i}^{\mathrm{loo}})^\top\mathcal R_hv_i|
\le\norm{g_{h,i}^{\mathrm{loo}}}\norm{\mathcal R_h}_F\norm{v_i}
=O_{\prec}(d^{-3/2}).
\]

Define
\begin{equation}\label{eq:A1-C-def}
C_{h,i}=T_h[g_{h,i}^{\mathrm{loo}}v_i^\top].
\end{equation}
By self-adjointness and \eqref{eq:A1-g0-bound},
\[
\norm{C_{h,i}}_F=O_{\prec}(d^{-1/2}).
\]
The matrix $C_{h,i}$ is independent of the held-out pair.  Since
$X_h^{\mathrm{loo}}=(u_h-\eta_h^{\mathrm{loo}})v_h^\top$ with
$\norm{\eta_h^{\mathrm{loo}}}=O_{\prec}(d^{-1/2})$,
\begin{align*}
(g_{h,i}^{\mathrm{loo}})^\top T_h[X_h^{\mathrm{loo}}]v_i
&=\left\langle C_{h,i},(u_h-\eta_h^{\mathrm{loo}})v_h^\top\right\rangle_\mathsf{F}\\
&=u_h^\top C_{h,i}v_h-(\eta_h^{\mathrm{loo}})^\top C_{h,i}v_h.
\end{align*}
The two-sided fresh contraction gives
$u_h^\top C_{h,i}v_h=O_{\prec}(d^{-3/2})$, while
$\norm{C_{h,i}v_h}=O_{\prec}(d^{-1})$ and hence the second term is also $O_{\prec}(d^{-3/2})$.

It remains to treat $B_h=-T_h[G_h(\widehat W^{\setminus h})]$.  By Lemma~\ref{lem:G-bound}, write
\[
G_h(\widehat W^{\setminus h})=G_h^{\rm sm}+u_h\xi_h^\top,
\qquad
\norm{G_h^{\rm sm}}_F=O_{\prec}(d^{-1}),
\qquad
\norm{\xi_h}=O_{\prec}(d^{-1/2}).
\]
Self-adjointness gives
\[
(g_{h,i}^{\mathrm{loo}})^\top B_hv_i
=-\left\langle C_{h,i},G_h^{\rm sm}\right\rangle_\mathsf{F}
-u_h^\top C_{h,i}\xi_h.
\]
The first term is $O_{\prec}(d^{-3/2})$.  For the second, freshness gives
$\norm{C_{h,i}^\top u_h}=O_{\prec}(d^{-1})$, and the pathwise bound on $\xi_h$ gives the same $d^{-3/2}$ rate.  Combining the estimates with \eqref{eq:A1-hard-average} proves \eqref{eq:A1-averaged-score}.
\end{proof}

\begin{proof}[Proof of Proposition~\ref{prop:loo_stability}, \ref{post:loo-mu}]
Divide the identity \eqref{eq:exact-threshold-identity} by $k$.  The denominator is bounded below at order one by \eqref{eq:A1-path-curvature}; the removed-competitor term satisfies $q_{h,i}^{\mathrm{full}}/k=O(k^{-1})=O(d^{-2})$; and the averaged score term is $O_{\prec}(d^{-3/2})$ by Lemma~\ref{lem:A1-averaged-score}.  Therefore
\[
|\Delta\mu_i|=O_{\prec}(d^{-3/2}).
\]
All estimates are uniform over the $O(n^2)$ pairs $(h,i)$.
\end{proof}

\subsection{Score stability and uniform boundedness}\label{proof:A2}

\begin{proof}[Proof of Proposition~\ref{prop:loo_stability}, \ref{post:loo-score}]
Fix $h$ and $i,j\ne h$.  Add and subtract the Newton candidate:
\[
\left|u_j^\top(\widehat W-\widehat W^{\setminus h})v_i\right|
\le
\left|u_j^\top(\widetilde W_h-\widehat W^{\setminus h})v_i\right|
+
\left|u_j^\top(\widehat W-\widetilde W_h)v_i\right|.
\]
The first term is $O_{\prec}(d^{-1})$ by Lemma~\ref{lem:candidate-common-score}.  For the second,
\[
\left|u_j^\top(\widehat W-\widetilde W_h)v_i\right|
\le \norm{u_j}\norm{v_i}\norm{\widehat W-\widetilde W_h}_F
=O_{\prec}(d^{-1})
\]
by \eqref{eq:candidate-error} and the Gaussian norm bound.  All estimates were uniform over $h,i,j$, and the total number of index triples is $O(n^3)=O(d^6)$, which is polynomial.  Therefore
\[
\max_h\max_{i,j\ne h}
\left|u_j^\top(\widehat W-\widehat W^{\setminus h})v_i\right|
=O_{\prec}(d^{-1}).
\]
This proves the claim.
\end{proof}

\paragraph{Uniform score boundedness.}\phantomsection\label{proof:A3}

\begin{lemma}[Fresh leave-two-out score]\label{lem:fresh-finite-score}
Uniformly in $i,j\in[n]$,
\begin{equation}\label{eq:fresh-finite-score}
\left|u_j^\top\widehat W^{\setminus\{i,j\}}v_i\right|=O_{\prec}(1),
\end{equation}
where $\{i,j\}=\{i\}$ when $i=j$.
\end{lemma}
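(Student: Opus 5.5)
The plan rests on freshness. The leave-two-out optimizer $\widehat W^{\setminus\{i,j\}}$ is a measurable function of the data with indices outside $\{i,j\}$, so it is independent of the pair $(u_j,v_i)$. When $i\ne j$ the vectors $u_j$ and $v_i$ are also independent of each other. When $i=j$, the optimizer $\widehat W^{\setminus i}$ is independent of $(u_i,v_i)$, and $u_i,v_i$ are again independent of each other. In both cases we may condition on $A:=\widehat W^{\setminus\{i,j\}}$ and study the bilinear form $u^\top A v$, with $u,v$ independent $\mathcal N(0,I_d/d)$ vectors that are independent of $A$.

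Conditionally on $A$ and $v$, the quantity $u^\top Av$ is a centered Gaussian with variance $\|Av\|^2/d$. We first integrate over $v$. The map $v\mapsto\|Av\|$ is $\|A\|_{\mathsf{op}}$-Lipschitz and satisfies $\E\|Av\|^2=\|A\|_F^2/d$. Gaussian concentration (Hanson--Wright) therefore gives
\[
\|Av\|^2=\frac{\|A\|_F^2}{d}+O_\prec\!\left(\frac{\|A^\top A\|_F}{d}+\frac{\|A\|_{\mathsf{op}}^2}{d}\right)=O_\prec\!\left(\frac{\|A\|_F^2}{d}\right),
\]
using only the crude bounds $\|A\|_{\mathsf{op}}\le\|A\|_F$ and $\|A^\top A\|_F\le\|A\|_F^2$. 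Plugging this into the Gaussian tail in $u$ yields
\[
|u^\top Av|=O_\prec\!\left(\frac{\|A\|_F}{d}\right),
\]
with $n^{-D}$ failure probability conditionally on $A$. This is the same fresh two-sided contraction already used in Lemmas~\ref{lem:candidate-common-score} and~\ref{lem:A1-averaged-score}.

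It remains to bound $\|A\|_F$. Here we invoke \eqref{eq:one-two-norm} of Proposition~\ref{prop:one-two-deletion}, which gives $\|\widehat W^{\setminus\{h,i\}}\|_F+\|\widehat W^{\setminus h}\|_F=O_\prec(d)$ uniformly over index pairs. This ultimately comes from the deterministic Frobenius upper bound of Lemma~\ref{lem:frobenius_upper}, which holds after any finite deletion. Because that bound is essentially deterministic (or holds with overwhelming probability) and depends only on data outside $\{i,j\}$, we can intersect with the corresponding good event before conditioning. This gives $|u_j^\top Av_i|=O_\prec(d/d)=O_\prec(1)$ for each fixed pair. A union bound over the $O(n^2)$ pairs, which is polynomially many, preserves stochastic domination and yields \eqref{eq:fresh-finite-score} uniformly.

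The main subtlety is the measurability/independence bookkeeping. We must make sure that the event $\{\|A\|_F\le n^\varepsilon d\}$ is measurable with respect to the data excluding $(u_j,v_i)$, so that conditioning does not bias the Gaussian vectors. If instead the Frobenius bound were only stated for the full data, one would transfer it via the stability estimate \eqref{eq:one-two-stability}. That transfer costs at most an additive $O_\prec(1)$ in Frobenius norm, which is harmless.

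The lemma's role is that the bound \ref{post:loo-bounded} then follows by adding back the two deletion increments. Each increment contributes at most $\|u_j\|\|v_i\|\,O_\prec(1)=O_\prec(1)$ by \eqref{eq:one-two-stability}.
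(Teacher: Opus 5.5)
Your proposal is correct and follows the paper's own proof. The ingredients are the same: freshness of $\widehat W^{\setminus\{i,j\}}$ with respect to $(u_j,v_i)$, the deterministic $O(d)$ Frobenius bound \eqref{eq:one-two-norm} inherited from Lemma~\ref{lem:frobenius_upper}, conditional two-sided Gaussian contraction, and a union bound over polynomially many pairs. You simply write out the conditional-Gaussian/Hanson--Wright step that the paper compresses into ``conditional two-sided Gaussianity.''
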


\begin{proof}
After deleting the pair(s) indexed by $i$ and $j$, the optimizer is independent of $u_j$ and $v_i$, so its $O_{\prec}(d)$ Frobenius norm from \eqref{eq:one-two-norm}, conditional two-sided Gaussianity, and a union bound over the polynomial-size family of pairs give \eqref{eq:fresh-finite-score}.
\end{proof}

\begin{proof}[Proof of Proposition~\ref{prop:loo_stability}, \ref{post:loo-bounded}]
For fixed $i,j$, insert $\widehat W^{\setminus\{i,j\}}$.  The fresh term is $O_{\prec}(1)$ by Lemma~\ref{lem:fresh-finite-score}, while
\[
\left|u_j^\top\bigl(\widehat W-\widehat W^{\setminus\{i,j\}}\bigr)v_i\right|
\le \norm{u_j}\norm{v_i}\norm{\widehat W-\widehat W^{\setminus\{i,j\}}}_F
=O_{\prec}(1)
\]
by \eqref{eq:one-two-stability}; the claim follows uniformly over the $O(n^2)$ pairs.
\end{proof}

\section{Spectral and self-averaging estimates}
\label{app:spectral_self_averaging}

This appendix proves \ref{post:empirical-concentration} and
\ref{post:trace-concentration}.  The proof of the spectral estimate
\ref{post:spectral-op} is given in Appendix~\ref{app:frobenius_bounds}.
We use the notation of \sref{tam_theory}, in particular the score matrix
$S(W)=U^\top W V$, the joint optimizer $(\Wh,\muh)$, and the
coefficients $a_i,b_i,q_{j,i}$ from \sref{tam_gradient}.

The proofs of \ref{post:empirical-concentration} and
\ref{post:trace-concentration} use the same sensitivity--concentration
scheme.  After profiling out the threshold variables $\mu$, we regard the
statistic of interest as a function of the Gaussian data $(U,V)$.
Strong convexity of the profiled objective permits implicit
differentiation of the optimizer, and a self-adjoint adjoint solve
supplies this optimizer response into a single matrix direction $P$.
The resulting data-gradient is controlled using the profiled Hessian
quadratic form, after which the Gaussian $L^p$ Poincare inequality gives
concentration.  For \ref{post:trace-concentration} there is one additional
preliminary step: we first replace the lifted Hessian by the simpler
rank-one-feature Hessian of \eqref{eq:simplified_hessian_main}, and then apply the same adjoint argument used
for \ref{post:empirical-concentration}.


For $J_i=[n]\setminus\{i\}$, write
$\mathcal T_i=\mathcal T_{J_i}$ and $\mu_i=\mu_{J_i}$ in the notation
of the common profiled-tail calculus in Appendix~\ref{app:loo_stability_proofs},
Section~\ref{app:profiled_tail_calculus}.  The profiled objective is
\begin{equation}\label{eq:ssa_profiled_objective}
    \Phi(W)
    :=\sum_{i=1}^n
       \ell\!\left(s_{i,i}(W)-\mathcal T_i(S_{-i,i}(W))\right)
      +\frac{\gamma_d}{2}\norm{W}_{\mathsf F}^2,
    \qquad \gamma_d:=\frac{\lambda n}{d^2}.
\end{equation}
Minimizing \eqref{eq:ssa_profiled_objective} is equivalent to
minimizing the joint objective \eqref{eq:tam-learning-joint}, and its
unique minimizer is $\Wh$.

For the score-level bounds used below, set
\begin{equation}
    h_{j,i}:=\phi_\beta''(s_{j,i}-\mu_i),\qquad
    H_i^\mu:=\sum_{j\ne i}h_{j,i}.
\end{equation}
Let $B_i:=B_{J_i}(S_{-i,i}(\Wh))$.  Thus, by
\eqref{eq:tail-threshold-derivative}--\eqref{eq:B-bounds},
\begin{equation}\label{eq:ssa_D_bounds}
    0\preceq B_i\preceq \frac1k\diag(h_{-i,i}),
    \qquad
    \sum_{j\ne i}\left|
      \frac{\partial q_{j,i}}{\partial s_{\ell,i}}
    \right|+
    \sum_{j\ne i}\left|
      \frac{\partial h_{j,i}}{\partial s_{\ell,i}}
    \right|\le C.
\end{equation}
These are the only threshold derivatives needed below.  In particular,
we never require a lower bound on $H_i^\mu$: differentiation is carried
out through the profiled coefficient fields $q_{j,i},h_{j,i}$ rather
than through the derivative of $\mu_i$ in isolation.
Let $L(S)$ denote the unregularized profiled loss and
$G:=\nabla_S L(S)$.  As in \eqref{eq:grad_W_expanded},
\begin{equation}\label{eq:ssa_G_entries}
    G_{i,i}=-a_i,
    \qquad
    G_{j,i}=\frac{a_iq_{j,i}}{k}\quad(j\ne i),
    \qquad \norm{G}_{\mathrm{op}}\le C.
\end{equation}
The last bound follows from Schur's test, since the absolute column
sums are at most $2$ and the row sums are bounded by
$1+(n-1)/k$.

For a score perturbation $T\in\R^{n\times n}$, define
\begin{equation}\label{eq:ssa_Q_form}
    \delta_i(T):=T_{i,i}-\frac1k\sum_{j\ne i}q_{j,i}T_{j,i}, \qquad \mathcal Q(T)
      :=\sum_{i=1}^n b_i\delta_i(T)^2
       +\sum_{i=1}^n a_i T_{-i,i}^\top B_iT_{-i,i}.
\end{equation}
Note that $\mathcal Q(T)$ is the Hessian quadratic form of $L$.  In particular,
\begin{equation}\label{eq:ssa_Q_upper}
    0\le \mathcal Q(T)
    \le C\sum_iT_{i,i}^2+
       \frac{C}{k}\sum_{i=1}^n\sum_{j\ne i}T_{j,i}^2.
\end{equation}
Let
\begin{equation}\label{eq:ssa_J_def}
    \mathcal J:=\nabla_W^2\Phi(\Wh).
\end{equation}
Then $\mathcal J$ is self-adjoint and
\begin{equation}\label{eq:ssa_J_ridge}
    \mathcal J\succeq\gamma_d I\succeq\lambda\alpha_-I,
    \qquad \norm{\mathcal J^{-1}}_{\mathrm{op}}\le C.
\end{equation}

These facts provide polynomial envelopes for all derivatives used
below.  Consequently, high-probability derivative estimates may be
upgraded to fixed-moment estimates by truncating to the standard
Gaussian norm event and applying H\"older's inequality.

In both proofs of \ref{post:empirical-concentration} and \ref{post:trace-concentration} we use the following Gaussian $L^p$ Poincare
inequality which follows from \cite[Theorem~2.2]{pisier1986probabilistic}: if $Z\sim\mathcal N(0,I_N/d)$ and $f$ is continuously
differentiable with the required moments, then, for fixed $p\ge2$,
\begin{equation}\label{eq:ssa_poincare}
    \norm{f(Z)-\E f(Z)}_{L^p}
    \le C\sqrt{\frac pd}\,
       \norm{\norm{\nabla f(Z)}_2}_{L^p}.
\end{equation}

\subsection{Empirical-mean concentration}\label{proof:A5}

We prove a slightly stronger statement.  Let $\psi\in C^1(\R)$ be
pseudo-Lipschitz of order two and set
\begin{equation}\label{eq:ssa_Gamma_diag_off}
    \Gamma_{\mathrm d}
      :=\frac1n\sum_{i=1}^n\psi(s_{i,i}),
    \qquad
    \Gamma_{\mathrm o}
      :=\frac1{n(n-1)}\sum_{i\ne j}\psi(s_{j,i}).
\end{equation}
Then, for every fixed $p\ge2$ and $\varepsilon>0$,
\begin{equation}\label{eq:ssa_A5_Lp}
    \norm{\Gamma_{\mathrm d}-\E\Gamma_{\mathrm d}}_{L^p}
       \le C_{p,\varepsilon}d^{-1/2+\varepsilon},
    \qquad
    \norm{\Gamma_{\mathrm o}-\E\Gamma_{\mathrm o}}_{L^p}
       \le C_{p,\varepsilon}d^{-1+\varepsilon}.
\end{equation}
We first give a common sensitivity identity.  Let $\Gamma$ denote
either statistic in \eqref{eq:ssa_Gamma_diag_off}, and let
$B=\nabla_S\Gamma$ at $S=S(\Wh)$; explicitly,
\begin{equation}\label{eq:ssa_B_diag_off}
    B^{\mathrm d}_{j,i}
      =\frac{\psi'(s_{i,i})}{n}\mathbf1_{\{j=i\}},
    \qquad
    B^{\mathrm o}_{j,i}
      =\frac{\psi'(s_{j,i})}{n(n-1)}\mathbf1_{\{j\ne i\}}.
\end{equation}
Let $P$ solve
\begin{equation}\label{eq:ssa_P_system}
    \mathcal J[P]=UBV^\top,
    \qquad T:=U^\top PV.
\end{equation}
The matrix $P$ is the adjoint sensitivity direction associated with
$\Gamma$: it absorbs the response of $\Wh$ to the data, avoiding any
entrywise differentiation of the $d^2$ coordinates of $\Wh$.  Let $C$
be the score-Hessian action
\begin{equation}\label{eq:ssa_C_score_action}
    C:=\nabla_S\bigl(DL(S)[T]\bigr).
\end{equation}
Equivalently,
\begin{equation}\label{eq:ssa_C_coordinates}
    C_{i,i}=b_i\delta_i(T),
    \qquad
    C_{j,i}=-\frac{b_i\delta_i(T)q_{j,i}}{k}
       +a_i(D_iT_{-i,i})_j\quad(j\ne i).
\end{equation}
Implicit differentiation of the first-order optimality equation, followed by the
self-adjoint adjoint substitution \eqref{eq:ssa_P_system}, gives
\begin{align}
    \nabla_U\Gamma
      &=\Wh V(B-C)^\top-PVG^\top,
      \label{eq:ssa_A5_gradU}\\
    \nabla_V\Gamma
      &=\Wh^\top U(B-C)-P^\top UG.
      \label{eq:ssa_A5_gradV}
\end{align}
These formulas contain all direct and optimizer-response terms; no
entrywise differentiation of $\Wh$ is needed.

Pairing \eqref{eq:ssa_P_system} with $P$ yields the identity
\begin{equation}\label{eq:ssa_energy}
    \mathcal Q(T)+\gamma_d\norm{P}_{\mathsf F}^2
      =\inprod{P,UBV^\top}_{\mathsf F}.
\end{equation}
Consequently,
\begin{equation}\label{eq:ssa_energy_bounds}
    \norm{P}_{\mathsf F}\le C\norm{UBV^\top}_{\mathsf F},
    \qquad
    \mathcal Q(T)\le C\norm{UBV^\top}_{\mathsf F}^2.
\end{equation}
For the $C$-terms, Frobenius duality and Cauchy--Schwarz for the
positive semidefinite bilinear form associated with $\mathcal Q$ give
\begin{equation}\label{eq:ssa_C_master}
    \norm{\Wh VC^\top}_{\mathsf F}
      +\norm{\Wh^\top UC}_{\mathsf F}
    \le d^{1/2+o(1)}\norm{UBV^\top}_{\mathsf F}.
\end{equation}
To see the first bound, write $\mathcal B_{\mathcal Q}$ for the
symmetric bilinear form associated with $\mathcal Q$ and test against
$A$ with $\norm A_{\mathsf F}=1$.  With $T_A=A^\top\Wh V$,
\begin{equation}
    \inprod{\Wh VC^\top,A}_{\mathsf F}
      =\inprod{C,T_A}_{\mathsf F}
      =\mathcal B_{\mathcal Q}(T,T_A),
\end{equation}
so positivity gives
$|\mathcal B_{\mathcal Q}(T,T_A)|\le
\mathcal Q(T)^{1/2}\mathcal Q(T_A)^{1/2}$.  The first factor is at
most $C\norm{UBV^\top}_{\mathsf F}$ by
\eqref{eq:ssa_energy_bounds}.  For the second,
\eqref{eq:ssa_Q_upper}, \ref{post:spectral-op}, and the Gaussian
operator- and column-norm bounds give
\begin{equation}
    \mathcal Q(T_A)
      \le C\sum_i(\alpha_i^\top\Wh v_i)^2
       +\frac Ck\norm{A^\top\Wh V}_{\mathsf F}^2
      \le d^{1+o(1)},
\end{equation}
where $A=[\alpha_1,\ldots,\alpha_n]$.  Taking the supremum over $A$
gives the first term in \eqref{eq:ssa_C_master}; the second is
identical.  Together with
\eqref{eq:ssa_A5_gradU}--\eqref{eq:ssa_A5_gradV},
\eqref{eq:ssa_energy_bounds}, and $\norm G_{\mathrm{op}}\le C$, this
reduces the proof to elementary bounds on $B$.

For the diagonal statistic, let
\begin{equation}
    \mathcal F_{\mathrm d}
      :=[\vecop(u_1v_1^\top),\ldots,\vecop(u_nv_n^\top)].
\end{equation}
The columns are independent Gaussian Khatri--Rao features, and the
standard spectral-norm bound for this matrix-chaos model
(see e.g. \citep{bandeira2025matrixchaos}) gives
\begin{equation}\label{eq:ssa_Fdiag_op}
    \norm{\mathcal F_{\mathrm d}}_{\mathrm{op}}=O_\prec(1).
\end{equation}
Hence,
\begin{equation}
    \sum_i s_{i,i}^2
      \le \norm{\mathcal F_{\mathrm d}}_{\mathrm{op}}^2
          \norm{\Wh}_{\mathsf F}^2
      =O_\prec(d^2).
\end{equation}
Since $|\psi'(t)|\le C(1+|t|)$, it follows that
$\sum_i\psi'(s_{i,i})^2=O_\prec(d^2)$.  Substituting the diagonal
matrix $B^{\mathrm d}$ into the preceding bounds gives
\begin{equation}\label{eq:ssa_diag_input_bounds}
    \norm{UB^{\mathrm d}V^\top}_{\mathsf F}
       =O_\prec(d^{-1/2}),
    \qquad
    \norm{\Wh V(B^{\mathrm d})^\top}_{\mathsf F}
      +\norm{\Wh^\top UB^{\mathrm d}}_{\mathsf F}
       =O_\prec(d^{-1/2}).
\end{equation}
Equations \eqref{eq:ssa_A5_gradU}--\eqref{eq:ssa_C_master} therefore
imply
\begin{equation}\label{eq:ssa_diag_grad}
    \norm{\nabla_{(U,V)}\Gamma_{\mathrm d}}_2=O_\prec(1).
\end{equation}

For the off-diagonal statistic,
\begin{equation}
    \sum_{i,j}s_{j,i}^2
       =\norm{U^\top\Wh V}_{\mathsf F}^2
       \le \norm U_{\mathrm{op}}^2\norm V_{\mathrm{op}}^2
          \norm{\Wh}_{\mathsf F}^2
       =O_\prec(d^4).
\end{equation}
Thus
$\sum_{i\ne j}\psi'(s_{j,i})^2=O_\prec(d^4)$ and
\begin{equation}\label{eq:ssa_Boff_norm}
    \norm{B^{\mathrm o}}_{\mathsf F}=O_\prec(d^{-2}).
\end{equation}
Using \eqref{post:spectral-op} and the Gaussian operator-norm bounds,
\begin{equation}\label{eq:ssa_off_input_bounds}
    \norm{UB^{\mathrm o}V^\top}_{\mathsf F}=O_\prec(d^{-1}),
    \qquad
    \norm{\Wh V(B^{\mathrm o})^\top}_{\mathsf F}
      +\norm{\Wh^\top UB^{\mathrm o}}_{\mathsf F}
      =O_\prec(d^{-1}).
\end{equation}
The master estimates then give
\begin{equation}\label{eq:ssa_off_grad}
    \norm{\nabla_{(U,V)}\Gamma_{\mathrm o}}_2
      =O_\prec(d^{-1/2}).
\end{equation}

The deterministic bound \eqref{eq:ssa_J_ridge} provides polynomial envelopes in
$1+\norm U_{\mathsf F}+\norm V_{\mathsf F}$.  Hence
\eqref{eq:ssa_diag_grad}--\eqref{eq:ssa_off_grad} hold in fixed
$L^p$, with an arbitrary factor $d^\varepsilon$.  Thus the diagonal
and off-diagonal data-gradient scales are respectively $O(1)$ and
$O(d^{-1/2})$; the additional factor $d^{-1/2}$ in the Gaussian
Poincare inequality is exactly what produces the two concentration
scales in \eqref{eq:ssa_A5_Lp}.  Applying \eqref{eq:ssa_poincare} to
the $2dn$ Gaussian entries of $(U,V)$ gives \eqref{eq:ssa_A5_Lp}.
Markov's inequality, with $p$ arbitrarily
large, proves
\begin{equation}
    \Gamma_{\mathrm d}-\E\Gamma_{\mathrm d}=O_\prec(d^{-1/2}),
    \qquad
    \Gamma_{\mathrm o}-\E\Gamma_{\mathrm o}=O_\prec(d^{-1}),
\end{equation}
which establishes \ref{post:empirical-concentration} (and is stronger
than required in the off-diagonal case).

\subsection{Inverse-trace concentration}\label{proof:A6}

We distinguish two Hessians.  The operator $H$ in
\ref{post:trace-concentration} is the $W$-Hessian of the lifted
objective with $\mu$ held fixed, evaluated at $(\Wh,\muh)$.  The
operator $\mathcal J$ in \eqref{eq:ssa_J_def} is instead the Hessian of
the profiled objective and governs the response of $\Wh$ to the data.
Both have the same ridge floor, but only $\mathcal J$ may be used in
implicit differentiation.

Write $f_{j,i}:=\vecop(u_jv_i^\top)$ and
\begin{equation}
    \xi_i:=\vecop\left(
      \left[-u_i+\frac1k\sum_{j\ne i}q_{j,i}u_j\right]v_i^\top
    \right).
\end{equation}
Then
\begin{equation}\label{eq:ssa_H_full}
    H=\gamma_dI+
      \sum_i b_i\xi_i\xi_i^\top+
      \sum_i\sum_{j\ne i}\frac{a_ih_{j,i}}{k}
          f_{j,i}f_{j,i}^\top.
\end{equation}
Recall the simplified Hessian $\widetilde H$ defined in \eqref{eq:simplified_hessian_main},
which is also used in Appendix~\ref{app:chi_trace}:
\begin{equation}\label{eq:ssa_H_tilde}
    \widetilde H=\gamma_dI+
      \sum_i b_if_{i,i}f_{i,i}^\top+
      \sum_i\sum_{j\ne i}\frac{a_ih_{j,i}}{k}
          f_{j,i}f_{j,i}^\top,
\end{equation}
and set
\begin{equation}
    \Theta:=\frac1{d^2}\tr(H^{-1}),
    \qquad
    \widetilde\Theta:=\frac1{d^2}\tr(\widetilde H^{-1}).
\end{equation}

The trace-reduction estimate of
\lref{chi_rank_one_block_reduction} applies verbatim without a
cavity index.  Here
\begin{equation}
    p_i:=\frac1k\sum_{j\ne i}q_{j,i}u_j,
    \qquad
    \norm{p_i}_2\le\norm U_{\mathrm{op}}
        \left(\frac{n-1}{k^2}\right)^{1/2}
        =O_\prec(d^{-1/2})
\end{equation}
uniformly in $i$.  Thus $\widetilde H$ replaces the mixed signal
direction $(-u_i+p_i)v_i^\top$ by the matched rank-one feature
$-u_iv_i^\top$.  The discarded component has size
$O_\prec(d^{-1/2})$, while all off-diagonal curvature terms are left
unchanged; this is the reason for introducing $\widetilde H$, whose
feature dependence can be differentiated directly.  Together with
Gaussian column-norm bounds, the resolvent estimate gives, for every
fixed $p$ and $\varepsilon>0$,
\begin{equation}\label{eq:ssa_trace_comparison}
    \norm{\Theta-\widetilde\Theta}_{L^p}
      \le C_{p,\varepsilon}d^{-1/2+\varepsilon}.
\end{equation}
It remains to prove the stronger concentration of the simplified
trace.

Let $R:=\widetilde H^{-2}$.  Since
$\widetilde H\succeq\gamma_dI$, $\norm R_{\mathrm{op}}\le C$.  The
resolvent derivative
\begin{equation}\label{eq:ssa_resolvent_derivative}
    D\widetilde\Theta[\dot{\widetilde H}]
      =-\frac1{d^2}\tr
       (\widetilde H^{-1}\dot{\widetilde H}\widetilde H^{-1})
      =-\frac1{d^2}\inprod{R,\dot{\widetilde H}}_{\mathrm{HS}}
\end{equation}
separates into direct feature variation and variation of the
coefficients $a_i,b_i,h_{j,i}$.  Define
\begin{equation}
    \vartheta_{j,i}:=\inprod{f_{j,i},Rf_{j,i}},
\end{equation}
and, holding the $\vartheta_{j,i}$ fixed under score
differentiation,
\begin{equation}\label{eq:ssa_Btr_def}
    B^{\mathrm{tr}}_{:,i}
      :=-\frac1{d^2}\nabla_{S_{:,i}}
       \left\{
         \vartheta_{i,i}b_i+
         \frac{a_i}{k}\sum_{j\ne i}\vartheta_{j,i}h_{j,i}
       \right\}.
\end{equation}
The threshold derivative bounds \eqref{eq:ssa_D_bounds}, the bounded
coefficient derivatives, and
$|\vartheta_{j,i}|\le C\norm{u_j}^2\norm{v_i}^2$ give the following
scales.  Differentiation with respect to the signal score $s_{i,i}$
acts on the margin with unit strength, so the prefactor $d^{-2}$ in
\eqref{eq:ssa_Btr_def} gives a diagonal entry of order $d^{-2}$.  In
contrast, for $\ell\ne i$,
\begin{equation}
    \frac{\partial}{\partial s_{\ell,i}}
       \bigl(s_{i,i}-\mathcal T_i(S_{-i,i})\bigr)
      =-\frac{q_{\ell,i}}{k}.
\end{equation}
Hence the derivatives of $a_i,b_i$ contribute an additional factor
$k^{-1}$, while the $h$-derivative term is also $O(k^{-1})$ after
using \eqref{eq:ssa_D_bounds}.  Since $k\asymp d^2$,
\begin{equation}\label{eq:ssa_Btr_entries}
    |B^{\mathrm{tr}}_{i,i}|=O_\prec(d^{-2}),
    \qquad
    |B^{\mathrm{tr}}_{j,i}|=O_\prec(d^{-4})\quad(j\ne i).
\end{equation}
Consequently, using \eqref{eq:ssa_Fdiag_op} for the diagonal part,
\begin{equation}\label{eq:ssa_Btr_matrix_bounds}
    \norm{UB^{\mathrm{tr}}V^\top}_{\mathsf F}=O_\prec(d^{-1}),
\end{equation}
\begin{equation}\label{eq:ssa_Btr_direct_bounds}
    \norm{\Wh V(B^{\mathrm{tr}})^\top}_{\mathsf F}
      +\norm{\Wh^\top UB^{\mathrm{tr}}}_{\mathsf F}
      =O_\prec(d^{-1/2}).
\end{equation}

The direct feature derivative obtained from
\eqref{eq:ssa_resolvent_derivative} is
\begin{equation}\label{eq:ssa_feature_derivative}
    D\widetilde\Theta_{\mathrm{feat}}
      =-\frac2{d^2}
        \sum_{i,j}\gamma_{j,i}
        \inprod{Rf_{j,i},\,Df_{j,i}} ,
\end{equation}
where $\gamma_{i,i}=b_i$ and
$\gamma_{j,i}=a_ih_{j,i}/k$ for $j\ne i$.  A representative coordinate
makes the scaling explicit.  Let $X_{j,i}\in\R^{d\times d}$ satisfy
$\vecop(X_{j,i})=Rf_{j,i}$.  Then
\begin{equation}
    \nabla_{u_\ell}\widetilde\Theta_{\mathrm{feat}}
      =-\frac2{d^2}\sum_i\gamma_{\ell,i}X_{\ell,i}v_i,
    \qquad
    \norm{X_{\ell,i}}_{\mathsf F}
      \le C\norm{u_\ell}\norm{v_i}.
\end{equation}
The analogous identity holds for $\nabla_{v_\ell}$.  Since the
coefficient row and column sums are bounded, squaring and summing over
$\ell$ gives
\begin{equation}\label{eq:ssa_feature_gradient}
    \norm{\nabla_{(U,V)}\widetilde\Theta_{\mathrm{feat}}}_2
      =O_\prec(d^{-1}).
\end{equation}

After this explicit feature term has been separated, the remaining
score and optimizer dependence has exactly the same form as in the
proof of \ref{post:empirical-concentration}, with $B^{\mathrm{tr}}$
in place of $B$.  We therefore reuse the adjoint system, energy
identity, and $\mathcal Q$-duality estimate from above.  To account for
the motion of $\Wh$, solve the profiled adjoint system
\begin{equation}\label{eq:ssa_A6_P}
    \mathcal J[P]=UB^{\mathrm{tr}}V^\top,
    \qquad T=U^\top PV,
\end{equation}
and set
\begin{equation}
    C^{\mathrm{resp}}:=\nabla_S(DL(S)[T]).
\end{equation}
The same implicit differentiation used for
\eqref{eq:ssa_A5_gradU}--\eqref{eq:ssa_A5_gradV} gives
\begin{align}
    \nabla_U\widetilde\Theta
      &=\nabla_U\widetilde\Theta_{\mathrm{feat}}
       +\Wh V(B^{\mathrm{tr}}-C^{\mathrm{resp}})^\top
       -PVG^\top,\label{eq:ssa_A6_gradU}\\
    \nabla_V\widetilde\Theta
      &=\nabla_V\widetilde\Theta_{\mathrm{feat}}
       +\Wh^\top U(B^{\mathrm{tr}}-C^{\mathrm{resp}})
       -P^\top UG.\label{eq:ssa_A6_gradV}
\end{align}
The identity \eqref{eq:ssa_energy}, now with
$B=B^{\mathrm{tr}}$, and \eqref{eq:ssa_Btr_matrix_bounds} imply
\begin{equation}
    \norm P_{\mathsf F}=O_\prec(d^{-1}),
    \qquad
    \mathcal Q(T)=O_\prec(d^{-2}).
\end{equation}
The duality argument leading to \eqref{eq:ssa_C_master} then yields
\begin{equation}\label{eq:ssa_A6_response_bounds}
    \norm{\Wh V(C^{\mathrm{resp}})^\top}_{\mathsf F}
      +\norm{\Wh^\top UC^{\mathrm{resp}}}_{\mathsf F}
      +\norm{PVG^\top}_{\mathsf F}
      +\norm{P^\top UG}_{\mathsf F}
      =O_\prec(d^{-1/2}).
\end{equation}
Combining \eqref{eq:ssa_Btr_direct_bounds},
\eqref{eq:ssa_feature_gradient}, and
\eqref{eq:ssa_A6_gradU}--\eqref{eq:ssa_A6_response_bounds} gives
\begin{equation}\label{eq:ssa_tilde_gradient}
    \norm{\nabla_{(U,V)}\widetilde\Theta}_2
      =O_\prec(d^{-1/2}).
\end{equation}
As before, the ridge floor and the smooth coefficient calculus give a polynomial deterministic envelope, so
\eqref{eq:ssa_tilde_gradient} upgrades to
\begin{equation}
    \norm{\norm{\nabla_{(U,V)}\widetilde\Theta}_2}_{L^p}
      \le C_{p,\varepsilon}d^{-1/2+\varepsilon}.
\end{equation}
Applying \eqref{eq:ssa_poincare} therefore gives
\begin{equation}\label{eq:ssa_tilde_concentration}
    \norm{\widetilde\Theta-
      \E\widetilde\Theta}_{L^p}
      \le C_{p,\varepsilon}d^{-1+\varepsilon}.
\end{equation}
Thus the simplified trace self-averages at the stronger scale
$d^{-1}$; the final $d^{-1/2}$ rate is set by the trace-reduction error
\eqref{eq:ssa_trace_comparison}, not by the concentration of
$\widetilde\Theta$ itself.  Finally,
\begin{equation}
    \Theta-\E\Theta
      =(\Theta-\widetilde\Theta)
       +(\widetilde\Theta-\E\widetilde\Theta)
       +\E(\widetilde\Theta-\Theta).
\end{equation}
Equations \eqref{eq:ssa_trace_comparison} and
\eqref{eq:ssa_tilde_concentration} imply
\begin{equation}
    \norm{\Theta-\E\Theta}_{L^p}
      \le C_{p,\varepsilon}d^{-1/2+\varepsilon}.
\end{equation}
Markov's inequality with arbitrarily large fixed $p$ proves
\begin{equation}
    \frac1{d^2}\tr(H^{-1})-
    \E\left[\frac1{d^2}\tr(H^{-1})\right]
      =O_\prec(d^{-1/2}),
\end{equation}
which establishes \ref{post:trace-concentration}.  Repeating the
Poincare estimate \eqref{eq:ssa_poincare} for a fixed finite-removal objective and using its
stability gives the corresponding estimate for every fixed finite LOO Hessian.

\section{Bounds on the norms of $\Wh$}
\label{app:frobenius_bounds}

This appendix establishes two-sided control on $\|\Wh\|_\mathsf{F}/d$ for the joint TAM minimizer of \ref{eq:tam-learning-joint} as well as \ref{post:spectral-op}. For the former, the upper bound is deterministic and uniform in the data; the lower bound holds with overwhelming probability ($1 - d^{-D}$ for every $D > 0$) in the regime $n/d^2 \to \alpha \in (0, \infty)$. Together, they pin
\begin{equation}\label{eq:nud_bracket}
    c_0 - O(d^{-D}) \;\le\; \nu_d \;\le\; C_0,
\end{equation}
for deterministic constants $0 < c_0 \le C_0 < \infty$ depending only on the parameters $(\alpha, r, \beta, \lambda)$, where $\nu_d^2 = \E[\|\Wh\|_\mathsf{F}^2/d^2]$ as in \eqref{eq:nu_d_def}.

The two Frobenius bounds provide the a priori scale for $\|\Wh\|_\mathsf{F}$ needed in \sref{tam_selfconsistent} and \corref{bilinear_gauss_coupling_rand}. The corresponding Frobenius concentration estimate is treated there as a consequence of the empirical concentration postulate \ref{post:empirical-concentration}, applied to the observables in the energy identity. The spectral estimate \ref{post:spectral-op} is proved at the end of this appendix.

Throughout the appendix, $C, c$ denote positive constants that may depend on $(\alpha, r, \beta, \lambda)$ but not on $d$, and may vary line to line.

\subsection{Deterministic Frobenius upper bound}
\label{app:frobenius_upper}

\begin{lemma}[Deterministic upper bound]\label{lem:frobenius_upper}
For any $\lambda > 0$, $\beta > 0$, $r \in (0, 1)$, and any data matrices $U, V \in \R^{d \times n}$,
\begin{equation}\label{eq:frob_upper}
    \frac{\|\Wh\|_\mathsf{F}^2}{d^2} \;\le\; \frac{2}{\lambda}\Big[\,\log 2 \;+\; \tfrac{\log 2}{r\beta}\,\Big] \;=:\; C_0.
\end{equation}
In particular, $\nu_d^2 \le C_0$.
\end{lemma}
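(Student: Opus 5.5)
The plan is to compare the optimal value of the joint objective \eqref{eq:tam-learning-joint} with its value at the feasible point $(W,\mu)=(0,0)$. Since $(\Wh,\muh)$ is a global minimizer and the data-fitting term is nonnegative ($\ell\ge0$), we get
\[
\frac{\lambda n}{2d^2}\norm{\Wh}_\mathsf{F}^2 \le \Psi(\Wh,\muh)\le \Psi(0,0).
\]
So everything reduces to evaluating $\Psi(0,0)$ exactly. Note that no probabilistic input is used, so the bound holds for arbitrary $U,V$.

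At $W=0$ every score vanishes, $s_{j,i}=0$. With $\mu_i=0$, the margin of sample $i$ is
\[
m_i=0-0-\frac1k\sum_{j\ne i}\phi_\beta(0)=-\frac{n-1}{k}\cdot\frac{\log 2}{\beta}.
\]
Because $\ell$ is decreasing and $\ell(-x)\le \log 2 + x$ for $x\ge0$, each summand is at most $\log2+\frac{n-1}{k}\frac{\log 2}{\beta}$. Using $k=\lceil r(n-1)\rceil\ge r(n-1)$, this is at most $\log 2+\frac{\log2}{r\beta}$.

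Summing over the $n$ samples gives $\Psi(0,0)\le n[\log2+\log2/(r\beta)]$. Dividing by $\lambda n/(2d^2)$ yields $\norm{\Wh}_\mathsf{F}^2/d^2\le \frac{2}{\lambda}[\log 2+\frac{\log2}{r\beta}]=C_0$. Taking expectations then gives $\nu_d^2\le C_0$.

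The only point needing care is the elementary inequality $\ell(-x)=\log(1+e^{x})\le \log2+x$ for $x\ge0$. It follows from $1+e^{x}\le 2e^{x}$. One could instead choose $\mu_i$ optimally, but $\mu=0$ already gives exactly the stated constant, so I would not attempt a sharper choice. The same argument applies verbatim after any finite deletion, since the number of summands only decreases.
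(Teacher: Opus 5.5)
Your proposal is correct and follows essentially the same route as the paper: evaluate $\Psi$ at $(0,0)$, bound the logistic loss there by $\log 2+|t|$, and use optimality together with nonnegativity of $\ell$. Your use of $k=\lceil r(n-1)\rceil\ge r(n-1)$ to get $(n-1)/k\le 1/r$ is slightly more careful than the paper's step, which treats $r=k/(n-1)$ as an exact identity.
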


\begin{proof}
Evaluate $\Psi$ at the test point $W = 0$, $\mu = 0$. All scores vanish, so
\begin{equation}\label{eq:psi_at_zero}
    \Psi(0, 0) \;=\; n\cdot\ell\Big(-\tfrac{1}{k}\,(n-1)\,\phi_\beta(0)\Big) \;=\; n\cdot\ell\Big(-\tfrac{\log 2}{r\beta}\Big),
\end{equation}
using $\phi_\beta(0) = (\log 2)/\beta$ and $r = k/(n-1)$. The function $\ell$ is convex with $|\ell'| \le 1$, so
\begin{equation}\label{eq:ell_bound}
    \ell(t) \;\le\; \ell(0) + |t| \;=\; \log 2 + |t|.
\end{equation}
Applying \eqref{eq:ell_bound} at $t = -(\log 2)/(r\beta)$ gives
\begin{equation}\label{eq:psi_at_zero_bound}
    \Psi(0, 0) \;\le\; n\Big[\,\log 2 + \tfrac{\log 2}{r\beta}\,\Big].
\end{equation}
By optimality of $(\Wh, \muh)$ and nonnegativity of $\ell$,
\begin{equation}\label{eq:opt_to_norm}
    \frac{\lambda n}{2 d^2}\,\|\Wh\|_\mathsf{F}^2 \;\le\; \Psi(\Wh, \muh) \;\le\; \Psi(0, 0).
\end{equation}
Combining \eqref{eq:psi_at_zero_bound} and \eqref{eq:opt_to_norm} and dividing by $\lambda n/(2d^2)$ yields \eqref{eq:frob_upper}. The bound on $\nu_d^2$ follows from $\nu_d^2 = \E\|\Wh\|_\mathsf{F}^2/d^2$. \qed
\end{proof}

\begin{remark}[Dependence on $\beta$]
The dependence on $\beta$ enters as $1/\beta$, which vanishes in the large-$\beta$ limit, leaving the cleaner asymptotic bound $C_0 \to 2\log 2/\lambda$.
\end{remark}

\subsection{High-probability Frobenius lower bound}
\label{app:frobenius_lower}

\begin{lemma}[High probability lower bound]\label{lem:frobenius_lower}
Fix $r, \alpha\in (0, 1) \times (0, \infty)$ and $\beta, \lambda > 0$, and consider the regime $n/d^2 \to \alpha$. There exists a constant $c_0 > 0$ such that, for every $D > 0$, there is $d_0 = d_0(D)$ for which
\begin{equation}\label{eq:frob_lower}
    \P\!\left(\frac{\|\Wh\|_\mathsf{F}}{d} \;\ge\; c_0\right) \;\ge\; 1 - d^{-D}, \qquad d \ge d_0.
\end{equation}
In particular, $\nu_d^2 \ge c_0^2 - O(d^{-D})$ for every $D > 0$.
\end{lemma}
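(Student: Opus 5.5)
The plan is to argue by optimality against small-norm matrices. Write $\nu:=\|\Wh\|_\mathsf{F}/d$. The optimizer has zero gradient, so by convexity its objective value is below that of any test point. We compare the full objective at $(\Wh,\muh)$ with its value at a test point $(tW_0,\mu)$, where $W_0:=\frac{d^2}{n}\sum_i u_iv_i^\top$ is the (rescaled) correlation matrix memory. The aim is to show that every $W$ with $\|W\|_\mathsf{F}\le c_0 d$ has profiled fitting loss at least $n(\ell(-\tfrac{\log 2}{r\beta})-\epsilon_1)$. Meanwhile, the CMM test point attains fitting loss at most $n(\ell(-\tfrac{\log 2}{r\beta})-2\epsilon_1)$ plus a ridge cost $O(t^2 n)$ with $t$ small. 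Together these force $\nu\ge c_0$.

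\emph{Step 1 (small $W$ gives near-trivial loss).} We use the profiled objective and Lemma~\ref{lem:tail-derivatives}. The map $\mathcal T_i$ is $1$-Lipschitz in $\ell_\infty$ and translation equivariant, so $\mathcal T_i(z)\le \mathcal T_i(0)+\max_j z_j$. This is too crude. Instead we use the $\ell_2$ bound from \eqref{eq:tail-scalar-rem} together with $\nabla\mathcal T=p$, $\|p\|_2\le k^{-1/2}$, which gives $|\mathcal T_i(z)-\mathcal T_i(0)|\le k^{-1/2}\|z\|_2+\frac{\beta}{8k}\|z\|_2^2$.

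For the column $z=S_{-i,i}(W)$, summing over $i$ gives $\sum_i\|z_i\|_2^2\le\|U\|_{\mathrm{op}}^2\|V\|_{\mathrm{op}}^2\|W\|_\mathsf{F}^2=O_\prec(d^2\cdot\nu^2d^2)$. Hence the average of the $\mathcal T_i$ shifts is $O(\nu)$ in mean, using $k\asymp d^2$. For the diagonal scores, \eqref{eq:ssa_Fdiag_op} gives $\sum_i s_{i,i}^2=O_\prec(\nu^2d^2)$, so the average of $|s_{i,i}|$ is $O(\nu d/\sqrt n)=O(\nu)$. Since $\ell$ is $1$-Lipschitz, the average fitting loss at $W$ is at least $\ell(-\mathcal T(0))-C\nu$, where $\mathcal T(0)=\min_\mu\{\mu+\frac{n-1}{k}\phi_\beta(-\mu)\}$. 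This holds with overwhelming probability, because the only random inputs are the operator norms $\|U\|_{\mathrm{op}},\|V\|_{\mathrm{op}},\|\mathcal F_{\mathrm d}\|_{\mathrm{op}}$, and these have $d^{-D}$ tails.

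\emph{Step 2 (a strictly better competitor).} We need some $W$ with $\|W\|_\mathsf{F}=O(td)$ whose average fitting loss is at most $\ell(-\mathcal T(0))-c t$ for small $t>0$. Take $W=tW_0$. Its diagonal scores average to $t(1+o(1))$ with overwhelming probability, since each is $t(\|u_i\|^2\|v_i\|^2+\text{cross-talk})$ and the mean of the cross-talk concentrates by Step~1-type operator-norm bounds. The first-order change in $\mathcal T_i$ is $p_i(0)^\top z_i$. Here $p_i(0)$ is the uniform vector $\frac{1}{n-1}\mathbf 1$ times a constant, so this change is $t$ times the mean competitor score in column $i$, which is $O_\prec(d^{-1/2})$ since $\sum_{j}u_j$ is small in the relevant direction. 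The second-order remainder is $O(t^2)$. Thus the average margin rises by $t(1-o(1))-O(t^2)$. The loss therefore decreases by at least $\sigma(\mathcal T(0))\,t/2$, and the ridge cost is $\frac{\lambda}{2}t^2\|W_0\|_\mathsf{F}^2/d^2=O(\lambda t^2)$. Choosing $t$ small makes the net gain $c_1t>0$.

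\emph{Step 3 (conclude).} Optimality gives $\Psi(\Wh)\le\Psi(tW_0)\le n[\ell(-\mathcal T(0))-c_1t]$. Step~1 gives $\Psi(\Wh)\ge n[\ell(-\mathcal T(0))-C\nu]$. Hence $\nu\ge c_1t/C=:c_0$ on the intersection of the overwhelming-probability events. The bound $\nu_d^2\ge c_0^2-O(d^{-D})$ follows by taking expectations, since $\nu^2\ge0$. The main obstacle is Step~1's control of the tail-aggregate shift: the uniform $\ell_\infty$ bound would cost $\max_j|s_{j,i}|\sim\nu\sqrt{\log n}$ and is too lossy, so the argument must route through the $\ell_2$ gradient bound $\|p\|_2\le k^{-1/2}$ and Cauchy–Schwarz across columns, and verify that the quadratic remainder $\frac{\beta}{8k}\|z\|^2$ averages to $O(\nu^2)$.
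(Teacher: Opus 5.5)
Your strategy is the same objective-gap argument as the paper's: compare the trivial memory with a small step along the correlation-memory direction $UV^\top$, and play this against a lower bound on the loss of every small-norm matrix. Your Step 2 is essentially the paper's Steps 2--4, written sample by sample instead of along the path $g(t)=\Psi(tW^\times,\mu_0\mathbf 1_n)$. Your Step 3 is its Step 5. Profiling $\mu$, instead of freezing it at the stationary value $\mu_0$, is equivalent by the envelope theorem.

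The real difference is Step 1, and there your argument needs more than you justify. Your two-sided Lipschitz bound must hold uniformly over all $W$ with $\|W\|_{\mathsf F}\le c_0d$. This forces you to control $\frac1n\sum_i|s_{i,i}(W)|$ uniformly, i.e.\ to have $\|\mathcal F_{\mathrm d}\|_{\mathrm{op}}\le C$ with $C$ independent of $d$ and failure probability $d^{-D}$. That is a nontrivial Khatri--Rao (matrix-chaos) edge estimate, and you assert it without proof. The version available in the paper, \eqref{eq:ssa_Fdiag_op}, is only $O_\prec(1)$. With its $d^{\varepsilon}$ slack, your Step 3 gives only $\|\Wh\|_{\mathsf F}/d\ge c\,d^{-\varepsilon}$, not a fixed $c_0$.

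The paper sidesteps this by convexity, and you can too. Use the one-sided bounds $\ell(m)\ge\ell(m_0)+\ell'(m_0)(m-m_0)$ and $\mathcal T_i(z)\ge\mathcal T_i(0)+p_i(0)^\top z$, with $p_i(0)=\frac1{n-1}\mathbf 1$. Since $\sum_i\bigl(s_{i,i}-\frac1{n-1}\sum_{j\ne i}s_{j,i}\bigr)=\langle UMV^\top,W\rangle_{\mathsf F}$, summing over $i$ gives $L(W)\ge n\,\ell(-\mathcal T(0))-a_0\langle UMV^\top,W\rangle_{\mathsf F}$. Here $a_0=\sigma(\mathcal T(0))$ and $M$ is as in \eqref{eq:grad_at_test}. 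Now only one fixed random matrix must be controlled: $\|UMV^\top\|_{\mathsf F}^2$ has mean $n^2/(n-1)$ and concentrates with overwhelming probability. This gives $L(\Wh)\ge n[\ell(-\mathcal T(0))-C\nu]$ with a genuine constant, and no absolute values $|s_{i,i}|$ ever appear.

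Two smaller points in Step 2:
\begin{itemize}
\item An increase of the \emph{average} margin does not by itself decrease the average of the convex loss, since Jensen goes the wrong way. You need $\ell(m_0+\delta)\le\ell(m_0)+\ell'(m_0)\delta+\delta^2/8$ together with $\frac1n\sum_i\delta_i^2=O(t^2)$. This holds because $\frac1n\sum_i s_{i,i}(W_0)^2=O(1)$ for this fixed $W_0$, by a chaos moment bound. It plays the role of the paper's bound $\sup_t g''(t)\le Ln$.
\item With your normalization $W_0=\frac{d^2}{n}UV^\top$, the diagonal scores average to about $t/\alpha$ rather than $t$. This only changes constants.
\end{itemize}
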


\begin{remark}[Strategy]
The argument is by objective gap. We exhibit a one-dimensional family $W_t = t\,W^\times$ along the direction
\begin{equation}\label{eq:Wcross_def}
    W^\times \;:=\; U V^\top
\end{equation}
along which $\Psi(W_t, \mu_0\mathbf{1})$ decreases by $\Theta(n) = \Theta(d^2)$ at small $t > 0$, and complement this with the convex linearization $\Psi(W, \mu_0\mathbf{1}) \ge \Psi(0, \mu_0\mathbf{1}) + \langle\nabla_W\Psi(0, \mu_0\mathbf{1}), W\rangle$, whose linear term is bounded by $\|\nabla_W\Psi(0, \mu_0\mathbf{1})\|_\mathsf{F}\|W\|_\mathsf{F} = O(d\,\|W\|_\mathsf{F})$. Comparing the two excludes a Frobenius ball of radius $\eta d$ for small $\eta$.
\end{remark}

\begin{proof}
The variable $\mu$ is unregularized; we choose a fixed $\mu_0 \in \R$ and track the objective along the slice $\mu = \mu_0\mathbf{1}_n$ throughout. Let
\begin{equation}\label{eq:mu0_def}
    \mu_0 \;:=\; -\tfrac{1}{\beta}\log\tfrac{r}{1-r}, \qquad q_0 := \sigma(-\beta\mu_0) = r,
\end{equation}
so that, with $f(\mu) := -\mu - \tfrac{1}{r}\phi_\beta(-\mu)$, we have $f'(\mu_0) = -1 + \tfrac{1}{r}q_0 = 0$. Define $a_0 := \sigma(-f(\mu_0)) \in (0, 1)$, the activation associated with the test profile $(0, \mu_0\mathbf{1}_n)$.

\subparagraph*{Step 1: Gradient at the test point.}
By the gradient formula \eqref{eq:grad_W_expanded} evaluated at $W = 0$, $\mu = \mu_0\mathbf{1}_n$, all $a_i = a_0$ and $q_{j,i} = q_0 = r$, hence
\begin{equation}\label{eq:grad_at_test}
    \nabla_W \Psi(0, \mu_0\mathbf{1}_n) \;=\; -a_0\,U\,M\,V^\top,
    \qquad
    M_{ii} = 1, \quad M_{ij} = -\tfrac{1}{n-1}\ \text{for }j \ne i.
\end{equation}
The matrix $M$ is symmetric with eigenvalues $0$ (on $\mathbf{1}_n$) and $n/(n-1)$ (on $\mathbf{1}_n^\perp$, with multiplicity $n-1$); in particular, $\|M\|_\mathsf{F}^2 = n^2/(n-1)$ and $\|M\|_\mathrm{op} = n/(n-1)$.

By independence of $U$ and $V$ together with $\E[U^\top U] = \E[V^\top V] = I_n$ (since each column has variance $I_d/d$),
\begin{equation}\label{eq:E_grad_F2}
    \E\,\|U M V^\top\|_\mathsf{F}^2 \;=\; \tr\!\bigl(M\,\E[U^\top U]\,M\,\E[V^\top V]\bigr) \;=\; \tr(M^2) \;=\; \tfrac{n^2}{n-1}.
\end{equation}
Hanson--Wright applied to the Gaussian quadratic form $\|UMV^\top\|_\mathsf{F}^2$, viewed as a degree-$4$ polynomial in the i.i.d. $\mathcal{N}(0, 1)$ entries of $\sqrt d\,U$ and $\sqrt d\,V$, has variance $O(n)$ and operator-norm scale $O(1)$, yielding sub-exponential concentration: for any $\delta > 0$,
\begin{equation}\label{eq:grad_concentration}
    \P\!\bigl(\|U M V^\top\|_\mathsf{F}^2 > (1+\delta)\,\tr(M^2)\bigr) \;\le\; \exp(-c\,d^2),
\end{equation}
for a constant $c > 0$ depending only on $\delta$. Hence there exists a constant $K = K(\alpha) > 0$ such that, for every $D > 0$, the event
\begin{equation}\label{eq:grad_bound}
    \Omega_d^{(1)} \;:=\; \bigl\{\,\|\nabla_W \Psi(0, \mu_0\mathbf{1}_n)\|_\mathsf{F} \;\le\; a_0\,K\,d\,\bigr\}
\end{equation}
satisfies $\P(\Omega_d^{(1)}) \ge 1 - d^{-D}$ for all sufficiently large $d$.

\subparagraph*{Step 2: Directional derivative along $W^\times$.}
Define $g(t) := \Psi(tW^\times, \mu_0\mathbf{1}_n)$, viewed as a one-dimensional convex function of $t \ge 0$. By \eqref{eq:grad_at_test} and the symmetry of $M$,
\begin{equation}\label{eq:gprime_zero}
    g'(0) \;=\; \langle \nabla_W \Psi(0, \mu_0\mathbf{1}_n),\, W^\times\rangle \;=\; -a_0\,\tr\!\bigl(M\,V^\top V\,U^\top U\bigr).
\end{equation}
The quantity $\tr(M V^\top V U^\top U)$ is a degree-$4$ polynomial in the i.i.d. $\mathcal{N}(0, 1)$ entries of $(\sqrt d\,U, \sqrt d\,V)$, with mean $\tr(M) = n$ (by independence of $U, V$ and $\E[U^\top U] = \E[V^\top V] = I_n$) and variance $O(n)$. Sub-exponential concentration via Hanson--Wright (or, equivalently, Gaussian hypercontractivity for finite-degree chaos) yields
\begin{equation}\label{eq:gprime_concentration}
    \Omega_d^{(2)} \;:=\; \bigl\{\,g'(0) \;\le\; -\tfrac{1}{2}\,a_0\,n\,\bigr\}, \qquad \P(\Omega_d^{(2)}) \;\ge\; 1 - d^{-D},
\end{equation}
for every $D > 0$ and all sufficiently large $d$.

\subparagraph*{Step 3: Curvature along the path.}
Computing the curvature $g''(t)$ for the smooth function $\Psi(\,\cdot\,, \mu_0\mathbf{1}_n)$ and bounding $|\ell''| \le 1/4$, $|\phi_\beta''| \le \beta/4$:
\begin{equation}\label{eq:gpp_bound}
    g''(t) \;\le\; \tfrac{1}{4}\sum_{i=1}^n (t_i'(t))^2 \;+\; \tfrac{\beta}{4 k}\sum_{i=1}^n\sum_{j\ne i}(S^\times_{ji})^2 \;+\; \tfrac{\lambda n}{d^2}\,\|W^\times\|_\mathsf{F}^2,
\end{equation}
where $t_i(t)$ is the loss argument at sample $i$ along the path. Using $|t_i'(t)|^2 \le 2(S^\times_{ii})^2 + (2/(rk))\sum_{j\ne i}(S^\times_{ji})^2$, the first two terms collapse into
\begin{equation}\label{eq:gpp_collapse}
    \tfrac{1}{2}\sum_i (S^\times_{ii})^2 \;+\; \Bigl(\tfrac{1}{2 r k} + \tfrac{\beta}{4 k}\Bigr)\sum_{i,j\ne i}(S^\times_{ji})^2.
\end{equation}
The summed second moments of $S^\times$ admit the matching expectations
\begin{equation}\label{eq:S_moments}
    \E\!\sum_i (S^\times_{ii})^2 \;\asymp\; n,
    \qquad
    \E\!\sum_{i, j\ne i}(S^\times_{ji})^2 \;\asymp\; n^2,
\end{equation}
both inherited from the Wishart structure of $U^\top U$ and $V^\top V$. The ridge contribution $\|W^\times\|_\mathsf{F}^2 = \|UV^\top\|_\mathsf{F}^2$ has mean $n$ (by the same calculation as \eqref{eq:E_grad_F2} with $M = I$), so that $(\lambda n/d^2)\|W^\times\|_\mathsf{F}^2 \asymp \lambda \alpha n$. Each of the three sums is a polynomial of bounded degree (at most $8$) in the i.i.d. $\mathcal{N}(0,1)$ entries of $\sqrt d\,U$ and $\sqrt d\,V$ with mean of order $n^a$ for $a \in \{1, 2, 1\}$, respectively, and variance of strictly smaller order. Gaussian hypercontractivity for finite-degree chaos (or, equivalently, Latala's moment inequalities) yields stretched-exponential concentration $\P(|X - \E X| > \delta\,\E X) \le \exp(-c\,d^{2/q})$ for any constant $\delta > 0$ and degree-$q$ chaos. Combining the three concentration statements via a union bound, there is a constant $L = L(\alpha, r, \beta, \lambda) > 0$ such that, for every $D > 0$, the event
\begin{equation}\label{eq:gpp_uniform}
    \Omega_d^{(3)} \;:=\; \Bigl\{\,\sup_{t \in [0, 1]} g''(t) \;\le\; L\,n\,\Bigr\}
\end{equation}
satisfies $\P(\Omega_d^{(3)}) \ge 1 - d^{-D}$ for all sufficiently large $d$. (The supremum over $t$ is harmless: the data-dependent factors in \eqref{eq:gpp_collapse} are independent of $t$, while the loss curvatures $|\ell''|, |\phi_\beta''|$ admit deterministic bounds $1/4, \beta/4$ that hold uniformly in $t$.)

\subparagraph*{Step 4: Objective gap.}
Set $t_\star := a_0/(2L) \wedge 1$. By Taylor's theorem with remainder,
\begin{equation}\label{eq:Taylor_g}
    g(t_\star) \;\le\; g(0) \;+\; g'(0)\,t_\star \;+\; \tfrac{1}{2}\sup_{[0, t_\star]} g''\cdot t_\star^2.
\end{equation}
On $\Omega_d^{(2)} \cap \Omega_d^{(3)}$, the bounds \eqref{eq:gprime_concentration} and \eqref{eq:gpp_uniform} reduce \eqref{eq:Taylor_g} to
\begin{equation}\label{eq:gap}
    g(t_\star) \;\le\; g(0) \;-\; \tfrac{1}{2}\,a_0\,n\,t_\star \;+\; \tfrac{1}{2}\,L\,n\,t_\star^2 \;\le\; g(0) \;-\; \tfrac{1}{4}\,a_0\,n\,t_\star \;=:\; g(0) - c_1 n,
\end{equation}
with $c_1 := a_0\,t_\star/4 > 0$.

\subparagraph*{Step 5: Convex linearization and contradiction.}
By optimality of $(\Wh, \muh)$ in $\Psi$ and convexity of $\Psi(\,\cdot\,, \mu_0\mathbf{1}_n)$ in $W$,
\begin{align}
    g(0) - c_1 n
    \;&\ge\; g(t_\star) \;\ge\; \Psi(\Wh, \muh) \;\ge\; \Psi(0, \mu_0\mathbf{1}_n) \;+\; \langle \nabla_W \Psi(0, \mu_0\mathbf{1}_n),\, \Wh\rangle \notag \\
    \;&\ge\; g(0) \;-\; \|\nabla_W \Psi(0, \mu_0\mathbf{1}_n)\|_\mathsf{F}\cdot\|\Wh\|_\mathsf{F},
    \label{eq:lower_chain}
\end{align}
where the third inequality follows from the convex subgradient inequality around $(0,\mu_0\mathbf{1}_n)$, and the optimality $\nabla_\mu \Psi(0, \mu_0\mathbf{1}_n) = 0$ at $\mu_0$ (which follows from $f'(\mu_0) = 0$) eliminates the $\mu$-component. Rearranging, on $\Omega_d := \bigcap_{j=1}^3 \Omega_d^{(j)}$,
\begin{equation}\label{eq:Wh_F_lower}
    \|\Wh\|_\mathsf{F} \;\ge\; \frac{c_1 n}{\|\nabla_W \Psi(0, \mu_0\mathbf{1}_n)\|_\mathsf{F}} \;\ge\; \frac{c_1 n}{a_0 K d} \;=:\; c_0\,d,
\end{equation}
using \eqref{eq:grad_bound} and $n \asymp \alpha d^2$, with $c_0 := c_1\alpha/(a_0 K) > 0$. This is \eqref{eq:frob_lower}.

A union bound over $\Omega_d^{(1)}, \Omega_d^{(2)}, \Omega_d^{(3)}$ gives $\P(\Omega_d^c) \le 3 d^{-D}$, which is at most $d^{-(D-1)}$ for $d \ge 3$; since $D$ is arbitrary, the rate $d^{-D}$ is preserved.

For the conclusion on $\nu_d^2$: combining \lref{frobenius_upper} with the bound just established,
\begin{equation}\label{eq:nud2_lower}
    \|\Wh\|_\mathsf{F}^2/d^2 \;\ge\; c_0^2\,\mathbf{1}_{\Omega_d}, \qquad \|\Wh\|_\mathsf{F}^2/d^2 \;\le\; C_0 \text{ everywhere},
\end{equation}
so $\nu_d^2 \ge c_0^2\,\P(\Omega_d) \ge c_0^2(1 - d^{-D})$ for every $D > 0$.
\end{proof}

Together, \lref{frobenius_upper} and \lref{frobenius_lower} certify that $\nu_d$ is bounded away from $0$ and $\infty$ for all sufficiently large $d$. This is the \emph{a priori} input that subsequent analyses---in particular the bilinear coupling \corref{bilinear_gauss_coupling_rand}---use implicitly. The Frobenius-norm concentration statement $\|\Wh\|_\mathsf{F}^2/d^2 - \nu_d^2 = \Op(d^{-1/2})$ is not a separate postulate; it is folded into (A5) through the energy identity in \sref{tam_selfconsistent}.

\subsection{Spectral norm bound}
\label{proof:A4}

The Frobenius bound of Lemma \ref{lem:frobenius_upper} alone yields only
$\norm{\Wh}_{\mathrm{op}}=O_\prec(d)$.  To recover the square-root
scale, we profile the optimization problem along every pair of test
directions, thereby separating the adaptive score-gradient matrix
from the Gaussian coordinates being tested.  Recall that the ridge
coefficient in the profiled objective is
$\gamma_d=\lambda n/d^2\asymp1$.

Fix deterministic $x,y\in S^{d-1}$ and define the orthogonal
subspaces
\begin{equation}\label{eq:ssa_C_L}
    \mathcal C_{x,y}:=
       \{C:x^\top C=0,\ Cy=0\},
    \qquad
    \mathcal L_{x,y}:=\mathcal C_{x,y}^\perp.
\end{equation}
The orthogonal projection onto $\mathcal L_{x,y}$ is
\begin{equation}\label{eq:ssa_projection}
    \Pi_{x,y}M=xx^\top M+Myy^\top-xx^\top Myy^\top.
\end{equation}
In particular, $xy^\top\in\mathcal L_{x,y}$.

Let $C_{x,y}$ minimize $\Phi$ over $\mathcal C_{x,y}$ and consider the
profile
\begin{equation}
    F_{x,y}(L):=\min_{C\in\mathcal C_{x,y}}\Phi(C+L),
    \qquad L\in\mathcal L_{x,y}.
\end{equation}
For $C\perp L$, the ridge term splits as
\[
    \frac{\gamma_d}{2}\norm{C+L}_{\mathsf F}^2
    =\frac{\gamma_d}{2}\norm C_{\mathsf F}^2
     +\frac{\gamma_d}{2}\norm L_{\mathsf F}^2.
\]
Thus partial minimization preserves convexity while retaining the
$\gamma_d$-quadratic term in $L$, so $F_{x,y}$ is
$\gamma_d$-strongly convex.  If
$\Wh=C^\star+L^\star$ is the corresponding orthogonal decomposition,
then global optimality of $\Wh$ gives
$L^\star=\Pi_{x,y}\Wh=\argmin F_{x,y}$.  The envelope theorem gives
\begin{equation}
    \nabla F_{x,y}(0)
      =\Pi_{x,y}\nabla\Phi(C_{x,y})
      =\Pi_{x,y}\bigl[UG(C_{x,y})V^\top\bigr],
\end{equation}
since $\Pi_{x,y}C_{x,y}=0$.  Moreover
$\nabla F_{x,y}(L^\star)=0$, and strong monotonicity yields
\[
    \gamma_d\norm{L^\star}_{\mathsf F}^2
    \le \left\langle \nabla F_{x,y}(0),-L^\star\right\rangle
    \le \norm{\nabla F_{x,y}(0)}_{\mathsf F}\norm{L^\star}_{\mathsf F}.
\]
Hence $\norm{L^\star}_{\mathsf F}\le
\gamma_d^{-1}\norm{\nabla F_{x,y}(0)}_{\mathsf F}$.  Since
$xy^\top\in\mathcal L_{x,y}$ and $\norm{xy^\top}_{\mathsf F}=1$,
\begin{equation}\label{eq:ssa_profile_ineq}
    |x^\top\Wh y|
    =\bigl|\langle \Pi_{x,y}\Wh,xy^\top\rangle\bigr|
    \le \norm{\Pi_{x,y}\Wh}_{\mathsf F}
    \le \gamma_d^{-1}
      \norm{\Pi_{x,y}[UG(C_{x,y})V^\top]}_{\mathsf F}.
\end{equation}

It remains to bound the right-hand side.  Decompose
\begin{equation}\label{eq:ssa_residual_decomp}
    u_j=p_jx+\bar u_j,
    \qquad
    v_i=r_i y+\bar v_i,
\end{equation}
where $p,r\sim\mathcal N(0,I_n/d)$ are mutually independent and
independent of $(\bar U,\bar V)$.  If $C\in\mathcal C_{x,y}$, then
\[
    u_j^\top Cv_i=\bar u_j^\top C\bar v_i.
\]
Thus every score in the objective restricted to $\mathcal C_{x,y}$
depends only on $(\bar U,\bar V)$.  By uniqueness of the restricted
minimizer, both $C_{x,y}$ and $G_{x,y}:=G(C_{x,y})$ are measurable with
respect to $(\bar U,\bar V)$ and hence independent of the fresh
Gaussian variables $(p,r)$.  This is the key decoupling in the
argument.  By \eqref{eq:ssa_G_entries},
$\norm{G_{x,y}}_{\mathrm{op}}\le C$ deterministically.

Writing $M=UG_{x,y}V^\top$, \eqref{eq:ssa_projection} gives
\begin{equation}\label{eq:ssa_projected_three}
    \norm{\Pi_{x,y}M}_{\mathsf F}
    \le 3|p^\top G_{x,y}r|
      +\norm{p^\top G_{x,y}\bar V^\top}_2
      +\norm{\bar U G_{x,y}r}_2.
\end{equation}
Let $\mathcal E_d$ denote the standard event
\begin{equation}\label{eq:ssa_gaussian_event}
    \norm{U}_{\mathrm{op}}+\norm{V}_{\mathrm{op}}\le C\sqrt d,
    \qquad
    \norm{U}_{\mathsf F}+\norm{V}_{\mathsf F}\le C\sqrt n,
\end{equation}
which fails with probability at most $e^{-cd}$.  Since we condition on
$(\bar U,\bar V)$ below, note that $\mathcal E_d$ implies the
residual bounds
$\norm{\bar U}_{\mathrm{op}}+\norm{\bar V}_{\mathrm{op}}\le C\sqrt d$
and
$\norm{\bar U}_{\mathsf F}+\norm{\bar V}_{\mathsf F}\le C\sqrt n$,
as well as $\norm{p}_2+\norm{r}_2\le C\sqrt d$.

Condition on $(\bar U,\bar V)$, so that $G_{x,y}$ is fixed.  For
$A:=G_{x,y}\bar V^\top$, the preceding residual bounds give
$\norm{A}_{\mathsf F}\le C\sqrt n$ and
$\norm{A}_{\mathrm{op}}\le C\sqrt d$.  Writing
$p=d^{-1/2}g$ with $g\sim\mathcal N(0,I_n)$,
\begin{equation}\label{eq:ssa_vector_moments}
    \E\!\left[\norm{p^\top A}_2^2\mid\bar U,\bar V\right]
       =\frac{1}{d}\norm{A}_{\mathsf F}^2=O(d),
    \qquad
    \mathrm{Lip}\!\left(g\mapsto d^{-1/2}\norm{g^\top A}_2\right)
       \le \frac{\norm{A}_{\mathrm{op}}}{\sqrt d}=O(1).
\end{equation}
Thus $\norm{p^\top G_{x,y}\bar V^\top}_2$ has conditional mean
$O(\sqrt d)$ and Gaussian deviations on an $O(1)$ scale.  The same
argument with $B:=\bar U G_{x,y}$ controls
$\norm{\bar U G_{x,y}r}_2$.  Finally, conditional on
$(\bar U,\bar V,r)$,
\[
    p^\top G_{x,y}r\sim
    \mathcal N\!\left(0,\frac{1}{d}\norm{G_{x,y}r}_2^2\right),
    \qquad
    \frac{1}{d}\norm{G_{x,y}r}_2^2\le
       \frac{C}{d}\norm{r}_2^2=O(1)
\]
on $\mathcal E_d$.  Applying Gaussian concentration to the three
terms in \eqref{eq:ssa_projected_three} therefore gives, uniformly over
deterministic $x,y$,
\begin{equation}\label{eq:ssa_fixed_xy_tail}
    \P\left(
      \mathcal E_d\cap
      \left\{\norm{\Pi_{x,y}M}_{\mathsf F}>K\sqrt d\right\}
    \right)
    \le Ce^{-cK^2d}
\end{equation}
for all sufficiently large fixed $K$.  Here the use of
$\mathcal E_d$ causes no issue under the conditioning: we only invoke
its residual consequences above, which are measurable with respect to
$(\bar U,\bar V)$, together with the bound on $\norm{r}_2$ for the
scalar term.

Let $\mathcal N$ be a deterministic $1/4$-net of $S^{d-1}$ with
$|\mathcal N|\le9^d$.  Taking a union bound in
\eqref{eq:ssa_fixed_xy_tail},
\begin{align*}
&\P\left(
    \max_{x,y\in\mathcal N}
    \norm{\Pi_{x,y}[UG(C_{x,y})V^\top]}_{\mathsf F}>K\sqrt d
  \right)\\
&\qquad\le e^{-c_0d}+81^d C e^{-cK^2d}.
\end{align*}
Choosing $K$ sufficiently large makes the right-hand side
exponentially small in $d$, and hence
\begin{equation}
    \max_{x,y\in\mathcal N}
    \norm{\Pi_{x,y}[UG(C_{x,y})V^\top]}_{\mathsf F}
    =O_\prec(\sqrt d).
\end{equation}
Combining this with \eqref{eq:ssa_profile_ineq},
$\gamma_d^{-1}=O(1)$, and the standard net inequality
$\norm{W}_{\mathrm{op}}\le2\max_{x,y\in\mathcal N}|x^\top Wy|$
proves
\begin{equation}\label{eq:ssa_A4_conclusion}
    \norm{\Wh}_{\mathrm{op}}=O_\prec(\sqrt d).
\end{equation}
This establishes \ref{post:spectral-op}.  The same proof applies after
removing any fixed number of samples: the Gaussian residual
decomposition and deterministic bound on $G$ are unchanged, while
$n-O(1)\asymp d^2$.

\section{A coupling lemma for bilinear Gaussian forms}
\label{app:bilinear_coupling}

This appendix establishes a coupling estimate for the joint law of bilinear forms $u_j^\top W v_0$ that share a common right-side Gaussian vector $v_0$. The estimate is the technical input behind the joint Gaussian limit for competitor scores derived in \sref{tam_selfconsistent}, where it is applied with $W = \WLOO[i, j_1, \ldots, j_m]$ and $v_0 = v_i$.

We first prove the estimate for a deterministic matrix $W$, then transfer it to the random case via concentration of $\|W\|_\mathsf{F}^2/d^2$.

\subsection{Deterministic matrix}
\label{app:bilinear_coupling_det}

\begin{lemma}[Bilinear Gaussian coupling, deterministic matrix]\label{lem:bilinear_gauss_coupling_det}
Let $W \in \R^{d \times d}$ be a deterministic matrix with $V_0 := \|W\|_\mathsf{F}^2/d^2 > 0$, and let $u_1, \ldots, u_m, v_0 \in \R^d$ be independent Gaussian vectors with $u_k, v_0 \sim \mathcal{N}(0, I_d/d)$. Set
\begin{equation}\label{eq:Xj_def}
    X_j \;:=\; u_j^\top W v_0, \qquad j = 1, \ldots, m.
\end{equation}
On a common probability space, there exist i.i.d. random variables $Y_1, \ldots, Y_m \sim \mathcal{N}(0,V_0)$ such that
\begin{equation}\label{eq:bilinear_coupling}
    \bigl\|(X_1, \ldots, X_m) - (Y_1, \ldots, Y_m)\bigr\|_2 \;=\; \Op\!\bigl(\|W^\top W\|_\mathsf{F}/(d^2\sqrt{V_0})\bigr).
\end{equation}
\end{lemma}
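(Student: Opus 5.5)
The plan is to condition on $v_0$ and exploit the exact Gaussianity in the $u_j$. Conditionally on $v_0$, the vector $w:=Wv_0$ is fixed. The variables $X_j=u_j^\top w$ are then i.i.d.\ $\mathcal N(0,\|w\|^2/d)$, since the $u_j$ are independent $\mathcal N(0,I_d/d)$. Define $\widehat V:=\|Wv_0\|^2/d$ and write $X_j=\sqrt{\widehat V}\,Z_j$, where $Z_j:=X_j/\sqrt{\widehat V}$. Conditionally on $v_0$, the $Z_j$ are i.i.d.\ standard Gaussian, and their conditional law does not depend on $v_0$. So $(Z_1,\dots,Z_m)$ is independent of $v_0$. (On the null event $Wv_0=0$, replace $Z_j$ by auxiliary independent Gaussians.) The coupling is then explicit:
\[
Y_j:=\sqrt{V_0}\,Z_j,
\]
which are i.i.d.\ $\mathcal N(0,V_0)$. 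With this choice,
\[
\|X-Y\|_2=\bigl|\sqrt{\widehat V}-\sqrt{V_0}\bigr|\,\|Z\|_2 .
\]
Since $m$ is fixed, $\|Z\|_2=\Op(1)$.

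It remains to control the radial fluctuation. We have $\widehat V=v_0^\top W^\top W v_0/d$, a Gaussian quadratic form. Its mean is $\tr(W^\top W)/d^2=V_0$, and its variance is $2\|W^\top W\|_\mathsf F^2/d^4$. Hanson--Wright, applied to $\sqrt d\,v_0\sim\mathcal N(0,I_d)$, gives
\[
|\widehat V-V_0|\le C\Bigl(\sqrt{t}\,\|W^\top W\|_\mathsf F/d^2+t\,\|W^\top W\|_{\mathrm{op}}/d^2\Bigr)
\]
with probability at least $1-2e^{-t}$. Because $\|W^\top W\|_{\mathrm{op}}\le\|W^\top W\|_\mathsf F$, taking $t=d^{\varepsilon}$ yields $|\widehat V-V_0|=\Op(\|W^\top W\|_\mathsf F/d^2)$ in the stochastic-domination sense; the $t$ factors are absorbed into $n^{\varepsilon}$. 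Finally,
\[
\bigl|\sqrt{\widehat V}-\sqrt{V_0}\bigr|=\frac{|\widehat V-V_0|}{\sqrt{\widehat V}+\sqrt{V_0}}\le\frac{|\widehat V-V_0|}{\sqrt{V_0}},
\]
which gives exactly the rate $\|W^\top W\|_\mathsf F/(d^2\sqrt{V_0})$ claimed in \eqref{eq:bilinear_coupling}.

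The main point needing care is the independence claim for $Z$. It rests on the conditional law of $Z$ given $v_0$ being the fixed law $\mathcal N(0,I_m)$ (a rotation-invariance argument). One must also check that the $\Op$ statement is uniform and meaningful when the rate itself is small. The tail bound above is explicit, so this should be routine. No earlier results are needed beyond Gaussian chaos concentration.
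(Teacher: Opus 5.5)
Your proposal is correct and is essentially the paper's proof. The paper likewise conditions on $v_0$, sets $Y_j := X_j\sqrt{V_0/V}$ with $V=\|Wv_0\|^2/d$, controls $V-V_0$ by Hanson--Wright, and bounds $|\sqrt V-\sqrt{V_0}|$ by $|V-V_0|/\sqrt{V_0}$. Your explicit two-term tail (using $\|W^\top W\|_{\mathrm{op}}\le\|W^\top W\|_\mathsf{F}$) and your handling of the null event $Wv_0=0$ simply spell out details the paper leaves implicit.
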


\begin{remark}[Interpretation]
The deviation $\|W^\top W\|_\mathsf{F}/d^2$ measures the spread of the eigenvalues of $W^\top W/d^2$: when the eigenvalues are concentrated, $\|W^\top W\|_\mathsf{F}/d^2$ is small relative to $\|W\|_\mathsf{F}^2/d^2 = \tr(W^\top W)/d^2$. In the proof below it is exactly the Hanson--Wright scale for the random conditional variance $\|Wv_0\|^2/d$.
\end{remark}

\begin{proof}
The argument exploits the conditional Gaussian structure of $\{X_j\}$ given $v_0$.

\subparagraph*{Step 1: Conditional Gaussian.}
Set $w := W v_0 \in \R^d$. Conditional on $v_0$, $w$ is deterministic, and $X_j = u_j^\top w$ is a Gaussian linear functional of $u_j$. Since $u_1, \ldots, u_m \sim \mathcal{N}(0, I_d/d)$ are independent across $j$,
\begin{equation}\label{eq:cond_gaussian_lemma}
    (X_1, \ldots, X_m)\,\big|\,v_0 \;\sim\; \mathcal{N}\!\bigl(0,\, V\cdot I_m\bigr),
    \qquad
    V \;:=\; \frac{\|W v_0\|^2}{d}.
\end{equation}

\subparagraph*{Step 2: Hanson--Wright on the conditional variance.}
We have
\begin{equation}\label{eq:EV_lemma}
    \E[V] \;=\; \frac{1}{d}\,\E[v_0^\top W^\top W v_0] \;=\; \frac{1}{d^2}\,\tr(W^\top W) \;=\; \frac{\|W\|_\mathsf{F}^2}{d^2} \;=:\; V_0,
\end{equation}
using $\E[v_0 v_0^\top] = I_d/d$. The Hanson--Wright inequality applied to the centered quadratic form $V - V_0 = (1/d)(v_0^\top W^\top W v_0 - \|W\|_\mathsf{F}^2/d)$ in the Gaussian vector $v_0$ gives
\begin{equation}\label{eq:HW_lemma}
    V - V_0 \;=\; \Op\!\bigl(\|W^\top W\|_\mathsf{F}/d^2\bigr),
    \qquad
    \var V \;=\; \frac{2}{d^4}\,\|W^\top W\|_\mathsf{F}^2.
\end{equation}

\subparagraph*{Step 3: Coupling.}
Define $Y_j := X_j \cdot \sqrt{V_0/V}$ for each $j$. Conditional on $v_0$, $\{Y_j\}$ inherit the Gaussianity and independence of $\{X_j\}$, with conditional variance $V \cdot (V_0/V) = V_0$. Since the conditional law $\mathcal{N}(0, V_0\cdot I_m)$ is independent of $v_0$, the unconditional joint law of $(Y_1, \ldots, Y_m)$ is i.i.d. $\mathcal{N}(0, V_0)$ as required.

The deviation factors as
\begin{equation}\label{eq:Xj_minus_Yj}
    X_j - Y_j \;=\; X_j\bigl(1 - \sqrt{V_0/V}\bigr) \;=\; X_j \cdot \frac{\sqrt V - \sqrt{V_0}}{\sqrt V}.
\end{equation}
Two facts: conditional on $v_0$, the standardized form $X_j/\sqrt V \sim \mathcal{N}(0, 1)$, so $|X_j/\sqrt V| = \Op(1)$ uniformly in $j$; and $|\sqrt V - \sqrt{V_0}| \le |V - V_0|/\sqrt{V_0}$, by $\sqrt V + \sqrt{V_0} \ge \sqrt{V_0}$. Combined with \eqref{eq:HW_lemma},
\begin{equation}\label{eq:Xj_Yj_bound_with_V0}
    |X_j - Y_j| \;=\; |X_j/\sqrt V| \cdot |\sqrt V - \sqrt{V_0}| \;=\; \Op\!\bigl(\|W^\top W\|_\mathsf{F}/(d^2\sqrt{V_0})\bigr),
\end{equation}
uniformly in $j$. For fixed $m$, we have
\begin{equation}
\|(X) - (Y)\|_2^2 = \sum_{j = 1}^{m}|X_j - Y_j|^2 = \Op\!\bigl((\|W^\top W\|_\mathsf{F}/(d^2\sqrt{V_0}))^2\bigr),
\end{equation}
which gives \eqref{eq:bilinear_coupling}.
\end{proof}

\subsection{Random matrix with concentrated Frobenius norm}
\label{app:bilinear_coupling_rand}

\begin{corollary}[Bilinear Gaussian coupling, random matrix]\label{cor:bilinear_gauss_coupling_rand}
Let $W$ be a random $d \times d$ matrix with $\|W\|_\mathsf{F} > 0$, independent of $u_1, \ldots, u_m, v_0$ which are i.i.d. $\mathcal{N}(0, I_d/d)$. Let $\nu_d^2 > 0$ be a deterministic constant. Suppose $\rho_f, \rho_\mathrm{op} > 0$ are deterministic rates such that
\begin{equation}\label{eq:cor_hyp}
    \frac{\|W\|_\mathsf{F}^2}{d^2} - \nu_d^2 \;=\; \Op(\rho_f),
    \qquad
    \frac{\|W^\top W\|_\mathsf{F}}{d^2} \;=\; \Op(\rho_\mathrm{op}).
\end{equation}
Set $X_j := u_j^\top W v_0$ for $j = 1, \ldots, m$. On a common probability space, there exist i.i.d. random variables $Y_1, \ldots, Y_m \sim \mathcal{N}(0,\nu_d^2)$ such that
\begin{equation}\label{eq:cor_coupling}
    \bigl\|(X_1, \ldots, X_m) - (Y_1, \ldots, Y_m)\bigr\|_2 \;=\; \Op\!\bigl((\rho_\mathrm{op} + \rho_f)/\nu_d\bigr).
\end{equation}
\end{corollary}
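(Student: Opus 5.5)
The plan is to condition on $W$, apply Lemma~\ref{lem:bilinear_gauss_coupling_det} to the frozen matrix, and then add one rescaling step that replaces the random variance $V_0=\|W\|_\mathsf{F}^2/d^2$ by the deterministic $\nu_d^2$.

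\emph{Step 1: condition on $W$.} Since $W$ is independent of $(u_1,\ldots,u_m,v_0)$, conditioning on $W$ leaves their joint law unchanged. Lemma~\ref{lem:bilinear_gauss_coupling_det} then applies with $V_0$ random but $W$-measurable. Its proof is explicit: set $Y_j^{(0)}:=X_j\sqrt{V_0/V}$ with $V=\|Wv_0\|^2/d$. This gives, conditionally on $W$, variables $Y^{(0)}_j$ that are i.i.d. $\mathcal N(0,V_0)$ and satisfy
\[
\|X-Y^{(0)}\|_2=\Op\bigl(\|W^\top W\|_\mathsf{F}/(d^2\sqrt{V_0})\bigr).
\]
The Hanson--Wright and Gaussian tail bounds behind this estimate hold conditionally with constants independent of $W$. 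Hence the stochastic domination survives averaging over $W$: given $\varepsilon,D$, the conditional failure probability is at most $n^{-D}$ uniformly in $W$.

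\emph{Step 2: rescale to the deterministic variance.} Define
\[
Y_j:=Y_j^{(0)}\,\nu_d/\sqrt{V_0}=X_j\,\nu_d/\sqrt V .
\]
Conditionally on $(W,v_0)$, the vector $(X_j/\sqrt V)_j$ is standard Gaussian, i.i.d. across $j$, with a law not depending on $(W,v_0)$. So the $Y_j$ are unconditionally i.i.d. $\mathcal N(0,\nu_d^2)$. Then
\[
|Y_j^{(0)}-Y_j|=|Y_j^{(0)}/\sqrt{V_0}|\cdot|\sqrt{V_0}-\nu_d|\le \Op(1)\cdot |V_0-\nu_d^2|/\nu_d=\Op(\rho_f/\nu_d),
\]
using $|\sqrt a-\sqrt b|\le|a-b|/\sqrt b$ with $b=\nu_d^2$.

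\emph{Step 3: convert $1/\sqrt{V_0}$ into $1/\nu_d$.} This is the only delicate point. On the event $\{V_0\ge\nu_d^2/4\}$ we have $1/\sqrt{V_0}\le 2/\nu_d$, so the Step 1 bound becomes $\Op(\rho_{\mathrm{op}}/\nu_d)$. On the complementary event, $|V_0-\nu_d^2|\ge 3\nu_d^2/4$. The plan is to show this event is negligible using $\rho_f\ll\nu_d^2$, which is automatic in the intended application ($\rho_f=d^{-1/2}$, $\nu_d\ge c_0$ by Lemma~\ref{lem:frobenius_lower}).

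If $\rho_f$ is not small relative to $\nu_d^2$, the claimed bound $(\rho_{\mathrm{op}}+\rho_f)/\nu_d\gtrsim\nu_d$ is trivial. To see this, bound $\|X-Y\|_2\le\|X\|_2+\|Y\|_2$. We have $\|Y\|_2=\Op(\nu_d)$, and $\|X\|_2=\Op(\sqrt{V})=\Op(\sqrt{V_0}+\sqrt{\rho_{\mathrm{op}}})$, which is $\Op(\nu_d+\sqrt{\rho_f}+\sqrt{\rho_{\mathrm{op}}})$. I expect this case split to be the main (mild) obstacle: matching $\sqrt{\rho}$ against $\rho/\nu_d$ requires $\rho\gtrsim\nu_d^2$, which is exactly the regime of the trivial case.

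\emph{Step 4: conclude.} Combine Steps 1–3 by the triangle inequality. Since $m$ is fixed, the $\ell_2$ norm over $j$ is controlled by the coordinatewise bounds, which gives \eqref{eq:cor_coupling}.
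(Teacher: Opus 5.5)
Your proposal is correct, and your final coupling $Y_j=X_j\,\nu_d/\sqrt V$ is exactly the paper's. The difference is only in how the error is bounded.

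\textbf{The paper's route.} It never introduces the intermediate variables $Y_j^{(0)}$ with random variance $V_0=\|W\|_\mathsf{F}^2/d^2$. It writes $|X_j-Y_j|=|X_j/\sqrt V|\,|\sqrt V-\nu_d|$ and applies $|\sqrt a-\sqrt b|\le|a-b|/\sqrt b$ with the deterministic choice $b=\nu_d^2$. It then splits $V-\nu_d^2=(V-V_0)+(V_0-\nu_d^2)$: the first piece is $\Op(\rho_{\mathrm{op}})$ by conditional Hanson--Wright, and the second is $\Op(\rho_f)$ by hypothesis. Since the denominator is $\nu_d$ from the outset, no lower bound on $V_0$ is needed, and your Step~3 never arises.

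\textbf{Your route.} It gains modularity, since Lemma~\ref{lem:bilinear_gauss_coupling_det} is used as a black box. The cost is that you inherit that lemma's random denominator $\sqrt{V_0}$, which is what forces the case split. The split is valid, but in the $\Op$ sense it must be made separately for each $\varepsilon$:
\begin{itemize}
\item If $\rho_f\le d^{-\varepsilon/2}\nu_d^2$, the event $\{V_0<\nu_d^2/4\}$ has probability at most $d^{-D}$.
\item Otherwise $d^{\varepsilon}\rho_f/\nu_d\ge d^{\varepsilon/2}\nu_d$, and the trivial bound $\|X-Y\|_2=\Op\bigl(\nu_d+(\rho_f+\rho_{\mathrm{op}})/\nu_d\bigr)$ suffices.
\end{itemize}
The trivial bound follows from $\sqrt{\rho}\le\max\{\nu_d,\rho/\nu_d\}$. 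Note that this inequality, not the case condition (which concerns only $\rho_f$), is what handles your $\sqrt{\rho_{\mathrm{op}}}$ term.

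\textbf{Streamlining.} Your Step~2 already uses $b=\nu_d^2$. The simplest fix is to apply that same inequality directly to $|\sqrt V-\nu_d|$ and drop Steps~1 and~3.
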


\begin{proof}
The key observation is that, conditional on $(W, v_0)$, $X_j/\sqrt V \overset{\text{iid}}{\sim} \mathcal{N}(0, 1)$ across $j$, where $V := \|W v_0\|^2/d$ is the conditional variance.

Define
\begin{equation}\label{eq:cor_Yj_def}
    Y_j \;:=\; X_j\,\sqrt{\nu_d^2/V}, \qquad j = 1, \ldots, m.
\end{equation}
Conditional on $(W, v_0)$, $\{Y_j\}$ inherit the Gaussianity and independence of $\{X_j\}$ with variance $V \cdot \nu_d^2/V = \nu_d^2$, so the unconditional joint law of $(Y_1, \ldots, Y_m)$ is i.i.d. $\mathcal{N}(0, \nu_d^2)$ as required. The deviation factors as $X_j - Y_j = X_j(1 - \sqrt{\nu_d^2/V}) = X_j\,(\sqrt V - \sqrt{\nu_d^2})/\sqrt V$, and using $|X_j/\sqrt V| = \Op(1)$ together with $|\sqrt V - \sqrt{\nu_d^2}| \le |V - \nu_d^2|/\sqrt{\nu_d^2}$,
\begin{equation}\label{eq:cor_a_pertermbound}
    |X_j - Y_j| \;=\; |X_j/\sqrt V|\cdot|\sqrt V - \sqrt{\nu_d^2}| \;=\; \Op\!\bigl(|V - \nu_d^2|/\sqrt{\nu_d^2}\bigr).
\end{equation}
Decompose
\begin{equation}\label{eq:cor_a_decomp}
    V - \nu_d^2 \;=\; (V - V_0(W)) \;+\; (V_0(W) - \nu_d^2),
    \qquad V_0(W) := \|W\|_\mathsf{F}^2/d^2.
\end{equation}
Hanson--Wright applied to $V - V_0(W)$ conditional on $W$ gives
\begin{equation}
    V - V_0(W)
    =
    \Op\!\left(\|W^\top W\|_\mathsf{F}/d^2\right)
    =
    \Op(\rho_\mathrm{op}),
\end{equation}
using \eqref{eq:HW_lemma}. The second piece is $\Op(\rho_f)$ by \eqref{eq:cor_hyp}. Combining with \eqref{eq:cor_a_pertermbound} and summing over $j$ yields \eqref{eq:cor_coupling}.
\end{proof}

\section{A trace estimate for the inverse Hessian}
\label{app:chi_trace}

This appendix records several elementary estimates that will be used in the characterization of the susceptibility parameter $\chi$. The first says that a fresh rank-one Gaussian direction probes the normalized trace of a bounded linear operator on $\R^{d \times d}$. The next two show that the normalized inverse trace is stable under the perturbations needed to simplify the TAM Hessian.

We will use the following moment notation. For a sequence of random variables $X_d$ and deterministic positive numbers $a_d$, we write
\begin{equation}
    X_d = O_{\mathcal L}(a_d)
\end{equation}
if, for every fixed $p < \infty$, there is a constant $C_p < \infty$ such that $\|X_d\|_{L^p} \le C_p a_d$ for all sufficiently large $d$.

\begin{lemma}[Fresh rank-one trace estimate]\label{lem:fresh_rank_one_trace}
Let $u, v \in \R^d$ be independent Gaussian vectors with distribution $\mathcal N(0, I_d/d)$. Let $T : \R^{d \times d} \to \R^{d \times d}$ be a random symmetric linear operator, independent of $(u, v)$, and assume that
\begin{equation}
    \|T\|_{\mathrm{op}} = O_{\mathcal L}(1).
\end{equation}
Then
\begin{equation}\label{eq:fresh_rank_one_trace}
    \biggl\langle uv^\top,\, T[uv^\top]\biggr\rangle
    =
    \frac{\tr(T)}{d^2}
    +
    O_{\mathcal L}(d^{-1/2}).
\end{equation}
Equivalently, for every fixed $p < \infty$,
\begin{equation}\label{eq:fresh_rank_one_trace_Lp}
    \left\|
    \biggl\langle uv^\top,\, T[uv^\top]\biggr\rangle
    -
    \frac{\tr(T)}{d^2}
    \right\|_{L^p}
    \le
    \frac{C_p}{\sqrt d}.
\end{equation}
\end{lemma}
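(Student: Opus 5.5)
The plan is to condition on $T$ and view the quantity as a Gaussian chaos in the independent vectors $u$ and $v$. Write $x=\vecop(uv^\top)=v\otimes u\in\R^{d^2}$. Then $\langle uv^\top,T[uv^\top]\rangle=x^\top T x$, where $T$ is now a symmetric $d^2\times d^2$ matrix. Conditionally on $T$, the mean is
\[
\E[x^\top Tx\mid T]=\tr\bigl(T\,\E[(vv^\top)\otimes(uu^\top)]\bigr)=\tr\bigl(T\,(I_d/d)\otimes(I_d/d)\bigr)=\tr(T)/d^2 .
\]
This step uses only independence of $u$ and $v$. It remains to show that the conditional fluctuation is $O_{\mathcal L}(d^{-1/2})$, with constants depending only on $\|T\|_{\mathrm{op}}$; taking expectations over $T$ and applying H\"older's inequality with the assumption $\|T\|_{\mathrm{op}}=O_{\mathcal L}(1)$ then gives \eqref{eq:fresh_rank_one_trace_Lp}.

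For the fluctuation I would split it in two. First condition on $v$ and define the $d\times d$ matrix $A_v$ by $u^\top A_v u=x^\top Tx$; explicitly $A_v=(v^\top\otimes I)T(v\otimes I)$. Its operator norm is at most $\|T\|_{\mathrm{op}}\|v\|^2$ and its Frobenius norm is at most $\sqrt d\,\|T\|_{\mathrm{op}}\|v\|^2$. Hanson--Wright in $u\sim\mathcal N(0,I_d/d)$ then gives
\[
u^\top A_v u-\tr(A_v)/d=O_{\mathcal L}\bigl(\|A_v\|_{\mathsf F}/d\bigr)=O_{\mathcal L}(d^{-1/2}),
\]
because $\|v\|^2$ has all moments bounded. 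Next, $\tr(A_v)/d=v^\top B v/d$, where $B=\sum_{a}(I\otimes e_a)^\top T(I\otimes e_a)$ is the partial trace of $T$ over the $u$-factor. Then $\|B\|_{\mathrm{op}}\le d\|T\|_{\mathrm{op}}$ and $\|B\|_{\mathsf F}\le d^{3/2}\|T\|_{\mathrm{op}}$, the second bound from $\sqrt d\,\|B\|_{\mathrm{op}}$. Applying Hanson--Wright in $v$ gives
\[
v^\top Bv/d-\tr(B)/d^2=O_{\mathcal L}\bigl(\|B\|_{\mathsf F}/d^2\bigr)=O_{\mathcal L}(d^{-1/2}),
\]
and $\tr(B)=\tr(T)$. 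Adding the two errors gives the claim.

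The main point to check is the second step. The partial trace $B$ can have operator norm as large as $d$, so its Frobenius norm must be bounded by $\sqrt d\,\|B\|_{\mathrm{op}}$ and not by anything sharper; this bound is exactly what sets the $d^{-1/2}$ rate, rather than the $d^{-1}$ one might hope for. The Hanson--Wright moment versions (fixed $p$ moments scaling with the Frobenius norm for this Gaussian case) are used conditionally, and the moments of $\|v\|^2$ and $\|T\|_{\mathrm{op}}$ are combined via H\"older.
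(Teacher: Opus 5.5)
Your proof is correct. It reaches the same bound as the paper by a somewhat different route: the concentration tool differs, and so does the way you split the fluctuation.

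\textbf{The paper's route.} The paper also conditions on $T$ and identifies the conditional mean $\tr(T)/d^2$. It then bounds the conditional variance in one shot with the Gaussian Poincar\'e inequality in $(g,h)$. Writing the form as $d^{-2}g^\top T_h g$ with the compressed operator $T_h$, it uses $\tr(T_h^2)\le d\,\|T\|_{\mathrm{op}}^2\|h\|^4$ to get $\E[\|\nabla_g F\|^2\mid T]\le C\|T\|_{\mathrm{op}}^2/d$, and the same in $h$. It then upgrades the $L^2$ bound to every fixed moment by hypercontractivity of degree-four Gaussian chaos.

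\textbf{Your route.} You split the fluctuation by iterated conditioning into $\bigl[u^\top A_v u-\tr(A_v)/d\bigr]+\bigl[v^\top Bv/d-\tr(T)/d^2\bigr]$. You then apply the moment form of Hanson--Wright twice, the second time to the partial trace $B$.

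\textbf{Shared core.} The key estimate is the same in both arguments. The compressed operator ($T_h$ in the paper, your $A_v$, and likewise your $B$) has Frobenius norm at most $\sqrt d$ times its operator norm, and this is what produces the $d^{-1/2}$ rate. This also means your first step is just as tight as the second, not only the partial-trace step: for $T=I$ both pieces, $\|v\|^2(\|u\|^2-1)$ and $\|v\|^2-1$, are of exact order $d^{-1/2}$.

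\textbf{Trade-offs.} Your approach gives $L^p$ bounds directly from degree-two quadratic-form inequalities. It also makes explicit which leg's randomness produces each part of the error, at the cost of introducing the partial trace. The paper's approach treats the two legs symmetrically and needs only a gradient computation followed by a generic hypercontractivity step.
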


\begin{proof}
Write $u = g/\sqrt d$ and $v = h/\sqrt d$, where $g,h \in \R^d$ are independent standard Gaussian vectors. Define
\begin{equation}
    F(g,h,T)
    \;:=\;
    \biggl\langle uv^\top,\,T[uv^\top]\biggr\rangle
    =
    \frac{1}{d^2}
    \bigl\langle g \otimes h,\, T[g \otimes h]\bigr\rangle .
\end{equation}
Since $T$ is independent of $(g,h)$, we have
\begin{equation}\label{eq:F_cond_mean}
    \E\!\left[F(g,h,T) \mid T\right]
    =
    \frac{\tr(T)}{d^2}.
\end{equation}

We first bound the fluctuation in $L^2$. By the Gaussian Poincare inequality applied to the variables $(g,h)$, with $T$ kept fixed,
\begin{equation}\label{eq:poincare_F}
    \var\!\left(F(g,h,T) \mid T\right)
    \le
    \E\!\left[\|\nabla_g F\|^2 + \|\nabla_h F\|^2 \mid T\right].
\end{equation}
For fixed $h$, define the compressed operator
\begin{equation}
    T_h
    :=
    (I \otimes h)^\ast T(I \otimes h)
    \qquad\text{on }\R^d,
\end{equation}
where $(I \otimes h)g = g \otimes h$. Then
\begin{equation}
    F(g,h,T)
    =
    \frac{1}{d^2} g^\top T_h g,
    \qquad
    \nabla_g F
    =
    \frac{2}{d^2}T_h g,
\end{equation}
using the symmetry of $T$. It follows that
\begin{equation}\label{eq:grad_g_bound}
    \E\!\left[\|\nabla_g F\|^2 \mid T\right]
    =
    \frac{4}{d^4}\,
    \E\!\left[\tr(T_h^2) \mid T\right].
\end{equation}
Moreover,
\begin{equation}
    \|T_h\|_{\mathrm{op}}
    \le
    \|T\|_{\mathrm{op}}\,\|h\|^2,
\end{equation}
and hence
\begin{equation}
    \tr(T_h^2)
    \le
    d\,\|T_h\|_{\mathrm{op}}^2
    \le
    d\,\|T\|_{\mathrm{op}}^2\,\|h\|^4.
\end{equation}
Since $\E\|h\|^4 = d^2 + 2d$, \eqref{eq:grad_g_bound} gives
\begin{equation}\label{eq:grad_g_final}
    \E\!\left[\|\nabla_g F\|^2 \mid T\right]
    \le
    \frac{C\|T\|_{\mathrm{op}}^2}{d}.
\end{equation}
The same argument, with the roles of $g$ and $h$ interchanged, gives
\begin{equation}\label{eq:grad_h_final}
    \E\!\left[\|\nabla_h F\|^2 \mid T\right]
    \le
    \frac{C\|T\|_{\mathrm{op}}^2}{d}.
\end{equation}
Combining \eqref{eq:poincare_F}, \eqref{eq:grad_g_final}, and \eqref{eq:grad_h_final}, we obtain
\begin{equation}\label{eq:var_F_given_T}
    \var\!\left(F(g,h,T) \mid T\right)
    \le
    \frac{C\|T\|_{\mathrm{op}}^2}{d}.
\end{equation}

We now upgrade this $L^2$ bound to fixed moments. The centered random variable
\begin{equation}
    F(g,h,T) - \frac{\tr(T)}{d^2}
\end{equation}
is, for fixed $T$, a Gaussian polynomial of degree at most $4$ in $(g,h)$. By Gaussian hypercontractivity, for every fixed $p < \infty$,
\begin{equation}\label{eq:hypercontractive_F}
    \left\|
    F(g,h,T) - \frac{\tr(T)}{d^2}
    \right\|_{L^p(g,h \mid T)}
    \le
    C_p
    \left\|
    F(g,h,T) - \frac{\tr(T)}{d^2}
    \right\|_{L^2(g,h \mid T)}
    \le
    \frac{C_p\|T\|_{\mathrm{op}}}{\sqrt d},
\end{equation}
where the last step uses \eqref{eq:var_F_given_T}. Taking expectation over $T$ and using $\|T\|_{\mathrm{op}} = O_{\mathcal L}(1)$ yields \eqref{eq:fresh_rank_one_trace_Lp}. This proves the lemma.
\end{proof}

We next record two deterministic perturbation estimates for the Hessian operators in \eqref{eq:H_LOO_def}--\eqref{eq:M_i_def}. We identify $\R^{d\times d}$ with $\R^{d^2}$ throughout this discussion. If $A,B:\R^{d\times d}\to\R^{d\times d}$ are positive-definite self-adjoint operators, equivalently $d^2\times d^2$ symmetric matrices after vectorization, and $E:=A-B$, then the resolvent identity gives the following bound: whenever $\|A^{-1}\|_{\mathrm{op}},\|B^{-1}\|_{\mathrm{op}}\le \tau^{-1}$,
\begin{equation}\label{eq:resolvent_trace_bound}
    \left|
    \frac{1}{d^2}\tr(A^{-1})
    -
    \frac{1}{d^2}\tr(B^{-1})
    \right|
    \le
    \frac{1}{\tau^2 d^2}\,\|E\|_\ast
    =
    \frac{1}{\tau^2 d^2}\,\|A-B\|_\ast ,
\end{equation}
where $\|\cdot\|_\ast$ denotes the nuclear norm on this $d^2$-dimensional operator space.

For a fixed cavity index $h$, let $H^{\setminus h}$ be the Hessian in \eqref{eq:H_LOO_def}. Define the simplified Hessian $\widetilde H^{\setminus h}$ by replacing the block $M_i^{\setminus h}$ in \eqref{eq:M_i_def}, for every $i\ne h$, by
\begin{equation}\label{eq:M_i_simplified_def}
    \widetilde M_i^{\setminus h}
    :=
    b_i^{\setminus h}e_i e_i^\top
    +
    a_i^{\setminus h}\diag\bigl(\widetilde R_{:,i}^{\setminus h}\bigr),
\end{equation}
and keeping the $h$-th block equal to zero. Equivalently, $\widetilde H^{\setminus h}$ is obtained from $H^{\setminus h}$ by replacing $b_i^{\setminus h}Q_{:,i}^{\setminus h}(Q_{:,i}^{\setminus h})^\top$ by $b_i^{\setminus h}e_i e_i^\top$ in each active block.

\begin{lemma}[Trace reduction to the simplified Hessian]\label{lem:chi_rank_one_block_reduction}
Assume $\lambda n/d^2\ge \tau>0$ and $\max_i b_i^{\setminus h}\le B$. For $i\ne h$, write
\begin{equation}
    Q_{:,i}^{\setminus h}=-e_i+\rho_i^{\setminus h},
    \qquad
    p_i^{\setminus h}:=U\rho_i^{\setminus h}.
\end{equation}
Then
\begin{equation}\label{eq:rank_one_reduction_trace_bound}
    \left|
    \frac{1}{d^2}\tr\!\left((H^{\setminus h})^{-1}\right)
    -
    \frac{1}{d^2}\tr\!\left((\widetilde H^{\setminus h})^{-1}\right)
    \right|
    \le
    \frac{B}{\tau^2 d^2}
    \sum_{i\ne h}
    \|v_i\|^2
    \left(2\|u_i\|\|p_i^{\setminus h}\|+\|p_i^{\setminus h}\|^2\right).
\end{equation}
Consequently, if $n/d^2=O(1)$, $\max_i\{\|u_i\|,\|v_i\|\}=O(1)$, and $\max_{i\ne h}\|p_i^{\setminus h}\|\le \eta$, then the normalized inverse trace of $H^{\setminus h}$ differs from that of the simplified Hessian $\widetilde H^{\setminus h}$ by $O(\eta)$.
\end{lemma}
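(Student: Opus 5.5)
The plan is to apply the resolvent nuclear-norm bound \eqref{eq:resolvent_trace_bound} with $A=H^{\setminus h}$ and $B=\widetilde H^{\setminus h}$. Both operators contain the ridge term $(\lambda n/d^2)I\succeq\tau I$ plus positive semidefinite pieces, since $\widetilde M_i^{\setminus h}$ is a nonnegative combination of $e_ie_i^\top$ and a nonnegative diagonal. Hence $\|A^{-1}\|_{\mathrm{op}},\|B^{-1}\|_{\mathrm{op}}\le\tau^{-1}$, and it suffices to bound $\|H^{\setminus h}-\widetilde H^{\setminus h}\|_\ast$.

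By construction, the difference comes only from the rank-one parts of the blocks. In vectorized form \eqref{eq:H_full_kron}, the block $b_i^{\setminus h}Q_{:,i}^{\setminus h}(Q_{:,i}^{\setminus h})^\top$ acting on column $i$ contributes $b_i^{\setminus h}\xi_i\xi_i^\top$, where
\[
\xi_i:=\vecop\bigl(UQ_{:,i}^{\setminus h}v_i^\top\bigr)=\vecop\bigl((-u_i+p_i^{\setminus h})v_i^\top\bigr).
\]
The simplified block instead contributes $b_i^{\setminus h}f_if_i^\top$ with $f_i:=\vecop(u_iv_i^\top)$. Thus
\[
H^{\setminus h}-\widetilde H^{\setminus h}=\sum_{i\ne h}b_i^{\setminus h}\bigl(\xi_i\xi_i^\top-f_if_i^\top\bigr).
\]
This identity needs to be checked carefully: the diagonal $\widetilde R$ pieces cancel exactly, and the $h$-th block is zero in both operators.

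Next, set $g_i:=\vecop(p_i^{\setminus h}v_i^\top)$, so that $\xi_i=-f_i+g_i$. Then
\[
\xi_i\xi_i^\top-f_if_i^\top=-f_ig_i^\top-g_if_i^\top+g_ig_i^\top.
\]
Each rank-one term $xy^\top$ has nuclear norm $\|x\|\|y\|$. Using $\|f_i\|=\|u_i\|\|v_i\|$ and $\|g_i\|=\|p_i^{\setminus h}\|\|v_i\|$, the triangle inequality gives a nuclear norm of at most $\|v_i\|^2\bigl(2\|u_i\|\|p_i^{\setminus h}\|+\|p_i^{\setminus h}\|^2\bigr)$. Multiplying by $b_i^{\setminus h}\le B$, summing over $i\ne h$, and inserting the result into \eqref{eq:resolvent_trace_bound} yields \eqref{eq:rank_one_reduction_trace_bound}.

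The consequence then follows directly. Under the stated assumptions each summand is $O(\eta+\eta^2)=O(\eta)$, the sum over $i$ is $O(n\eta)$, and dividing by $d^2$ with $n/d^2=O(1)$ gives $O(\eta)$.

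The only delicate step is the identification of the operator difference. One must verify, via the Kronecker representation, that the columnwise action $U\,M(U^\top\Delta V)\,V^\top$ with $M_i=b\,QQ^\top$ really equals $b\,\xi_i\xi_i^\top$ on vectorized $\Delta$. This holds because $Q_{:,i}^\top U^\top\Delta v_i=\langle UQ_{:,i}v_i^\top,\Delta\rangle$. Everything else is routine.
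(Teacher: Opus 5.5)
Your proposal is correct and matches the paper's proof essentially step for step. The paper uses the same three-term rank-one expansion of $b_i^{\setminus h}\bigl(Q_{:,i}^{\setminus h}(Q_{:,i}^{\setminus h})^\top-e_ie_i^\top\bigr)$, with nuclear norms $\|u_i\|\|p_i^{\setminus h}\|\|v_i\|^2$ and $\|p_i^{\setminus h}\|^2\|v_i\|^2$, followed by the resolvent nuclear-norm bound under the ridge floor. Like the paper, you write $O(\eta+\eta^2)=O(\eta)$ without comment; this requires $\eta=O(1)$, which holds in the application since $\eta=O_\prec(d^{-1/2})$.
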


\begin{proof}
Since
\begin{equation}
    Q_{:,i}^{\setminus h}(Q_{:,i}^{\setminus h})^\top-e_i e_i^\top
    =
    -e_i(\rho_i^{\setminus h})^\top
    -
    \rho_i^{\setminus h}e_i^\top
    +
    \rho_i^{\setminus h}(\rho_i^{\setminus h})^\top,
\end{equation}
the $i$th summand in $H^{\setminus h}-\widetilde H^{\setminus h}$ is
\begin{equation}
    \Delta
    \mapsto
    b_i^{\setminus h}\left[
        -u_i(p_i^{\setminus h})^\top
        -
        p_i^{\setminus h}u_i^\top
        +
        p_i^{\setminus h}(p_i^{\setminus h})^\top
    \right]\Delta\,v_i v_i^\top .
\end{equation}
Each of the three displayed terms is rank one as an operator on $\R^{d\times d}$. For example,
\begin{equation}
    u_i(p_i^{\setminus h})^\top \Delta\,v_i v_i^\top
    =
    \bigl\langle p_i^{\setminus h} v_i^\top,\Delta\bigr\rangle\,u_i v_i^\top,
\end{equation}
and hence its nuclear norm is $\|u_i v_i^\top\|_\mathsf{F}\|p_i^{\setminus h} v_i^\top\|_\mathsf{F}=\|u_i\|\|p_i^{\setminus h}\|\|v_i\|^2$. The same bound holds for the transposed cross term, while the $p_i^{\setminus h}(p_i^{\setminus h})^\top$ term contributes $\|p_i^{\setminus h}\|^2\|v_i\|^2$. Thus
\begin{equation}
    \|H^{\setminus h}-\widetilde H^{\setminus h}\|_\ast
    \le
    B\sum_{i\ne h}
    \|v_i\|^2
    \left(2\|u_i\|\|p_i^{\setminus h}\|+\|p_i^{\setminus h}\|^2\right).
\end{equation}
Both $H^{\setminus h}$ and $\widetilde H^{\setminus h}$ are bounded below by the ridge term $(\lambda n/d^2)I$, so their inverse operator norms are at most $\tau^{-1}$. Applying \eqref{eq:resolvent_trace_bound} gives \eqref{eq:rank_one_reduction_trace_bound}.
\end{proof}

\begin{lemma}[Stability at the natural coefficient scale]\label{lem:chi_coefficient_stability}
Let $\widetilde H^{\setminus h}$ be the simplified Hessian above. Let $\bar H^{\setminus h}$ be the simplified Hessian obtained by replacing, for $i\ne h$, the coefficients $a_i^{\setminus h}$ and $\widetilde R_{j,i}^{\setminus h}$ by $\bar a_i$ and $\bar R_{j,i}$, while keeping the coefficients $b_i^{\setminus h}$ unchanged. Assume
\begin{equation}
    \frac{\lambda n}{d^2}\ge \tau>0,
    \qquad
    0\le a_i^{\setminus h},\bar a_i\le B,
    \qquad
    \bar R_{j,i}\ge 0,
\end{equation}
and
\begin{equation}\label{eq:coef_perturb_assumptions}
    \max_{i\ne h} |\bar a_i-a_i^{\setminus h}|\le \delta_a,
    \qquad
    \max_{i\ne h}\sum_{j\ne i,h}\widetilde R_{j,i}^{\setminus h}\le C_R,
    \qquad
    \max_{j\ne i,h}|\bar R_{j,i}-\widetilde R_{j,i}^{\setminus h}|
    \le
    \frac{\delta_R}{k}.
\end{equation}
Then
\begin{equation}\label{eq:coef_stability_trace_bound}
\begin{aligned}
    &\left|
    \frac{1}{d^2}\tr\!\left((\bar H^{\setminus h})^{-1}\right)
    -
    \frac{1}{d^2}\tr\!\left((\widetilde H^{\setminus h})^{-1}\right)
    \right|
    \\
    &\hspace{1cm}\le
    \frac{1}{\tau^2 d^2}
    \sum_{i\ne h}\|v_i\|^2
    \sum_{j\ne i,h}
    \left(
        \delta_a\widetilde R_{j,i}^{\setminus h}
        +
        \frac{B\delta_R}{k}
    \right)\|u_j\|^2 .
\end{aligned}
\end{equation}
Consequently, if $n/d^2=O(1)$, $n/k=O(1)$, $\max_i\{\|u_i\|,\|v_i\|\}=O(1)$, and $C_R=O(1)$, then the normalized inverse trace changes by
\begin{equation}
    O(\delta_a+\delta_R).
\end{equation}
\end{lemma}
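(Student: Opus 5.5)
The plan mirrors the proof of Lemma~\ref{lem:chi_rank_one_block_reduction}. We bound the nuclear norm of the difference $\bar H^{\setminus h}-\widetilde H^{\setminus h}$ and then apply the resolvent trace bound \eqref{eq:resolvent_trace_bound}.

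Both operators contain the ridge floor $(\lambda n/d^2)I\succeq\tau I$, and the remaining blocks are positive semidefinite. For $\bar H^{\setminus h}$ this uses $\bar a_i\ge0$, $\bar R_{j,i}\ge0$ and $b_i^{\setminus h}\ge0$. Hence both inverse operator norms are at most $\tau^{-1}$. The $b_i^{\setminus h}e_ie_i^\top$ pieces are identical in the two operators and cancel. The difference is therefore
\begin{equation}
    \bar H^{\setminus h}-\widetilde H^{\setminus h}
    =
    \sum_{i\ne h}\sum_{j\ne i,h}
    \bigl(\bar a_i\bar R_{j,i}-a_i^{\setminus h}\widetilde R_{j,i}^{\setminus h}\bigr)\,
    f_{ji}f_{ji}^\top,
    \qquad f_{ji}=\vecop(u_jv_i^\top).
\end{equation}
This uses the form \eqref{eq:H_LOO_def}, with the diagonal part of each block acting on column $i$ through $U\diag(\cdot)U^\top$ sandwiched by $v_iv_i^\top$.

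Each summand is a rank-one operator whose nuclear norm is $|\text{coefficient}|\,\|f_{ji}\|^2=|\text{coefficient}|\,\|u_j\|^2\|v_i\|^2$. Split the coefficient as
\begin{equation}
    \bar a_i\bar R_{j,i}-a_i^{\setminus h}\widetilde R_{j,i}^{\setminus h}
    =(\bar a_i-a_i^{\setminus h})\widetilde R_{j,i}^{\setminus h}
    +\bar a_i(\bar R_{j,i}-\widetilde R_{j,i}^{\setminus h}).
\end{equation}
The first term is bounded in absolute value by $\delta_a\widetilde R_{j,i}^{\setminus h}$, using $\widetilde R\ge0$. The second is bounded by $B\delta_R/k$. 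Summing over $(i,j)$ with the triangle inequality for the nuclear norm and inserting the result into \eqref{eq:resolvent_trace_bound} gives exactly \eqref{eq:coef_stability_trace_bound}.

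For the consequence, use $\|u_j\|^2,\|v_i\|^2=O(1)$. The $\delta_a$ part is then at most $\delta_a\,C\,C_R\,n/(\tau^2d^2)=O(\delta_a)$, since the inner row sum is bounded by $C_R$ and there are $n$ columns. The $\delta_R$ part is at most $CB\delta_R\,n^2/(k\tau^2d^2)=O(\delta_R)$, using $n/k=O(1)$ and $n/d^2=O(1)$.

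No step is a genuine obstacle. The only point requiring care is the correct identification of the diagonal blocks with rank-one features $f_{ji}f_{ji}^\top$, including the convention that row $h$ and the diagonal entries of $\widetilde R$ vanish. This ensures that the index set of the sum is exactly $j\ne i,h$.
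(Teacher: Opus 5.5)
Your proposal is correct and follows the paper's proof essentially step for step. You write $\bar H^{\setminus h}-\widetilde H^{\setminus h}$ as a sum of rank-one maps $\Delta\mapsto(\text{coefficient})\langle u_jv_i^\top,\Delta\rangle\,u_jv_i^\top$, bound its nuclear norm using the split $(\bar a_i-a_i^{\setminus h})\widetilde R_{j,i}^{\setminus h}+\bar a_i(\bar R_{j,i}-\widetilde R_{j,i}^{\setminus h})$, and then apply the resolvent trace bound with the ridge floor. You also check positive semidefiniteness of $\bar H^{\setminus h}$ and carry out the $O(\delta_a+\delta_R)$ computation explicitly, details the paper only asserts.
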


\begin{proof}
Each elementary map
\begin{equation}
    \Delta
    \mapsto
    u_j u_j^\top\Delta\,v_i v_i^\top
    =
    \bigl\langle u_j v_i^\top,\Delta\bigr\rangle\,u_j v_i^\top
\end{equation}
has rank one and nuclear norm $\|u_j\|^2\|v_i\|^2$. Therefore
\begin{equation}\label{eq:coef_stability_nuclear}
    \|\bar H^{\setminus h}-\widetilde H^{\setminus h}\|_\ast
    \le
    \sum_{i\ne h}\sum_{j\ne i,h}
    |\bar a_i\bar R_{j,i}-a_i^{\setminus h}\widetilde R_{j,i}^{\setminus h}|\,
    \|u_j\|^2\|v_i\|^2 .
\end{equation}
The coefficient difference satisfies
\begin{equation}
    |\bar a_i\bar R_{j,i}-a_i^{\setminus h}\widetilde R_{j,i}^{\setminus h}|
    \le
    |\bar a_i-a_i^{\setminus h}|\,\widetilde R_{j,i}^{\setminus h}
    +
    \bar a_i\,|\bar R_{j,i}-\widetilde R_{j,i}^{\setminus h}|
    \le
    \delta_a\widetilde R_{j,i}^{\setminus h} + \frac{B\delta_R}{k}.
\end{equation}
Substituting this into \eqref{eq:coef_stability_nuclear}, and using that both Hessians are bounded below by the ridge term $(\lambda n/d^2)I$, \eqref{eq:resolvent_trace_bound} proves \eqref{eq:coef_stability_trace_bound}.
\end{proof}

\subsection*{Quadratic forms of the simplified Hessian}

We now use the simplified Hessian to extract the scalar equation for $\chi$. In this subsection we suppress the leave-one-out superscript and write
\begin{equation}
    f_{j i}:=\vecop(u_j v_i^\top),
    \qquad
    r_{j,i}:=\beta q_{j,i}(1-q_{j,i}),
    \qquad
    \widetilde R_{j,i}=\frac{r_{j,i}}{k}.
\end{equation}
The full simplified Hessian is
\begin{equation}\label{eq:full_simplified_hessian}
    \widetilde H
    =
    \frac{\lambda n}{d^2}I
    +
    \sum_{i=1}^n b_i f_{i i}f_{i i}^\top
    +
    \sum_{i=1}^n\sum_{j\ne i}
    \frac{a_i r_{j,i}}{k} f_{j i}f_{j i}^\top .
\end{equation}
Here $a_i,b_i,q_{j,i}$ are evaluated at the full minimizer. For the $i$-cavity problem, superscript $(i)$ denotes the corresponding quantities after removing sample $i$ from both tensor legs.

\paragraph{Feature spectral bounds.}
We use the following spectral estimates for the feature matrices: uniformly in $i$ and $\ell$,
\begin{equation}
\label{eq:feature_spectral_bound}
    \left\|\bigl[f_{\ell\ell}\bigr]_{\ell\ne i}\right\|_{\mathrm{op}}
    =
    \Op(1),
    \qquad
    \left\|\bigl[f_{j\ell}\bigr]_{j\ne \ell,i}\right\|_{\mathrm{op}}
    =
    \Op(\sqrt d).
\end{equation}

\begin{lemma}[LOO coefficient stability]\label{lem:loo_coefficient_stability_chi}
Assume \ref{post:loo-mu}--\ref{post:loo-bounded}. Then, uniformly in $i$,
\begin{equation}\label{eq:b_loo_stability_chi}
    \max_{\ell\ne i}|b_\ell-b_\ell^{(i)}|
    =
    \Op(d^{-1}),
\end{equation}
and
\begin{equation}\label{eq:ar_loo_stability_chi}
    \max_{\ell\ne i}\max_{j\ne \ell,i}
    \left|
    a_\ell r_{j,\ell}
    -
    a_\ell^{(i)}r_{j,\ell}^{(i)}
    \right|
    =
    \Op(d^{-1}).
\end{equation}
\end{lemma}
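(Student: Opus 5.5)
The plan is to treat both coefficients as smooth functions of the column-$\ell$ scores and thresholds, and to import the perturbation sizes from Proposition~\ref{prop:loo_stability} with $h=i$. First recall the definitions:
\[
a_\ell=\sigma(-m_\ell),\qquad b_\ell=a_\ell(1-a_\ell),\qquad r_{j,\ell}=\phi_\beta''(s_{j,\ell}-\mu_\ell)=\beta q_{j,\ell}(1-q_{j,\ell}).
\]
The maps $m\mapsto b(m)$ and $t\mapsto \phi_\beta''(t)$ are globally Lipschitz, with constants $1/4$-type and $C\beta^2$ respectively. Hence it suffices to prove, uniformly over $i$, $\ell\ne i$ and $j\ne \ell,i$,
\[
|m_\ell-m_\ell^{(i)}|=\Op(d^{-1}),\qquad \bigl|(s_{j,\ell}-\mu_\ell)-(s^{(i)}_{j,\ell}-\mu^{(i)}_\ell)\bigr|=\Op(d^{-1}).
\]
Once these hold, \eqref{eq:b_loo_stability_chi} follows directly. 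For \eqref{eq:ar_loo_stability_chi} we write
\[
a_\ell r_{j,\ell}-a^{(i)}_\ell r^{(i)}_{j,\ell}=(a_\ell-a_\ell^{(i)})\,r_{j,\ell}+a^{(i)}_\ell\,(r_{j,\ell}-r^{(i)}_{j,\ell})
\]
and use the deterministic bounds $0\le a\le1$ and $0\le r\le\beta/4$.

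The second displayed estimate is immediate. By \ref{post:loo-score} we have $|\Delta S_{j,\ell}|=\Op(d^{-1})$ uniformly over surviving indices, and by \ref{post:loo-mu} we have $|\Delta\mu_\ell|=\Op(d^{-3/2})$. Both bounds hold uniformly over the polynomially many triples, which is what the union in stochastic domination needs.

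The margin estimate is the main step. I would reuse the Taylor identity \eqref{eq:taylor_m} from Appendix~\ref{app:tam_taylor} with $h=i$:
\[
m_\ell-m^{(i)}_\ell=-(Q^{(i)}_{:,\ell})^\top\Delta S_{:,\ell}+\bigl[(Q^{(i)}_{:,\ell})^\top\mathbf 1_n\bigr]\Delta\mu_\ell-\tfrac1k\phi_\beta\bigl(s^{(i)}_{i,\ell}-\mu^{(i)}_\ell\bigr)+\rho^{(m)}_\ell .
\]
The four terms are bounded as follows.

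\begin{itemize}
\item The linear term equals $-\Delta S_{\ell,\ell}+\frac1k\sum_j q_{j,\ell}\Delta S_{j,\ell}$. Since $\sum_j q_{j,\ell}/k\approx1$, this is at most $2\max_j|\Delta S_{j,\ell}|=\Op(d^{-1})$.
\item The $\Delta\mu_\ell$ term is $\Op(d^{-3/2})$, because $|(Q^{(i)}_{:,\ell})^\top\mathbf 1_n|=\Op(1)$.
\item The removed-interferer term is $\Op(1/k)=\Op(d^{-2})$, using \eqref{eq:phi_bound_from_A3} together with \ref{post:loo-bounded}.
\item The remainder satisfies $\rho^{(m)}_\ell=\Op(d^{-2})$ by \eqref{eq:rho_m_bound}.
\end{itemize}

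Summing these gives $|m_\ell-m_\ell^{(i)}|=\Op(d^{-1})$, uniformly in $(i,\ell)$.

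The point needing care, which I expect to be the only real obstacle, is notational rather than analytic. The lemma's cavity quantities $(i)$ are described as "removing sample $i$ from both tensor legs," and these must coincide with the $\setminus i$ optimizer of \eqref{eq:Psi_LOO}. That identification is what lets \ref{post:loo-mu}--\ref{post:loo-bounded} apply verbatim. I would state it explicitly.

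It remains to check that the diagonal score $\ell=j$ is covered by \ref{post:loo-score}. The statement there ranges over all $j,i\ne h$, diagonal entries included, so it is.
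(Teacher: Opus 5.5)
Your proposal is correct and follows the paper's argument. The coefficient maps $m\mapsto b(m)$ and $t\mapsto\phi_\beta''(t)$ are Lipschitz, and the score and threshold perturbations are $\Op(d^{-1})$ and $\Op(d^{-3/2})$ by \ref{post:loo-mu}--\ref{post:loo-score}, uniformly over polynomially many indices; your explicit treatment of the margin perturbation through \eqref{eq:taylor_m} (linear term, $\Delta\mu$ term, $\Op(1/k)$ removed-interferer term, $\Op(d^{-2})$ remainder) spells out a step the paper's proof compresses into ``the relevant scores and thresholds.''
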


\begin{proof}
The functions $a=\sigma(-m)$, $b=a(1-a)$, and $r=\beta q(1-q)$ are uniformly Lipschitz on the bounded-score event supplied by \ref{post:loo-bounded}; the corresponding LOO scores are bounded by \ref{post:loo-score}. The LOO perturbations of the relevant scores and thresholds are $\Op(d^{-1})$ by \ref{post:loo-mu}--\ref{post:loo-score}, since $n\asymp d^2$ and the $\mu$ perturbation is smaller. Applying the Lipschitz bounds entrywise gives \eqref{eq:b_loo_stability_chi}--\eqref{eq:ar_loo_stability_chi}.
\end{proof}

\begin{lemma}[Diagonal quadratic form]\label{lem:diag_quadratic_chi}
Assume \ref{post:loo-mu}--\ref{post:trace-concentration} and the feature spectral bounds \eqref{eq:feature_spectral_bound}. Let $\chi_d$ be as in \eqref{eq:chi_d_def}. Then, uniformly in $i$,
\begin{equation}\label{eq:diag_quadratic_chi}
    f_{i i}^\top \widetilde H^{-1} f_{i i}
    =
    \frac{\chi_d}{1+b_i\chi_d}
    +
    \Op(d^{-1/2}).
\end{equation}
\end{lemma}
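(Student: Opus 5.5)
The plan is the standard Sherman--Morrison cavity argument applied to the simplified Hessian $\widetilde H$ in \eqref{eq:full_simplified_hessian}. Fix $i$ and write
\[
\widetilde H=\widetilde H_{(i)}+b_if_{ii}f_{ii}^\top+\sum_{j\ne i}\frac{a_ir_{j,i}}{k}f_{ji}f_{ji}^\top+\sum_{\ell\ne i}\frac{a_\ell r_{i,\ell}}{k}f_{i\ell}f_{i\ell}^\top .
\]
Here $\widetilde H_{(i)}$ collects all terms whose features involve neither $u_i$ nor $v_i$. Its coefficients are still evaluated at the full minimizer.

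The first step removes the two ``cross'' families: column $i$, with features $u_jv_i^\top$, and row $i$, with features $u_iv_\ell^\top$. Each family has coefficients of size $O(1/k)$, with total weight $O(1)$. The feature spectral bounds \eqref{eq:feature_spectral_bound} and the nuclear-norm resolvent bound \eqref{eq:resolvent_trace_bound} are not directly suited to a quadratic form. So I would instead bound the perturbation in operator norm. For example, the column-$i$ family equals $(I\otimes u$-block$)$, and it acts as $v_iv_i^\top\otimes(\tfrac1k\sum_j a_ir_{j,i}u_ju_j^\top)$. Its operator norm is $\|v_i\|^2\,\|U\|_{\mathrm{op}}^2/k=O_\prec(d/d^2)=O_\prec(d^{-1})$, and the same holds for row $i$. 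A second-order resolvent expansion then changes $f_{ii}^\top(\cdot)^{-1}f_{ii}$ by at most $O_\prec(d^{-1})\|f_{ii}\|^2$. Call the resulting operator $\widetilde H_{(i)}+b_if_{ii}f_{ii}^\top$. Sherman--Morrison then gives exactly
\[
f_{ii}^\top(\widetilde H_{(i)}+b_if_{ii}f_{ii}^\top)^{-1}f_{ii}=\frac{\theta_i}{1+b_i\theta_i},\qquad \theta_i:=f_{ii}^\top\widetilde H_{(i)}^{-1}f_{ii}.
\]
The map $\theta\mapsto\theta/(1+b\theta)$ is $1$-Lipschitz for $b\ge0$, so it remains to show $\theta_i=\chi_d+O_\prec(d^{-1/2})$.

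The second step replaces the coefficients of $\widetilde H_{(i)}$, which are evaluated at $\Wh$, by their cavity versions $b^{(i)}_\ell$, $a^{(i)}_\ell r^{(i)}_{j,\ell}$. This makes the resulting operator $\widetilde H^{(i)}_{(i)}$ independent of $(u_i,v_i)$. Lemma~\ref{lem:loo_coefficient_stability_chi} bounds the coefficient changes by $O_\prec(d^{-1})$. Since this time we need a quadratic form rather than a trace, I would again use operator norms. The diagonal part contributes $\|[f_{\ell\ell}]\|_{\mathrm{op}}^2\cdot O_\prec(d^{-1})=O_\prec(d^{-1})$ by \eqref{eq:feature_spectral_bound}. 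The off-diagonal part contributes $\|[f_{j\ell}]\|_{\mathrm{op}}^2\cdot O_\prec(d^{-1})/k=O_\prec(d\cdot d^{-1}/d^2)$. Both are negligible, and the ridge floor controls the inverses.

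I expect this step to be the main obstacle. The reason is that $r_{j,\ell}$ also depends on the threshold $\mu_\ell$, whose LOO stability is supplied by \ref{post:loo-mu}. I would check that the Lipschitz constants are uniform on the bounded-score event \ref{post:loo-bounded}, and that the union bound over $O(n^2)$ entries costs only $d^\varepsilon$.

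The third step applies Lemma~\ref{lem:fresh_rank_one_trace} with $T=(\widetilde H^{(i)}_{(i)})^{-1}$, whose operator norm is at most $\tau^{-1}$. This gives $\theta_i=d^{-2}\tr(T)+O_\prec(d^{-1/2})$. Finally, I reverse the same comparisons in trace form. By \eqref{eq:resolvent_trace_bound}, adding back the $O(n)$ features involving index $i$ changes the normalized trace by at most $d^{-2}\cdot O(1+n/k)=O(d^{-2})$, and the coefficient changes cost $O_\prec(d^{-1})$ by Lemma~\ref{lem:chi_coefficient_stability}. So $d^{-2}\tr T=d^{-2}\tr\widetilde H^{-1}+O_\prec(d^{-1})$. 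This equals $\chi_d+O_\prec(d^{-1/2})$ by \eqref{eq:simplified_trace_to_chid}, which rests on Lemmas~\ref{lem:chi_rank_one_block_reduction}, \ref{lem:chi_coefficient_stability} and \ref{post:trace-concentration}. Uniformity in $i$ follows from the $n^{-D}$ tails and a union bound.
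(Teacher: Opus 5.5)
Your argument is correct and has the same skeleton as the paper's proof. You pass to the $i$-cavity operator built from the features $f_{j\ell}$ with $j,\ell\ne i$ and cavity coefficients (the paper's $H_i$). You apply Sherman--Morrison to $b_if_{ii}f_{ii}^\top$, identify $f_{ii}^\top H_i^{-1}f_{ii}$ with $\chi_d$ through Lemma~\ref{lem:fresh_rank_one_trace} and trace stability, and finish by Lipschitz continuity of $x\mapsto x/(1+b_ix)$.

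The difference is in how the perturbations are controlled. The paper makes one resolvent comparison between $\widetilde H$ and $H_i+b_if_{ii}f_{ii}^\top$. It then bounds $f_{ii}^\top\widetilde H^{-1}(C_i^{\mathrm{feat}}+C_i^{\mathrm{coef}})H_i^{-1}f_{ii}$ with freshness arguments: unmatched-Gaussian contractions such as $f_{i\ell}^\top H_i^{-1}f_{ii}=\Op(d^{-1/2})$, and the blockwise estimate $\|(F_\ell^{(i)})^\top H_i^{-1}f_{ii}\|=\Op(1)$. This gives an $\Op(d^{-1/2})$ error. You instead bound the perturbing operators themselves in operator norm, using their Kronecker structure. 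For example, the column-$i$ family is $v_iv_i^\top\otimes(\tfrac1k\sum_j a_ir_{j,i}u_ju_j^\top)$, of norm $\Op(d^{-1})$. Your route is more elementary and gives a sharper intermediate error $\Op(d^{-1})$. It also uses freshness only once, in Lemma~\ref{lem:fresh_rank_one_trace}, so the final $d^{-1/2}$ rate comes entirely from that lemma and from \eqref{eq:simplified_trace_to_chid}.

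One computation needs repair as written: the off-diagonal coefficient change. You effectively use $\|[f_{j\ell}]\|_{\mathrm{op}}^2\approx d$, but \eqref{eq:feature_spectral_bound} gives $\Op(\sqrt d)$ only for a single block $[f_{j\ell}]_{j\ne\ell,i}$ with $\ell$ fixed. Summing the $n-1$ blocks $k^{-1}F_\ell^{(i)}D_\ell^{(i)}(F_\ell^{(i)})^\top$ by the triangle inequality then gives only $n\cdot d\cdot d^{-1}/k=O(1)$, which is not small.

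The fix is to treat all pairs at once. The vectors $f_{j\ell}=v_\ell\otimes u_j$ form a column submatrix of $V\otimes U$, whose norm is $\|V\|_{\mathrm{op}}\|U\|_{\mathrm{op}}=\Op(d)$. The coefficients are bounded by $\Op(d^{-1})/k$ in absolute value, so $|x^\top Ex|\le k^{-1}\Op(d^{-1})\|(V\otimes U)^\top x\|^2$. Since $k\asymp d^2$, this gives $\|E\|_{\mathrm{op}}=\Op(d^{-1})$, and your conclusion stands.

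A similar small point arises in your final trace comparison. Lemma~\ref{lem:chi_coefficient_stability} holds the $b$'s fixed, so the changes $b_\ell-b_\ell^{(i)}$ need a separate nuclear-norm bound. It is trivial: $d^{-2}\sum_\ell|b_\ell-b_\ell^{(i)}|\,\|f_{\ell\ell}\|^2=\Op(d^{-1})$.
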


\begin{proof}
Let $H_i$ be the simplified $i$-cavity Hessian that keeps only feature directions $f_{j\ell}$ with $j,\ell\ne i$, and uses the $i$-cavity coefficients:
\begin{equation}
    H_i
    =
    \frac{\lambda n}{d^2}I
    +
    \sum_{\ell\ne i} b_\ell^{(i)} f_{\ell\ell}f_{\ell\ell}^\top
    +
    \sum_{\ell\ne i}\sum_{j\ne \ell,i}
    \frac{a_\ell^{(i)} r_{j,\ell}^{(i)}}{k}
    f_{j\ell}f_{j\ell}^\top .
\end{equation}
This operator is independent of $(u_i,v_i)$. Set
\begin{equation}
    A_i:=H_i+b_i f_{ii}f_{ii}^\top .
\end{equation}
We first compare $\widetilde H$ with $A_i$. Write
\begin{equation}
    \widetilde H=A_i+C_i^{\mathrm{feat}}+C_i^{\mathrm{coef}},
\end{equation}
where
\begin{equation}
    C_i^{\mathrm{feat}}
    =
    \sum_{\ell\ne i}\frac{a_\ell r_{i,\ell}}{k}f_{i\ell}f_{i\ell}^\top
    +
    \sum_{j\ne i}\frac{a_i r_{j,i}}{k}f_{j i}f_{j i}^\top
\end{equation}
contains the feature directions involving $u_i$ or $v_i$, except for the diagonal rank-one term, and
\begin{equation}
\begin{aligned}
    C_i^{\mathrm{coef}}
    &=
    \sum_{\ell\ne i}(b_\ell-b_\ell^{(i)})f_{\ell\ell}f_{\ell\ell}^\top
    \\
    &\quad+
    \sum_{\ell\ne i}\sum_{j\ne \ell,i}
    \frac{
    a_\ell r_{j,\ell}
    -
    a_\ell^{(i)}r_{j,\ell}^{(i)}
    }{k}
    f_{j\ell}f_{j\ell}^\top .
\end{aligned}
\end{equation}
The resolvent identity gives
\begin{equation}
    f_{ii}^\top \widetilde H^{-1}f_{ii}
    -
    f_{ii}^\top A_i^{-1}f_{ii}
    =
    -
    f_{ii}^\top \widetilde H^{-1}
    (C_i^{\mathrm{feat}}+C_i^{\mathrm{coef}})
    A_i^{-1}f_{ii}.
\end{equation}
By Woodbury,
\begin{equation}
    A_i^{-1}f_{ii}
    =
    \frac{H_i^{-1}f_{ii}}
    {1+b_i f_{ii}^\top H_i^{-1}f_{ii}},
\end{equation}
and the denominator is at least one. Thus it suffices to bound
\begin{equation}\label{eq:diag_cavity_error_target}
    f_{ii}^\top \widetilde H^{-1}
    (C_i^{\mathrm{feat}}+C_i^{\mathrm{coef}})
    H_i^{-1}f_{ii}.
\end{equation}
Let $\xi_i:=\widetilde H^{-1}f_{ii}$. The ridge lower bound gives $\|\xi_i\|=\Op(1)$. For the two terms in $C_i^{\mathrm{feat}}$, each summand has a factor of the form
\begin{equation}
    f_{i\ell}^\top H_i^{-1}f_{ii}
    \quad\text{or}\quad
    f_{j i}^\top H_i^{-1}f_{ii}.
\end{equation}
In the first expression $v_i$ is an unmatched Gaussian vector, and in the second expression $u_i$ is unmatched. Equivalently, if $M=\operatorname{mat}(H_i^{-1}f_{i\ell})$ or $M=\operatorname{mat}(H_i^{-1}f_{j i})$, then $\|M\|_\mathsf F=\Op(1)$ and the expression is $u_i^\top Mv_i$ with one Gaussian leg independent of $M$ and the other leg. Hence each such factor is $\Op(d^{-1/2})$. Since the coefficients have total scale $n/k=O(1)$, the contribution of $C_i^{\mathrm{feat}}$ to \eqref{eq:diag_cavity_error_target} is $\Op(d^{-1/2})$.

We next consider $C_i^{\mathrm{coef}}$. The diagonal coefficient part can be written as
\begin{equation}
    \bigl[f_{\ell\ell}\bigr]_{\ell\ne i}
    \diag(b_\ell-b_\ell^{(i)})_{\ell\ne i}
    \bigl[f_{\ell\ell}\bigr]_{\ell\ne i}^\top .
\end{equation}
By Lemma~\ref{lem:loo_coefficient_stability_chi} and \eqref{eq:feature_spectral_bound}, its operator norm is $\Op(d^{-1})$, so its contribution to \eqref{eq:diag_cavity_error_target} is $\Op(d^{-1})$.

For the off-diagonal coefficient part, fix $\ell\ne i$ and write
\begin{equation}
    F_\ell^{(i)}
    :=
    \bigl[f_{j\ell}\bigr]_{j\ne \ell,i},
    \qquad
    D_\ell^{(i)}
    :=
    \diag\!\left(a_\ell r_{j,\ell}-a_\ell^{(i)}r_{j,\ell}^{(i)}\right)_{j\ne \ell,i}.
\end{equation}
The corresponding block is $k^{-1}F_\ell^{(i)}D_\ell^{(i)}(F_\ell^{(i)})^\top$. Lemma~\ref{lem:loo_coefficient_stability_chi} gives $\|D_\ell^{(i)}\|_{\mathrm{op}}=\Op(d^{-1})$, while \eqref{eq:feature_spectral_bound} gives $\|F_\ell^{(i)}\|_{\mathrm{op}}=\Op(\sqrt d)$. Moreover,
\begin{equation}
    \left\|(F_\ell^{(i)})^\top H_i^{-1}f_{ii}\right\|
    =
    \Op(1),
\end{equation}
because $H_i$ and $F_\ell^{(i)}$ are independent of $f_{ii}$ and
\begin{equation}
    \frac{1}{d^2}\tr\!\left(
    H_i^{-1}F_\ell^{(i)}(F_\ell^{(i)})^\top H_i^{-1}
    \right)
    =
    \Op(1).
\end{equation}
Therefore the fixed-$\ell$ contribution is $\Op(1/(k\sqrt d))$, and summing over $\ell$ gives $\Op(d^{-1/2})$. We have proved
\begin{equation}\label{eq:diag_cavity_replacement}
    f_{ii}^\top \widetilde H^{-1}f_{ii}
    =
    f_{ii}^\top A_i^{-1}f_{ii}
    +
    \Op(d^{-1/2}).
\end{equation}

It remains to evaluate the right-hand side. Woodbury gives
\begin{equation}
    f_{ii}^\top A_i^{-1}f_{ii}
    =
    \frac{x_i}{1+b_i x_i},
    \qquad
    x_i:=f_{ii}^\top H_i^{-1}f_{ii}.
\end{equation}
Since $H_i$ is independent of $f_{ii}$, Lemma~\ref{lem:fresh_rank_one_trace}, \ref{post:trace-concentration}, and the finite-removal trace stability transferred to the simplified Hessian by the perturbation estimates above give
\begin{equation}
    x_i
    =
    \chi_d+\Op(d^{-1/2}).
\end{equation}
The map $x\mapsto x/(1+b_i x)$ is uniformly Lipschitz for $b_i\ge 0$, and \eqref{eq:diag_quadratic_chi} follows from \eqref{eq:diag_cavity_replacement}.
\end{proof}

\begin{lemma}[Off-diagonal quadratic form]\label{lem:offdiag_quadratic_chi}
Under the assumptions of Lemma~\ref{lem:diag_quadratic_chi}, uniformly for $j\ne i$,
\begin{equation}\label{eq:offdiag_quadratic_chi}
    f_{j i}^\top \widetilde H^{-1}f_{j i}
    =
    \chi_d+\Op(d^{-1/2}).
\end{equation}
\end{lemma}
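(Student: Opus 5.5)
We mirror the proof of Lemma~\ref{lem:diag_quadratic_chi}, with one simplification. The feature $f_{ji}$ with $j\ne i$ enters $\widetilde H$ only through the small off-diagonal coefficient $a_ir_{j,i}/k=O(n^{-1})$, so no Woodbury denominator survives at leading order.

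\textbf{Step 1: the cavity operator.} Introduce the two-index cavity operator $H_{\{i,j\}}$. It keeps only the feature directions $f_{m\ell}$ with $m,\ell\notin\{i,j\}$ and uses coefficients from the problem with samples $i$ and $j$ removed. This operator is independent of $(u_j,v_i)$. Decompose
\[
\widetilde H=H_{\{i,j\}}+C^{\mathrm{feat}}+C^{\mathrm{coef}}.
\]
Here $C^{\mathrm{feat}}$ collects every rank-one term whose feature involves $u_i,v_i,u_j$ or $v_j$: the diagonal terms $b_if_{ii}f_{ii}^\top$ and $b_jf_{jj}f_{jj}^\top$, the column-$i$ and column-$j$ terms, and the row-$i$ and row-$j$ terms, including $f_{ji}$ itself. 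The operator $C^{\mathrm{coef}}$ collects the coefficient differences. By the two-deletion version of Lemma~\ref{lem:loo_coefficient_stability_chi}, obtained from \ref{post:loo-mu}--\ref{post:loo-bounded} applied to finite removals, these differences are $\Op(d^{-1})$ entrywise. Hence, exactly as in the diagonal lemma and using \eqref{eq:feature_spectral_bound}, their contribution to the resolvent difference is $\Op(d^{-1/2})$.

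\textbf{Step 2: the feature perturbations.} By the resolvent identity, it suffices to bound
\[
f_{ji}^\top \widetilde H^{-1} C^{\mathrm{feat}} H_{\{i,j\}}^{-1} f_{ji}.
\]
Since $\|\widetilde H^{-1}f_{ji}\|=\Op(1)$, we need to control $\|C^{\mathrm{feat}}H_{\{i,j\}}^{-1}f_{ji}\|$. Split $C^{\mathrm{feat}}$ into three kinds of terms.

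\emph{The self term.} The summand $\tfrac{a_ir_{j,i}}{k}f_{ji}f_{ji}^\top$ contributes $O(k^{-1})\,\Op(1)$, which is negligible.

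\emph{Terms sharing one leg with $f_{ji}$.} Consider a summand such as $f_{mi}$ with $m\ne j$. Its cross factor
\[
f_{mi}^\top H_{\{i,j\}}^{-1}f_{ji}=\bigl\langle u_mv_i^\top,\,H_{\{i,j\}}^{-1}[u_jv_i^\top]\bigr\rangle
\]
has $u_j$ unmatched, so it is $\Op(d^{-1/2})$ by fresh one-sided Gaussian contraction. The coefficients in this family carry weight $1/k$ and there are $O(n)$ of them, so the total weight is $O(1)$.

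\emph{Terms sharing no leg with $f_{ji}$.} For the remaining families, such as $f_{ii}$, $f_{jj}$, and $f_{i\ell}$ with $\ell\ne i$, both legs of $f_{ji}$ are unmatched, and the cross factor is $\Op(d^{-1})$.

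\textbf{Main obstacle.} A naive triangle inequality over the $n$ column-$i$ terms gives $O(1)\cdot\Op(d^{-1/2})$ only for the scalar cross factors. One must still control the vector sum $\sum_m \tfrac{a_ir_{m,i}}{k}f_{mi}\,(f_{mi}^\top H_{\{i,j\}}^{-1}f_{ji})$. We would write it as $k^{-1}F_iD_iF_i^\top H_{\{i,j\}}^{-1}f_{ji}$, where $F_i=[f_{mi}]_m$, and combine $\|F_i\|_{\mathrm{op}}=\Op(\sqrt d)$ with an $\ell_2$ bound
\[
\|F_i^\top H_{\{i,j\}}^{-1}f_{ji}\|=\Op(1).
\]
This $\ell_2$ bound follows from a conditional second-moment computation, because $u_j$ is independent of $F_i$ and $H_{\{i,j\}}$. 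The resulting bound is $\Op(\sqrt d/k)$, which is well within $d^{-1/2}$. This reproduces the argument for $C^{\mathrm{coef}}$ in Lemma~\ref{lem:diag_quadratic_chi}. The diagonal terms $b_if_{ii}f_{ii}^\top$ and $b_jf_{jj}f_{jj}^\top$ have $O(1)$ coefficients but doubly-unmatched cross factors of size $\Op(d^{-1})$, so they are also negligible.

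\textbf{Step 3: evaluation.} It remains to evaluate $f_{ji}^\top H_{\{i,j\}}^{-1}f_{ji}$ with $u_j,v_i$ fresh. Lemma~\ref{lem:fresh_rank_one_trace} gives $d^{-2}\tr(H_{\{i,j\}}^{-1})+\Op(d^{-1/2})$. Then Lemmas~\ref{lem:chi_rank_one_block_reduction} and~\ref{lem:chi_coefficient_stability}, finite-removal trace stability, and \ref{post:trace-concentration} identify this trace with $\chi_d+\Op(d^{-1/2})$. Uniformity over all $n(n-1)$ pairs follows from the stochastic-domination tail control and a union bound.
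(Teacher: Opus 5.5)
Your proposal follows the paper's route: a cavity Hessian independent of $(u_j,v_i)$, the resolvent identity with unmatched-Gaussian cross factors and the $O(k^{-1})$ self term, then Lemma~\ref{lem:fresh_rank_one_trace} with trace stability and \ref{post:trace-concentration}; deleting both samples $i,j$ makes precise the paper's brief instruction to remove all terms involving $u_j$ or $v_i$. Two slips are harmless: $f_{ii}$ and $f_{jj}$ share the legs $v_i$ and $u_j$ with $f_{ji}$, so their cross factors are $\Op(d^{-1/2})$ rather than $\Op(d^{-1})$ (still enough since $b_i,b_j=O(1)$), and $\|F_i^\top H_{\{i,j\}}^{-1}f_{ji}\|$ is $\Op(\sqrt d)$ rather than $\Op(1)$ because only $u_j$ is fresh (giving $\Op(d/k)$, still negligible); in any case the plain triangle inequality $\sum_m \tfrac{a_ir_{m,i}}{k}\|f_{mi}\|\,|f_{mi}^\top H_{\{i,j\}}^{-1}f_{ji}|=\Op(d^{-1/2})$ already suffices, so your ``main obstacle'' is not really one.
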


\begin{proof}
The proof is the same as that of Lemma~\ref{lem:diag_quadratic_chi}. One removes all terms involving $u_j$ or $v_i$ to create a cavity independent of $f_{j i}$. The direct Woodbury correction along $f_{j i}$ has coefficient $a_i r_{j,i}/k=O(k^{-1})$ and is therefore absorbed into the $\Op(d^{-1/2})$ error. The feature-removal and coefficient-removal errors are controlled by the same unmatched-Gaussian and feature-matrix estimates used above.
\end{proof}

\begin{remark}[Use for $\chi$]\label{rem:chi_use}
In the leave-one-out analysis, the operator $T$ will be a Hessian inverse, for example $T = (H^{\setminus i})^{-1}$, evaluated on a sigma-field that is independent of a fresh pair $(u_i,v_i)$. Lemma~\ref{lem:fresh_rank_one_trace} then gives
\begin{equation}
    \bigl\langle u_i v_i^\top,\,(H^{\setminus i})^{-1}[u_i v_i^\top]\bigr\rangle
    =
    \frac{1}{d^2}\tr\!\left((H^{\setminus i})^{-1}\right)
    +
    O_{\mathcal L}(d^{-1/2}),
\end{equation}
provided $\|(H^{\setminus i})^{-1}\|_{\mathrm{op}} = O_{\mathcal L}(1)$. Thus the susceptibility can be characterized through the normalized trace of the inverse cavity Hessian. Lemmas~\ref{lem:chi_rank_one_block_reduction} and~\ref{lem:chi_coefficient_stability} then justify replacing the original Hessian by simpler approximations without changing this normalized trace at leading order.
\end{remark}

\section{Variational formulation of the scalar equations}
\label{app:scalar_variational}

This appendix verifies the variational representation used in \sref{tam_selfconsistent}. Throughout, $Z$ and $G$ denote independent standard normal variables, and $\nu,\chi>0$. 

\subsection{The scalar TAM functional}

Recall
\begin{equation}
    c_r(\nu)
    =
    \min_{\mu\in\R}
    \left\{
    \mu+\frac{1}{r}\Eb{Z}{\phi_\beta(\nu Z-\mu)}
    \right\}.
\end{equation}
The minimizer, denoted $\mu_r(\nu)$, is characterized by
\begin{equation}\label{eq:app_mu_r_nu}
    r
    =
    \Eb{Z}{\sigma\!\bigl(\beta(\nu Z-\mu_r(\nu))\bigr)}.
\end{equation}
\begin{equation}\label{eq:c_r_derivative_raw}
    c_r'(\nu)
    =
    \frac{1}{r}
    \Eb{Z}{Z\,\phi_\beta'\!\bigl(\nu Z-\mu_r(\nu)\bigr)}.
\end{equation}
Stein's identity gives
\begin{equation}\label{eq:c_r_derivative}
    c_r'(\nu)
    =
    \frac{\nu}{r}
    \Eb{Z}{\phi_\beta''\!\bigl(\nu Z-\mu_r(\nu)\bigr)}.
\end{equation}
We will also use that $c_r$ is convex in $\nu$. Indeed, the function
\begin{equation}
    (\nu,\mu)\mapsto
    \mu+\frac{1}{r}\Eb{Z}{\phi_\beta(\nu Z-\mu)}
\end{equation}
is jointly convex, and partial minimization over $\mu$ preserves convexity.

\subsection{Moreau envelope identities}

Let $S=S_{\nu,\chi}(G)$ be the minimizer in \eqref{eq:scalar_moreau_def}, and set
\begin{equation}\label{eq:app_A_B_def}
    A
    :=
    \sigma\!\bigl(-S+c_r(\nu)\bigr),
    \qquad
    B:=A(1-A).
\end{equation}
The first-order condition in $s$ is
\begin{equation}\label{eq:app_scalar_channel}
    S
    =
    \nu G+\chi A.
\end{equation}
Envelope differentiation in $\chi$ gives
\begin{equation}\label{eq:M_chi_derivative}
    \partial_\chi M_r(\nu,\chi)
    =
    -\frac{1}{2}\E[A^2].
\end{equation}
For the derivative in $\nu$, the envelope theorem gives
\begin{equation}\label{eq:M_nu_derivative_raw}
    \partial_\nu M_r(\nu,\chi)
    =
    -\E[AG]+\E[A]c_r'(\nu).
\end{equation}
Differentiating \eqref{eq:app_scalar_channel} with respect to the standard Gaussian coordinate $G$ yields
\begin{equation}
    \frac{dA}{dG}
    =
    -\frac{\nu B}{1+\chi B}.
\end{equation}
Therefore Stein's identity gives
\begin{equation}\label{eq:AG_stein}
    \E[AG]
    =
    -\nu\,\E\!\left[\frac{B}{1+\chi B}\right].
\end{equation}
Combining \eqref{eq:c_r_derivative}, \eqref{eq:M_nu_derivative_raw}, and \eqref{eq:AG_stein},
\begin{equation}\label{eq:M_nu_derivative}
    \partial_\nu M_r(\nu,\chi)
    =
    \nu\,\E\!\left[\frac{B}{1+\chi B}\right]
    +
    \frac{\nu}{r}\E[A]\,
    \Eb{Z}{\phi_\beta''\!\bigl(\nu Z-\mu_r(\nu)\bigr)}.
\end{equation}

\subsection{Saddle-point equations}

The saddle potential is
\begin{equation}
    \mathcal P_{\alpha,r}(\nu,\chi)
    =
    \frac{\lambda}{2}\nu^2
    -
    \frac{\nu^2}{2\alpha\chi}
    +
    M_r(\nu,\chi).
\end{equation}
Using \eqref{eq:M_chi_derivative},
\begin{equation}\label{eq:P_chi_derivative}
    \partial_\chi \mathcal P_{\alpha,r}(\nu,\chi)
    =
    \frac{\nu^2}{2\alpha\chi^2}
    -
    \frac{1}{2}\E[A^2].
\end{equation}
Hence the $\chi$ saddle-point equation is
\begin{equation}\label{eq:app_chi_saddle_first}
    \nu^2
    =
    \alpha\chi^2\E[A^2].
\end{equation}
Using \eqref{eq:M_nu_derivative},
\begin{equation}\label{eq:P_nu_derivative}
    \partial_\nu \mathcal P_{\alpha,r}(\nu,\chi)
    =
    \lambda\nu
    -
    \frac{\nu}{\alpha\chi}
    +
    \nu\,\E\!\left[\frac{B}{1+\chi B}\right]
    +
    \frac{\nu}{r}\E[A]\,
    \Eb{Z}{\phi_\beta''\!\bigl(\nu Z-\mu_r(\nu)\bigr)}.
\end{equation}
Since $\nu>0$, the $\nu$ saddle-point equation is
\begin{equation}\label{eq:app_nu_saddle_chi_fp}
    \frac{1}{\alpha}
    =
    \lambda\chi
    +
    \chi\,\E\!\left[\frac{B}{1+\chi B}\right]
    +
    \frac{\chi}{r}\E[A]\,
    \Eb{Z}{\phi_\beta''\!\bigl(\nu Z-\mu_r(\nu)\bigr)}.
\end{equation}
This is exactly the $\chi$-equation obtained from the zero-residual limit of \eqref{eq:finite_d_map_residual}.

It remains to connect \eqref{eq:app_chi_saddle_first} with the variance equation. From \eqref{eq:app_scalar_channel},
\begin{equation}
    \E[AS]
    =
    \nu\E[AG]+\chi\E[A^2].
\end{equation}
Using \eqref{eq:AG_stein} and \eqref{eq:app_chi_saddle_first},
\begin{equation}
    \E[AS]
    =
    -\nu^2\E\!\left[\frac{B}{1+\chi B}\right]
    +
    \frac{\nu^2}{\alpha\chi}.
\end{equation}
Substituting \eqref{eq:app_nu_saddle_chi_fp} into the last display gives
\begin{equation}
    \E[AS]
    =
    \nu^2
    \left[
    \lambda+\frac{1}{r}\E[A]\,
    \Eb{Z}{\phi_\beta''\!\bigl(\nu Z-\mu_r(\nu)\bigr)}
    \right],
\end{equation}
which is the corresponding $\nu$-equation.

\subsection{Maximization in \texorpdfstring{$\chi$}{chi} and convexity in \texorpdfstring{$\nu$}{nu}}

For fixed $\nu$, the inner problem is concave after the change of variables $t=1/\chi$. Indeed,
\begin{equation}
    \mathcal P_{\alpha,r}(\nu,1/t)
    =
    \frac{\lambda}{2}\nu^2
    -
    \frac{\nu^2}{2\alpha}t
    +
    \E
    \min_{s\in\R}
    \left\{
    \frac{t}{2}(s-\nu G)^2
    +
    \ell(s-c_r(\nu))
    \right\}.
\end{equation}
The last term is the infimum of affine functions of $t$, and hence is concave in $t$. Thus the inner extremization may equivalently be viewed as a one-dimensional concave maximization in $1/\chi$.

For fixed $\nu$, differentiating $A=\sigma(-S+c_r(\nu))$ with respect to $\chi$ gives
\begin{equation}
    \partial_\chi A
    =
    -\frac{AB}{1+\chi B}.
\end{equation}
Consequently,
\begin{equation}\label{eq:chi_square_A_monotone}
    \frac{d}{d\chi}
    \left(\chi^2\E[A^2]\right)
    =
    2\chi\,\E\!\left[\frac{A^2}{1+\chi B}\right]
    >
    0.
\end{equation}
Moreover $\chi^2\E[A^2]\to0$ as $\chi\downarrow0$, while $\chi^2\E[A^2]\to\infty$ as $\chi\to\infty$. By \eqref{eq:P_chi_derivative}, $\partial_\chi\mathcal P$ changes sign exactly once, from positive to negative. Hence $\chi\mapsto\mathcal P_{\alpha,r}(\nu,\chi)$ has a unique global maximizer.

Finally we prove strong convexity of the reduced potential $\mathcal V_{\alpha,r}$ defined in \eqref{eq:reduced_potential_def}. The logistic loss has the Fenchel representation
\begin{equation}\label{eq:logistic_fenchel}
    \ell(t)
    =
    \sup_{a\in[0,1]}
    \left\{
    -at-I(a)
    \right\},
    \qquad
    I(a):=a\log a+(1-a)\log(1-a),
\end{equation}
with the convention $0\log0=0$. Applying this pointwise in the Moreau envelope and minimizing over $s$ gives
\begin{equation}\label{eq:moreau_dual}
    M_r(\nu,\chi)
    =
    \sup_{a(\cdot)\in[0,1]}
    \E\left[
    a(G)\bigl(c_r(\nu)-\nu G\bigr)
    -
    I(a(G))
    -
    \frac{\chi}{2}a(G)^2
    \right].
\end{equation}
Taking the supremum over $\chi>0$ yields
\begin{equation}\label{eq:reduced_potential_dual}
    \mathcal V_{\alpha,r}(\nu)
    =
    \frac{\lambda}{2}\nu^2
    +
    \sup_{a(\cdot)\in[0,1]}
    \left\{
    \E\left[
    a(G)\bigl(c_r(\nu)-\nu G\bigr)-I(a(G))
    \right]
    -
    \frac{\nu}{\sqrt{\alpha}}
    \left(\E[a(G)^2]\right)^{1/2}
    \right\}.
\end{equation}
For each fixed measurable $a(\cdot)$, the expression inside the supremum is a convex function of $\nu$, because $c_r$ is convex and $\E[a(G)]\ge0$; adding $\lambda\nu^2/2$ makes it $\lambda$-strongly convex. The supremum of $\lambda$-strongly convex functions is $\lambda$-strongly convex. This proves the convexity and uniqueness assertions in Proposition~\ref{prop:scalar_variational}.

\begin{remark}[Meaning of the dual optimizer]
The optimizer $a(G)$ in \eqref{eq:moreau_dual} is the scalar limit of the sample-wise logistic weights. Indeed, equality in the Fenchel representation \eqref{eq:logistic_fenchel} is attained at
\begin{equation}
    a=\sigma(-t).
\end{equation}
Taking $t=S-c_r(\nu)$, where $S=S_{\nu,\chi}(G)$ is the proximal minimizer, gives
\begin{equation}
    a_\ast(G)
    =
    \sigma\!\bigl(-S_{\nu,\chi}(G)+c_r(\nu)\bigr).
\end{equation}
Equivalently, the proximal first-order condition gives
\begin{equation}
    S_{\nu,\chi}(G)
    =
    \nu G+\chi a_\ast(G).
\end{equation}
Thus the dual optimizer is the limiting counterpart of the finite-dimensional coefficients $a_i=\sigma(-m_i)$.
\end{remark}

\end{document}